\newcommand{\bunderline}[1]{\underline{#1\mkern-4mu}\mkern4mu }
\def\blfootnote{\xdef\@thefnmark{}\@footnotetext}
\DeclareMathOperator*{\argmax}{arg\,max}
\DeclareMathOperator*{\argmin}{arg\,min}
\newcommand {\ualpha}{\bar{\alpha}}
\newcommand {\lalpha}{\underline{\alpha}}
\newcommand {\cS}{{\cal S}}
\newcommand {\beq}{\begin{equation}}
\newcommand {\eeq}{\end{equation}}
\newcommand {\beqn}{\begin{equation*}}
\newcommand {\eeqn}{\end{equation*}}
\newcommand {\bear}{\begin{eqnarray}}
\newcommand {\eear}{\end{eqnarray}}
\newcommand {\bearn}{\begin{eqnarray*}}
	\newcommand {\eearn}{\end{eqnarray*}}
\newcommand {\bpmatrix}{\begin{pmatrix}}
	\newcommand {\epmatrix}{\end{pmatrix}}
\newcommand{\Expect}{\mathbb{E}}
\newcommand{\Prob}{\mathbb{P}}
\newcommand {\rR}{\mathbb{R}}
\newcommand {\rE}{\mathbb{E}}
\newcommand {\rN}{\mathbb{N}}
\newcommand {\rP}{\mathbb{P}}
\newcommand{\cP}{\mathcal{P}}
\newcommand{\cA}{\mathcal{A}}
\newcommand{\cR}{\mathcal{R}}
\newcommand{\cK}{\mathcal{K}}
\newcommand{\cE}{\mathcal{E}}
\newcommand{\cV}{\mathcal{V}}
\newcommand{\cN}{\mathcal{N}}
\newcommand{\cC}{\mathcal{C}}
\newcommand{\cH}{\mathcal{H}}
\newcommand{\cT}{\mathcal{T}}
\renewcommand{\l}{\left}
\renewcommand{\r}{\right}
\newcommand{\Indlr}[1]{\mathbbm{1}\l\{#1 \r\}}
\newcommand {\known}{\text{\tiny$\mathrm{known}$}}
\newcommand {\aux}{\mathrm{aux}}
\newcommand{\intd}{\mathrm{d}}
\newcommand{\KL}{\mathrm{KL}}
\newcommand{\AIE}{\mathrm{AIE}}
\newcommand{\rPlr}[1]{\rP\l\{ #1\r\}}
\newcommand\numberthis{\addtocounter{equation}{1}\tag{\theequation}}
\newtheorem{theorem}{Theorem}
\newtheorem{definition}{Definition}
\newtheorem{lemma}{Lemma}
\newtheorem{prop}{Proposition}
\newtheorem{corollary}{Corollary}
\newtheorem{example}{Example}
\newtheorem{remark}{Remark}
\def\qed{ \ \vrule width.2cm height.2cm depth0cm\iffalse \smallskip\fi }
\newenvironment{proof}{\noindent {\bf Proof.\/}}{$\qed$\vskip 0.01in}
\renewcommand{\vector}[1]{\ensuremath \boldsymbol{\mathrm #1}}
\newcommand{\vectorgreek}[1]{\ensuremath \boldsymbol{#1}}
\long\def\symbolfootnote[#1]#2{\begingroup%
	\def\thefootnote{\fnsymbol{footnote}}\footnote[#1]{#2}\endgroup}
\newcommand\mathgreek[1]{\expandafter\@mathgreek\csname c@#1\endcsname}
\newcommand\@mathgreek[1]{%
  \ifcase#1\or C_1\or C_2\or C_3\or C_4\or C_5\or C_6\or
   C_7\or C_8\or C_9\or C_{10}\or C_{11}\or C_{12}\or C_{13}\or  C_{14}\or C_{15}\or
   C_{16}\or C_{17}\or C_{18}\or C_{19}\or C_{20}\or C_{21}\or C_{22}\or C_{23} \or C_{24} \or C_{25} \or C_{26} \or C_{27} \or C_{28} \or C_{29} \or C_{30} \or C_{31} \or C_{32} \or C_{33} \or C_{34} \or C_{35}\else
  \@ctrerror\fi}
\newcounter{greekvars}
\renewcommand\thegreekvars{\mathgreek{greekvars}}
\newcommand\constvar[1][]{%
  \if\relax\detokenize{#1}\relax
    \stepcounter{greekvars}%
  \else
    \refstepcounter{greekvars}\ltx@label{#1}%
  \fi
  \thegreekvars
}
\newcommand{\constref}[1]{\ref*{#1}}
\begin{document}
\title{\vspace{-1.0cm}Adaptive Sequential Experiments \\ with Unknown Information Arrival Processes\vspace{0.3cm}}

\author{
{\sf Yonatan Gur}
\\Stanford University\and
{\sf Ahmadreza Momeni}\thanks{
Correspondence: {\tt ygur@stanford.edu}, {\tt amomenis@stanford.edu}.}
\\Stanford University
\vspace{0.4cm}}

\vspace{-.4cm}
\date{\today}

\maketitle

\vspace{-0.65cm}
\begin{abstract}\setstretch{1.00}
\noindent Sequential experiments are often characterized by an exploration-exploitation tradeoff that is captured by the multi-armed bandit (MAB) framework. This framework has been studied and applied, typically when at each time period feedback is received only on the action that was selected at that period. However, in many practical settings additional data may become available \emph{between} decision epochs.
We introduce a generalized MAB formulation, which considers a broad class of distributions that are informative about mean rewards, and allows observations from these distributions to arrive according to an arbitrary and a priori unknown arrival process.
When it is known how to map auxiliary data to reward estimates, by obtaining matching lower and upper bounds we characterize a spectrum of minimax complexities for this class of problems as a function of the information arrival process, which captures how salient characteristics of this process impact achievable performance.
In terms of achieving optimal performance, we establish that 
upper confidence bound and posterior sampling policies possess natural robustness with respect to the information arrival process without any adjustments, which uncovers a novel property of these popular policies and further lends credence to their appeal. 
When the mappings connecting auxiliary data and rewards are a priori unknown, we characterize necessary and sufficient conditions under which auxiliary information allows performance improvement. We devise a new policy that is based on two different upper confidence bounds (one that accounts for auxiliary observation and one that does not) and establish the near-optimality of this policy. We use data from a large media site to analyze the value that may be captured in practice by leveraging auxiliary data for designing content recommendations.

\vspace{0.15cm}

\noindent{\sc Keywords}: sequential experiments, data-driven decisions, product recommendations, online learning, transfer learning, adaptive algorithms, multi-armed bandits, exploration-exploitation, minimax regret.
\end{abstract}

\setstretch{1.45}

\vspace{-0.2cm}
\section{Introduction} \label{sec-intro}\vspace{-0.1cm}
\subsection{Background and Motivation}\label{subsec:background}\vspace{-0.1cm}

Effective design of sequential experiments requires balancing between new information acquisition (\emph{exploration}), and optimizing payoffs based on available information (\emph{exploitation}). A well-studied framework that captures the trade-off between exploration and exploitation is the one of multi-armed bandits (MAB) that first emerged in \cite{thompson1933likelihood} in the context of drug testing, and was later extended by \cite{robbins1952aspects} to a more general setting. In this framework, an agent repeatedly chooses between different actions (arms), where each action is associated with an a priori unknown reward distribution, and the objective is to maximize cumulative return over a certain time horizon.

The MAB framework focuses on balancing exploration and exploitation under restrictive assumptions on the future information collection process: in every round noisy observations are collected only on the action that was selected at that period. Correspondingly, policy design is typically predicated on that information collection procedure. However, in many practical settings to which bandit models have been applied for designing sequential experimentation, additional information might be realized and leveraged over time. Examples of such settings include dynamic pricing \citep{bastani2019meta, bu2019online}, retail management \citep{zhang2010crafting}, clinical trials \citep{bertsimas2016analytics, anderer2019adaptive}, as well as many machine learning domains (see a list of relevant settings in the survey by \citealt{pan2009survey}).

To illustrate one particular such setting, consider the problem of designing product recommendations with limited data. Product recommendation systems are widely deployed over the web with the objective of helping users navigate through content and consumer products while increasing volume and revenue for service and e-commerce platforms. These systems commonly apply techniques that leverage information such as explicit and implicit user preferences, product consumption, and consumer ratings (see, e.g., \citealt{hill1995recommending, konstan1997grouplens, breese1998empirical}). While effective when ample relevant information is available, these techniques tend to under-perform when encountering products that are new to the system and have little or no trace of activity. This phenomenon, known as the \emph{cold-start} problem, has been documented and studied in the literature; see, e.g., \cite{schein2002methods}, \cite{park2009pairwise}.

In the presence of new products, a recommendation system needs to strike a balance between maximizing instantaneous performance indicators (such as revenue) and collecting valuable information that is essential for optimizing future recommendations. To illustrate this tradeoff (see Figure \ref{fig:additional-info}), consider a consumer (A) that is searching for a product using the organic recommendation system of an online retailer (e.g., Amazon). The consumer provides key product characteristics (e.g., ``headphones" and ``bluetooth"), and based on this description (and, perhaps, additional factors such as the browsing history of the consumer) the retailer recommends products to the consumer. While some of the candidate products that fit the desired description may have already been recommended many times to consumers, and the mean returns from recommending them are known, other candidate products might be new brands that were not recommended to consumers before. Evaluating the mean returns from recommending new products requires experimentation that is essential for improving future recommendations, but might be costly if consumers are less likely to purchase these new products.
	
	\begin{figure}[t]
		\centering
		\includegraphics[height=1.9in]{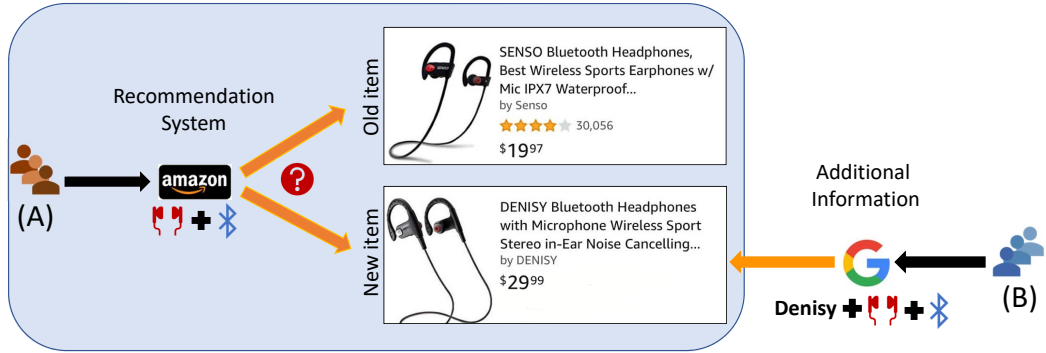}\vspace{-0.1cm}
		\caption{\small Additional information in product recommendations. In the depicted scenario, consumers of type (A) sequentially arrive to an online retailer's organic recommendation system to search for products that match the terms ``headphones" and ``bluetooth." Based on these terms, the retailer can recommend one of two products: an old brand that was already recommended many times to consumers, and a new brand that was never recommended before. In parallel, consumers of type (B) arrive to the new brand's product page directly from a search engine (e.g., Google) by searching for ``headphones," ``bluetooth," and ``Denisy" (the name of the new brand).} \label{fig:additional-info}\vspace{-0.3cm}
	\end{figure}
	
Several MAB settings were suggested and applied for designing recommendation algorithms that effectively balance information acquisition and instantaneous revenue maximization, where arms represent candidate recommendations; see the overview in \cite{madani2005contextual}, as well as later studies by \cite{agarwal2009online}, \cite{caron2013mixing}, \cite{tang2014ensemble}, and \cite{wang2017biucb}. Aligned with traditional MAB frameworks, these studies consider settings where in each time period information is obtained only on items that are recommended at that period, and suggest recommendation policies that are predicated on this information collection process. However, in many instances additional browsing and consumption information may be maintained \emph{in parallel} to the sequential recommendation process, as a significant fraction of website traffic may take place through means other than its organic recommender system; see browsing data analysis in \cite{sharma2015estimating} that estimate that product recommendation systems are responsible for at most 30\% of site traffic and revenue and that substantial fraction of remaining traffic arrives to product pages directly from search engines, as well as \cite{besbes2015optimization} that report similar findings for content recommendations in media sites.

To illustrate an additional source of traffic and revenue information, let us revert to Figure~\ref{fig:additional-info} and consider a different consumer (B) that \emph{does not} use the organic recommender system of the retailer, but rather arrives to a certain product page directly from a search engine (e.g., Google) by searching for that particular brand. While consumers (A) and (B) may have inherently different preferences, the actions taken by consumer (B) after arriving to the product page (e.g., whether or not she bought the product) can be informative about the returns from recommending that product to other consumers. This additional information could potentially be used to improve the performance of recommendation algorithms, especially when the browsing history that is associated with some products is limited.

The above example illustrates that additional information often becomes available \emph{between} decision epochs of recommender systems, and that this information might be essential for designing effective product recommendations. In comparison with the restrictive information collection process that is assumed in most MAB formulations, the availability of such auxiliary information (and even the prospect of obtaining it in the future) might impact the achievable performance and the design of sequential experiments and learning policies. When additional information is available one may potentially need to ``sacrifice" less decisions for exploration in order to obtain effective estimators for the mean rewards. While this intuition suggests that exploration can be reduced in the presence of additional information, it is a priori not clear what type of performance improvement can be expected given different information arrival processes, and how policy design should depend on these processes.

Moreover, monitoring exploration rates in real time in the presence of an arbitrary information arrival process introduces additional challenges that have distinct practical relevance. Most importantly, an optimal exploration rate may depend on several characteristics of the information arrival process, such as the amount of information that arrives and the time at which it appears (e.g., early on versus later on along the decision horizon). Since it may be hard to predict upfront the salient characteristics of the information arrival process, an important challenge is to identify the extent to which it is possible to adapt in real time to an a priori unknown information arrival process, in the sense of approximating the performance that is achievable under prior knowledge of the sample path of information arrivals. This paper is concerned with addressing these challenges.

The research questions that we study in this paper are along two lines. First, we study how the information arrival process and its characteristics impact the performance one could aspire to when designing a sequential experiment. Second, we study what type of policies should one adopt for designing sequential experiments in the presence of general and unknown information arrival processes.

\vspace{-0.0cm}
\subsection{Main Contributions}\vspace{-0.0cm}

The main contribution of this paper lies in (1) introducing a new, generalized MAB framework with unknown and arbitrary information arrival process; (2) characterizing the minimax regret complexity of this broad class of problems as a function of the information arrival process; and (3) identifying policies (and proposing new ones) that are rate-optimal in the presence of general and a priori unknown information arrival processes. More specifically, our contribution is along the following lines.

\textit{(1) Modeling.} To capture the salient features of settings with auxiliary information arrivals we formulate a new class of MAB problems that generalizes the classical stochastic MAB framework, by relaxing strong assumptions that are typically imposed on the information collection process. Our formulation considers a broad class of distributions that are informative about mean rewards, and allows observations from these distributions to arrive at arbitrary and a priori unknown rates and times.

More particularly, given available actions $k=1,\ldots,K$ and horizon $t=1,\ldots T$, we consider an a priori unknown \textit{information arrival matrix} $\vector{H}$, with each integer entry $h_{k,t}$ capturing the number of auxiliary observations collected on action $k$ between time $t-1$ and time $t$. We assume that for each action $k$ there are mappings $\left\{\phi_k:\;k=1,\ldots,K\right\}$ that map auxiliary observations to unbiased estimates of mean rewards, where we distinguish between the cases where these mappings are known or unknown a priori to the decision maker. Our model captures a large variety of real-world phenomena, yet maintains tractability that allows for a crisp characterization of the performance improvement that can be expected when more information might become available throughout the sequential decision process.

\textit{(2) Analysis of achievable performance.} When it is known how to map auxiliary data to reward estimates, we establish lower bounds on the performance that is achievable by \emph{any} non-anticipating policy, where performance is measured in terms of regret relative to the performance of an oracle that constantly selects the action with the highest mean reward. We further show that our lower bounds can be achieved through suitable policy design. These results identify the spectrum of minimax complexities associated with the MAB problem with unknown information arrival processes, as 
the following function of the realized information arrival process itself:\vspace{-0.15cm}  
	\[
	\sum_{k=1}^K\log\left(\sum\limits_{t=1}^T \vspace{-0.0cm} \exp\left(-c\sum\limits_{s=1}^th_{k,s}\right)\right),\vspace{-0.15cm}
	\]
where $c$ is a constant that depends on other problem parameters. The identified spectrum of minimax complexities ranges from the classical regret rates that appear in the stochastic MAB literature when there is no auxiliary information, to regret that is uniformly bounded over time when information arrives frequently and/or early enough. This provides a yardstick for evaluating the performance of policies, and a sharp criterion for optimality in the presence of auxiliary information arrivals.

When the mappings $\{\phi_k\}_{k=1}^{K}$ are unknown, one faces a difficult problem that essentially requires to learn synchronously \emph{from} auxiliary data and \emph{on} that data and how to interpret it. Nevertheless, focusing on the practical family of linear mappings, we identify a necessary and sufficient condition delineating, on the one hand, when auxiliary data still allows the performance improvement that is characterized by the case where these mappings are known, and on the other hand, when performance cannot improve based on auxiliary information, regardless of the timing and frequency of information arrivals.

\textit{(3) Policy design.} We observe that, whether the mappings $\{\phi_k\}_{k=1}^{K}$ are known or unknown, policies that are rate optimal in the absence of auxiliary data may not be rate optimal anymore in the presence of auxiliary data. Nevertheless, when it is known how to map auxiliary data to reward estimates, we establish that polices that are based on upper confidence bounds and posterior sampling leverage auxiliary data naturally, without any prior knowledge on the information arrival process. In that sense, polices such as UCB1 and Thompson sampling exhibit remarkable robustness with respect to the information arrival process: without any adjustments, these policies achieve the best performance that can be achieved, guaranteeing rate optimality \emph{uniformly} over a general class of information arrival processes. Moreover, the regret rates they guarantee cannot be improved even with prior knowledge of the information arrival process. This identifies a new important property of these popular policies, extends the class of problems to they apply, and further lends credence to their appeal.

When the mappings $\{\phi_k\}_{k=1}^{K}$ connecting auxiliary data to reward estimates are a priori unknown, we devise a new policy (termed 2-UCBs) that is adaptive with respect to both the unknown information arrival process and the unknown mappings, and establish its near optimality for the class of linear mappings. 2-UCBs is based on two different upper confidence bounds: one that accounts only for decisions-driven observations, and one that also accounts (in an optimistic manner) for auxiliary data.

Using data from a large media site, we analyze the value that may be captured by leveraging auxiliary information to recommend content with unknown impact on the future browsing path of readers. The auxiliary data we utilize consist of users that arrived to media pages directly from search, and the browsing path of these users following their arrival. We compare the performance of the 2-UCBs algorithm relative to non-adaptively relaying on estimates of $\left\{\phi_k\right\}_{k=1}^{K}$ formed based on historical data.

\vspace{-0.0cm}
\subsection{Related Work}\vspace{-0.0cm}

\noindent
\textbf{Multi-Armed Bandits.} Since its inception, the MAB framework has been adopted for studying a variety of applications including clinical trials \citep{zelen1969play}, strategic pricing \citep{bergemann1996learning}, assortment selection \citep{caro2007dynamic}, online auctions  \citep{kleinberg2003value}, online advertising \citep{pandey2007bandits}, and product recommendations \citep{madani2005contextual, li2010contextual}. For an overview of MAB formulations we refer the readers to \cite{berry1985bandit} and \cite{gittins2011multi} for Bayesian 
settings, as well as \cite{cesa2006prediction} and \cite{bubeck2012regret} for the 
so-called adversarial setting. A regret characterization for the more traditional formulation adopted in this paper, often referred to as the stochastic MAB problem, was first established by \cite{lai1985asymptotically}, followed by analysis of policies such as $\epsilon_t$-greedy, UCB1, and Thompson sampling; see, e.g., \cite{auer2002finite-time} and \cite{agrawal2013further}.

The MAB framework focuses on balancing exploration and exploitation under restrictive assumptions on the information collection process. Correspondingly, optimal policy design is typically predicated on the assumption that at each period a reward observation is collected only on the action that is selected at that time period (exceptions to this common information collection process are discussed below). Such policy design does not account for information that may become available \emph{between} decision epochs, and that might be essential for achieving good performance in many practical settings. In the current paper we relax these assumptions on the information arrival process by allowing arbitrary information arrival processes. Our focus is on studying the impact of the information arrival characteristics on achievable performance and policy design when the information arrival process is a priori unknown.

Few MAB settings deviate from the information collection process described above by considering \emph{specific} information arrival processes that are \emph{known a priori}, as opposed to our formulation where the information arrival process is arbitrary and a priori unknown. In a well-studied contextual MAB setting \citep{woodroofe1979one} 
at each trial the decision maker observes a context carrying information about other arms.  In the full-information \emph{adversarial} MAB setting, rewards are arbitrary and can even be selected by an adversary (\citealt{auer1995gambling}, \citealt{freund1997decision}). In this setting, after each decision epoch the agent has access to a reward observation from each arm. The adversarial nature of this setting makes it fundamentally different from ours in terms of achievable performance, analysis, and policy design. \cite{shivaswamy2012multi} consider what could be viewed as a special case of our framework, where some initial reward observations are available at the beginning of the horizon.


\medskip
\noindent \textbf{Balancing and Regulating Exploration.} Several papers have considered settings of dynamic optimization with partial information and distinguished between cases where myopic policies guarantee optimal performance, and cases where exploration is essential, in the sense that myopic policies may lead to incomplete learning and large losses; examples include several application domains, including dynamic pricing (\citealt{araman2009dynamic}, \citealt{farias2010dynamic}, \citealt{Har-Kes-Zee}, \citealt{den2013simultaneously}), inventory management \citep{Bes-Muh}, product recommendations \citep{bastani2018sequential}, and technology development \citep{LeeLeeLee}. Furthermore it was shown that myopic policies maintain notions of optimality when future decisions are discounted (e.g., \citealt{woodroofe1979one}, 
\citealt{sarkar1991one}), or with access to sufficiently diverse contexts \citep{bastani2017exploiting}.
On the other hand, few papers have studied cases where exploration is not only essential but should be enhanced in order to maintain optimality; prominent examples include the partial monitoring feedback structure by \cite{cesa2006regret}, and the non-stationary MAB framework by \cite{Bes-Gur-Zee-2018}.

The above studies demonstrate a variety of practical settings where the extent of exploration that is required to maintain optimality depends on problem characteristics that may often be a priori unknown to the decision maker. This introduces the challenge of dynamically endogenizing the rate at which a decision-making policy explores to approximate the best performance that is achievable under ex ante knowledge of the underlying problem characteristics. In this paper we address this challenge from an information collection perspective. We identify the connection between the information arrival process and the degree of exploration that is required to maintain rate optimality, and identify adaptive policies that guarantee rate optimality without prior knowledge on the information arrival process.

\medskip
\noindent\textbf{Recommender Systems.} An active stream of literature has been focusing on modelling and maintaining connections between users and products in recommender systems; see, e.g., \cite{ansari2000internet}, the survey by \cite{adomavicius2005toward}, and a book by \cite{ricci2011introduction}. One key element that impacts the performance of recommender systems is the often limited data that is available. Focusing on the prominent information acquisition aspect of the problem, several studies (to which we referred earlier) have addressed sequential recommendation problems using a MAB framework where at each time period information is obtained only on items that are recommended at that period. Another approach is to identify and leverage additional sources of relevant information. Following that avenue, \cite{farias2017learning} consider the problem of estimating user-item propensities, and propose a method to incorporate auxiliary data such as browsing and search histories to enhance the predictive power of recommender systems. While their work concerns with an offline prediction setting, our paper focuses on the impact of auxiliary information streams on policy design and performance in a sequential dynamic setting.

\medskip
\noindent\textbf{Conference Proceedings.}
An extended abstract that appeared in \cite{Gur-Momeni2018} included some initial findings for the special case where at most one auxiliary observation is obtained for each arm at each time period, and where auxiliary data must consist of iid reward observations. Considering that setting, the focus of \cite{Gur-Momeni2018} is on using virtual time indexes for developing an $\varepsilon_t$-greedy-type policy that adapts to an unknown information arrival matrix, albeit predicated on prior knowledge of~$\Delta$, the minimal separation in mean rewards between the optimal action and suboptimal ones.\footnote{For completeness, in Appendix~\ref{section:A near-optimal adaptive policy} we describe the virtual time indexes method and detail its analysis.} The current paper considers a general framework where, in particular, auxiliary data can be realized from a general family of distributions that could be mapped to rewards estimates, and analyzes the cases where the underlying mappings are known or unknown a priori. Furthermore, our focus is on algorithms based on upper confidence bounds and posterior sampling that guarantee near optimality without requiring prior knowledge of $\Delta$, including in the setting considered in \cite{Gur-Momeni2018}.

\vspace{-0.15cm}
\section{Problem Formulation}\label{sec: prob_form}
\vspace{-0.15cm}

We next formulate the class of multi-armed bandit problems with auxiliary information arrivals. 
Let~$\mathcal{K} = \{1,\dots,K\}$ be a set of arms (actions) and let $\mathcal{T} = \{1,\dots,T\}$ denote a sequence of decision epochs. At each time period $t\in\cal{T}$, a decision maker selects one of the $K$ arms. When selecting an arm~$k\in\cal{K}$ at time $t\in\cal{T}$, a reward~$X_{k,t}\in \mathbb{R}$ is realized and observed. For each $t\in\cal T$ and $k\in\cal K$, the reward $X_{k,t}$ is assumed to be independently drawn from a distribution $\nu_k\in \cV$ with mean $\mu_{k}$, where $\cV$ is the collection of $\sigma^2$-sub-Gaussian distributions.\footnote{A real-valued random variable $X$ is said to be sub-Gaussian if there is some $\sigma > 0$ such that for every $\lambda \in \mathbb{R}$ one has $\mathbb{E}e^{\lambda \left(X - \mathbb{E}X \right)} \le e^{\sigma^2\lambda^2/2}$. This broad class of distributions includes, for instance, Gaussian random variables, as well as any random variable with a bounded support (if $X\in[a,b]$ then $X$ is $\frac{(a-b)^2}{4}$-sub-Gaussian) such as Bernoulli random variables. In particular, if a random variable is $\sigma^2$-sub-Gaussian, it is also $\tilde{\sigma}^2$-sub-Gaussian for all $\tilde{\sigma}>\sigma$.}
We denote by~$\vectorgreek{\nu} = \left( \nu_1 , \dots , \nu_K\right)^\top$ the distribution of the rewards profile $\left(X_{1,t},\dots, X_{K,t}\right)^\top$. We assume that rewards are independent across time periods and arms. For simplicity, we assume that there exists a unique best action, and denote the highest expected reward and the best arm by $\mu^\ast$ and~$k^\ast$, respectively:\footnote{For simplicity, when using the $\argmin$ and $\argmax$ operators we assume ties to be broken in favor of the smaller index.}\vspace{-0.1cm}
\[
\mu^\ast \coloneqq \max_{k\in\mathcal{K}} \{\mu_k\} , \qquad k^\ast \coloneqq \argmax_{k\in \cal K} \mu_k.\vspace{-0.0cm}
\]
We denote by $\Delta_k = \mu^\ast - \mu_k$ the difference in expected reward between the best arm and arm $k$ and assume it is bounded. We denote by $\Delta$ a lower bound such that $0 < \Delta \le \min\limits_{k\in\mathcal{K}\setminus \{k^\ast\}}\Delta_k$. (We make no assumption on the separability of mean rewards and allow $\Delta$ to be a decreasing function of $T$.)


\medskip
\noindent\textbf{Information Arrival Process.} Before each epoch $t$, the agent may or may not observe auxiliary information for some of the arms without pulling them. For each period $t$ and arm $k$, we denote by~$h_{k,t}\in \rN^{+} := \{0\} \cup \rN$ the number of auxiliary information arrivals on arm $k$ between decision epochs $t-1$ and~$t$. We denote 
by~$\vector{H}$ 
the information arrival matrix with entries $h_{k,t}$; we assume that this matrix is independent of the policy's actions. If $h_{k,t} \ge 1$, then between decision epochs $t-1$ and~$t$ the agent observes $h_{k,t}$ independent random variables $Y_{k,t,m} \sim \nu_k^{\mathrm{aux}}\in \cV^{\mathrm{aux}}$ with mean $y_{k}$, where $m \in \{1,\dots, h_{k,t}\}$, and where $\cV^{\mathrm{aux}}$ is an a priori known class of allowed auxiliary observation distributions. For each arm~$k$ we assume that there exists a mapping~$\phi_k:\rR \rightarrow \rR$, through which an unbiased estimator for the mean reward $\mu_k$ can be constructed, that is, $\rE \l[\phi_k(Y_{k,t,m})\r] = \mu_k$, and that $\phi_k(Y_{k,t,m})$ is $\hat{\sigma}^2$-sub-Gaussian for some $\hat{\sigma} > 0$ and any $\nu_k^{\mathrm{aux}}\in \cV^{\mathrm{aux}}$. (The mappings $\{\phi_k\}$ can be either known or unknown a priori to the decision maker, and we will analyze both cases; see discussion in \S\ref{subsubsec:extentions}.) We denote the class of allowed mappings that satisfy the former criterion by $\Phi$. We assume that the random variables $Y_{k,t,m}$ are independent across time periods and arms, and are also independent from reward realizations. We denote the vector of average information received between epochs $t-1$ and~$t$ by~$\vector{Z}_t = \left(Z_{1,t},\dots, Z_{K,t}\right)^\top$ where for any $k$ one has~$Z_{k,t} =   \sum_{m=1}^{h_{k,t}}\phi_k(Y_{k,t,m})/\max\{1,h_{k,t}\}$.

\begin{example}\label{subsubsec:mappings}\textbf{\textup{(Linear mappings)}}\label{exp:linear-mappins}
A practical class of mappings $\Phi_L$ consists of linear mappings with an intercept at zero. Then, auxiliary data consists of realizations of independent random variables $Y_{k,t,m}$ for which there is a vector $(\alpha_1,\ldots,\alpha_K) \in~\rR^K$ such that for all $k$, $t$, and $m$ one has $\rE[\alpha_k Y_{k,t,m}] = \mu_k$.
\end{example}

\noindent This simple class of mappings illustrates the possibility of utilizing available data in many domains, including maximizing conversions of product recommendations (for example, in terms of purchases of recommended products). The practicality of using linear mappings for improving product recommendations will be demonstrated empirically in \S\ref{section-real data} using data from a large media site.

\medskip
\noindent\textbf{Policies and Performance.} \Copy{admissible policy}{Let $U$ be a random variable defined over a probability space $(\mathbb{U}, \mathcal{U}, \bm{\mathrm{P}}_u)$, and denote $\vectorgreek{h}_t = \left(h_{1,t}, \dots, h_{K,t}\right)^\top$. Let $\pi_t:\mathbb{R}^{t-1} \times \mathbb{R}^{K \times t} \times \{\rN^{+}\}^{K \times t} \times \mathbb{U} \rightarrow \mathcal{K}$ for $t = 1,2,\dots$ be measurable functions (with some abuse of notation we also denote the action at time $t$ by $\pi_t \in \mathcal{K}$) given by $\pi_t = \pi_t(X_{\pi_{t-1},t-1}, \dots, X_{\pi_{1},1}, \vector{Z}_{t}, \dots, \vector{Z}_1, \vectorgreek{h}_t,\dots,\vectorgreek{h}_1,U)$ for $t=2,3,\dots$, and $\pi_1 = \pi_1(\vector{Z}_{1},\vectorgreek{h}_1,U)$.

The mappings $\{\pi_t:t=1,\dots,T\}$, together with the distribution $\bm{\mathrm{P}}_u$ define the class of admissible policies denoted by $\cP$. Policies in $\cP$ depend only on the past history of actions and observations as well as auxiliary information arrivals, and allow for randomization via their dependence on~$U$.}
We denote by\vspace{-0.15cm}
	\[
	\mathcal{S} =
\l\{
(\vectorgreek{\nu}, \vectorgreek{\nu}^{\mathrm{aux}}) \in
 \prod\limits_{k\in\cK}(\cV \times \cV^{\mathrm{aux}}): \forall k \in \cK \ \ \exists \phi_k \in \Phi \text{ s.t. } \rE \l[ \phi_k(Y_{k,t,m})\r] = \mu_k, \
 \text{and } \Delta_k \ge \Delta \text{ for all }k \neq k^\ast
 \r\}\vspace{-0.15cm}
\]
 the class of problems that includes pairs of allowed reward distribution profiles $\l(\vectorgreek{\nu}, \vectorgreek{\nu}^{\mathrm{aux}}\r)$.

We evaluate the performance of a policy $\pi\in\cP$ by the regret it incurs under information arrival process $\vector{H}$ relative to the performance of an oracle that selects the arm with the highest expected reward.
For a given instance $(\vectorgreek{\nu}, \vectorgreek{\nu}^{\mathrm{aux}}) \in \cS$, we define the cased-dependent \emph{regret} by:\vspace{-0.2cm}
   \begin{equation}\label{eq:regret-def}
	\cR^{\pi}_{\vectorgreek{\nu}, \vectorgreek{\nu}^{\mathrm{aux}}}(T,\vector{H}) \coloneqq \mathbb{E}^{\pi}_{
		\vectorgreek{\nu}, \vectorgreek{\nu}^{\mathrm{aux}}
		} \left[ \sum\limits_{t=1}^T(\mu^{\ast} - \mu_{\pi_t}) \right],\vspace{-0.2cm}
\end{equation}
where the expectation $\mathbb{E}^\pi_{\vectorgreek{\nu}, \vectorgreek{\nu}^{\mathrm{aux}}}[\cdot]$ is taken with respect to the noisy rewards and noisy auxiliary observations, as well as to the policy's actions (throughout the paper we will denote by $\mathbb{P}^\pi_{\vectorgreek{\nu}, \vectorgreek{\nu}^{\mathrm{aux}}}$, $\mathbb{E}^\pi_{\vectorgreek{\nu}, \vectorgreek{\nu}^{\mathrm{aux}}}$, and $\mathbb{R}^\pi_{\vectorgreek{\nu}, \vectorgreek{\nu}^{\mathrm{aux}}}$ the probability, expectation, and regret when the arms are selected according to policy $\pi$, rewards are distributed according to $\vectorgreek{\nu}$, and auxiliary observations are distributed according to $\vectorgreek{\nu}^{\mathrm{aux}}$). We note that regret is incurred over decisions made in epochs $t=1,\ldots,$T; the main distinction relative to classical regret formulations is that in (\ref{eq:regret-def}) the mappings $\left\{\pi_t;\;t=1,\ldots,T\right\}$ can be measurable with respect to the $\sigma$-field that also includes information that arrives \emph{between} decision epochs, as captured by the matrix~$\vector{H}$. Finally, given an information arrival matrix~$\vector{H}$, we denote by\vspace{-0.2cm}
\[
\cR^{\pi}_{\cS}(T,\vector{H}) \coloneqq  \sup\limits_{\l(\vectorgreek{\nu}, \vectorgreek{\nu}^{\mathrm{aux}}\r)\in\cS}\mathcal{R}^\pi_{\vectorgreek{\nu}, \vectorgreek{\nu}^{\mathrm{aux}}}(T,\vector{H});\quad\quad\text{ and }\quad\quad \cR^{\ast}_{\cS}(T,\vector{H}) \coloneqq  \inf\limits_{\pi\in\cal{P}}\cR^{\pi}_{\cS}(T,\vector{H})\vspace{-0.2cm}
\]
the worst-case regret guaranteed by policy $\pi$, and the best guaranteed performance, respectively.

\vspace{-0.0cm}
\subsection{Discussion: Prior Knowledge on Mappings}\label{subsubsec:extentions}\vspace{-0.0cm}

\noindent
In our model, the decision maker may estimate mean rewards based on auxiliary observations through the mappings $\left\{\phi_k\right\}$. In  \S\ref{sec:impact-info-arrival-known-mappings} and \S\ref{sec:natural-adaptation}, to study in isolation the implications of the information arrival matrix~$\vector H$, we assume that these mappings are a priori known. The interpretability of feedback signals for the purpose of forming mean rewards estimates is a simplifying modeling assumption that is essential for tractability in many online dynamic optimization frameworks, including the MAB literature.\footnote{Typically in the MAB literature feedback is known to consist of reward observations. However, even when this relation between feedback and rewards is more general, e.g., in \cite{russo2018tutorial}, it is assumed to be a priori known.} Therefore this part could be viewed as extending prior work by relaxing assumptions on the information collection process that are common in the MAB literature, while offering assumptions on the information structure (and what is known about it) that are comparable to what is assumed in existing literature.


The challenge of adapting to unknown mappings is the focus of \S\ref{sec:unknown-mappings}, where we develop an approach for leveraging auxiliary information when these mappings are unknown. The approaches from \S\ref{sec:natural-adaptation} and \S\ref{sec:unknown-mappings} are then compared in \S\ref{section-real data} where we use data from a large US media site with the objective of leveraging auxiliary information (actions of readers that arrived to content pages directly from a search engine) to improve in-site recommendations. Performance is compared along two characteristics of problem instances: $(i)$ the effectiveness of auxiliary information, and $(ii)$ the mapping misspecification.





\vspace{-0.0cm}
\section{Impact of Information Arrival Process on Achievable Performance}\label{sec:impact-info-arrival-known-mappings}\vspace{-0.0cm}

In this section we study the impact that the information arrival process (particularly, the matrix $\vector H$) may have on the performance one could aspire to achieve when the mappings $\{\phi_k\}$ are known a priori. Our first result formalizes what \textit{cannot} be achieved, establishing a lower bound on the best achievable performance as a function of the information arrival process.
        \begin{theorem}\label{theorem-lower bound-general} \textbf{\textup{(Lower bound on the best achievable performance)}} For any $T \ge 1$ and information arrival matrix $\vector{H}$, the worst-case regret for any admissible policy $\pi\in\cP$ (with knowledge of $\{\phi_k\}$) is bounded from below as follows:\vspace{-0.1cm}
        \[
        \mathcal{R}^\pi_{\cal S}(\vector{H}, T) \ge \frac{\constvar[lower_bound1]}{\Delta} \sum\limits_{k=1}^K\log \left(
        \frac{\constvar[lower_bound2]\Delta^2}{K} \sum\limits_{t=1}^{T} \exp\left(-\constvar[lower_bound3] \Delta^2\sum\limits_{s=1}^th_{k,s}\right)
        \right),\vspace{-0.1cm}
        \]
        where $\constref{lower_bound1}$, $\constref{lower_bound2}$, and $\constref{lower_bound3}$ are positive constants that only depend on $\sigma$ and $\hat{\sigma}$.
        \end{theorem}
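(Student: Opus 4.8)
The plan is to prove this by a change-of-measure (Bretagnolle--Huber) argument carried out \emph{at every time period} rather than once over the whole horizon; running the argument per epoch is precisely what produces the running sum $\sum_{t=1}^{T}\exp(-c\Delta^2\sum_{s\le t}h_{k,s})$, as opposed to the weaker $T\exp(-c\Delta^2\sum_{s\le T}h_{k,s})$ that a single change of measure would give.

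First I would fix a convenient family of instances inside $\cS$. Take the base instance $\vectorgreek{\nu}^{0}$ with Gaussian rewards $\nu_j=\mathcal N(\mu_j,\sigma^2)$, $\mu_1=\Delta$, $\mu_j=0$ for $j\neq1$, and auxiliary distributions chosen (e.g.\ with $\phi_j$ the identity) so that $\phi_j(Y_{j,t,m})\sim\mathcal N(\mu_j,\hat\sigma^2)$; this lies in $\cS$ since every suboptimal gap equals $\Delta$. For each suboptimal arm $k\neq k^\ast=1$ let $\vectorgreek{\nu}^{k}$ be obtained from $\vectorgreek{\nu}^{0}$ by raising the reward mean of arm $k$ (and, consistently, the mean of $\phi_k(Y)$) to $2\Delta$, which makes $k$ optimal with all gaps $\ge\Delta$, so $\vectorgreek{\nu}^{k}\in\cS$. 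Writing $n_{k,t}$ for the number of pulls of $k$ in the first $t$ rounds, the elementary regret bounds $\mathcal R^{\pi}_{\vectorgreek{\nu}^{0}}(\vector H,T)\ge\Delta\sum_{k\neq k^\ast}\rE^{0}[n_{k,T}]$ and $\mathcal R^{\pi}_{\vectorgreek{\nu}^{k}}(\vector H,T)\ge\Delta\sum_{t=1}^{T}\rP^{k}(\pi_t\neq k)$ are immediate (here $\rE^{0},\rP^{0},\rE^{k},\rP^{k}$ denote the law under the respective instance), and $\mathcal R^{\pi}_{\cS}(\vector H,T)$ dominates each of them.

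For the core estimate, fix $k\neq k^\ast$ and an epoch $t$, and apply Bretagnolle--Huber to the event $\{\pi_t=k\}$ relative to the $\sigma$-field $\mathcal F_t^{-}$ of everything observed before $\pi_t$ is chosen (past rewards, the auxiliary averages $\vector Z_1,\dots,\vector Z_t$, and the randomizer $U$): $\rP^{0}(\pi_t=k)+\rP^{k}(\pi_t\neq k)\ge\tfrac12\exp\!\big(-\KL(\rP^{0}|_{\mathcal F_t^{-}}\,\|\,\rP^{k}|_{\mathcal F_t^{-}})\big)$. Since the two instances differ only in arm $k$, the divergence-decomposition identity for bandits gives $\KL(\rP^{0}|_{\mathcal F_t^{-}}\,\|\,\rP^{k}|_{\mathcal F_t^{-}})=\rE^{0}[n_{k,t-1}]\cdot\tfrac{2\Delta^2}{\sigma^2}+\big(\sum_{s=1}^{t}h_{k,s}\big)\cdot\tfrac{2\Delta^2}{\hat\sigma^2}$; the auxiliary term is deterministic because $\vector H$ is policy-independent, and this is where the cumulative count $\sum_{s\le t}h_{k,s}$ enters. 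Bounding $\rE^{0}[n_{k,t-1}]\le\rE^{0}[n_{k,T}]=:a_k$ lets the pull-count factor leave the sum over $t$; summing the Bretagnolle--Huber bound over $t$ and then over $k\neq k^\ast$, and noting that the left side equals $\sum_{k\neq k^\ast}\big(\rE^{0}[n_{k,T}]+\rE^{k}[T-n_{k,T}]\big)$, whose two parts are at most $\mathcal R^{\pi}_{\cS}(\vector H,T)/\Delta$ and $(K-1)\,\mathcal R^{\pi}_{\cS}(\vector H,T)/\Delta$ respectively, I obtain
\[
\mathcal R^{\pi}_{\cS}(\vector H,T)\;\ge\;\frac{\Delta}{2K}\sum_{k\neq k^\ast}e^{-2\Delta^2 a_k/\sigma^2}\,S_k,\qquad S_k:=\sum_{t=1}^{T}\exp\!\Big(-\tfrac{2\Delta^2}{\hat\sigma^2}\sum_{s=1}^{t}h_{k,s}\Big).
\]

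Finally I would combine this with the base-instance bound $\mathcal R^{\pi}_{\cS}(\vector H,T)\ge\Delta\sum_{k\neq k^\ast}a_k$: taking the equal-weight convex combination gives $\mathcal R^{\pi}_{\cS}(\vector H,T)\ge\Delta\sum_{k\neq k^\ast}\big[\tfrac12 a_k+\tfrac{1}{4K}S_k e^{-2\Delta^2 a_k/\sigma^2}\big]$, and minimizing the (convex) bracket over $a_k\ge0$ — the interior minimizer is $a_k^\ast=\tfrac{\sigma^2}{2\Delta^2}\log\!\frac{\Delta^2 S_k}{K\sigma^2}$, with value $\tfrac{\sigma^2}{4\Delta^2}\big(\log\!\frac{\Delta^2 S_k}{K\sigma^2}+1\big)$ when this is nonnegative and a positive value otherwise — yields in every case a per-arm lower bound $\gtrsim\tfrac{\sigma^2}{\Delta^2}\log\!\big(\tfrac{\Delta^2}{K\sigma^2}S_k\big)$, which is exactly the claimed inequality summed over $k\neq k^\ast$ with $C_3=2/\hat\sigma^2$ and $C_1,C_2$ depending only on $\sigma,\hat\sigma$; terms in which the logarithm is negative only weaken the right side and may be discarded. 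The remaining $k=k^\ast$ term in the stated sum is recovered by an entirely parallel argument comparing $\vectorgreek{\nu}^{0}$ with an instance that lowers $\mu_{k^\ast}$ below a fixed competitor arm (or, more crudely, absorbed into the constants). I expect the main obstacle to be exactly this per-epoch execution together with the bookkeeping it requires: one must run the change of measure round-by-round while keeping the \emph{random} pull counts (bounded uniformly by $n_{k,T}$) cleanly separated from the \emph{deterministic} auxiliary arrivals, and then choose the convex-combination weights and carry out the per-arm optimization so that the $1/K$ normalization and the $\sigma,\hat\sigma$ dependence emerge in precisely the stated form; a secondary care point is arranging the auxiliary distributions so that the sub-Gaussian parameter $\hat\sigma$ is attained with equality (Gaussian mapped observations), which is what makes the KL bound tight.
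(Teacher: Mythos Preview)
Your argument is essentially the paper's: per-epoch Bretagnolle--Huber between a base instance and $K-1$ single-arm perturbations, bounding $\rE^{0}[n_{k,t-1}]\le\rE^{0}[n_{k,T}]=a_k$ to pull the random pull-count out of the $t$-sum, then balancing against the naive regret bound via $x+\gamma e^{-\kappa x}\ge\kappa^{-1}\log(\gamma\kappa)$. Your constants match (both your $0\to2\Delta$ shift and the paper's $-\Delta\to+\Delta$ give $\KL=2\Delta^2/\sigma^2$ per pull and $2\Delta^2/\hat\sigma^2$ per auxiliary observation).

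There is one genuine gap: your handling of the $k=k^\ast$ term. ``Absorbed into the constants'' fails because that summand depends on the row $h_{k^\ast,\cdot}$ of $\vector H$, which can be arbitrarily different from the other rows. And your proposed ``parallel argument'' (lower $\mu_{k^\ast}$ while keeping the \emph{same} base instance $\vectorgreek{\nu}^{0}$) does not work either: arm $k^\ast$ is optimal under $\vectorgreek{\nu}^{0}$, so $\rE^{0}[n_{k^\ast,T}]$ is typically of order $T$, the KL term explodes, and the Bretagnolle--Huber exponential becomes useless. The fix---and this is the one structural ingredient the paper adds on top of what you wrote---is to symmetrize over the choice of base optimal arm: run your entire argument once for each $m\in\{1,\dots,K\}$ with arm $m$ playing the role of the reference optimum, obtaining for each $m$ a lower bound summed over $k\neq m$, and then average over $m$. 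Since every arm $k$ is suboptimal in exactly $K-1$ of these base instances, averaging yields
\[
\mathcal{R}^\pi_{\cS}(\vector H,T)\;\ge\;\frac{\sigma^2(K-1)}{4K\Delta}\sum_{k=1}^{K}\log\!\Big(\frac{\Delta^2}{K\sigma^2}\,S_k\Big),
\]
and $(K-1)/K\ge 1/2$ delivers the $K$-free constant claimed in the statement.
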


\noindent The precise expressions of $\constref{lower_bound1}$, $\constref{lower_bound2}$, and $\constref{lower_bound3}$ are provided in the discussion below. Theorem \ref{theorem-lower bound-general} establishes a lower bound on the achievable performance that depends on an arbitrary path of information arrivals, as captured by the elements of the matrix $\vector{H}$. In that sense, Theorem \ref{theorem-lower bound-general} provides a spectrum of bounds on the achievable performances, mapping many potential information arrival trajectories to the best performance they allow. When there is no additional information over what is assumed in the classical MAB setting, one recovers a lower bound of order $\frac{K}{\Delta} \log T$ that was established for that setting (see \citealt{lai1985asymptotically}, \citealt{bubeck2013bounded}). Theorem~\ref{theorem-lower bound-general} further suggests that in the presence of auxiliary observations, achievable regret rates may become lower, and that the impact of information arrivals on the achievable performance depends on the \textit{frequency} of these arrivals, but also on the \textit{time} at which these arrivals occur; we further discuss these observations in \S\ref{subsec:examples}.

\medskip\noindent
\textbf{Key Ideas in the Proof.} The proof of Theorem \ref{theorem-lower bound-general} adapts to our framework ideas of identifying a worst-case nature ``strategy" (see, e.g., \citealt{bubeck2013bounded}). While the full proof is deferred to the appendix, we next illustrate its key ideas using the special case of two arms. 
We assume that the decision maker is a priori informed that the first arm generates rewards according to a normal distribution with standard variation $\sigma$ and a mean that is either $-\Delta$ (according to distribution denoted by $\vectorgreek{\nu}$) or $+\Delta$ (according to a distribution denoted by $\tilde{\vectorgreek{\nu}}$), and that the second arm generates rewards with normal distribution of standard variation $\sigma$ and mean zero. To quantify a notion of distance between the possible profiles of reward distributions we use the Kullback-Leibler (KL) divergence. The KL divergence between two positive measures $\rho$ and $\rho^{\mathrm{aux}}$ with $\rho$ absolutely continuous with respect to $\rho^{\prime}$, is defined as:\vspace{-0.15cm}
    \[
    \mathrm{KL}(\rho,\rho^{\prime} ) \coloneqq \int \log\left(\frac{d\rho}{d\rho^{\prime}}\right) d\rho = \mathbb{E}_{\rho} \log\left(\frac{d\rho}{d\rho^{\prime}}(X)\right),\vspace{-0.15cm}
    \]
where $\mathbb{E}_{\rho}$ denotes the expectation with respect to $\rho$. Using Lemma 2.6 from \cite{Tsybakov2008introduction} that connects the KL divergence to error probabilities, we establish that at each period $t$ the probability of selecting a suboptimal arm must be at least\vspace{-0.15cm}
\[
p^{\mathrm{sub}}_t = \frac{1}{4}\exp\left(-\frac{2\Delta^2}{\sigma^2}\left( \mathbb{E}_{\vectorgreek{\nu},\vectorgreek{\nu}^{\aux}} [\tilde{n}_{1,T}] + \sum\limits_{s=1}^t \frac{\sigma^2}{\hat{\sigma}^2}h_{1,s}\right)\right),\vspace{-0.15cm}
\]
where $\tilde{n}_{1,t}$ denotes the number of times the first arm is selected up to time $t$. Each selection of suboptimal arm contributes $\Delta$ to the regret, and therefore the cumulative regret must be at least $\Delta \sum\limits_{t=1}^Tp^{\mathrm{sub}}_t$. We further observe that if arm 1 has a mean reward of $-\Delta$, the cumulative regret must also be at least $\Delta \cdot \mathbb{E}_{\vectorgreek{\nu},\vectorgreek{\nu}^{\aux}}[\tilde{n}_{1,T}]$. Therefore the regret is lower bounded by $\frac{\Delta}{2} \left( \sum\limits_{t=1}^Tp^{\mathrm{sub}}_t + \mathbb{E}_{\vectorgreek{\nu},\vectorgreek{\nu}^{\aux}}[\tilde{n}_{1,T}]\right)$, which is greater than $\frac{\sigma^2}{4\Delta} \log \left( \frac{\Delta^2}{2\sigma^2}\sum\limits_{t=1}^{T} \exp\left(-\frac{2 \Delta^2}{\hat{\sigma}^2}\sum\limits_{s=1}^th_{1,s}\right)\right)$. The argument can be repeated by switching arms $1$ and $2$. For $K$ arms, we follow the above lines and average over the established bounds to obtain:\vspace{-0.15cm}
\[
\mathcal{R}^\pi_{\cal S}(\vector{H}, T) \ge \frac{\sigma^2(K-1)}{4K\Delta} \sum\limits_{k=1}^K\log \left( \frac{\Delta^2}{\sigma^2K}\sum\limits_{t=1}^{T} \exp\left(-\frac{2\Delta^2}{\hat{\sigma}^2}\sum\limits_{s=1}^th_{k,s}\right)\right),\vspace{-0.15cm}
\]
which establishes the result.

\vspace{-0.0cm}
\subsection{Discussion and Subclasses of Information Arrival Processes}\label{subsec:examples}\vspace{-0.0cm}

Theorem \ref{theorem-lower bound-general} suggests that auxiliary information arrivals may be leveraged to reduce regret rates, and that their impact on the achievable performance increases when information arrives more frequently, and earlier. 
We next demonstrate these implications 
using two information arrival processes of natural interest: a process with a fixed arrival rate, and a process with a decreasing arrival rate.
																							
\vspace{-0.00cm}
\subsubsection{Stationary Information Arrival Process}\label{example-stochastic-lower bound}\vspace{-0.05cm}
	Assume that $h_{k,t}$'s are i.i.d. 
random variables with mean $\lambda$. Then, for any $T \ge 1$ and admissible policy $\pi\in\cP$, one obtains the following lower bound for the achievable performance:\vspace{-0.1cm}
    \begin{enumerate}
    \item If $\lambda \le \frac{\hat{\sigma}^2}{4\Delta^2T}$, then\vspace{-0.2cm}
    \[
        \mathbb{E}_{\vector{H}}\left[ \mathcal{R}^\pi_{\cal S}(\vector{H}, T) \right]
        \ge
         \frac{\sigma^2(K-1)}{4\Delta} \log \left( \frac{(1-e^{-1/2})\Delta^2 T}{K}\right).\vspace{-0.2cm}
        \]
    \item If $\lambda \ge \frac{\hat{\sigma}^2}{4\Delta^2T}$, then\vspace{0.0cm}
    \[
        \mathbb{E}_{\vector{H}}\left[ \mathcal{R}^\pi_{\cal S}(\vector{H}, T) \right]
         \ge
          \frac{\sigma^2(K-1)}{4\Delta} \log \left( \frac{1-e^{-1/2}}{2\lambda K\sigma^2/\hat{\sigma}^2}\right).\vspace{-0.1cm}
        \]
    \end{enumerate}

\noindent This class includes instances in which, on average, information arrives at a constant rate $\lambda$. Analyzing these arrival processes reveals two different regimes. When the information arrival rate is sufficiently low (there are no more than order $\Delta^{-2}$ information arrivals over $T$ time periods), auxiliary observations become essentially ineffective, and one recovers the performance bounds established in the absence of auxiliary observations. 
When $\Delta$ is fixed and independent of the horizon length $T$, the lower bound scales logarithmically with $T$. When~$\Delta$ can scale with $T$, a bound of order $\sqrt{T}$ is recovered when $\Delta $ is of order $T^{-1/2}$. In both cases, there are known policies that guarantee rate-optimal performance (for details see policies, analysis, and discussion in \citealt{auer2002finite-time}).

On the other hand, when there are more than order $\Delta^{-2}$ observations over $T$ periods, the lower bound on the regret depends on the arrival rate $\lambda$. When the arrival rate is independent of the horizon length $T$, the regret is bounded by a constant that is independent of $T$, and a myopic policy (e.g., a policy that for the first $K$ periods pulls each arm once, and at each later period pulls the arm with the current highest estimated mean reward, while randomizing to break ties) is optimal; For analysis and details see sections \ref{appendix-proof-of-example-Stationary-uniform} and \ref{appendix-proof-of-myopic-optimality]} of the appendix.

\vspace{-0.00cm}
\subsubsection{Diminishing Information Arrival Process} \label{example-Decayin-rate}\vspace{-0.0cm}
Assume that $h_{k,t}$'s are random variables such that for each arm $k \in \mathcal{K}$ and time step $t$,\vspace{-0.15cm}
\[
\mathbb{E}\left[\sum\limits_{s=1}^t h_{k,s} \right] = \left\lfloor \frac{\hat{\sigma}^2 \kappa }{2\Delta^2} \log t \right\rfloor,\vspace{-0.15cm}
\]
for some fixed $\kappa> 0$. Then, for any $T \ge 1$ and admissible policy $\pi\in\cP$, one obtains the following lower bounds for the achievable performance:\vspace{-0.0cm}
		\begin{enumerate}
			\item If $\kappa < 1$ then:\vspace{-0.35cm}
			\[
			 \mathbb{E}_{\vector{H}}\left[\mathcal{R}^\pi_{\cal S}(\vector{H}, T)\right]
			\ge
			\frac{\sigma^2(K-1)}{4\Delta} \log\left(
			\frac{\Delta^2/K\sigma^2}{1-\kappa } \left(
			(T+1)^{1-\kappa} - 1
			\right)
			\right).\vspace{-0.2cm}
			\]
			\item If $\kappa > 1$ then:\vspace{-0.25cm}
			\[
		 \mathbb{E}_{\vector{H}}\left[\mathcal{R}^\pi_{\cal S}(\vector{H}, T)\right]
			\ge
			\frac{\sigma^2(K-1)}{4\Delta} \log\left(
			\frac{\Delta^2/K\sigma^2}{\kappa - 1} \left(
			1-\frac{1}{(T+1)^{\kappa-1}}
			\right)
			\right).\vspace{-0.2cm}
			\]
		\end{enumerate}

\noindent This class includes information arrival processes under which the expected number of information arrivals up to time~$t$ is of order~$\log t$. Therefore, it demonstrates the impact of the \emph{timing} of information arrivals on the achievable performance, and suggests that a constant regret may be achieved when the rate of information arrivals is decreasing. Whenever $\kappa < 1$, the lower bound on the regret is logarithmic in $T$, and there are known policies that guarantee rate-optimal performance (see~\citealt{auer2002finite-time}). When~$\kappa > 1$, the lower bound becomes a constant, and one may observe that when~$\kappa$ is large enough a myopic policy is asymptotically optimal. (In the limit~$\kappa\rightarrow 1$ the lower bound is of order~$\log\log T$.) For analysis and details see sections~\ref{appendix-proof-of-example-decaying} and~\ref{appendix-analysis-of-myopic-example-decaying} of the appendix.

\vspace{-0.05cm}
\subsubsection{Discussion}\label{subsec:lower-disc}\vspace{-0.0cm}

One may contrast the classes of information arrival processes described in \S\ref{example-stochastic-lower bound} and \S\ref{example-Decayin-rate} by selecting $\kappa = \frac{2\Delta^2\lambda T}{\sigma^2\log T}$. Then, in both settings the total number of information arrivals for each arm is $\lambda T$ on average. However, while in the first class the information arrival rate is fixed over the horizon, in the second class this arrival rate is higher in the beginning of the horizon and decreases over time. The different timing of the $\lambda T$ information arrivals may lead to different regret rates. For example, selecting $\lambda = \frac{\sigma^2\log T}{\Delta^2 T}$ implies $\kappa=2$. Then, the lower bound in \S\ref{example-stochastic-lower bound} is logarithmic in $T$ (establishing the impossibility of constant regret in that setting), but the lower bound in \S\ref{example-Decayin-rate} is constant and independent of $T$ (in \S4 we show that constant regret is indeed achievable in this setting). This observation conforms to the intuition that earlier observations have higher impact on achievable performance, as at early periods there is only little information that is available (and therefore the marginal impact of an additional observation is larger), and since earlier information can be used for more decisions (as the remaining horizon is longer).\footnote{The subclasses described in \S\ref{example-stochastic-lower bound} and \S\ref{example-Decayin-rate} are special cases of the following setting. Let $h_{k,t}$'s be independent random variables such that for each arm $k$ and time $t$, the expected number of information arrivals up to time $t$ satisfies:\vspace{-0.10cm}
\[
\mathbb{E}\left[\sum_{s=1}^{t} h_{k,s}\right] = \lambda T \frac{t^{1-\gamma} - 1}{T^{1-\gamma} - 1}.\vspace{-0.10cm}
\]
While the expected number of total information arrivals for each arm, $\lambda T$, is determined by the parameter $\lambda$, the concentration of arrivals is governed by the parameter $\gamma$. When $\gamma = 0$ the arrival rate is constant, corresponding to the class described in \S\ref{example-stochastic-lower bound}. As $\gamma$ increases, information arrivals concentrate in the beginning of the horizon, and $\gamma \rightarrow 1$ leads to $\mathbb{E}\left[\sum_{s=1}^{t} h_{k,s} \right] = \lambda T \frac{\log t}{\log T}$, corresponding to the class in \S\ref{example-Decayin-rate}. Then, when $\lambda T$ is of order $T^{1-\gamma}$ or higher, the lower bound is a constant independent of $T$.}

The analysis above suggests that effective optimal policy design might depend on the information arrival process: while policies that explore (and in that sense are not myopic) may be rate-optimal in some cases, a myopic policy that does not explore (except perhaps in a small number of periods in the beginning of the horizon) can be rate-optimal in other cases. However, this identification of rate-optimal policies relies on \emph{prior knowledge} of the information arrival process. In the following sections we therefore study the \emph{adaptation} of polices to unknown information arrival processes, in the sense of guaranteeing rate-optimality without any prior knowledge on the information arrival process.

\vspace{-0.0cm}
\section{Natural Adaptation to the Information Arrival Process}\label{sec:natural-adaptation}\vspace{-0.0cm}
In this section we establish that, when the mappings $\{\phi_k\}$ are known, policies based on upper confidence bounds and posterior sampling exhibit natural robustness to the information arrival process in the sense of achieving the lower bound of Theorem~\ref{theorem-lower bound-general} uniformly over the general class of information arrival processes we consider, without any prior knowledge on the information arrival matrix~$\vector H$.

\vspace{-0.0cm}
\subsection{Robustness of UCB} \label{section-ucb1}\vspace{-0.0cm}
We next consider a UCB1 policy \citep{auer2002finite-time} in which in addition to updating mean rewards and observation counters after the policy's actions, these are also updated after auxiliary observations.  Denote by $n^{\known} _{k,t}$ and $\bar{X}_{k,t}^{\known} $ the weighted number of times a sample from arm $k$ has been observed and the weighted empirical average reward of arm $k$ up to time $t$, respectively:\vspace{-0.25cm}
\begin{equation}\label{eq-def-of-counters-and-empirical-mean}
	n^{\known}_{k,t} \coloneqq \sum\limits_{s=1}^{t-1}  \mathbbm{1}\{\pi_s = k\}  + \sum\limits_{s=1}^{t} \frac{\sigma^2}{\hat{\sigma}^2}h_{k,s}, \quad
	\bar{X}_{k,t}^{\known} \coloneqq \bar{X}_{k,n^{\known}_{k,t}}^{\known} \coloneqq \frac{\sum\limits_{s=1}^{t-1}\frac{1}{\sigma^2}\mathbbm{1}\{\pi_s = k\}X_{k,s} +  \sum\limits_{s=1}^{t} \frac{1}{\hat{\sigma}^2}h_{k,s} Z_{k,s}}{\sum\limits_{s=1}^{t-1} \frac{1}{\sigma^2} \mathbbm{1}\{\pi_s = k\}  + \sum\limits_{s=1}^{t} \frac{1}{\hat{\sigma}^2}h_{k,s} }.\vspace{-0.8cm}
\end{equation} \vspace{-0.0cm}
\begin{framed}\small
	\noindent \textbf{UCB1 with Auxiliary Observations (aUCB1).}\label{Naive-UCB-policy} Inputs: a constant $c$. 
	\vspace{-0.1cm}
	\begin{enumerate}
		\item At each period $t = 1,\dots,T$:\vspace{-0.1cm}
		\begin{enumerate}
			\item Observe the vectors $\vectorgreek{h}_t$  and $\vector{Z}_t$
			\item Select the arm\vspace{-0.1cm}
			\[
			\pi_t = \begin{cases*}
				t & if  $t \le K$  \\
				\arg \max_{k\in\mathcal{K}}\left\{  U_{k,t} \coloneqq \bar{X}_{k,t}^{\known} + \sqrt{\frac{c\sigma^2 \log t}{n^{\known} _{k,t}}} \right\} & if $t > K$
			\end{cases*}\vspace{-0.05cm}
			\]
			\item Receive and observe a reward $X_{\pi_t,t}$\vspace{-0.4cm}
		\end{enumerate}
	\end{enumerate}
\end{framed}\normalsize
\vspace{-0.1cm}

\noindent The next result establishes that by updating counters and reward estimates after both policy's actions and auxiliary information arrivals, the UCB1 policy guarantees rate-optimal performance over the class of general information arrival processes defined in~\S\ref{sec: prob_form}.

\begin{theorem}[Near optimality of UCB1 with auxiliary observations]\label{theorem-UCB1-upper bound}
	Assume that the mappings $\{\phi_k\}_{k=1}^K$ are known, and let $\pi$ be UCB1 with auxiliary observations, tuned by $c>2$. Then, for any $T \ge 1$, $K \ge 2$, and additional information arrival matrix $\vector{H}$:\vspace{-0.05cm}
\[
\mathcal{R}^\pi_{\cal S}(\vector{H}, T) \le
\sum\limits_{k\in\mathcal{K}}
\frac{ \constvar[ucb1] }{\Delta^2} \log \left( \sum\limits_{t=0}^T \exp\left(\frac{-\Delta^2}{\constref{ucb1}}\sum\limits_{s = 1}^{t} h_{k,s}\right)  \right)
+
\constvar[ucb2] \vspace{-0.05cm}
,\]
where $\constref{ucb1}$, and $\constref{ucb2}$ are positive constants that depend only on $\sigma$ and $\hat{\sigma}$.
\end{theorem}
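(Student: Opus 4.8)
The plan is to run the classical UCB1 regret analysis (as in \citealt{auer2002finite-time}), but with the ``number of observations'' counter replaced throughout by the weighted counter $n^{\known}_{k,t} = \tilde n_{k,t} + \tfrac{\sigma^2}{\hat{\sigma}^2}H_{k,t}$, where $\tilde n_{k,t} := \sum_{s=1}^{t-1}\mathbbm{1}\{\pi_s=k\}$ is the number of own pulls of arm $k$ before time $t$ and $H_{k,t} := \sum_{s=1}^{t}h_{k,s}$ is the (deterministic, by assumption) cumulative number of auxiliary arrivals for arm $k$. Since regret equals $\sum_{k\neq k^\ast}\Delta_k\,\mathbb{E}^\pi[\tilde n_{k,T}]$, it suffices to bound $\mathbb{E}^\pi[\tilde n_{k,T}]$ for each suboptimal arm $k$. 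As usual I would split a pull of arm $k$ at a time $t>K$ into a \emph{small-counter} pull, on which $n^{\known}_{k,t} < 4c\sigma^2\log t/\Delta_k^2$, and a \emph{large-counter} pull; a large-counter pull forces either the optimal arm's index $U_{k^\ast,t}$ to underestimate $\mu^\ast$, or arm $k$'s index $U_{k,t}$ to overestimate $\mu_k$, by at least its confidence radius, exactly as in the standard argument. The first $K$ rounds contribute at most one pull of arm $k$.

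The first new ingredient is the concentration bound for the weighted empirical mean. Because $h_{k,s}Z_{k,s}=\sum_{m=1}^{h_{k,s}}\phi_k(Y_{k,s,m})$, the quantity $\bar X^{\known}_{k,t}$ defined in (\ref{eq-def-of-counters-and-empirical-mean}) is an inverse-variance-weighted average of independent random variables, all with mean $\mu_k$: the own-reward samples $X_{k,s}$ (weight $1/\sigma^2$, sub-Gaussian parameter $\sigma^2$) and the mapped auxiliary samples $\phi_k(Y_{k,s,m})$ (weight $1/\hat{\sigma}^2$, sub-Gaussian parameter $\hat{\sigma}^2$). A one-line moment-generating-function computation then shows that $\bar X^{\known}_{k,t}-\mu_k$ is $\sigma^2/n^{\known}_{k,t}$-sub-Gaussian, so the radius $\sqrt{c\sigma^2\log t/n^{\known}_{k,t}}$ is exactly calibrated and each one-sided deviation event has probability at most $t^{-c/2}$ for any fixed value of the (random) own-pull counter. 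A union bound over the counter values — made rigorous via the usual device of generating an i.i.d.\ reward stream for each arm in advance and using that the auxiliary stream and the randomization $U$ are independent of the policy's choices — controls the total probability of large-counter pulls of arm $k$; summing over $t$ this contributes an $O(1)$ term, and the condition $c>2$ is used here exactly as in the classical analysis to make this sum finite.

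The second, and main, new ingredient is the combinatorial bound on the number of small-counter pulls of arm $k$. If $t_j$ denotes the time of the $j$-th own pull of arm $k$, then $\tilde n_{k,t_j}=j-1$, so that pull is a small-counter pull only if $j-1 < 4c\sigma^2\log t_j/\Delta_k^2 - \tfrac{\sigma^2}{\hat{\sigma}^2}H_{k,t_j}$; since $t_j\le T$, the number of such pulls is at most $1+\bigl(\max_{1\le t\le T}\bigl[\,4c\sigma^2\log t/\Delta_k^2 - \tfrac{\sigma^2}{\hat{\sigma}^2}H_{k,t}\,\bigr]\bigr)_+$. Writing $4c\sigma^2\log t/\Delta_k^2 - \tfrac{\sigma^2}{\hat{\sigma}^2}H_{k,t} = \tfrac{4c\sigma^2}{\Delta_k^2}\log\bigl(t\,e^{-\beta H_{k,t}}\bigr)$ with $\beta := \Delta_k^2/(4c\hat{\sigma}^2)$, and using that $s\mapsto H_{k,s}$ is nondecreasing, for every $t\le T$ one has $t\,e^{-\beta H_{k,t}} = \sum_{s=1}^{t} e^{-\beta H_{k,t}} \le \sum_{s=1}^{t} e^{-\beta H_{k,s}} \le \sum_{s=0}^{T} e^{-\beta H_{k,s}}$ (with $H_{k,0}:=0$). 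Hence the number of small-counter pulls of arm $k$ is at most $1 + \tfrac{4c\sigma^2}{\Delta_k^2}\log\bigl(\sum_{t=0}^{T} e^{-\beta H_{k,t}}\bigr)$, a bound that is deterministic given $\vector{H}$. Combining the three pieces gives $\mathbb{E}^\pi[\tilde n_{k,T}] \le \tfrac{4c\sigma^2}{\Delta_k^2}\log\bigl(\sum_{t=0}^{T} e^{-\Delta_k^2 H_{k,t}/(4c\hat{\sigma}^2)}\bigr) + O(1)$; multiplying by $\Delta_k$, summing over $k$, and bounding $\Delta_k$ from below by $\Delta$ and from above by an absolute constant (each logarithm's argument is at least $1$, so these substitutions only enlarge the bound) yields the stated form after renaming constants.

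I expect the combinatorial step — recognizing that the threshold condition ``$n^{\known}_{k,t}$ exceeds $4c\sigma^2\log t/\Delta_k^2$'' combined with monotonicity of the cumulative-arrival counts collapses precisely to the $\log\bigl(\sum_t\exp(\cdot)\bigr)$ functional appearing in the statement — to be the crux; the rest is careful bookkeeping of random counters and an otherwise routine adaptation of the UCB1 concentration argument. A secondary subtlety is ensuring that the per-$t$ concentration statement is invoked at the correct random value of the own-pull counter while the auxiliary contribution $\tfrac{\sigma^2}{\hat{\sigma}^2}H_{k,t}$ is treated as a fixed offset; this is exactly where independence of $\vector{H}$ and of the auxiliary observations from the policy's actions is needed.
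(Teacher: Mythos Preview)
Your proposal is correct and follows essentially the same approach as the paper's proof: the three-event decomposition into ``small counter'' versus deviation events, the $t^{-c/2}$ concentration for the weighted empirical mean, and the deterministic combinatorial bound on the number of small-counter pulls via the max of $4c\sigma^2\log t/\Delta_k^2 - \tfrac{\sigma^2}{\hat\sigma^2}H_{k,t}$. Your monotonicity step $t\,e^{-\beta H_{k,t}} \le \sum_{s=1}^{t} e^{-\beta H_{k,s}}$ is exactly the content of the paper's device of introducing the auxiliary threshold $\hat l_{k,t} = \tfrac{4c\sigma^2}{\Delta_k^2}\log\tau_{k,t}$ with $\tau_{k,t}=\sum_{s=1}^{t}\exp\bigl(\beta\sum_{\tau=s}^{t}h_{k,\tau}\bigr)$, since $\hat l_{k,t}-\tfrac{\sigma^2}{\hat\sigma^2}H_{k,t}$ simplifies to $\tfrac{4c\sigma^2}{\Delta_k^2}\log\sum_{s=1}^{t}e^{-\beta H_{k,s-1}}$; your route is arguably more transparent because it avoids introducing $\hat l_{k,t}$ at all.
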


\noindent
The upper bound in Theorem~\ref{theorem-UCB1-upper bound} holds for any arbitrary trajectory of information arrivals that is captured by the matrix $\vector{H}$, and matches the lower bound in Theorem \ref{theorem-lower bound-general} with respect to that trajectory (captured by $h_{k,t}$'s), and with respect to the time horizon~$T$, the number of arms~$K$, the minimum expected reward difference~$\Delta$, as well as $\sigma$ and $\hat{\sigma}$. This establishes that, by updating counters and reward estimates after both policy's actions and auxiliary information arrivals, UCB1 guarantees rate optimality \emph{uniformly} over the general class of information arrival processes defined in~\S\ref{sec: prob_form}. 

When put together, Theorems~\ref{theorem-lower bound-general} and~\ref{theorem-UCB1-upper bound} establish the spectrum of minimax complexities associated with our class of problems, which is characterized as follows.\vspace{-0.1cm}

\begin{remark}\label{remark-big-oh} \textbf{\textup{(Minimax complexity)}} Theorems~\ref{theorem-lower bound-general} and~\ref{theorem-UCB1-upper bound} together identify the spectrum of minimax regret rates for the class of MAB problem with unknown information arrival process (but with prior knowledge of mappings $\{\phi_k\}$), as a function of the realized information arrival process matrix~$\vector{H}$:\vspace{-0.1cm}
	\[
	\mathcal{R}^\ast_{\cal S}(\vector{H}, T) \asymp \sum\limits_{k\in\mathcal{K}}\log\left(\sum\limits_{t=1}^T \vspace{-0.0cm} \exp\left(-c\sum\limits_{s=1}^th_{k,s}\right)\right),\vspace{-0.1cm}
	\]
	where $c$ is a constant that depends on problem parameters such as $K$, $\Delta$, and $\sigma$.\vspace{-0.05cm}
\end{remark}

\noindent
Remark~\ref{remark-big-oh} has a couple of important implications. First, it identifies a spectrum of minimax regret rates ranging from the classical regret rates that were established in the stochastic MAB literature when there is no auxiliary information, to regret that is uniformly bounded over time when information arrives frequently and/or early enough. Furthermore, identifying the minimax complexity provides a yardstick for evaluating the performance of policies, and a sharp criterion for identifying rate-optimal ones. This already identifies UCB1 as rate optimal, but will shortly be used also for establishing the optimality (or sub-optimality) of other policies.

\medskip
\noindent
\textbf{Key Ideas in the Proof.} The proof of Theorem~\ref{theorem-UCB1-upper bound} adjusts the analysis of UCB1 in \cite{auer2002finite-time}. Pulling a suboptimal arm $k$ at time step $t$ implies at least one of the following: $(i)$ the empirical average of the best arm deviates from its mean; $(ii)$ the empirical mean of arm $k$ deviates from its mean; or $(iii)$ arm $k$ has not been pulled sufficiently often in the sense that\vspace{-0.2cm}
\[
\tilde{n}_{k,t-1} \le \hat  l_{k,t} - \sum\limits_{s=1}^t \frac{\sigma^2}{\hat{\sigma}^2} h_{k,s},\vspace{-0.2cm}
\]
where $\hat l_{k,t} = \frac{4c\sigma^2\log \left(\tau_{k,t}\right)}{ \Delta^2}$ with $\tau_{k,t} \coloneqq \sum\limits_{s=1}^t \exp\left(
{\frac{ \Delta_k^2}{4c\hat{\sigma}^2}\sum\limits_{\tau = s}^t h_{k,\tau}}
\right)$, and $\tilde{n}_{k,t-1}$ is the number of times arm $k$ is pulled up to time $t$. The probability of the first two events can be bounded using Chernoff-Hoeffding inequality, and the probability of the third one can be bounded using:\vspace{-0.05cm}
\[
\sum\limits_{t=1}^T \mathbbm{1} \left\{ \pi_t = k, \tilde{n}_{k,t-1} \le  \hat l_{k,t} - \sum\limits_{s=1}^t \frac{\sigma^2}{\hat{\sigma}^2} h_{k,s} \right\}
\le
\max_{1\le t \le T}\left\{ \hat l_{k,t} - \sum\limits_{s=1}^t \frac{\sigma^2}{\hat{\sigma}^2}h_{k,s}\right\}\vspace{-0.05cm}
\]
Applying these bounds, we established the Theorem for $c>2$.

\vspace{-0.05cm}
\subsection{Robustness of Thompson Sampling} \label{section-thompson-sampling}\vspace{-0.05cm}
Consider a Thompson sampling policy with Gaussian priors \citep{agrawal2012analysis,agrawal2013further}, but where posteriors are updated both after the policy's actions and after auxiliary observations. We denote by $n^{\known} _{k,t}$ and $\bar{X}_{k,t}^{\known}$ the observation counters and mean reward estimates as defined in~\eqref{eq-def-of-counters-and-empirical-mean}.\vspace{-0.2cm}

\begin{framed}\small
	\textbf{Thompson sampling with auxiliary observations (aTS).}\label{Thompson-Sampling-policy} Inputs: a tuning constant $c$.
	\begin{enumerate}
		\item Initialization: set initial counters $n_{k,0} = 0$ and initial empirical means $\bar{X}_{k,0} = 0$ for all $k\in \cal K$
		
		\item At each period $t = 1,\dots,T$:
		\begin{enumerate}
			\item Observe the vectors $\vectorgreek{h}_t$ and $\vector{Z}_t$
			\item Sample $\theta_{k,t} \sim \mathcal{N}(\bar{X}_{k,t}^{\known} , c\sigma^2 (n^{\known} _{k,t}+1)^{-1})$ for all $k \in \cal K$, and select the arm\vspace{-0.15cm}
			\[
			\pi_t = \arg \max_k \theta_{k,t}.\vspace{-0.15cm}
			\]
			
			\item Receive and observe a reward $X_{\pi_t,t}$\vspace{-0.25cm}
		\end{enumerate}
	\end{enumerate}
\end{framed}\vspace{-0.3cm}\normalsize

\noindent The next result establishes that by deploying a Thompson sampling policy that updates posteriors after the policy's actions \emph{and} whenever auxiliary information arrives, one may guarantee rate optimal performance in the presence of unknown information arrival processes.\vspace{-0.1cm}

\begin{theorem}\label{theorem-Thompson-Sampling-regret-upper bound} \textbf{\textup{(Near optimality of Thompson sampling with auxiliary observations)}}
	Assume that the mappings $\{\phi_k\}_{k=1}^K$ are known, and let $\pi$ be a Thompson sampling with auxiliary observations policy, with $c>0$. For every $T \ge 1$, and auxiliary information arrival matrix $\vector{H}$:\vspace{-0.15cm}
\[
\mathcal{R}^\pi_{\cal S}(\vector{H}, T) \le
\sum\limits_{k\in\mathcal{K}\setminus \{k^\ast\}}
\left(
\frac{\constvar[TS1]}{\Delta} \log
\left(
\sum\limits_{t=0}^T \exp\left(\frac{-\Delta^2}{\constref{TS1}}\sum\limits_{s = 1}^t h_{k,s}\right)
\right)
+
\frac{\constvar[TS2]}{\Delta^3}
+
\constvar[TS3]
\right),\vspace{-0.15cm}
\]
for some absolute positive constants $\constref{TS1}$, $\constref{TS2}$, and $\constref{TS3}$ that depend on only $c$, $\sigma$ and $\hat{\sigma}$.\vspace{-0.1cm}
\end{theorem}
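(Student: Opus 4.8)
The plan is to adapt the regret analysis of Thompson sampling with Gaussian priors from \cite{agrawal2013further} to account for the augmented counters $n^{\known}_{k,t}$ and the weighted reward estimates $\bar X^{\known}_{k,t}$ defined in~\eqref{eq-def-of-counters-and-empirical-mean}. The key observation is that the auxiliary observations act exactly like additional pseudo-pulls of arm $k$: because $\phi_k(Y_{k,t,m})$ is unbiased for $\mu_k$ and $\hat\sigma^2$-sub-Gaussian, an auxiliary batch of size $h_{k,s}$ contributes $\tfrac{\sigma^2}{\hat\sigma^2} h_{k,s}$ effective observations whose weighted average has the same concentration behavior (after rescaling) as genuine reward samples. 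So the analysis will track, for each suboptimal arm $k$, the ``effective'' number of observations $n^{\known}_{k,t} = \tilde n_{k,t-1} + \sum_{s\le t}\tfrac{\sigma^2}{\hat\sigma^2}h_{k,s}$, and the regret contribution of arm $k$ will be controlled once $\tilde n_{k,t-1}$ (the decision-driven part) exceeds a threshold of order $\tfrac{\log\tau_{k,t}}{\Delta^2}$ minus the auxiliary contribution, where $\tau_{k,t}$ is the same ``virtual time'' quantity $\sum_{s=1}^t\exp\bigl(\tfrac{\Delta_k^2}{c'\hat\sigma^2}\sum_{\tau=s}^t h_{k,\tau}\bigr)$ that appeared in the UCB1 proof sketch.

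First I would set up the standard decomposition from \cite{agrawal2013further}: for a fixed suboptimal arm $k$, split the events $\{\pi_t = k\}$ according to whether the posterior sample $\theta_{k,t}$ exceeds a threshold $x_k$ slightly above $\mu_k$, and whether the posterior sample $\theta_{k^\ast,t}$ for the optimal arm falls below a threshold $y_k$ slightly below $\mu^\ast$, with $\mu_k < y_k < x_k < \mu^\ast$ chosen at $\Delta_k/3$ spacings. This gives three terms: (i) the number of periods in which $\theta_{k,t} > x_k$ while $\bar X^{\known}_{k,t}$ is well-concentrated — bounded by a geometric/anti-concentration argument over the effective counter $n^{\known}_{k,t}$, which after summing yields the leading $\tfrac{1}{\Delta}\log(\sum_t \exp(-c\sum_s h_{k,s}))$ term; (ii) the number of periods where $\bar X^{\known}_{k,t}$ deviates from $\mu_k$ by more than $\Delta_k/3$ — bounded by a sub-Gaussian/Chernoff tail summed over the effective counter, giving an $O(\tfrac{1}{\Delta^2})$ or $O(\tfrac{1}{\Delta^3})$ constant; (iii) the number of periods in which the optimal arm's posterior sample is too small, which in the Gaussian-TS analysis is controlled uniformly (independent of $\vector H$) and yields the additive absolute constant, the $O(1/\Delta^3)$ piece coming from the part of that analysis that depends on how quickly the optimal arm accrues samples.

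The main obstacle I anticipate is term~(i): in the classical proof one bounds $\sum_t \Prob(\theta_{k,t} > x_k \mid \text{history})$ by a telescoping argument that relates it to the number of pulls of the \emph{optimal} arm between consecutive pulls of arm $k$; here the ``clock'' is no longer the number of optimal-arm pulls but the effective counter, and I need to re-derive the analogue of the bound that the expected number of consecutive periods with $\theta_{k,t} > x_k$ is at most $1 + \tfrac{3}{ \Phi_{\text{Gauss}}(\cdot)}$ using the Gaussian anti-concentration of $\theta_{k^\ast,t}$ around $\mu^\ast$, while simultaneously ensuring that the auxiliary arrivals on arm $k$ only help (they shrink the posterior variance of $\theta_{k,t}$, so the probability $\theta_{k,t}>x_k$ decays at least as fast as $\exp(-\Omega(\Delta^2 n^{\known}_{k,t}))$). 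Converting $\sum_t \exp(-\Omega(\Delta^2 \sum_{s\le t} h_{k,s}))$ into the claimed logarithmic form requires the same summation lemma used for Theorem~\ref{theorem-UCB1-upper bound} — namely that $\sum_{t=1}^T \Indlr{\tilde n_{k,t-1} \le \hat l_{k,t} - \sum_{s\le t}\tfrac{\sigma^2}{\hat\sigma^2}h_{k,s}} \le \max_t\{\hat l_{k,t} - \sum_{s\le t}\tfrac{\sigma^2}{\hat\sigma^2}h_{k,s}\}$ and $\max_t \hat l_{k,t} = O\bigl(\tfrac{1}{\Delta^2}\log\sum_{t}\exp(-c\sum_s h_{k,s})\bigr)$ — so I would prove that lemma once and invoke it in both places. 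Everything else (Chernoff bounds for the concentration events, the Gaussian tail estimates for the optimal arm, assembling the constants $\constref{TS1},\constref{TS2},\constref{TS3}$ in terms of $c,\sigma,\hat\sigma$) is routine bookkeeping that mirrors \cite{agrawal2013further} with $n_{k,t}\mapsto n^{\known}_{k,t}$.
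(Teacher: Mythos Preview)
Your three-term decomposition matches the paper's (your (i)--(iii) are its $J_{k,2}$, $J_{k,3}$, $J_{k,1}$ respectively), and you correctly pinpoint term~(i) as the one carrying the log-sum-exp dependence on~$\vector{H}$. The gap is in how you propose to extract the logarithm. The counting lemma from the UCB proof --- $\sum_t \Indlr{\pi_t=k,\ \tilde n_{k,t-1}\le \hat l_{k,t}-\sum_{s\le t}\tfrac{\sigma^2}{\hat\sigma^2}h_{k,s}}\le \max_t\{\cdot\}$ --- works there because, after ruling out the concentration failures, the event $\{\pi_t=k\}$ \emph{deterministically forces} the counter below a threshold, and each such pull increments the counter by one. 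For Thompson sampling the event $\{\pi_t=k,\,E^\mu_{k,t},\,\bar E^\theta_{k,t}\}$ carries no such implication: it can occur at arbitrarily large counter values, just with conditional probability at most $\exp\bigl(-\Omega(\Delta_k^2\, n^{\known}_{k,t})\bigr)$. If you then drop the decision-driven part of $n^{\known}_{k,t}$ and sum the tail probabilities, you get $\sum_t\exp\bigl(-c\sum_{s\le t}h_{k,s}\bigr)$, which is the \emph{argument} of the logarithm in the theorem, not the logarithm itself (when $\vector H=0$ this is $T$, not $\log T$). The UCB lemma offers no mechanism to recover the log.

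The paper closes this gap with a separate lemma (Lemma~\ref{lemma-Arrival-process-with-exponentially-decaying-rate}) designed exactly for a counting process whose increments have exponentially decaying conditional probability: if $A_t\in\{0,1\}$ satisfies $\mathbb{E}[A_t\mid N_t]\le p^{N_t+1}$ with $N_t=\sum_{s<t}(A_s+w_s)+w_t$ and deterministic $w_s\ge 0$, then $\mathbb{E}[N_{T+1}]\le \log_{1/p}\bigl(1+\sum_{s=1}^{T-1}p^{\sum_{\tau\le s}w_\tau}\bigr)+\sum_s w_s$. The proof is a one-line recursion on $\mathbb{E}[p^{-N_t}]$ followed by Jensen. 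Applying it with $A_t=\Indlr{\pi_t=k,\,E^\mu_{k,t},\,\bar E^\theta_{k,t}}$, $w_t=\tfrac{\sigma^2}{\hat\sigma^2}h_{k,t}$, and $p=\exp\bigl(-(y_k-x_k)^2/2c\sigma^2\bigr)$ turns the geometric conditional decay directly into the log-sum-exp bound on $J_{k,2}$ after subtracting $\sum_t w_t$. This lemma is the missing ingredient in your plan; the remainder of your outline (term~(ii) via Chernoff on the weighted estimator, term~(iii) via the Agrawal--Goyal optimal-arm analysis indexed by pulls of $k^\ast$, both yielding $\vector H$-independent constants) is consistent with the paper.
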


\noindent
The upper bound in Theorem \ref{theorem-Thompson-Sampling-regret-upper bound} holds for any trajectory of information arrivals that is captured by the matrix $\vector{H}$, and matches the lower bound in Theorem \ref{theorem-lower bound-general} with respect to the dependence on that trajectory (captured by $h_{k,t}$'s), as well as $T$, $K$, $\Delta$, $\sigma$, and $\hat{\sigma}$. This establishes that, similarly to UCB1, by updating posteriors after both policy's actions and auxiliary information arrivals, Thompson sampling guarantees rate optimality uniformly over the general class of information arrival processes defined in \S\ref{sec: prob_form}. 

\medskip
\noindent
\textbf{Key Ideas in the Proof.} To establish Theorem~\ref{theorem-Thompson-Sampling-regret-upper bound} we decompose the regret associated with each suboptimal arm $k$ into three components: $(i)$ Regret from selecting arm $k$ when its empirical mean deviates from its expectation; $(ii)$ Regret from selecting arm $k$ when its empirical mean does not deviate from its expectation, but $\theta_{k,t}$ deviates from the empirical mean; and $(iii)$ Regret from selecting arm $k$ when its empirical mean does not deviate from its expectation and $\theta_{k,t}$ does not deviate from the empirical mean. Following the analysis in \cite{agrawal2013thompson} one may use concentration inequalities 
to bound the cumulative regret associated with $(i)$ and $(iii)$ uniformly over time.

To analyze $(ii)$, we consider an arrival process with a rate that is decreasing exponentially with each arrival. To bound the number of arrivals we establish the following lemma.

\begin{lemma}\label{lemma-Arrival-process-with-exponentially-decaying-rate}
	Fix an integer $T\geq 1$. For every integer $1\le t \le T$, let $A_t\in\{0,1\}$ be a binary random variable. Let $w_t \ge 0 $, $1\le t \le T$, be some deterministic constants, and define $w_{T} = 0$. Define $N_t = \sum\limits_{s=1}^{t-1} \left(A_s+w_s \right)+w_t$, and assume $\mathbb{E}\left[A_t \;\middle|\; N_t \right] \le p^{N_t+1}$ for some $p\le1$. Then, for every $1\le t \le T+1$:\vspace{-0.35cm}
	\[
	\mathbb{E}
	\left[
	N_t
	\right]
	\le
	\log_{\frac{1}{p}}
	\left(
	1 + \sum\limits_{s=1}^{t-1}p^{\sum\limits_{\tau=1}^s w_\tau}
	\right)
	+
	\sum\limits_{s=1}^t w_s.\vspace{-0.05cm}
	\]
\end{lemma}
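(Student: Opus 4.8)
The plan is to pass from the bound on $\mathbb{E}[N_t]$ to a bound on an exponential moment of the ``pure'' arrival count. Write $W_t := \sum_{s=1}^{t} w_s$ and $D_t := N_t - W_t$. Since $N_{t+1} = N_t + A_t + w_{t+1}$, the weights telescope and one gets the clean recursion $D_{t+1} = D_t + A_t$ with $D_1 = 0$, so $D_t = \sum_{s=1}^{t-1} A_s$; in particular $D_t = N_t - W_t$ is a deterministic function of $N_t$, which I will use when conditioning. Put $Q_t := 1 + \sum_{s=1}^{t-1} p^{W_s}$, so that $Q_1 = 1$ and $Q_{t+1} = Q_t + p^{W_t}$, and note that the asserted inequality $\mathbb{E}[N_t] \le \log_{1/p}(Q_t) + W_t$ is, after subtracting $W_t$ and exponentiating in base $1/p$, exactly $(1/p)^{\mathbb{E}[D_t]} \le Q_t$. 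Since $x \mapsto (1/p)^x$ is convex, Jensen's inequality reduces everything to proving the stronger claim
\[
\mathbb{E}\!\left[(1/p)^{D_t}\right] \;\le\; Q_t, \qquad 1 \le t \le T+1 .
\]

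I would prove this by induction on $t$. The base case $t = 1$ is the identity $1 = 1$. For the step, the crucial elementary observation is that, because $A_t$ is $\{0,1\}$-valued, $(1/p)^{A_t} = 1 + \tfrac{1-p}{p} A_t$, whence
\[
(1/p)^{D_{t+1}} \;=\; (1/p)^{D_t} + \tfrac{1-p}{p}\,(1/p)^{D_t} A_t .
\]
Taking expectations and conditioning the cross term on $N_t$: the factor $(1/p)^{D_t} = (1/p)^{N_t - W_t}$ is $\sigma(N_t)$-measurable, so the hypothesis $\mathbb{E}[A_t \mid N_t] \le p^{N_t+1}$ yields
\[
\mathbb{E}\!\left[(1/p)^{D_t} A_t \,\middle|\, N_t\right] \;\le\; (1/p)^{D_t}\, p^{\,N_t + 1} \;=\; (1/p)^{D_t}\, p^{\,D_t + W_t + 1} \;=\; p^{\,W_t + 1},
\]
where the last equality is the cancellation $(1/p)^{D_t} p^{D_t} = 1$. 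Combining with the inductive hypothesis, $\mathbb{E}[(1/p)^{D_{t+1}}] \le Q_t + \tfrac{1-p}{p}\, p^{W_t+1} = Q_t + (1-p)\,p^{W_t} \le Q_t + p^{W_t} = Q_{t+1}$, using $0 \le 1 - p$. This closes the induction; the hypothesis is invoked only for indices $t \le T$, which is precisely what is needed to reach $t = T+1$.

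Finally, $\mathbb{E}[(1/p)^{D_t}] \le Q_t$ together with Jensen gives $(1/p)^{\mathbb{E}[D_t]} \le Q_t$, i.e.\ $\mathbb{E}[D_t] \le \log_{1/p}(Q_t)$; adding back $W_t = \sum_{s=1}^{t} w_s$ produces the stated bound, the degenerate case $p = 1$ being trivial since then the right-hand side is infinite for $t \ge 2$.

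The one step that requires an idea — the remainder being a short induction plus Jensen — is the decision to control $\mathbb{E}[(1/p)^{D_t}]$ rather than $\mathbb{E}[N_t]$ or $\mathbb{E}[p^{N_t}]$ directly. The hypothesis $\mathbb{E}[A_t \mid N_t] \le p^{N_t+1}$ is one-sided, hence usable only where the quantity it bounds enters with a $+$ sign; expanding $(1/p)^{A_t}$ (rather than $p^{A_t}$, which would put $A_t$ on the wrong side of the inequality) is what arranges this, and the identity $(1/p)^{D_t} p^{D_t} = 1$ is what strips the randomness out of the resulting estimate, leaving the deterministic constant $p^{W_t+1}$. I would expect isolating this substitution to be the main obstacle; once it is in place, no real difficulty remains.
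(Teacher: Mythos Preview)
Your proof is correct and follows essentially the same route as the paper's: both bound the exponential moment $\mathbb{E}[p^{-N_t}]$ (equivalently your $\mathbb{E}[(1/p)^{D_t}]$, which differs only by the deterministic factor $p^{W_t}$) via a one-step recursion driven by the hypothesis $\mathbb{E}[A_t\mid N_t]\le p^{N_t+1}$, and then apply Jensen. Your centering by $W_t$ and the identity $(1/p)^{A_t}=1+\tfrac{1-p}{p}A_t$ in place of case analysis are cosmetic variations on the paper's argument, not a different method.
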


\proof \; \;
Consider the sequence $\{p^{-N_t}:\;t=1,\ldots,T\}$. For every $1\le t \le T$,\vspace{-0.15cm}
\begin{align*}
	\mathbb{E}
	\left[
	p^{-N_t} \;\middle|\; p^{-N_{t-1}}
	\right]
	&=
	\mathbb{P}\left\{ A_{t-1}=1 \;\middle|\; N_{t-1} \right\}
	\cdot p^{-\left(N_{t-1}+w_t +1\right)}
	+
	\mathbb{P}\left\{A_{t-1}=0 \;\middle|\; N_{t-1} \right\}
	\cdot p^{-\left(N_{t-1}+w_t \right)}
	\nonumber \\
	&\le
	p^{N_{t-1}+1} \cdot p^{-\left(N_{t-1}+w_t +1\right)}
	+
	p^{-\left(N_{t-1}+w_t \right)}\\
	&=
	p^{-w_t} \cdot
	\left(
	1 + p^{-N_{t-1}}
	\right).
\end{align*}
Using this inequality and applying a simple induction, one obtains
$
\mathbb{E}
\left[
p^{-N_t}
\right]
\le
\sum\limits_{s=1}^{t} p^{-\sum\limits_{\tau=s}^t w_\tau},
$
for every $t\in\{1,\ldots,T\}$. Finally, by Jensen's inequality, one has
$
p^{-\mathbb{E}
	\left[N_t\right]}
\le
\mathbb{E}
\left[
p^{-N_t}
\right],
$
which implies\vspace{-0.2cm}
\[
\mathbb{E}
\left[N_t\right]
\le
\log_{\frac{1}{p}}
\left(
\sum\limits_{s=1}^{t} p^{-\sum\limits_{\tau=s}^t w_\tau}
\right)
=
\log_{\frac{1}{p}}
\left(
1 + \sum\limits_{s=1}^{t-1}p^{\sum\limits_{\tau=1}^s w_\tau}
\right)
+
\sum\limits_{s=1}^t w_s.\vspace{-0.3cm}
\]
This concludes the proof of the lemma.	\endproof

\medskip
\noindent
Using Lemma~\ref{lemma-Arrival-process-with-exponentially-decaying-rate} we establish that the probability of selecting a sub-optimal action when event~(\textit{ii}) occurs drops exponentially whenever that action is selected or an auxiliary observation is received (similarly to the arrival probability in the setting of Lemma~\ref{lemma-Arrival-process-with-exponentially-decaying-rate}). This, in turn, enables one to bound the expected number of times a suboptimal arm is selected when event (\textit{ii}) occurs, 
and to establish Theorem~\ref{theorem-Thompson-Sampling-regret-upper bound}.\vspace{-0.2cm}


\begin{figure}[h]
\centering		\includegraphics[height=2.25in]{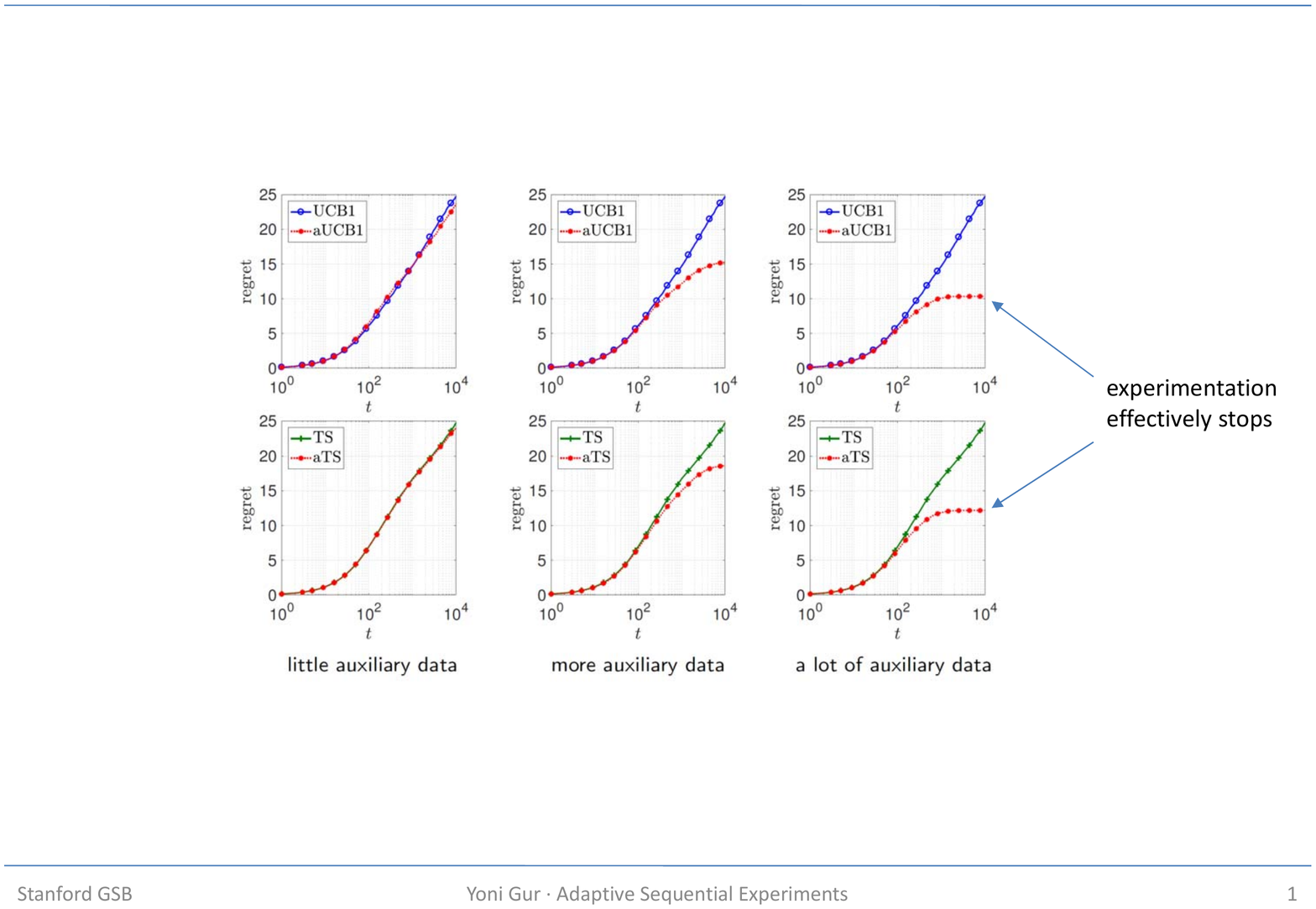}\vspace{-0.2cm}
		\caption{\small Impact of stochastic information arrivals (see \S\ref{example-stochastic-lower bound}) with different arrival rates: little auxiliary data ($\lambda = 0.001$), more auxiliary data ($\lambda = 0.01$) and a lot of auxiliary data ($\lambda = 0.05$). \emph{(Top)} Regret incurred by UCB1 tuned by $c=1.0$ with (aUCB1) and without (UCB1) leveraging auxiliary data. \emph{(Bottom)} Regret incurred by Thompson sampling tuned by $c=0.5$ with (aTS) and without (TS) leveraging auxiliary data. Setup: $T=10^4$, $K=3$, rewards are Gaussian with means $\boldsymbol{\mu} = (0.7, \; 0.5, \; 0.5)^\top$ and standard deviation $\sigma = 0.5$, auxiliary observations are i.i.d. samples from reward distributions, performance averaged over 200 replications.} \label{fig:main-text-sim}\vspace{-0.55cm}
\end{figure}
\subsection{Discussion} \label{section-discussion-on-necessity-of-e-greedy-analysis}\vspace{-0.1cm}

Theorems \ref{theorem-UCB1-upper bound} and \ref{theorem-Thompson-Sampling-regret-upper bound} establish that, when the mappings $\{\phi_k\}$ are known, both UCB1 and Thompson sampling guarantee rate optimality uniformly over the class of information arrival processes defined in~\S\ref{sec: prob_form}. 
Figure~\ref{fig:main-text-sim} visualizes the regret incurred by UCB1 and Thompson sampling for the stationary information arrivals as the rate of arrivals increases; tight upper bounds for the settings in \S\ref{example-stochastic-lower bound} and \S\ref{example-Decayin-rate} for any values of $\lambda$ and $\kappa$ are given in Appendix~\ref{subsec:app:corollaries} (Corollaries~\ref{corollary-regret-upper bound-stationary} and \ref{corollary-regret-upper bound-decaying-rate}), and numerical analysis with a variety of tuning parameters and information arrival scenarios is provided in Appendix~\ref{section-numerics}.


Our results imply that UCB1 and Thompson sampling possess remarkable robustness with respect to the information arrival process: without any adjustments they guarantee the best achievable performance (up to some multiplicative constant). Furthermore, 
the regret rate guaranteed by these policies cannot be improved by \emph{any} other policy, including ones that are predicated on accurate prior knowledge of the information arrival process. These results therefore uncover an important novel property of these policies that  extends the class of problems to which they have been applied.

To appreciate the above robustness property, it is important to observe that, in general, policies that are rate optimal in the absence of auxiliary information arrivals \emph{do not necessarily maintain optimality} under the general information arrival process considered here. For example, policies with exogenous exploration rate, such as forced sampling and $\epsilon_t$-greedy, do not exhibit such robustness (as we observe in Appendix \ref{subsec:inuition}). The main reason for the sub-optimality of these policies in the presence of auxiliary information arrivals is that regardless of the amount of auxiliary observations collected, they always incur a certain loss due to their predetermined exploration schedule. 
Nevertheless, we note that it is possible to adjust such policies by endogenizing their exploration rate and adjusting it dynamically when auxiliary information is collected. For completeness, in Appendix \S\ref{section:A near-optimal adaptive policy} we detail a method of doing so using \emph{virtual time indices} that are advanced multiplicatively upon auxiliary information arrivals. We further provide an adjusted $\epsilon_t$-greedy-type policy with virtual time indexes, and establish it guarantees rate-optimality in the presence of auxiliary information arrivals. (We note that similarly to $\epsilon_t$-greedy-type policies, this policy requires the accurate minimal expected reward difference $\Delta$ as an input.)



\color{black}
\section{Leveraging Auxiliary Information with Unknown Mappings}\label{sec:unknown-mappings}\vspace{-0.0cm}
In the previous section we have established the rate optimality of UCB1 and Thompson sampling in the presence of auxiliary information. Importantly, the rate optimality of these policies is predicated on precise prior knowledge of the mappings $\left\{\phi_k\right\}$ that allow the interpretation of auxiliary data for constructing reward estimates. When the mappings $\left\{\phi_k\right\}$ are misspecified, these policies are not rate optimal anymore, and the performance they achieve may deteriorate (and may even become worse than the performance they would have achieved without access to auxiliary data; this phenomenon will be demonstrated in~\S\ref{section-real data}).
In the current section, we address the challenge of designing adaptive policies that leverage auxiliary information without precise prior knowledge of the mappings $\left\{\phi_k\right\}$. For simplicity, we assume that mappings are known to belong to the class $\Phi_L$ of linear mappings described in Example~\ref{exp:linear-mappins} with $0 \le \lalpha \le  \alpha_k \le \ualpha$ for some known $\ualpha$ and~$\lalpha$, and assume that auxiliary observations have non-negative mean, that is, $y_k\geq 0$ for all $k\in\cal K$. (We note that extending the analysis of this section to the class of linear mappings that includes non-zero intercepts is immediate.)

It is important to observe that when the mappings $\{\phi_k\}$ are unknown, one could always construct instances where regret rates that are achievable in the absence of auxiliary information must be incurred by any admissible policy, even when auxiliary data is abundant, as we next demonstrate.\vspace{-0.05cm}

\begin{example}\textbf{\textup{(No performance gain with unknown mappings)}}\label{ex:nogain}
Let $K=2$ and let $\Phi=\Phi_L$ be a class of linear mappings from Example~\ref{exp:linear-mappins}. Assume it is known that $\alpha_k\in\{\alpha_1,\alpha_2\}$ for $k\in\{1,2\}$ and that $\mu_1\in\{\mu_2+\Delta,\mu_2-\Delta\}$. Suppose that for all $t\in\{1,\ldots,T\}$ and $m\in\{0,1,\ldots,h_{1,t}\}$, auxiliary observations obtain $Y_{1,t,m}= y_1=\frac{\mu_2+\Delta}{\alpha_2} = \frac{\mu_2-\Delta}{\alpha_1}$ with probability 1. Note that if $\mu_1 = \mu_2-\Delta$, then $\mu_1 = E[\alpha_1 Y_{1,t,m}]$, and if $\mu_1 = \mu_2+\Delta$, then $\mu_1 = E[\alpha_2 Y_{1,t,m}]$. Therefore auxiliary observations are uninformative for identifying $\mu_1$. Following the proof of Theorem~1, one obtains that for any $T\ge 1$ and information arrival matrix $\vector{H}$, the regret that any policy $\pi\in\mathcal{P}$ may guarantee is bounded from below as~follows:\vspace{-0.25cm}
	\[
	 \mathcal{R}^\pi_{\cal S}(\vector{H}, T)
	\ge
	\frac{\constref{lower_bound1}(K-1)}{\Delta} \log \left(
	\frac{\constref{lower_bound2}\Delta^2}{K} T
	\right),\vspace{-0.05cm}
	\]
where $\constref{lower_bound1}$ and $\constref{lower_bound2}$ are the same positive constants that appear in the statement of Theorem \ref{theorem-lower bound-general}.\vspace{-0.1cm}
\end{example}

\noindent We therefore turn to derive instance-dependent bounds on regret as opposed to guarantees over a class of problems. To provide meaningful instance-dependent lower bounds we focus on policies that are rate-optimal in the absence of auxiliary data. This set of policies can be characterized as follows.\footnote{A similar definition (super-fast convergent) is given in \cite{garivier2019explore}. Note that there is a variety of policies that guarantee the above regret rate; see, e.g., \cite{auer2002finite-time} that studies a class of problem instances with bounded support.}\vspace{-0.1cm}
\begin{definition}[Information-agnostic-optimal policies]\label{def:IAO-policies}
	We say a policy $\pi \in \cal{P}$ is \textit{information-agnostic-optimal (IAO)} over the class of problem instances $\cS
	$ if for any problem instance $(\vectorgreek{\nu}, \vectorgreek{\nu}^{\mathrm{aux}}) \in \cS$, any suboptimal arm $k \in \cK$, and horizon length $T\ge 2$,\vspace{-0.175cm}
	\[
	\mathbb{E}^{\pi}_{\vectorgreek{\nu}, \vectorgreek{\nu}^{\aux}}[n_{k,T+1}^{\pi}]
	\le
	\frac{C_{\pi, \cS} }{\Delta_k^2} \log T,\vspace{-0.175cm}
	\]
	for some constant $C_{\pi, \cS}>0$. We denote the class of such policies by $\cal{P}^{\mathrm{IAO}}$.\vspace{-0.1cm}
\end{definition}

\vspace{-0.1cm}
\subsection{Identifying Different Performance Regimes}\vspace{-0.1cm}
As demonstrated in Example~\ref{ex:nogain}, when the mappings $\{\phi_k\}$ are unknown it is not clear a priori if and when one should expect performance improvement by utilizing auxiliary data. As a preliminary step, in this subsection we consider a simple version of our problem where prior to time $t=1$ the decision maker has access to the means of the auxiliary observations $\{y_k\}$, and study under what circumstances better performance is achievable compared to classical regret rates that are established in the absence of auxiliary information. Concretely, we assume access to a single noiseless auxiliary observation on each arm, that is, $\hat{\sigma}=0$ and $h_{k,1} = 1$ for $k\in \cK$. Upon collecting the auxiliary means $\{y_k\}$, the agent's belief regarding the possible values of mean rewards changes. For each $k\in \cK$ the agent learns that $\mu_k \in [\lalpha \cdot y_k, \ualpha \cdot y_k]$; Figure~\ref{fig:regimes} depicts different scenarios in the case of two arms and $y_1 \ge y_2$. Depending on the value of $\mu_1$, we consider two different regimes. In Regime~1, $\mu_1$ is not in the intersection of the posterior support of the mean rewards: $\mu_1 \in ( \max\{\lalpha \cdot y_1, \ualpha \cdot y_2\}, \ualpha \cdot y_1 ]$. In Regime 2, $\mu_1$ is inside that intersection: $\mu_1 \in [\lalpha \cdot y_1, \max\{\lalpha \cdot y_1, \ualpha \cdot y_2\}]$.

\begin{figure}[h]
	\centering \includegraphics[width=.9\textwidth]{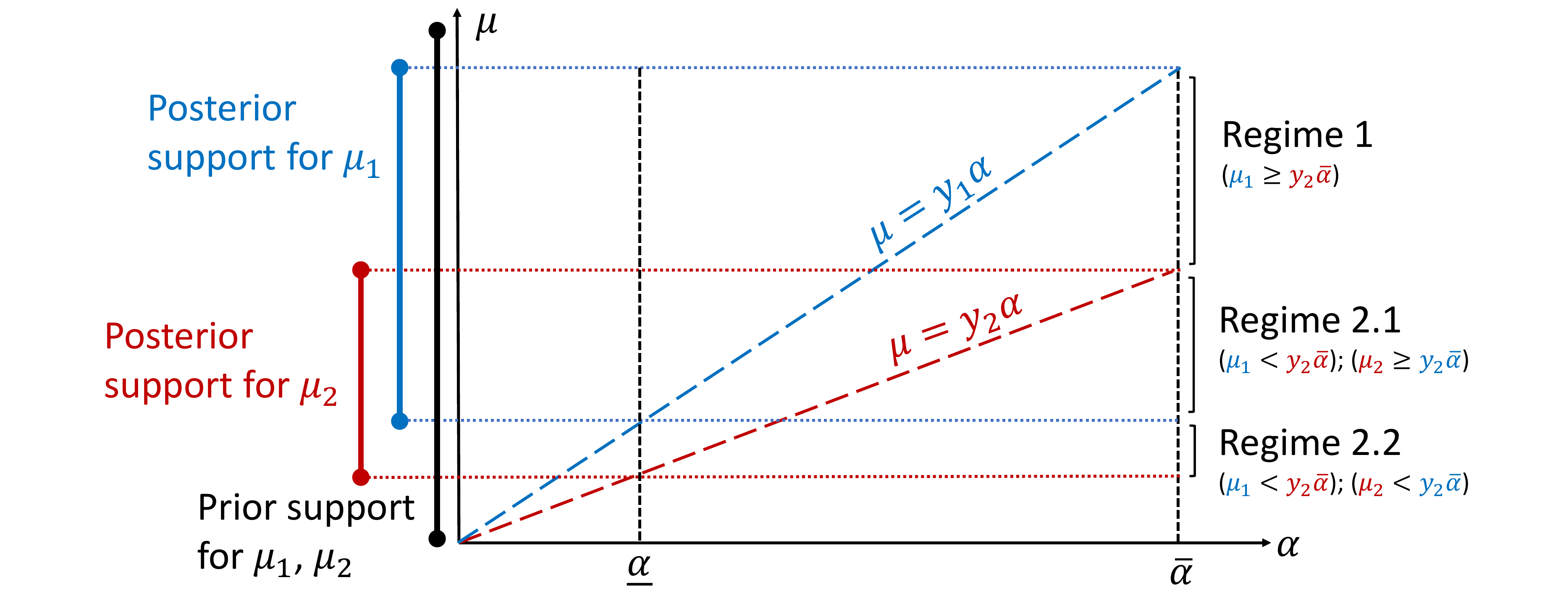}\vspace{-0.2cm}
	\caption{\small The posterior supports after receiving noiseless auxiliary information for each arms. Regime 1: $\mu_1$ is not in the intersection of the posterior supports of the mean rewards; Regime 2: $\mu_1$ is in that intersection.}\vspace{-0.1cm}
	\label{fig:regimes}
\end{figure}

Consider the following variant of UCB1, to which we refer as UCB1+, that conditional on observing the auxiliary means $\{y_k\}$, deploys for each arm $k\in\cal K$ an upper confidence bound that is bounded from above by $\ualpha \cdot y_k$, the upper bound on the posterior support of $\mu_k$. Formally, for each $k$ and $t$ we define:
\vspace{-0.2cm}
\[
U_{k,t}
=
\min\l\{
\bar{X}^{\pi}_{k,t} + \sqrt{\frac{c\sigma^2 \log t}{n^{\pi}_{k,t}}}, \ualpha \cdot y_k
\r\},\vspace{-0.2cm}
\]
where $\bar{X}^{\pi}_{k,t}$ and $n^{\pi}_{k,t}$ are the empirical mean reward and the number of pulls (see Table~\ref{table:notations}). We next characterizes the performance of this policy when the auxiliary means $\{y_k\}$ are observed before $t=1$.\vspace{-0.1cm}

\begin{theorem}\textbf{\textup{(Near optimality of UCB1+)}} \label{theorem-UCB-regret-upper-bound-unknown-extreme}\label{theorem:UCB1+}
	Let $\Phi=\Phi_L$ with $0 \le  \lalpha \le \alpha_k \le \ualpha$ for some known $\lalpha,\ualpha$, where $\hat \sigma = 0$. Let $\pi$ be the UCB1+ policy, tuned by $c>2$. Then, for every $T \ge 1$, auxiliary information arrival matrix $\vector{H}$ such that $h_{k,1} = 1$ for $k\in \cK$, and problem instance $(\vectorgreek{\nu}, \vectorgreek{\nu}^{\mathrm{aux}}) \in \cS$, one has:\vspace{-0.1cm}
	\[
	\cR^{\pi}_{\vectorgreek{\nu}, \vectorgreek{\nu}^{\aux}}(\vector{H}, T)
	\le
	\sum\limits_{\substack{k\in \mathcal{K}\setminus\{k^\ast\} \\  \ualpha y_k \ge \mu^\ast}}
	\frac{ \constref{ucb1} }{\Delta_k^2} \log T
	+
	\constref{ucb2} \vspace{-0.1cm}
	,\]
	where the constants $\constref{ucb1}$ and $\constref{ucb2}$ appear in Theorem~\ref{theorem-UCB1-upper bound}. Furthermore, this policy is optimal among the class
$\cal{P}^{\mathrm{IAO}}$ up to a multiplicative constant that is independent of $T$, and logarithmic in $K$ and $\Delta_k$'s.
\end{theorem}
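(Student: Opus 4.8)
\; The plan is to treat the two halves of the statement separately: a regret bound for UCB1+, and a matching lower bound over $\cP^{\mathrm{IAO}}$.

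\emph{(i) The upper bound.} Since $\hat\sigma=0$ and $h_{k,1}=1$, the value $y_k$ is revealed for every arm after the first period, so the caps $\{\ualpha y_k\}$ are deterministic throughout. The one structural fact to record is that capping never hurts the optimal arm: $\mu^\ast=\alpha_{k^\ast}y_{k^\ast}\le\ualpha y_{k^\ast}$, so on the optimistic event $\{\bar X^{\pi}_{k^\ast,t}+\sqrt{c\sigma^2\log t/n^{\pi}_{k^\ast,t}}\ge\mu^\ast\}$ one still has $U_{k^\ast,t}\ge\mu^\ast$. With this in hand I would rerun the UCB1 analysis used for Theorem~\ref{theorem-UCB1-upper bound}, specialized to $h_{k,s}=0$ for $s\ge 2$ (auxiliary data enters only through the cap, not through the counters $n^{\pi}_{k,t}$): pulling a suboptimal $k$ at time $t$ forces $U_{k,t}\ge U_{k^\ast,t}$, hence either the optimistic event for $k^\ast$ fails, or the empirical mean of $k$ is far from $\mu_k$, or $n^{\pi}_{k,t}\le 4c\sigma^2\Delta_k^{-2}\log t$; the first two have probabilities summable over $t$ whenever $c>2$, and the third is a deterministic counting bound, giving $\mathbb{E}^{\pi}_{\vectorgreek\nu,\vectorgreek\nu^{\aux}}[n_{k,T+1}]\le 4c\sigma^2\Delta_k^{-2}\log T+O(1)$ for every suboptimal $k$. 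For a suboptimal $k$ with $\ualpha y_k<\mu^\ast$ one does much better: $U_{k,t}\le\ualpha y_k<\mu^\ast$ at every $t$, so $\pi_t=k$ can occur only when $U_{k^\ast,t}\le U_{k,t}<\mu^\ast$, which, using $\ualpha y_{k^\ast}\ge\mu^\ast$, means the optimistic event for $k^\ast$ fails at $t$; summing those probabilities gives $\mathbb{E}[n_{k,T+1}]=O(1)$. Hence
\[
\cR^{\pi}_{\vectorgreek\nu,\vectorgreek\nu^{\aux}}(\vector H,T)=\sum_{k\neq k^\ast}\Delta_k\,\mathbb{E}[n_{k,T+1}]\le\sum_{\substack{k\neq k^\ast\\ \ualpha y_k\ge\mu^\ast}}\frac{4c\sigma^2}{\Delta_k}\log T+O(K),
\]
which, since the gaps $\Delta_k$ are bounded, is at most $\sum_{k:\,\ualpha y_k\ge\mu^\ast}\constref{ucb1}\Delta_k^{-2}\log T+\constref{ucb2}$.

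\emph{(ii) Optimality within $\cP^{\mathrm{IAO}}$.} Put $B=\{k\neq k^\ast:\ualpha y_k\ge\mu^\ast\}$ — the arms carrying the bound above — fix $\pi'\in\cP^{\mathrm{IAO}}$ and an instance $(\vectorgreek\nu,\vectorgreek\nu^{\aux})\in\cS$. For each $k\in B$ I would build a companion instance $\vectorgreek\nu^{(k)}$ by increasing $\alpha_k$ — equivalently, raising the mean of arm $k$'s reward distribution — to a value in $(\mu^\ast,\ualpha y_k]$ that makes $k$ the unique best arm, leaving $y_k$, every other coordinate, and the entire auxiliary distribution untouched. Because $\hat\sigma=0$ the auxiliary observations are literally identical under $\vectorgreek\nu$ and $\vectorgreek\nu^{(k)}$, so they convey no information separating the two and the only discriminating channel is the reward stream of arm $k$. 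Applying the change-of-measure inequality from the proof of Theorem~\ref{theorem-lower bound-general} (Lemma~2.6 of \citealt{Tsybakov2008introduction}) to the event $\{n_{k,T+1}\le T/2\}$, and feeding in the IAO bound $\mathbb{E}^{\pi'}_{\vectorgreek\nu}[n_{k,T+1}]\le C_{\pi',\cS}\Delta_k^{-2}\log T\ll T/2$ on the $\vectorgreek\nu$ side together with the fact that under $\vectorgreek\nu^{(k)}$ the IAO property applied to all the now-suboptimal arms (in particular $k^\ast$) forces $\mathbb{E}^{\pi'}_{\vectorgreek\nu^{(k)}}[n_{k,T+1}]\ge T-O(\log T)$, yields, after the same logarithmic estimate as in the proof of Theorem~\ref{theorem-lower bound-general},
\[
\mathbb{E}^{\pi'}_{\vectorgreek\nu}[n_{k,T+1}]\ge\frac{c'\sigma^2}{\Delta_k^2}\,\log\!\left(\frac{\Delta_k^2\,T}{K}\right)
\]
for an absolute constant $c'>0$. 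Summing $\Delta_k$ times this over $k\in B$ lower bounds $\cR^{\pi'}_{\vectorgreek\nu,\vectorgreek\nu^{\aux}}(\vector H,T)$ by $\sum_{k\in B}c'\sigma^2\Delta_k^{-1}\log(\Delta_k^2T/K)$. Comparing term by term with the UCB1+ bound, each ratio is $\log T\,/\,\log(\Delta_k^2T/K)$, which (once $T$ is not tiny) is bounded by a factor depending only on $\log K$ and $\log(1/\Delta_k)$ — exactly the advertised multiplicative slack.

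\emph{(iii) Where the work is.} The routine half is part (i): it is a line-by-line adaptation of the proof of Theorem~\ref{theorem-UCB1-upper bound} plus the elementary observation about the cap. The delicate step is the companion construction in part (ii), because invoking the IAO property at $\vectorgreek\nu^{(k)}$ requires $\vectorgreek\nu^{(k)}\in\cS$, i.e.\ all gaps at least $\Delta$ and a unique maximizer. When $\ualpha y_k\ge\mu^\ast+\Delta$ this is free — set the new mean of arm $k$ to $\mu^\ast+\Delta$ and every gap only grows. The borderline regime $\mu^\ast\le\ualpha y_k<\mu^\ast+\Delta$ needs care: one must either simultaneously lower $\mu_{k^\ast}$ through its own mapping parameter (which, since $\hat\sigma=0$, is again auxiliary-preserving) to restore the $\Delta$-separation, or pass to the limit; this is also the place where the logarithmic-in-$K$-and-$\Delta_k$ slack in the optimality claim is spent.
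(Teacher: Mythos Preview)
Your proposal is correct and follows essentially the paper's approach. For the upper bound, the paper likewise says it is a direct adaptation of the proof of Theorem~\ref{theorem-UCB1-upper bound}, splitting on whether $\ualpha y_k \ge \mu^\ast$ or $\ualpha y_k < \mu^\ast$ and, in the latter case, using that the capped index satisfies $U_{k,t}\le\ualpha y_k<\mu^\ast$ --- exactly your argument.

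For the optimality half, the paper does not argue inline but simply invokes Theorem~\ref{theorem-lower bound-unknown} (itself proved via Lemma~\ref{lemma-lower bound-unknown}) specialized to $\hat\sigma=0$. The companion instances constructed there are precisely yours: keep $y_k$ fixed and raise the mean of arm $k$ to $\min\{\mu^\ast+\Delta_k,\,\ualpha y_k\}$, so that the auxiliary channel is uninformative and the only discriminating samples are pulls of arm $k$. The one implementation difference is that Lemma~\ref{lemma-lower bound-unknown} applies the Tsybakov/Bretagnolle--Huber inequality time-by-time (producing the sum $\sum_t\exp(-\mathrm{KL}_t)$) and then optimizes the resulting $x+\gamma e^{-\kappa x}$ expression, whereas you apply it once to a terminal event $\{n_{k,T+1}\le T/2\}$; in the $\hat\sigma=0$ regime both routes give the same $\Delta_k^{-2}\log T$ rate. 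Your flag in part~(iii) about whether the companion stays in $\cS$ when $\ualpha y_k$ is only barely above $\mu^\ast$ is well taken; the paper handles it by letting the $\epsilon_k$ in the companion be $-\delta_k$ in that regime, which is what generates the $\min\{4\Delta_k^4,(\Delta_k-\delta_k)^2\delta_k^2\}$ factor inside the logarithm in Theorem~\ref{theorem-lower bound-unknown} and is exactly where the logarithmic-in-$\Delta_k$ slack in the optimality claim is spent.
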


\noindent Theorem \ref{theorem:UCB1+} suggests that when the mappings $\{\phi_k\}$ are unknown, auxiliary information on arm $k$ may still be leveraged to reduce the loss from selecting that arm (potentially to a regret that is uniformly bounded over time by a constant), but only when $\ualpha y_k < \mu^\ast$; only then, auxiliary information can effectively reduce the upper confidence bound associated with that arm. Recalling Figure~\ref{fig:regimes}, one may observe that performance improvement is therefore only possible in Regime 1, and the value of~$\lalpha$ does not impact the regret rate. Henceforth we simply let $\lalpha=0$.

\vspace{-0.0cm}
\subsection{Impact of Unknown Mappings on Achievable Performance}\vspace{-0.0cm}
In this subsection, we study the performance that one could aspire to achieve in the absence of prior knowledge of true mappings $\{\phi_k\}$, and under an arbitrary and unknown information arrival matrix $\vector{H}$.
The next result formalizes what \textit{cannot} be achieved in the presence of unknown mappings, establishing instance-dependent lower bounds on the best achievable performance.\vspace{-0.1cm}

\begin{theorem}\label{theorem-lower bound-unknown} \textbf{\textup{(Lower bound on achievable performance with unknown mappings)}} Let $\Phi=~\Phi_L$ with $0 \le \alpha_k \le \ualpha$ for some known~$\ualpha$. 
Then, for any $T\geq 1$ and $k\in\cal{K}$ and for any policy $\pi\in \cal{P}^{\mathrm{IAO}}$, the followings hold with $\delta_k = \mu^\ast -  \ualpha \cdot y_k $:
	\begin{enumerate}
	  \item If $\ualpha \cdot y_k > \mu^\ast$ then,
		$
		\mathbb{E}^{\pi}_{\vectorgreek{\nu}, \vectorgreek{\nu}^{\aux}_t}[n_{k,T+1}^{\pi}]
		\ge \frac{\constref{lower-bound-unknown1}}{\Delta_k^{2}}\log \l( \frac{\constref{lower-bound-unknown2}  \min\{4\Delta_k^4,(\Delta_k-\delta_k)^2 \delta_k^2\}}{ K \log T}  T\r);
		$\vspace{-0.1cm}
	  \item If $\ualpha \cdot y_k < \mu^\ast$ then,
		$
		\mathbb{E}^{\pi}_{\vectorgreek{\nu}, \vectorgreek{\nu}^{\aux}_t}[n_{k,T+1}^{\pi}]
		\ge \frac{\constref{lower-bound-unknown1}}{\Delta_k^{2}}\log \l( \frac{\constref{lower-bound-unknown2}  (\Delta_k+\delta_k)^2\delta^2_k}{ K \log T}  \sum_{t=1}^T\exp\l(- \constref{lower-bound-unknown3} \delta_k^2 \cdot\sum\limits_{s=1}^t h_{k,s} \r)\r),
$
	\end{enumerate}
where $\constvar[lower-bound-unknown1]$, $\constvar[lower-bound-unknown2]$, and $\constvar[lower-bound-unknown3]$ are positive constants that only depend on $\sigma$, $\hat{\sigma}$, and $\ualpha$.
\end{theorem}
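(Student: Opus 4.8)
The plan is to prove both bounds by a change-of-measure argument in the spirit of the proof of Theorem~\ref{theorem-lower bound-general}, constrained by the unknown-mapping structure. Fix a suboptimal arm $k$, a policy $\pi\in\mathcal{P}^{\mathrm{IAO}}$, and a problem instance $(\vectorgreek{\nu},\vectorgreek{\nu}^{\aux})\in\mathcal{S}$ (which we may take to have Gaussian rewards of variance $\sigma^2$ and Gaussian auxiliary observations of variance $\hat\sigma^2$, the general sub-Gaussian case costing only constants). For a tunable parameter $\epsilon>0$ I would build an alternative instance $(\tilde{\vectorgreek{\nu}},\tilde{\vectorgreek{\nu}}^{\aux})$ that agrees with the original on every arm $j\ne k$ and sets $\tilde\mu_k=\mu^\ast+\epsilon$, so that arm $k$ becomes the unique best arm with runner-up gap exactly $\epsilon$. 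The point is that the class $\Phi_L$ with $\alpha_k\le\ualpha$ dictates how expensive (in Kullback--Leibler terms) it is to realize this change on the auxiliary side, while the information-agnostic-optimality of $\pi$ forces it to identify the new best arm under the alternative --- which is what is eventually contradicted if $\pi$ does not pull arm $k$ often enough under the original instance.

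The auxiliary part of the alternative is built differently in the two regimes. When $\ualpha y_k>\mu^\ast$, i.e. $\delta_k<0$, I would leave the auxiliary law untouched, $\tilde\nu_k^{\aux}=\nu_k^{\aux}$, and only change the a priori unknown linear mapping to $\tilde\phi_k(y)=\tilde\alpha_k y$ with $\tilde\alpha_k=(\mu^\ast+\epsilon)/y_k$; this is admissible ($\tilde\alpha_k\le\ualpha$) precisely when $\epsilon\le|\delta_k|$, and since the decision maker never observes the mapping, the auxiliary observations carry \emph{no} information separating the two instances. When $\ualpha y_k<\mu^\ast$, i.e. $\delta_k>0$, one cannot keep the auxiliary mean fixed; I would set $\tilde\alpha_k=\ualpha$ and shift the auxiliary mean to $\tilde y_k=(\mu^\ast+\epsilon)/\ualpha$, so $\tilde y_k-y_k=(\delta_k+\epsilon)/\ualpha$, and the auxiliary observations now help, but only at Kullback--Leibler cost $(\delta_k+\epsilon)^2/(2\ualpha^2\hat\sigma^2)$ per observation. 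In both regimes $\mathrm{KL}(\nu_k,\tilde\nu_k)=(\Delta_k+\epsilon)^2/(2\sigma^2)$.

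Next, by the chain rule for relative entropy, the divergence between the laws (under the two instances) of everything observed by $\pi$ up to the $t$-th decision is at most $N\,\tfrac{(\Delta_k+\epsilon)^2}{2\sigma^2}+\big(\sum_{s\le t}h_{k,s}\big)\tfrac{(\delta_k+\epsilon)^2}{2\ualpha^2\hat\sigma^2}$ with $N:=\mathbb{E}^\pi_{\vectorgreek{\nu},\vectorgreek{\nu}^{\aux}}[n^\pi_{k,T+1}]$ (the second term absent in the first regime). Applying the Bretagnolle--Huber inequality (cf.\ Lemma~2.6 in \cite{Tsybakov2008introduction}) to the event $\{\pi_t=k\}$, summing over $t=1,\dots,T$, and using that under the alternative arm $k$ is optimal with runner-up gap $\epsilon$, so that the $\mathcal{P}^{\mathrm{IAO}}$ property gives $\sum_{t=1}^T\mathbb{P}^\pi_{\tilde{\vectorgreek{\nu}},\tilde{\vectorgreek{\nu}}^{\aux}}(\pi_t\ne k)=\sum_{j\ne k}\mathbb{E}^\pi_{\tilde{\vectorgreek{\nu}},\tilde{\vectorgreek{\nu}}^{\aux}}[n^\pi_{j,T+1}]\le (K-1)C_{\pi,\mathcal{S}}\epsilon^{-2}\log T$, one arrives at
\[
N+\frac{(K-1)C_{\pi,\mathcal{S}}\log T}{\epsilon^2}\ \ge\ \frac12\exp\!\left(-\frac{N(\Delta_k+\epsilon)^2}{2\sigma^2}\right)\sum_{t=1}^T\exp\!\left(-\frac{(\delta_k+\epsilon)^2}{2\ualpha^2\hat\sigma^2}\sum_{s=1}^t h_{k,s}\right).
\]
I would then solve this transcendental inequality for a logarithmic lower bound on $N$ (in the regime where the $\log T$ term is of lower order) and optimize over $\epsilon$: in the first regime the auxiliary exponential sum collapses to $T$, and $\epsilon=\min\{|\delta_k|,c\Delta_k\}$ yields, after the elementary estimates $(\Delta_k+\epsilon)^2\le 4\Delta_k^2$ and $\epsilon^2\gtrsim\min\{\Delta_k^2,\delta_k^2\}$ and noting $\Delta_k-\delta_k=\Delta_k+|\delta_k|$, the factor $\min\{4\Delta_k^4,(\Delta_k-\delta_k)^2\delta_k^2\}$ inside the logarithm; in the second regime the choice $\epsilon\asymp\delta_k$ keeps the reward gap at $\Theta(\Delta_k+\delta_k)$ and the per-arrival auxiliary cost at $\Theta(\delta_k^2/(\ualpha^2\hat\sigma^2))$, producing the surviving exponential sum $\sum_t\exp(-\constref{lower-bound-unknown3}\,\delta_k^2\sum_{s\le t}h_{k,s})$ together with the $(\Delta_k+\delta_k)^2\delta_k^2$ prefactor. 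Example~\ref{ex:nogain} is recovered as the degenerate case of the first regime with abundant arrivals and $\delta_k\asymp\Delta_k$.

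I expect the main obstacle to be twofold. First is the standard difficulty of extracting a clean logarithmic lower bound on $N$ from the displayed self-referential inequality, where $N$ appears both outside and inside the exponential: one needs the usual dichotomy (either $N$ already exceeds the target, or substituting the candidate value contradicts the inequality), while tracking how the additive $(K-1)C_{\pi,\mathcal{S}}\epsilon^{-2}\log T$ correction becomes the $1/(K\log T)$ factor inside the logarithm without spoiling the leading $\Delta_k^{-2}$ coefficient, and how the policy-dependent constant $C_{\pi,\mathcal{S}}$ is absorbed. Second is ensuring the alternative instance lies in $\mathcal{S}$: its runner-up gap is $\epsilon$, so one needs $\epsilon\ge\Delta$, which is compatible with the choices above whenever $|\delta_k|\ge\Delta$ (first regime) or $\delta_k\ge\Delta$ (second); in the borderline band $|\delta_k|<\Delta$ one must combine the two constructions, allowing a small extra auxiliary-mean shift of size $O((\Delta-|\delta_k|)/\ualpha)$, and check the stated bound is still implied --- note that the $\delta_k^2$ factor in the claimed bound makes it automatically weak there, so this degradation is harmless. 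The remaining steps --- the divergence decomposition, the constant bookkeeping reducing $(\Delta_k\pm\delta_k)^{-2}$ to $\Delta_k^{-2}$ via $\delta_k\le\Delta_k$, and the Gaussian reduction --- are routine.
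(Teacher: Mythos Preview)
Your proposal is correct and follows essentially the same route as the paper. The paper packages the change-of-measure step as a standalone general bound (Lemma~\ref{lemma-lower bound-unknown}) and then instantiates it with exactly the Gaussian alternatives you describe: keeping $\tilde\nu_k^{\aux}=\nu_k^{\aux}$ when $\ualpha y_k>\mu^\ast$ (splitting further into $\epsilon_k=|\delta_k|$ if $\ualpha y_k\le\mu^\ast+\Delta_k$ and $\epsilon_k=\Delta_k$ otherwise, which is your $\min\{|\delta_k|,c\Delta_k\}$), and shifting the auxiliary mean to $(\mu^\ast+\epsilon_k)/\ualpha$ with $\epsilon_k=\delta_k$ when $\ualpha y_k<\mu^\ast$. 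The only methodological difference is how the self-referential inequality is resolved: rather than your dichotomy argument, the paper casts it as minimizing $x$ subject to the constraint, relaxes with Lagrange multiplier $\lambda=\epsilon^2/(C_{\pi,\cS}K\sigma^2\log T)$, and invokes the elementary bound $x+\gamma e^{-\kappa x}\ge(\log\gamma\kappa)/\kappa$; this directly produces the $1/(K\log T)$ factor inside the logarithm that you flagged as the main bookkeeping concern. Your worry about ensuring the alternative lies in $\cS$ (i.e.\ $\epsilon_k\ge\Delta$) is legitimate and is not explicitly addressed in the paper's proof either.
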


\noindent
The product $\ualpha \cdot y_k$ represents an \textit{optimistic} mean reward estimate based on auxiliary information. Depending on whether it is larger or smaller than $\mu^\ast$, the above lower bound can be logarithmic in the horizon length $T$ or a decreasing log-sum-exp function of the cumulative information arrivals, respectively.

\medskip
\noindent \textbf{Key Ideas in the Proof.} The proof of Theorem \ref{theorem-lower bound-unknown} relies on the following lemma which provides regret lower bound for any general class of bandit problems.\vspace{-0.1cm}

\begin{lemma}\label{lemma-lower bound-unknown} \textbf{\textup{(General Lower Bound with Unknown Mappings)}} For any $T \ge 2$ and $k\in \cal{K}$, any information arrival matrix $\vector{H}$, any information-agnostic-optimal policy  $\pi\in \cal{P}^{\mathrm{IAO}}$, any problem instance $(\vectorgreek{\nu}, \vectorgreek{\nu}^{\mathrm{aux}}) \in \cS$, one has for any $\epsilon_k>0$:\vspace{-0.1cm}
	\small
	\[
		\cR^{\pi}_{\vectorgreek{\nu}, \vectorgreek{\nu}^{\aux}}(\vector{H}, T) \ge
		\sum_{k \in \cK \setminus \{k^\ast\}}
		\sup_{\substack{(\tilde{\vectorgreek{\nu}}, \tilde{\vectorgreek{\nu}}^{\mathrm{aux}}) \in \cS \\ \rE_{X \sim \tilde \nu_k}[X]=\mu^\ast+\epsilon_k }}
		\hspace{-.6cm}
		\frac{
			\Delta_k
			\log \l( \frac{\epsilon_k^2\KL(\nu_k, \tilde \nu_k)}{C_{\pi, \cS} K }  \sum\limits_{t=1}^T\exp\l[-\KL(\nu_k^\aux, \tilde \nu_k^\aux) \cdot\sum\limits_{s=1}^t h_{k,s} \r]\r) - \log \log T
		}{\KL(\nu_k, \tilde \nu_k)} -\Delta_k.\vspace{-0.1cm}
	\]
\normalsize
\end{lemma}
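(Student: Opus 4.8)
The plan is to run a change-of-measure argument in the spirit of the one sketched for Theorem~\ref{theorem-lower bound-general}, but carried out one time step at a time and relativized to $\mathcal{P}^{\mathrm{IAO}}$, so that the construction can also exploit the policy's good behaviour on the \emph{alternative} instance. Since $\cR^{\pi}_{\vectorgreek{\nu},\vectorgreek{\nu}^{\aux}}(\vector{H},T)=\sum_{k\neq k^\ast}\Delta_k\,\rE^{\pi}_{\vectorgreek{\nu},\vectorgreek{\nu}^{\aux}}[n_{k,T+1}]$, it suffices to lower bound $N_k:=\rE^{\pi}_{\vectorgreek{\nu},\vectorgreek{\nu}^{\aux}}[n_{k,T+1}]$ for each suboptimal $k$. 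Fix such a $k$ and a target $\epsilon_k>0$, and let $(\tilde{\vectorgreek{\nu}},\tilde{\vectorgreek{\nu}}^{\aux})\in\cS$ be any instance that agrees with $(\vectorgreek{\nu},\vectorgreek{\nu}^{\aux})$ except on arm $k$, where $\tilde\nu_k$ has mean $\mu^\ast+\epsilon_k$ and $\tilde\nu_k^{\aux}$ is chosen compatibly with a mapping in $\Phi$; then $k$ is the unique optimal arm of $\tilde{\vectorgreek{\nu}}$ and every $j\neq k$ has gap at least $\epsilon_k$ there. The supremum over such $\tilde\nu_k$ in the statement is precisely the freedom to optimize this choice, so it is enough to establish the bound for one admissible $\tilde\nu_k$.

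\noindent For each $t\in\{1,\dots,T\}$ I would apply a Bretagnolle--Huber-type inequality (the tool used for Theorem~\ref{theorem-lower bound-general}) to the event $\{\pi_t=k\}$, which is measurable with respect to the $\sigma$-field $\mathcal{F}_t$ of the data available when $\pi_t$ is selected:
\[
\mathbb{P}^{\pi}_{\vectorgreek{\nu}}\l\{\pi_t=k\r\}+\mathbb{P}^{\pi}_{\tilde{\vectorgreek{\nu}}}\l\{\pi_t\neq k\r\}\ \ge\ \tfrac12\exp\l(-\KL\l(\mathbb{P}^{\pi}_{\vectorgreek{\nu}}\big|_{\mathcal{F}_t},\ \mathbb{P}^{\pi}_{\tilde{\vectorgreek{\nu}}}\big|_{\mathcal{F}_t}\r)\r).
\]
The chain rule for relative entropy, together with the data-processing inequality applied to the averaging map $(Y_{k,s,1},\dots,Y_{k,s,h_{k,s}})\mapsto Z_{k,s}$, bounds the right-hand divergence by $N_k\,\KL(\nu_k,\tilde\nu_k)+\KL(\nu_k^{\aux},\tilde\nu_k^{\aux})\sum_{s=1}^{t}h_{k,s}$, since only arm-$k$ observations contribute and $\rE^{\pi}_{\vectorgreek{\nu}}[n_{k,t}]\le N_k$. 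Summing the resulting inequalities over $t$ and writing $S_k:=\sum_{t=1}^{T}\exp(-\KL(\nu_k^{\aux},\tilde\nu_k^{\aux})\sum_{s=1}^{t}h_{k,s})$ gives
\[
N_k+\l(T-\rE^{\pi}_{\tilde{\vectorgreek{\nu}}}[n_{k,T+1}]\r)\ \ge\ \tfrac12\,e^{-N_k\KL(\nu_k,\tilde\nu_k)}\,S_k.
\]

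\noindent I would then control both left-hand terms through $\pi\in\mathcal{P}^{\mathrm{IAO}}$, invoked on both instances: on the original instance $N_k\le C_{\pi,\cS}\Delta_k^{-2}\log T$, and on the alternative $T-\rE^{\pi}_{\tilde{\vectorgreek{\nu}}}[n_{k,T+1}]=\sum_{j\neq k}\rE^{\pi}_{\tilde{\vectorgreek{\nu}}}[n_{j,T+1}]\le (K-1)C_{\pi,\cS}\epsilon_k^{-2}\log T$, because every $j\neq k$ is suboptimal with gap $\ge\epsilon_k$ in $\tilde{\vectorgreek{\nu}}$. Substituting into the displayed inequality and taking logarithms isolates, after tracking absolute constants, a lower bound of the form $N_k\KL(\nu_k,\tilde\nu_k)\ge\log\l(\tfrac{\epsilon_k^{2}\KL(\nu_k,\tilde\nu_k)}{C_{\pi,\cS}K}S_k\r)-\log\log T$ (the $\log\log T$ arising from $\log(C_{\pi,\cS}\log T)$); dividing by $\KL(\nu_k,\tilde\nu_k)$, multiplying by $\Delta_k$, summing over $k\neq k^\ast$ and taking the supremum over admissible $\tilde\nu_k$ — with an $O(\Delta_k)$ slack to cover the regime where the right-hand side is non-positive — gives the claim. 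The hard part will be the middle step: arranging the argument per time step so that the $t$-th inequality only involves the cumulative arrivals $\sum_{s\le t}h_{k,s}$ (this is exactly what produces the log-sum-exp $S_k$ rather than a single exponential in the total number of arrivals), and correctly charging the auxiliary channel in the divergence decomposition, i.e.\ arguing via data processing that the observed average $Z_{k,s}$ costs at most $h_{k,s}\KL(\nu_k^{\aux},\tilde\nu_k^{\aux})$ uniformly over the allowed classes. Using the $\mathcal{P}^{\mathrm{IAO}}$ property on the alternative instance (not only the original) is the other essential ingredient and is what makes the bound instance-dependent; the final inversion and the $-\log\log T$ correction are routine.
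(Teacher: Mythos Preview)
Your outline matches the paper's proof closely: per-time-step Bretagnolle--Huber on $\{\pi_t=k\}$, chain-rule decomposition of the $\mathrm{KL}$ to expose $\KL(\nu_k,\tilde\nu_k)\,\mathbb{E}[n^{\pi}_{k,t}]+\KL(\nu_k^{\aux},\tilde\nu_k^{\aux})\sum_{s\le t}h_{k,s}$, and the IAO property applied on the \emph{alternative} instance to bound $\sum_t\mathbb{P}^{\pi}_{\tilde{\vectorgreek{\nu}},\tilde{\vectorgreek{\nu}}^{\aux}}\{\pi_t\neq k\}\le C_{\pi,\cS}K\epsilon_k^{-2}\log T$. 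These are exactly the paper's Steps~2--3.

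The one substantive discrepancy is in your final inversion. You additionally invoke IAO on the \emph{original} instance to replace $N_k$ on the left by $C_{\pi,\cS}\Delta_k^{-2}\log T$ and then ``take logarithms''. That route gives
\[
N_k\,\KL(\nu_k,\tilde\nu_k)\ \ge\ \log S_k-\log\!\Big(2C_{\pi,\cS}\big(\Delta_k^{-2}+(K{-}1)\epsilon_k^{-2}\big)\log T\Big),
\]
which (i) injects a spurious $\Delta_k^{-2}$ term that the stated bound does not have, and (ii) does \emph{not} produce the factor $\KL(\nu_k,\tilde\nu_k)$ inside the logarithm; ``tracking absolute constants'' cannot manufacture that factor since it is instance-dependent. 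The paper does not use IAO on the original instance at all. Instead it keeps $N_k$ on the left, views the inequality $N_k+A\ge\tfrac12 e^{-N_k\KL}S_k$ (with $A=C_{\pi,\cS}K\epsilon_k^{-2}\log T$) as a constraint on $N_k$, and applies weak duality with multiplier $\lambda=1/A$ together with the elementary bound $x+\gamma e^{-\kappa x}\ge\kappa^{-1}\log(\gamma\kappa)$. This Lagrangian step is precisely what places $\KL(\nu_k,\tilde\nu_k)$ inside the logarithm and produces the $-1$ that becomes the $-\Delta_k$ slack after multiplying through. Drop the extra IAO call on the original instance and use this optimization step in its place, and your argument proves the lemma as stated.
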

Lemma~\ref{lemma-lower bound-unknown} implies that in the absence of precise knowledge on mappings it is not always possible to achieve performance improvement. For example, consider arm $k$ for which there exists some problem instance $(\tilde{\vectorgreek{\nu}}, \tilde{\vectorgreek{\nu}}^{\mathrm{aux}}) \in \cS$ such that $\rE_{X \sim \tilde \nu_k}[X]>\mu^\ast $ and $\KL(\nu_k^{\mathrm{aux}}, \tilde \nu_k^{\mathrm{aux}}) = 0$. Then, the regret would be at least logarithmic in $T$, a regret rate one would incur without accounting for auxiliary information.



\subsection{Adapting to Unknown Mappings Based on Upper Confidence Bounds}\label{section-2ucb}\vspace{-0.0cm}
We next develop the \textit{2-UCBs} policy that adapts to a priori unknown linear mappings \emph{and} information arrival processes, and then establish its near optimality. 
(Relevant notation is provided in Table~\ref{table:notations}.)
\newcolumntype{C}[1]{%
	>{\vbox to 5ex\bgroup\vfill\centering}%
	p{#1}%
	<{\egroup}}
\begin{table}[]
	\centering          
	\caption{Notations for counters and empirical means}\vspace{-0.1cm}
	\label{table:notations}
	\footnotesize   
	\begin{tabular}{| l | l<{\rule[-0mm]{0pt}{5.5mm}}|}   
		\toprule[2pt]                   
		\textbf{Definition} & \textbf{Description} \\
		\midrule
		$n^{\pi}_{k,t} \coloneqq \sum_{s=1}^{t-1}\mathbbm{1}\{\pi_s = k\}$  &  Number of pulls         \\
		\hline
		$\bar{X}^{\pi}_{k,t} \coloneqq \bar{X}^{\pi}_{k,n^{\pi}_{k,t}} \coloneqq
		\l({\sum_{s=1}^{t-1}\mathbbm{1}\{\pi_s = k\}X_{k,s} }\r)/\l({\max\{1,n^{\pi}_{k,t} \}}\r)$  &   Empirical mean reward    \\
		\hline
		$n^{\aux}_{k,t} \coloneqq \sum_{s=1}^{t}  h_{k,s}$ &     Number of auxiliary observations          \\
		\hline
		$\bar{Y}_{k,t} \coloneqq \bar{Y}_{k,n^{\aux}_{k,t}} \coloneqq \l({   \sum_{s=1}^{t} \sum_{m=1}^{h_{k,s}} Y_{k,s,m}}\r) / \l({ \max\{1,n^{\aux}_{k,t}\} }\r)$    &  Empirical mean of auxiliary observations \\
		\hline
		$n^{\pi, \mathrm{aux}}_{k,t} \coloneqq n^{\pi}_{k,t} + \frac{\sigma^2}{\ualpha^2\hat \sigma^2} n^{\aux}_{k,t} $ & Weighted number of  observations\\
		\hline
		$\bar{X}^{\pi, \mathrm{aux}}_{k,t} \coloneqq
		\bar{X}^{\pi, \mathrm{aux}}_{k,n^{\pi}_{k,t}, n^{\aux}_{k,t}} \coloneqq
		\l({n^{\pi}_{k,t} \cdot \bar{X}^{\pi}_{k,n^{\pi}_{k,t}} + \frac{\sigma^2}{\ualpha^2\hat \sigma^2} \cdot  n^{\aux}_{k,t} \cdot \ualpha\bar{Y}_{k,t} }\r)/\l({\max\{1,n^{\pi, \mathrm{aux}}_{k,t} \}}\r)$  &   Optimistic empirical mean reward
		\\
		\bottomrule
	\end{tabular}\vspace{-0.3cm}
	\label{table:tree-term}
\end{table}
\normalsize

\medskip
\noindent
\textbf{2-UCBs Policy.} The 2-UCBs policy deploys the following two upper confidence bounds:\vspace{-0.1cm}
\[
	U^{\pi}_{k,t} \coloneqq
	\bar{X}^{\pi}_{k,t} + \sqrt{\frac{c\sigma^2 \log t}{n^{\pi}_{k,t}}};\quad\quad\quad
	U^{\pi, \mathrm{aux}}_{k,t} \coloneqq
	\bar{X}^{\pi, \mathrm{aux}}_{k,t} + \sqrt{\frac{c\sigma^2 \log t}{n^{\pi, \mathrm{aux}}_{k,t}}}.\vspace{-0.1cm}
\]
and tightens the final UCB by using the minimum of the two,\vspace{-0.15cm}
\[
	U_{k,t}
	\coloneqq
	\min\l\{ U^{\pi}_{k,t} , U^{\pi, \mathrm{aux}}_{k,t} \r\}.\vspace{-0.15cm}
\]
The first upper confidence bound is typical in the MAB literature (see, e.g., \citealt{auer2002finite-time}) and is based on reward observations. The second one incorporates both reward observations and auxiliary observations. Note that $\bar{X}^{\pi, \mathrm{aux}}_{k,t}$ is an optimistic estimator for the mean reward of arm $k$ since it linearly combines the estimators $\bar{X}^{\pi}_{k,t}$ and $\ualpha\bar{Y}_{k,t}$, where the latter over-estimates $\mu_k$ whenever $\alpha_k < \ualpha$. Therefore $\bar{X}^{\pi, \mathrm{aux}}_{k,t}$ is a biased estimate, while $\bar{X}^{\pi}_{k,t}$ is unbiased. On the other hand, $\bar{X}^{\pi, \mathrm{aux}}_{k,t}$ incorporates a larger number of samples compared to $\bar{X}^{\pi}_{k,t}$, and hence has a smaller stochastic error.

With only few observations, the stochastic error is a key driver of experimentation with sub-optimal actions. On the other hand, when there is abundant auxiliary data on a suboptimal arm $k$ for which $\alpha_k y_k \ge \mu^\ast$, then the upward bias of the optimistic empirical mean $\bar{X}^{\pi, \mathrm{aux}}_{k,t}$ can be substantial. 
Selecting the minimum of the two UCB's efficiently balances these opposing risks.\vspace{-0.1cm}

\begin{theorem}\label{theorem-2UCB-regret-upper-bound-unknown} \textbf{\textup{(Near optimality of 2-UCBs)}}
	Assume that $\Phi=\Phi_L$ with $0 \le \alpha_k \le \ualpha$ for some known~$\ualpha$, and let $\pi$ be the 2-UCBs policy tuned by $c>2$. Then, for any $\epsilon >\frac{1}{2}$, and every $T \ge 1$, auxiliary information arrival matrix $\vector{H}$, and problem instance $(\vectorgreek{\nu}, \vectorgreek{\nu}^{\mathrm{aux}}) \in \cS$, 
and without prior knowledge of the mappings, one has for $\delta_k = \mu^\ast - \ualpha\cdot y_k$:
\vspace{-0.0cm}
	\begin{enumerate}
		\item If $\ualpha \cdot y_k \ge \mu^\ast$, then
		$
		\mathbb{E}^{\pi}_{\vectorgreek{\nu}, \vectorgreek{\nu}^{\aux}_t}[n_{k,T+1}^{\pi}]
		\le \frac{\constvar[ucb1-unknown]}{\Delta_k^{2}}\log T;\vspace{-0.05cm}
		$
		\item If $\mu^\ast > \ualpha \cdot y_k > \mu^\ast - \epsilon\Delta_k$, then 
		$
		\mathbb{E}^{\pi}_{\vectorgreek{\nu}, \vectorgreek{\nu}^{\aux}_t}[n_{k,T+1}^{\pi}]
		\le
		\min\l\{ \frac{\constref{ucb1-unknown}}{\Delta_k^{2}}\log
		T,\; \frac{\constref{ucb1-unknown}}{\delta_k^{2}}\log \left(
		\sum\limits_{t=0}^T \exp\l(
		-\constref{ucb2-unknown}\delta_k^2\sum\limits_{s = 1}^t h_{k,s}
		\r)
		\right) +
		\constvar[ucb3-unknown]\r\};\vspace{-0.1cm}
		$
		\item If $\ualpha \cdot y_k \leq \mu^\ast - \epsilon\Delta_k$, then
		$
		\mathbb{E}^{\pi}_{\vectorgreek{\nu}, \vectorgreek{\nu}^{\aux}_t}[n_{k,T+1}^{\pi}]
		\le \frac{\constref{ucb1-unknown}}{\Delta_k^{2}}\log \left(
		\sum\limits_{t=0}^T \exp\l(
		-\constvar[ucb2-unknown]\delta_k^2\sum\limits_{s = 1}^t h_{k,s}
		\r)
		\right) +
		\constref{ucb3-unknown};
		$
	\end{enumerate}
	where $\constref{ucb1-unknown}, \constref{ucb2-unknown}$, and $\constref{ucb3-unknown}$ are positive constants that only depend on $\sigma$, $\hat{\sigma}$, $\ualpha$, $\epsilon$, and $c$.\vspace{-0.1cm}
\end{theorem}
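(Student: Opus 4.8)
The plan is to adapt the analysis of UCB1 with auxiliary observations (the proof of Theorem~\ref{theorem-UCB1-upper bound}) to the two--bound structure, using one elementary but crucial fact: for $\Phi_L$ one always has
\[
\delta_k \;=\; \mu^\ast - \ualpha\cdot y_k \;\le\; \mu^\ast - \alpha_k\cdot y_k \;=\; \mu^\ast - \mu_k \;=\; \Delta_k ,
\]
since $\alpha_k\le\ualpha$ and $y_k\ge 0$. Thus $\ualpha y_k$ behaves as an \emph{effective mean} of arm $k$ as seen through the auxiliary channel, with effective gap $\delta_k\in(0,\Delta_k]$ whenever $\ualpha y_k<\mu^\ast$: this is exactly what lets the optimistic bound $U^{\pi,\aux}_{k,t}$ fall below $\mu^\ast$ once enough auxiliary data has arrived, and it is why the exponent inside the log-sum-exp is $\delta_k^2$ while the multiplicative constant may still be written with $\Delta_k$. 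The starting observation is that, for $t>K$, $\pi_t=k$ forces $U_{k,t}\ge U_{k^\ast,t}$, and since $U_{k,t}=\min\{U^\pi_{k,t},U^{\pi,\aux}_{k,t}\}$ this forces \emph{both} $U^\pi_{k,t}\ge U_{k^\ast,t}$ \emph{and} $U^{\pi,\aux}_{k,t}\ge U_{k^\ast,t}$, so we may use whichever of the two UCBs gives the better bound, arm by arm.

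Following the proof of Theorem~\ref{theorem-UCB1-upper bound} I would introduce high-probability clean events $\cE_t$ (a union bound over the possible values of the random counter $n^\pi_{k,t}$, combined with sub-Gaussian Chernoff--Hoeffding tails, with $c>2$ so that $\sum_t\rP(\cE_t^c)=O(1)$ with a constant depending only on $\sigma,\hat\sigma,\ualpha,c$) on which: (a) $U^\pi_{k^\ast,t}\ge\mu^\ast$ and $U^{\pi,\aux}_{k^\ast,t}\ge\mu^\ast$; and (b) $U^\pi_{k,t}\le \mu_k+2\sqrt{c\sigma^2\log t/n^\pi_{k,t}}$ and $U^{\pi,\aux}_{k,t}\le \bar\mu^\aux_{k,t}+2\sqrt{c\sigma^2\log t/n^{\pi,\aux}_{k,t}}$, where $\bar\mu^\aux_{k,t}=\big(n^\pi_{k,t}\mu_k+\tfrac{\sigma^2}{\ualpha^2\hat\sigma^2}n^\aux_{k,t}\,\ualpha y_k\big)/n^{\pi,\aux}_{k,t}$ is the conditional mean of the biased estimator $\bar X^{\pi,\aux}_{k,t}$. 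The bound $U^{\pi,\aux}_{k^\ast,t}\ge\mu^\ast$ is where optimism enters: the conditional mean of $\bar X^{\pi,\aux}_{k^\ast,t}$ is a convex combination of $\mu^\ast$ and $\ualpha y_{k^\ast}\ge\mu_{k^\ast}=\mu^\ast$, hence $\ge\mu^\ast$, so already the absence of a downward deviation places $U^{\pi,\aux}_{k^\ast,t}$ above $\mu^\ast$. One also checks that the weight $\sigma^2/(\ualpha^2\hat\sigma^2)$ is precisely what makes $\bar X^{\pi,\aux}_{k,t}$ a $\sigma^2/n^{\pi,\aux}_{k,t}$-sub-Gaussian estimator of $\bar\mu^\aux_{k,t}$, so that $\sqrt{c\sigma^2\log t/n^{\pi,\aux}_{k,t}}$ is a legitimate confidence radius (using the section's sub-Gaussianity assumption on $\cV^\aux$ together with $\alpha_k\le\ualpha$).

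On $\cE_t$, $\pi_t=k$ with $t>K$ then yields two deterministic consequences simultaneously: (i) from (a)--(b) applied to $U^\pi$, $\;n^\pi_{k,t}\le 4c\sigma^2\log t/\Delta_k^2$; and (ii) from (a)--(b) applied to $U^{\pi,\aux}$, provided $\delta_k>0$, $\;n^{\pi,\aux}_{k,t}\le 4c\sigma^2\log t/\delta_k^2$ --- the latter because $\mu^\ast-\bar\mu^\aux_{k,t}=\big(n^\pi_{k,t}\Delta_k+\tfrac{\sigma^2}{\ualpha^2\hat\sigma^2}n^\aux_{k,t}\delta_k\big)/n^{\pi,\aux}_{k,t}\ge\delta_k$, which is exactly where $\delta_k\le\Delta_k$ is used (that inequality makes $\bar\mu^\aux_{k,t}$ a convex combination of $\mu^\ast-\Delta_k$ and $\mu^\ast-\delta_k$, both at least $\delta_k$ below $\mu^\ast$). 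Consequence (i) alone, summed over $t$, gives $\rE^\pi_{\vectorgreek{\nu},\vectorgreek{\nu}^{\aux}}[n^\pi_{k,T+1}]\le\tfrac{C}{\Delta_k^2}\log T+O(1)$, and absorbing the $O(1)$ terms into $\tfrac{\constref{ucb1-unknown}}{\Delta_k^2}\log T$ (legitimate for $T\ge2$, since $\Delta_k$ is bounded) settles Case~1 and the first term of the minimum in Case~2. For the log-sum-exp branch I would reuse verbatim the telescoping device from the proof of Theorem~\ref{theorem-UCB1-upper bound}, with $\delta_k$ replacing $\Delta_k$ and weight $w=\sigma^2/(\ualpha^2\hat\sigma^2)$ replacing $\sigma^2/\hat\sigma^2$: set
\[
\tau_{k,t}:=\sum_{s=1}^t\exp\!\Big(\tfrac{\delta_k^2}{4c\ualpha^2\hat\sigma^2}\textstyle\sum_{\tau=s}^t h_{k,\tau}\Big),\qquad \hat l_{k,t}:=\tfrac{4c\sigma^2}{\delta_k^2}\log\tau_{k,t}.
\]
Since $\tau_{k,t}\ge t$, consequence (ii) upgrades to $n^{\pi,\aux}_{k,t}\le\hat l_{k,t}$, i.e.\ $n^\pi_{k,t}\le\hat l_{k,t}-w\,n^\aux_{k,t}$ (the right side is nonnegative because $\log\tau_{k,t}\ge\tfrac{\delta_k^2}{4c\ualpha^2\hat\sigma^2}n^\aux_{k,t}$, the $s{=}1$ term), so $\sum_{t>K}\mathbbm{1}\{\pi_t=k,\cE_t\}\le 1+\max_{1\le t\le T}\{\hat l_{k,t}-w\,n^\aux_{k,t}\}$. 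The special shape of $\tau_{k,t}$ makes $\hat l_{k,t}-w\,n^\aux_{k,t}=\tfrac{4c\sigma^2}{\delta_k^2}\log\big(\sum_{s=0}^{t-1}\exp(-\tfrac{\delta_k^2}{4c\ualpha^2\hat\sigma^2}\sum_{\tau\le s}h_{k,\tau})\big)$, which is nondecreasing in $t$; the maximum is therefore at $t=T$ and is at most $\tfrac{4c\sigma^2}{\delta_k^2}\log\big(\sum_{t=0}^T\exp(-\tfrac{\delta_k^2}{4c\ualpha^2\hat\sigma^2}\sum_{s\le t}h_{k,s})\big)$. Adding $\sum_t\rP(\cE_t^c)+O(1)$ gives the second term of the minimum in Case~2, with $\constref{ucb2-unknown}=\tfrac1{4c\ualpha^2\hat\sigma^2}$ and $\constref{ucb3-unknown}$ collecting the lower-order terms. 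For Case~3 the extra hypothesis $\delta_k\ge\epsilon\Delta_k$ lets me bound $\tfrac1{\delta_k^2}\le\tfrac1{\epsilon^2\Delta_k^2}$, which only inflates the multiplicative constant into the $\epsilon$-dependent $\constref{ucb1-unknown}$ while keeping the exponent rate $\delta_k^2$ intact.

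The main obstacle is, as in all UCB-type arguments, Step two: running the concentration union bound over the random counter $n^\pi_{k,t}$ \emph{simultaneously} for arm $k$, for $k^\ast$, and for \emph{both} confidence bounds; verifying the normalization of the weighted estimator $\bar X^{\pi,\aux}_{k,t}$; and --- the one genuinely new point --- checking that $U^{\pi,\aux}_{k^\ast,t}$ cannot drop below $\mu^\ast$ on the clean event even though $\bar X^{\pi,\aux}_{k^\ast,t}$ is a \emph{biased} (upward) estimator, which is precisely where $\ualpha y_{k^\ast}\ge\mu^\ast$ is invoked. Everything downstream is arithmetic together with the $\tau_{k,t}$-telescoping already established for Theorem~\ref{theorem-UCB1-upper bound}.
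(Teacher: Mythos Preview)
Your proposal is correct and tracks the paper's argument closely for Cases~1 and~2: the paper also conditions on the high-probability event $\bar{\mathcal E}^\ast=\{U_{k^\ast,t}\ge\mu^\ast\ \forall t\}$, then runs two separate analyses---one with $U^\pi_{k,t}$ alone (its Step~2, your consequence~(i)) and one with $U^{\pi,\aux}_{k,t}$ centered at $\bar\mu^{\aux}_{k,t}$ and the effective gap lower-bounded by $\delta_k$ (its Step~3, your consequence~(ii))---and takes whichever is better. Your check that $U^{\pi,\aux}_{k^\ast,t}\ge\mu^\ast$ on the clean event via $\ualpha y_{k^\ast}\ge\alpha_{k^\ast}y_{k^\ast}=\mu^\ast$ is exactly the point the paper glosses over when it asserts $\rP\{\mathcal E^\ast_t\}\le t^{-c/2}$.

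Where you and the paper diverge is Case~3. You simply reuse the Case~2 bound $\tfrac{4c\sigma^2}{\delta_k^2}\log(\sum\ldots)$ and convert the prefactor via $\delta_k^{-2}\le(\epsilon\Delta_k)^{-2}$, pushing the $\epsilon$-dependence into $\constref{ucb1-unknown}$. The paper instead (its Step~4) keeps the \emph{exact} effective gap $\Delta^{\pi,\aux}_{k,t}=(n^\pi_{k,t}\Delta_k+w\,n^\aux_{k,t}\delta_k)/n^{\pi,\aux}_{k,t}$, writes $n^{\pi,\aux}_{k,t}\le 4c\sigma^2\log t/(\Delta^{\pi,\aux}_{k,t})^2$ as a quadratic inequality in $n^\pi_{k,t}$, and solves it; after some algebra using $\xi_k:=\delta_k/\Delta_k\ge\epsilon>\tfrac12$, this yields the prefactor $4c\sigma^2/\Delta_k^2$ directly (no $\epsilon$) at the cost of an $\epsilon$-dependent exponent constant $(2\epsilon-1)/\epsilon^2$ inside the log-sum-exp. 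Both routes give constants of the form stated in the theorem; yours is more elementary and keeps the exponent rate at its clean value $1/(4c\ualpha^2\hat\sigma^2)$, while the paper's quadratic trick buys a prefactor independent of $\epsilon$.
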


\noindent
Theorem~\ref{theorem-2UCB-regret-upper-bound-unknown} implies that when both the information arrival matrix $\mathbf{H}$ and the mappings $\{\phi_k\}$ are a priori unknown, 2-UCBs guarantees rate-optimality with respect to arms for which either $\ualpha \cdot y_k > \mu^\ast$ or $\ualpha \cdot y_k \leq \mu^\ast - \epsilon\Delta_k$, uniformly over $(i)$ the class of arbitrary arrival processes, and $(ii)$ the class $\Phi_L$ of linear mappings. In particular, when $\ualpha \cdot y_k\ge \mu^\ast$ for some suboptimal arm $k$, then 2-UCBs guarantees the regret that is achievable without auxiliary information, and when $\ualpha \cdot y_k \leq \mu^\ast - \epsilon\Delta_k$, it guarantees regret that is a decreasing log-sum-exp function of the auxiliary information arrivals.

Recalling Figure~\ref{fig:regimes}, one may observe that the case $\mu^\ast > \ualpha \cdot y_k > \mu^\ast - \epsilon\Delta_k$ corresponds to the part of Regime 1 that is alongside Regime 2. In that case, 2-UCBs still guarantees a regret rate that is a decreasing log-sum-exp function of auxiliary information arrivals, but with a multiplicative factor that is proportional to $\delta_k^{-2}$ as opposed to $\Delta_k^{-2}$ (but never worse than the rate that is achievable without auxiliary information).

\vspace{-0.0cm}
\section{Simulation Using Content Recommendations Data}\label{section-real data}\vspace{-0.0cm}
To demonstrate the value that may be captured by leveraging auxiliary data, we use data from a large US media site to analyze performance when recommending articles that have unknown impact on the future browsing path of readers. For complete description of the setup and analysis see Appendix~\ref{sec:realdata}.

\medskip
\noindent\textbf{Background.} Content recommendations, which point readers to content they ``may like," are a form of dynamic service provided by media sites and third-party providers, with the objective of increasing readership and revenue streams for media sites. In that context, there are two key performance indicators that are fundamental for recommendations that guarantee good performance along the reading path of readers: $(i)$ the likelihood of a reader to click on a recommendation, and $(ii)$ when clicking on it, the likelihood of the reader to \emph{continue} and consume additional content \emph{afterwards} (see discussion and analysis in \citealt{besbes2015optimization}). The former indicator is the click-through rate (CTR) of the recommendation, and the latter indicator can be viewed as the conversion rate (CVR) of the recommendation.

The likelihood of readers to continue consuming content after reading an article is affected by design features of that article (e.g., its organization or the number of photos it features) that are not indicated by the recommendation linked to that article (and therefore typically do not affect CTR). These features are subject to editorial adjustments that may take place after CTR estimates are already formed (say, a few hours or days after the article's release), with the objective of increasing readership.

Rather than which \emph{article} to recommend, our setup describes sequential experiments designed for evaluating which \emph{version} of a given article to recommend, and how changes in the design of the article impact the likelihood of readers to continue consuming content \textit{after} reading it. Based on our data, we simulate the extent to which the performance of such experiments could be improved when utilizing auxiliary observations, available in the form of the browsing path of readers that arrived to these articles directly from external search (and not by clicking a recommendation). Our data consists of a list of articles, together with: $(i)$ times at which these articles were recommended and the in-site browsing path that followed these recommendations; and $(ii)$ times at which readers arrived to these articles directly from a search engine (such as Google) and the browsing path that followed these visits.\footnote{The media site at hand follows a non-subscription model, and hence common user-idiosyncratic contexts (such as age, or preferences over content) are not available to the recommender system in real time.}

\medskip
\noindent\textbf{Setup.} For each article in our data we considered a one-armed bandit setting with a known outside option to simulate experimentation with a new version of that article. Each bandit experiment was based on one day of data. We constructed a decision horizon $t=1,2,\ldots,2000$ based on the first 2,000 time-stamps of that day at which the article was recommended from the highest position (out of 5 links that are presented in each page). For each article-day pair, we assume that the CTR is identical across both versions of the article and denote it by $\mathrm{CTR}_{a,d}$; for simplicity we assume that it is known. We calculated from the data the fraction of occasions where readers continued to read another article \emph{after} arriving to article $a$ by clicking on a content recommendation, and denote it by $\mathrm{CVR}^{recom}_{a,d}$. We denote by $\mathrm{CVR}^{\text{recom}}_{a,d,0}$ the known conversion rate of article $a$ at its current design, and by $\mathrm{CVR}^{\text{recom}}_{a,d,1}$ a new (unknown) conversion rate that corresponds to a new structure that is under consideration. We assume that $\mathrm{CVR}^{recom}_{a,d,1}=\mathrm{CVR}^{recom}_{a,d}$ and that $\mathrm{CVR}^{recom}_{a,d,0} = \mathrm{CVR}^{recom}_{a,d} + s\Delta_{a,d}$, where at each replication we set $s\in\{-1,1\}$ with equal probabilities, and where $\Delta_{a,d}$ was selected in the range $[0.01, 0.04]$.

Given an article-day pair, we define the independent random variables $W_{t}\sim\mathrm{Ber}(\mathrm{CTR}_{a,d})$ and $X_{k,t} \sim~\mathrm{Ber}(\mathrm{CVR}^{recom}_{a,d,k})$, for $k\in\{0,1\}$. The average regret incurred by policy $\pi\in\cal{P}$ is denoted by $\cR^{\pi}_{a,d}$, and calculated by averaging over replications of the expression\footnote{As click-through rates are identical across article versions, this performance measure coincides with maximizing the (revenue-normalized) one-step lookahead objective, which was shown by \cite{besbes2015optimization} to approximate the recommendation value in the context of content recommendations in media sites.}\vspace{-0.2cm}
\[
\sum_{t=1}^T W_{t}\left(\max_{k\in{0,1}}\left\{\mathrm{CVR}^{\text{recom}}_{a,d,k}\right\}-  X_{\pi_t,t}\right). \vspace{-0.2cm}
\]
We assume that $X_{\pi_t,t}$ is observed only when the recommendation is clicked, that is, when $W_{t}=1$.

\medskip\noindent
\textbf{Auxiliary Information.} For each article-day pair, the matrix $\mathbf{H}$ and the sample path of auxiliary observations were \emph{fixed} and determined as follows. We extracted the trajectory of information arrivals $\left\{h_{1,t}\right\}$ from the number of readers that arrived to the article from a search engine between consecutive decision epochs. For each epoch $t$ and arrival-from-search $m\in\left\{1,\ldots,h_{1,t}\right\}$ we denote by $Y_{1,t,m}\in\left\{0,1\right\}$ an indicator of whether the reader continued to read additional content in the media site after visiting the article. We denote by $\mathrm{CVR}^{\text{search}}_{a,d}$ the fraction of readers that continued to another article after arriving to article $a$ from search during day $d$. We let\vspace{-0.15cm}
\begin{equation*}
\alpha_{a,d} \coloneqq \frac{\mathrm{CVR}^{\text{recom}}_{a,d}}{\mathrm{CVR}^{\text{search}}_{a,d}}\vspace{-0.25cm}
\end{equation*}
denote the fraction of conversion rates for users that arrive to article $a$ by clicking a recommendation, and for users that arrive to it from search.\footnote{Conversion rates of readers that clicked on a recommendation are typically higher than conversion rates of readers that arrived from search (values of $\alpha_{a,d}$ are in the range $[1,16]$). See related discussions in \cite{besbes2015optimization} on experienced versus inexperienced readers, and in \cite{caro2020managing} on followers versus new readers.} Given $\alpha_{a,d}$, auxiliary observations $\left\{Y_{1,t,m}\right\}$ can be mapped to reward observations using a mapping $\phi$ that belongs to the class of linear mappings~$\Phi_L$ from Example~\ref{subsubsec:mappings}.

We compare the performance of UCB1 without auxiliary data, with the one achieved by utilizing auxiliary data based on estimated mappings (aUCB1, see \S\ref{section-ucb1}), and the adaptive 2-UCBs policy (see \S\ref{section-2ucb}); a description of how these policies are updated under the information structure at hand is provided in Appendix~\ref{app-sec-policy-description}. It is important to clarify that the purpose of our analysis is not to study effective methods of estimating the mapping $\phi$ from data, but rather, to ($i$) demonstrate that mappings could be indeed estimated from historical data in a simple manner, and ($ii$) investigate the performance sensitivity of aUCB1 to mapping misspecification and, correspondingly, the potential benefit of deploying an adaptive policy such as 2-UCBs. With that in mind, as an estimator of $\phi$, we simply used $\hat{\alpha}_{a,d}=\alpha_{a,d-1}$, the fraction of the two conversion rates from the previous day, and deployed that estimate in the aUCB1 policy. We used $\bar{\alpha}_{a,d} = 1.1 \hat{\alpha}_{a,d}$ as the upper bound that is used in the 2-UCBs policy.\vspace{-0.025cm}

\medskip\noindent
\textbf{Evaluating Performance.} For each article-day pair we denote the \emph{Relative Improvement} (RI) achieved by policy $\pi$ relative to a UCB1 policy (that does not account for auxiliary information) by\vspace{-0.15cm}
\[
\text{RI}_{a,d}(\pi) \coloneqq	\frac{\cR^{\text{UCB1}}_{a,d} - \cR^{\pi}_{a,d} }{\cR^{\text{UCB1}}_{a,d}}.\vspace{-0.15cm}
\]
Since each article-day pair is associated with a different auxiliary information trajectory and estimation of $\alpha_{a,d}$, we present the Relative Improvement of each policy along two key dimensions. The first one is the \textit{Auxiliary Information Effectiveness} ($\AIE$) index, defined as follows:\vspace{-0.15cm}
\[
\AIE_{a,d} \coloneqq
\log T - \log \left( \sum\limits_{t=1}^{T} \exp\left(-
c
\sum\limits_{s=1}^th_{1,s}\right)\right),\vspace{-0.15cm}
\]
where $c=\tilde{c}\cdot (\Delta_{a,d}/\hat{\sigma}\alpha_{a,d})^2$, and where $\hat{\sigma}$ is set globally to $1/4$ for Bernoulli observations, $\Delta_{a,d}$ and $\alpha_{a,d}$ were set as described above, and $\tilde{c}$ is a scaling parameter.\footnote{For consistency with our analytical bounds we set $\tilde{c}$ to $4 \times (\text{UCB tuning parameter})$, which, in Figure~\ref{fig-Empirics-results-eff-idx-ucb}, equals $0.2$.}

The $\AIE$ index captures the effectiveness of the information arrival trajectory, which is impacted by frequency and timing of auxiliary information arrivals (as captured by the matrix $\vector{H}$), and other parameters that determine the informativeness of each auxiliary observation (recall that in order to focus on the impact of auxiliary information arrivals we set $T=2,000$ in all the experiments). Note that the $\AIE$ index equals $0$ in the absence of auxiliary information and is positive otherwise. When auxiliary observations arrive more frequently and earlier this index increases according to the log-sum-exp rate that determines the minimax complexity of the problem (see Remark~\ref{remark-big-oh}). The second dimension we consider is the \emph{Relative Mapping Misspecification} (RMM), defined as follows:\vspace{-0.15cm}
\[
\text{RMM}_{a,d} \coloneqq
\text{CVR}_{a,d} \left|1- \frac{\hat{\alpha}_{a,d}}{\alpha_{a,d}}\right|.\vspace{-0.15cm}
\]
The RMM captures the bias in auxiliary observations that is caused by misspecification of the mappings. It equals $0$ when the true mappings are accurately specified, and is increasing with the mapping misspecification and the true conversion rate.

\medskip
\noindent
\textbf{Results and Discussion.} The Relative Improvements as a function of the Auxiliary Information Effectiveness and the Relative Mapping Misspecification appear in Figure~\ref{fig-Empirics-results-eff-idx-ucb} (For additional summary statistics see Appendix~\ref{sec:realdata}). We note that for a given AIE, which is determined by the trajectory of auxiliary information arrivals, performance is sensitive to the particular order of auxiliary observations along that trajectory.\footnote{We further note that each of the scatter plots in Figure~\ref{fig-Empirics-results-eff-idx-ucb} depicts a projection of performances onto a single dimension: AIE or RMM; a 3D plot of RI as a function of both AIE and RMM can be viewed in \href{https://ygur.people.stanford.edu/adaptive-sequential-experiments-plot}{https://ygur.people.stanford.edu/adaptive-sequential-experiments-plot}.} 

\begin{figure} [h]
	\centering \includegraphics[width=0.87\textwidth]{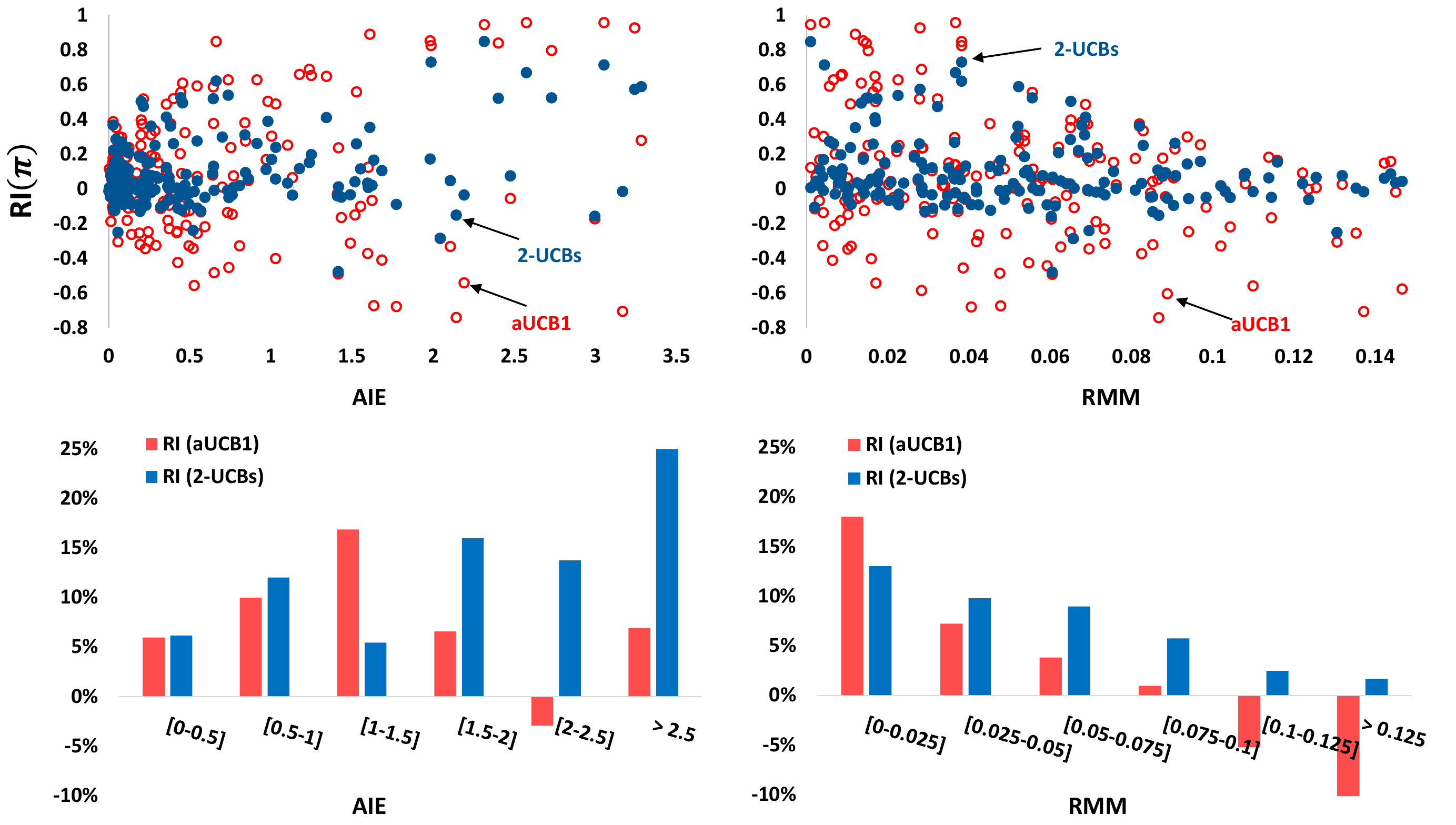}\vspace{-0.2cm}
	\caption{\small \emph{(Top)} Relative Improvement as a function of the Auxiliary Information Effectiveness and the Relative Mapping Misspecification. Each point corresponds to the Relative Improvement of an algorithm (aUCB1 or 2-UCBs) compared to UCB1 for the one-armed bandit experiment associated with a certain article-day pair, averaged over 200 replications. \emph{(Bottom)} Histograms detailing the average Relative Improvements over different ranges of AIE and RMM. (Parameters: $\Delta_{a,b} = 0.03$ globally, policies tuned by $c=0.05$.)}
	\label{fig-Empirics-results-eff-idx-ucb}\vspace{-0.35cm}
\end{figure}

Figure~\ref{fig-Empirics-results-eff-idx-ucb} reveals few important occurrences. Relative to ignoring auxiliary data, aUCB1 and 2-UCBs achieved averaged performance improvement of $7.1\%$ and $8.8\%$, respectively. With little to moderate AIE, aUCB1 achieved notable RI, but as AIE grew large the variance of RI values increased and average RI decreased. This trend demonstrates that little auxiliary data can be leveraged to significantly to improve performance even when mappings are estimated from historical data using simple methods, but also that abundant auxiliary data could lead to performance deterioration whenever auxiliary data is misinterpreted (mappings are misspecified). On the other hand, 2-UCBs adapts to unknown mappings, and its performance improves quite consistently with the effectiveness of auxiliary data (with $\text{AIE} \geq 1.5$, 2-UCBs outperformed aUCB1 and achieved average RI of $19.2\%$).

Reviewing the RI as a function of RMM reveals that the RI values of aUCB1 consistently decrease with the mapping misspecification. When the misspecification is small aUCB1 achieves significant improvement ($18\%$ when $\text{MMR} \leq 0.025$), but when the mapping misspecification grows large the performance of aUCB1 deteriorates and is eventually dominated by ignoring auxiliary data. The RI achieved by 2-UCBs also decreases with the MMR (as estimates of mappings are used to construct the upper bound $\bar{\alpha}$) but more moderately, and on average, 2-UCBs outperformed aUCB1 when $\text{MMR} \geq 0.025$.

\vspace{-0.1cm}
\section{Concluding Remarks}\label{sec:remarks}\vspace{-0.1cm}

\textbf{Summary and Implications.} We considered an extension of the MAB framework by allowing for arbitrary and unknown auxiliary information arrival processes that are relevant to many application domains, including product recommendations, pricing, health care, and machine learning. We studied the impact of the information arrival process on policy design and the performance that can be achieved. Through matching lower and upper bounds we identified a spectrum of minimax (regret) complexities for this class of problems as a function of the information arrival process, which provides a sharp criterion for identifying rate-optimal policies.

When it is known how to map auxiliary observations to reward estimates, we established that Thompson sampling and UCB1 can be leveraged to uniformly guarantee rate-optimality and, in that sense, possess natural robustness to the information arrival process (robustness which is not possessed by policies with exogenous exploration rate). Moreover, the regret rate that is achieved by Thompson sampling and UCB1 cannot be improved even with prior knowledge of the information arrival process.


When it is a priori unknown how to map auxiliary observations to reward estimates, we provide necessary and sufficient conditions that delineate when auxiliary data can still lead to performance improvement, and devise a new (and simple) 2-UCBs policy that is near optimal in that setting.

Using content recommendations data, we demonstrated the value that can be captured in practice by leveraging auxiliary information. Our analysis implies that adaptivity with respect to the mappings that interpret auxiliary data is particularly crucial in settings characterized by $(i)$ significant uncertainty about these mappings; or $(ii)$ access to abundant auxiliary data.

\subsection{Avenues for Future Research}
\textbf{Endogenous Information Arrival Processes.} One may consider information arrival processes that may depend on past decisions and observations. A class of endogenous arrival processes may include various instances, such as information arrivals that are decoupled across arms (selecting an action can impact only future information on that action), and interconnected information arrivals where each action may impact future information arrivals on all actions. An equilibrium analysis with endogenous information arrival processes is an interesting and challenging research avenue.

\medskip
\noindent \textbf{Adapting to Unknown Mappings Using Posterior Sampling.} In addition to the approach taken at~\S\ref{sec:unknown-mappings}, which is based on upper confidence bounds, performance guarantees with unknown mappings could potentially be established also based on posterior sampling. For example, one may consider the following \emph{two-dimensional Thompson sampling} policy that is based on joint posterior updates of mean rewards and the means of auxiliary observations, $(\mu_k,y_k)$. For each pair $(\mu,y)$, define the weight functions\vspace{-0.15cm}
\[
w_{k,t}(\mu, y)
\coloneqq
\exp\l(
\frac{-n^{\pi}_{k,t}(\mu - \bar{X}^{\pi}_{k,t})^2}{2\sigma^2} +
\frac{-n^{\aux}_{k,t}(y - \bar{Y}_{k,t})^2}{2\hat\sigma^2}
\r).\vspace{-0.15cm}
\]
Denote by $P_{k,t}$ the posterior of action $k$ at time $t$. Then, its density is defined as\vspace{-0.15cm}
\[
p_{k,t}(\mu,y) \coloneqq
\begin{cases}
	\frac{w_{k,t}(\mu, y)}{\int_{0}^{\infty} \int_{\frac{\mu}{\ualpha}}^{\infty} w_{k,t}(\mu^\prime, y^\prime) \intd y^\prime \intd\mu^\prime } & \text{ if } 0 \le \mu \le \ualpha y, 0\le y;
	\\
	0 & \text{ o.w.}\vspace{-0.05cm}
\end{cases}
\]
At each period, after observing the last reward and any realized auxiliary observations, one would sample $(\mu^{(\theta)}_{k,t}, y^{(\theta)}_{k,t}) \sim P_{k,t}$ for all $k \in \cal K$, and select the action that maximizes $\mu^{(\theta)}_{k,t}$. While numerical analysis suggests that performance guarantee that is similar to the one established for 2-UCBs could be established for the above policy, we note that the analysis of this policy is challenging, and requires establishing concentration inequalities across two dimensions under various conditions. 

\medskip
\noindent
\textbf{Extending Upper Bounds with Unknown Mappings.} Additional interesting paths are to extend analysis with unknown mappings to guarantee rate optimality with respect to all problem parameters (beyond $\vector{H}$ and $T$ that are at the focus of the current work) for the case $\mu^\ast > \ualpha \cdot y_k > \mu^\ast - \epsilon\Delta_k$; to design a policy that infers the upper bound $\bar{\alpha}$ throughout the decision horizon and adapts to it without prior knowledge; and to consider other classes of mappings (beyond the linear class on which we focused here).

\small
\setstretch{1.0}
\bibliographystyle{chicago}
\bibliography{references}

\newpage
\setstretch{1.4}
\appendix

\vspace{-0.1cm}
\section{Proofs of Main Results}\vspace{-0.1cm}

\vspace{-0.0cm}
\subsection{Notations}\vspace{-0.1cm}
\noindent For any policy $\pi$ and profiles $\vectorgreek{\nu}$ and $\vectorgreek{\nu}'$, let $\mathbb{P}^\pi_{\vectorgreek{\nu}, \vectorgreek{\nu}^{\mathrm{aux}}}$, $\mathbb{E}^\pi_{\vectorgreek{\nu}, \vectorgreek{\nu}^{\mathrm{aux}}}$, and $\mathbb{R}^\pi_{\vectorgreek{\nu}, \vectorgreek{\nu}^{\mathrm{aux}}}$ denote the probability, expectation, and regret when rewards are distributed according to $\vectorgreek{\nu}$, and auxiliary observations are distributed according to $\vectorgreek{\nu}^{\mathrm{aux}}$. Notation of counters and empirical means is provided in the following table:
\newcolumntype{C}[1]{%
	>{\vbox to 5ex\bgroup\vfill\centering}%
	p{#1}%
	<{\egroup}}
\begin{table}[H]
	\centering          
	\caption{Notations for counters and empirical means}\vspace{-0.1cm}
	\label{table:notations}
	\footnotesize   
	\begin{tabular}{| l | l<{\rule[-0mm]{0pt}{5.5mm}}|}   
		\toprule[2pt]                   
		\textbf{Definition} & \textbf{Description} \\
		\midrule
		$n^{\known} _{k,t} \coloneqq \sum\limits_{s=1}^{t-1}  \mathbbm{1}\{\pi_s = k\}  + \sum\limits_{s=1}^{t} \frac{\sigma^2}{\hat{\sigma}^2}h_{k,s}$  &  Number of pulls (known mappings case)        \\
		\hline
		$\bar{X}_{k,t}^{\known} \coloneqq \bar{X}_{k,n^{\known}_{k,t}}^{\known} \coloneqq \frac{\sum\limits_{s=1}^{t-1}\frac{1}{\sigma^2}\mathbbm{1}\{\pi_s = k\}X_{k,s} +  \sum\limits_{s=1}^{t} \frac{1}{\hat{\sigma}^2}h_{k,s} Z_{k,s}}{\sum\limits_{s=1}^{t-1} \frac{1}{\sigma^2} \mathbbm{1}\{\pi_s = k\}  + \sum\limits_{s=1}^{t} \frac{1}{\hat{\sigma}^2}h_{k,s} }$  &  \begin{tabular}{@{}l} Number of pulls \\ (known mappings case)
		\end{tabular}       \\
		\hline
		$n^{\pi}_{k,t} \coloneqq \sum_{s=1}^{t-1}\mathbbm{1}\{\pi_s = k\}$  &  Number of pulls         \\
		\hline
		$\bar{X}^{\pi}_{k,t} \coloneqq \bar{X}^{\pi}_{k,n^{\pi}_{k,t}} \coloneqq
		\l({\sum_{s=1}^{t-1}\mathbbm{1}\{\pi_s = k\}X_{k,s} }\r)/\l({\max\{1,n^{\pi}_{k,t} \}}\r)$  &   Empirical mean reward    \\
		\hline
		$n^{\aux}_{k,t} \coloneqq \sum_{s=1}^{t}  h_{k,s}$ &     Number of auxiliary observations          \\
		\hline
		$\bar{Y}_{k,t} \coloneqq \bar{Y}_{k,n^{\aux}_{k,t}} \coloneqq \l({   \sum_{s=1}^{t} \sum_{m=1}^{h_{k,s}} Y_{k,s,m}}\r) / \l({ \max\{1,n^{\aux}_{k,t}\} }\r)$    &  Empirical mean of auxiliary obs. \\
		\hline
		$n^{\pi, \mathrm{aux}}_{k,t} \coloneqq n^{\pi}_{k,t} + \frac{\sigma^2}{\ualpha^2\hat \sigma^2} n^{\aux}_{k,t} $ & Weighted number of  observations\\
		\hline
		$\bar{X}^{\pi, \mathrm{aux}}_{k,t} \coloneqq
		\bar{X}^{\pi, \mathrm{aux}}_{k,n^{\pi}_{k,t}, n^{\aux}_{k,t}} \coloneqq
		\l({n^{\pi}_{k,t} \cdot \bar{X}^{\pi}_{k,n^{\pi}_{k,t}} + \frac{\sigma^2}{\ualpha^2\hat \sigma^2} \cdot  n^{\aux}_{k,t} \cdot \ualpha\bar{Y}_{k,t} }\r)/\l({\max\{1,n^{\pi, \mathrm{aux}}_{k,t} \}}\r)$  &   Optimistic empirical mean reward
		\\
		\bottomrule
	\end{tabular}\vspace{-0.2cm}
	\label{table:tree-term}
\end{table}

\vspace{-0.5cm}
\subsection{Proof of Theorem \ref{theorem-lower bound-general}}\vspace{-0.1cm}

\paragraph{Step 1 (Preliminaries).} The proof adapts ideas of identifying worst-case nature strategy (see, e.g., \citealt{bubeck2013bounded}) to our setting in order to identify the precise change in the achievable performance as a function of the entries of information arrival matrix~$\vector{H}$. For $m,q\in \{1,\dots,K\}$ define the distribution profiles $\vectorgreek{\nu}^{(m,q)}$:\vspace{-0.2cm}
\[
\nu_k^{(m,q)} = \begin{cases*}
	\mathcal{N}(0,\sigma^2) & if  $k=m$  \\
	\mathcal{N}(+\Delta,\sigma^2) & if $k=q\neq m$ \\
	\mathcal{N}(-\Delta,\sigma^2) & o.w.
\end{cases*}.
\]
For example, for $m=1$, one has\vspace{-0.15cm}
\[
\vectorgreek{\nu}^{(1,1)} = \begin{pmatrix}
	\mathcal{N}(0,\sigma^2) \\ \mathcal{N}(-\Delta,\sigma^2) \\ \mathcal{N}(-\Delta,\sigma^2) \\ \vdots \\ \mathcal{N}(-\Delta,\sigma^2)
\end{pmatrix}
,\;
\vectorgreek{\nu}^{(1,2)} = \begin{pmatrix}
	\mathcal{N}(0,\sigma^2) \\ \mathcal{N}(+\Delta,\sigma^2) \\ \mathcal{N}(-\Delta,\sigma^2) \\ \vdots \\ \mathcal{N}(-\Delta,\sigma^2)
\end{pmatrix}
,\;
\vectorgreek{\nu}^{(1,3)} = \begin{pmatrix}
	\mathcal{N}(0,\sigma^2) \\ \mathcal{N}(-\Delta,\sigma^2) \\ \mathcal{N}(+\Delta,\sigma^2) \\ \vdots \\ \mathcal{N}(-\Delta,\sigma^2)
\end{pmatrix}
, \dots , \;
\vectorgreek{\nu}^{(1,K)} = \begin{pmatrix}
	\mathcal{N}(0,\sigma^2) \\ \mathcal{N}(-\Delta,\sigma^2) \\ \mathcal{N}(-\Delta,\sigma^2) \\ \vdots \\ \mathcal{N}(+\Delta,\sigma^2)
\end{pmatrix}.\vspace{-0.3cm}
\]
Similarly, assume that the auxiliary information $Y_{k,t,m}$ is distributed according to the reward distribution $\hat{\nu}_k^{(m,q)} \overset{d}{=} \nu_k^{(m,q)}$, and hence we use the notation $\mathbb{P}^\pi_{\vectorgreek{\nu}}$, $\mathbb{E}^\pi_{\vectorgreek{\nu}}$, and $\mathbb{R}^\pi_{\vectorgreek{\nu}}$ instead of $\mathbb{P}^\pi_{\vectorgreek{\nu}, \vectorgreek{\nu}^{\mathrm{aux}}}$, $\mathbb{E}^\pi_{\vectorgreek{\nu}, \vectorgreek{\nu}^{\mathrm{aux}}}$, and $\mathbb{R}^\pi_{\vectorgreek{\nu}, \vectorgreek{\nu}^{\mathrm{aux}}}$.

\paragraph{Step 2 (Lower bound decomposition).}
We note that
\begin{equation}\label{eq-lower bound-decomposition}
	\mathcal{R}^\pi_{\cal S}(\vector{H}, T) \ge \max_{m,q\in\{1,\dots,K\}}\left\{\mathcal{R}^\pi_{\vectorgreek{\nu}^{(m,q)}}(\vector{H}, T)\right\} \ge \frac{1}{K}\sum\limits_{m=1}^K \max_{q\in\{1,\dots,K\}} \left\{\mathcal{R}^\pi_{\vectorgreek{\nu}^{(m,q)}}(\vector{H}, T)\right\}.
\end{equation}

\paragraph{Step 3 (A naive lower bound for $\max\limits_{q\in\{1,\dots,K\}}\{\mathcal{R}^\pi_{\vectorgreek{\nu}^{(m,q)}}(\vector{H}, T)\}$).} We note that
\begin{equation}\label{eq-lower bound-I}
	\max_{q\in\{1,\dots,K\}}\{\mathcal{R}^\pi_{\vectorgreek{\nu}^{(m,q)}}(\vector{H}, T)\} \ge \mathcal{R}^\pi_{\vectorgreek{\nu}^{(m,m)}}(\vector{H}, T)= \Delta \cdot \sum\limits_{k\in \mathcal{K} \setminus\{ m\} } \mathbb{E}_{\vectorgreek{\nu}^{(m,m)}}[n^{\pi}_{k,T+1}].
\end{equation}

\paragraph{Step 4 (An information theoretic lower bound).} For any profile $\vectorgreek{\nu}$, denote by $\vectorgreek{\nu}_t$ the distribution of the observed rewards up to time $t$ under $\vectorgreek{\nu}$. By Lemma \ref{gerchinovitz2016refined-lemma-1}, for any $q \neq m$, one has
\begin{equation}\label{eq-KL-div}
	\mathrm{KL}(\vectorgreek{\nu}^{(m,m)}_t,\vectorgreek{\nu}^{(m,q)}_t)  =\frac{2\Delta^2}{\sigma^2} \cdot \mathbb{E}_{\vectorgreek{\nu}^{(m,m)}}[n_{q,t}] = \frac{2\Delta^2}{\sigma^2}\left(\mathbb{E}_{\vectorgreek{\nu}^{(m,m)}}[n^{\pi}_{q,t}] + \sum\limits_{s=1}^t\frac{\sigma^2}{\hat{\sigma}^2} h_{q,s}\right).
\end{equation}

\noindent One obtains:
\begin{align}\label{eq-lower bound-II}
	\max_{q\in\{1,\dots,K\}}\{\mathcal{R}^\pi_{\vectorgreek{\nu}^{(m,q)}}(\vector{H}, T)\}
	&\ge
	\frac{1}{K}\mathcal{R}^\pi_{\vectorgreek{\nu}^{(m,m)}}(\vector{H}, T) +  \frac{1}{K}\sum\limits_{q\in \mathcal{K} \setminus\{ m\} }\mathcal{R}^\pi_{\vectorgreek{\nu}^{(m,q)}}(\vector{H}, T)
	\nonumber \\
	& \ge
	\frac{\Delta}{K}\sum\limits_{t=1}^T\sum\limits_{k\in \mathcal{K} \setminus\{ m\} }\mathbb{P}_{\vectorgreek{\nu}^{(m,m)}}\{\pi_t = k\} + \frac{\Delta}{K}\sum\limits_{q\in \mathcal{K} \setminus\{ m\} }\sum\limits_{t=1}^T\mathbb{P}_{\vectorgreek{\nu}^{(m,q)}}\{\pi_t \neq q\}
	\nonumber \\
	& =
	\frac{\Delta}{K}\sum\limits_{t=1}^T\sum\limits_{q\in \mathcal{K} \setminus\{ m\} }\left(\mathbb{P}_{\vectorgreek{\nu}^{(m,m)}}\{\pi_t = q\} + \mathbb{P}_{\vectorgreek{\nu}^{(m,q)}}\{\pi_t \neq q\} \right)
	\nonumber \\
	& \overset{(a)}{\ge}
	\frac{\Delta}{2K}\sum\limits_{t=1}^{T}\sum\limits_{q\in \mathcal{K} \setminus\{ m\} } \exp(-\mathrm{KL}(\vectorgreek{\nu}^{(m,m)}_t,\vectorgreek{\nu}^{(m,q)}_t))
	\nonumber \\
	& \overset{(b)}{=}
	\frac{\Delta}{2K}\sum\limits_{t=1}^{T}\sum\limits_{q\in \mathcal{K} \setminus\{ m\} }\exp\left[-\frac{2\Delta^2}{\sigma^2}\left(\mathbb{E}_{\vectorgreek{\nu}^{(m,m)}}[n^{\pi}_{q,t-1}] + \sum\limits_{s=1}^t \frac{\sigma^2}{\hat{\sigma}^2} h_{q,s}\right)\right]
	\nonumber \\
	& \overset{(c)}{\ge}
	\frac{\Delta}{2K}\sum\limits_{t=1}^{T}\sum\limits_{q\in \mathcal{K} \setminus\{ m\} }\exp\left[-\frac{2\Delta^2}{\sigma^2}\left(\mathbb{E}_{\vectorgreek{\nu}^{(m,m)}}[n^{\pi}_{q,T+1}] + \sum\limits_{s=1}^t \frac{\sigma^2}{\hat{\sigma}^2} h_{q,s}\right)\right]
	\nonumber \\
	& =
	\sum\limits_{q\in \mathcal{K} \setminus\{ m\} } \frac{\Delta \cdot
		\exp\left(
		-\frac{2 \Delta^2}{\hat{\sigma}^2} \cdot \mathbb{E}_{\vectorgreek{\nu}^{(m,m)}}[n^{\pi}_{q,T+1}]
		\right)
	}{2K}\sum\limits_{t=1}^{T}
	\exp\left(
	-\frac{2 \Delta^2}{\sigma^2} \sum\limits_{s=1}^th_{q,s}
	\right),
\end{align}
where (a) follows from Lemma \ref{Tsybakov2008introduction-lemma-2.6}, (b) holds by (\ref{eq-KL-div}), and (c) follows from $n^\pi_{q,T+1} \ge n^\pi_{q,t}$ for $t\in \mathcal{T}$.

\paragraph{Step 5 (Unifying the lower bounds in steps 3 and 4).} Using (\ref{eq-lower bound-I}), and (\ref{eq-lower bound-II}), we establish\vspace{-0.1cm}
\begin{align}\label{eq-final-lower bound}
	&\max_{q\in\{1,\dots,K\}}\{\mathcal{R}^\pi_{\vectorgreek{\nu}^{(m,q)}}(\vector{H}, T)\}
	\nonumber
	\\
	&\hspace{1cm}\ge
	\frac{\Delta}{2} \sum\limits_{k\in \mathcal{K} \setminus\{ m\} }  \left( \mathbb{E}_{\vectorgreek{\nu}^{(m,m)}}[n^{\pi}_{k,T+1}] + \frac{
		\exp\left(
		-\frac{2 \Delta^2}{\sigma^2} \cdot \mathbb{E}_{\vectorgreek{\nu}^{(m,m)}}[n^\pi_{k,T+1}]
		\right)
	}{2K}\sum\limits_{t=1}^{T}
	\exp\left(
	-\frac{2 \Delta^2}{\hat{\sigma}^2} \sum\limits_{s=1}^th_{k,s}
	\right)
	\right)
	\nonumber \\
	&\hspace{1cm}\ge \frac{\Delta}{2} \sum\limits_{k\in \mathcal{K} \setminus\{ m\} }  \min_{x\ge0} \left( x + \frac{
		\exp\left(
		-\frac{2 \Delta^2}{\sigma^2} \cdot x
		\right)
	}{2K}\sum\limits_{t=1}^{T}
	\exp\left(
	-\frac{2 \Delta^2}{\hat{\sigma}^2} \sum\limits_{s=1}^th_{k,s}
	\right)
	\right)
	\nonumber \\
	&\hspace{1cm}\overset{(a)}{\ge} \frac{\sigma^2}{4\Delta} \sum\limits_{k\in \mathcal{K} \setminus\{ m\} } \log \left( \frac{\Delta^2}{\sigma^2K}\sum\limits_{t=1}^{T}
	\exp\left(
	-\frac{2 \Delta^2}{\hat{\sigma}^2}\sum\limits_{s=1}^th_{k,s}
	\right)
	\right),
\end{align}
where (a) follows from $x + \gamma e^{-\kappa x} \ge \frac{\log \gamma \kappa}{\kappa}$ for $\gamma, \kappa, x > 0$. (Note that the function $x + \gamma e^{-\kappa x}$ is a convex function and we can find its minimum by finding the root of its derivative) The result is then established by putting together (\ref{eq-lower bound-decomposition}), and (\ref{eq-final-lower bound}). \qed

\subsection{Proof of Theorem \ref{theorem-UCB1-upper bound}} Fix a problem instance $(\vectorgreek{\nu}, \vectorgreek{\nu}^{\aux})\in \cS$ with the mean rewards $\vectorgreek{\mu}$ and the mean $\vector{y}$ for auxiliary observations. Consider a suboptimal arm $k\neq k^\ast$. If arm $k$ is pulled at time $t$, then\vspace{-0.2cm}
\[
\bar{X}_{k,t}^{\known} + \sqrt{\frac{c\sigma^2 \log t}{n^{\known} _{k,t}}} \ge \bar{X}_{k^\ast,t}^{\known} + \sqrt{\frac{c\sigma^2 \log t}{n^{\known} _{k^\ast,t}}}.\vspace{-0.2cm}
\]
Therefore, at least one of the following three events must occur:\vspace{-0.2cm}
\begin{align*}
	\mathcal{E}_{1,t} &\coloneqq \left\{  \bar{X}_{k,t}^{\known} \ge \mu_k + \sqrt{\frac{c\sigma^2 \log t}{n^{\known} _{k,t}}} \right\}, \quad \mathcal{E}_{2,t} \coloneqq \left\{  \bar{X}_{k^\ast,t}^{\known} \le \mu_{k^\ast} - \sqrt{\frac{c\sigma^2 \log t}{n^{\known} _{k^\ast,t}}} \right\},\\
	\mathcal{E}_{3,t} &\coloneqq \left\{  \Delta_k \le 2\sqrt{\frac{c\sigma^2 \log t}{n^{\known} _{k,t}}} \right\}.
\end{align*}
To see why this is true, assume that all the above events fail. Then, we have\vspace{-0.2cm}
\[
\bar{X}_{k,t}^{\known} + \sqrt{\frac{c\sigma^2 \log t}{n^{\known} _{k,t}}} < \mu_k + 2\sqrt{\frac{c\sigma^2 \log t}{n^{\known} _{k,t}}} < \mu_k + \Delta_k = \mu_{k^\ast} < \bar{X}_{k^\ast,t}^{\known} + \sqrt{\frac{c\sigma^2 \log t}{n^{\known} _{k^\ast,t}}},\vspace{-0.2cm}
\]
which is in contradiction with the assumption that arm $k$ is pulled at time $t$.
For any sequence $\{ l_{k,t}\}_{t\in\mathcal{T}}$, and $\{\hat l_{k,t}\}_{t\in\mathcal{T}}$, such that $\hat l_{k,t} \ge l_{k,t}$ for all $t\in \mathcal{{T}}$, one has\vspace{-0.2cm}
\begin{align*}
	\mathbb{E}^\pi_{\vectorgreek{\nu}, \vectorgreek{\nu}^{\mathrm{aux}}} \left[n^\pi_{k,T+1}\right]
	&= \mathbb{E}^\pi_{\vectorgreek{\nu}, \vectorgreek{\nu}^{\mathrm{aux}}} \left[ \sum\limits_{t=1}^T \mathbbm{1} \left\{ \pi_t = k \right\}\right]
	\\
	&=
	\mathbb{E}^\pi_{\vectorgreek{\nu}, \vectorgreek{\nu}^{\mathrm{aux}}} \left[ \sum\limits_{t=1}^T \mathbbm{1} \left\{ \pi_t = k, n^{\known} _{k,t} \le l_{k,t}\right\} + \mathbbm{1} \left\{ \pi_t = k, n^{\known} _{k,t} > l_{k,t} \right\}\right]
	\\
	&\le
	\mathbb{E}^\pi_{\vectorgreek{\nu}, \vectorgreek{\nu}^{\mathrm{aux}}} \left[ \sum\limits_{t=1}^T \mathbbm{1} \left\{ \pi_t = k, n^{\known} _{k,t} \le \hat l_{k,t}\right\} + \mathbbm{1} \left\{ \pi_t = k, n^{\known} _{k,t} > l_{k,t} \right\}\right]
	\\
	&\le
	\mathbb{E}^\pi_{\vectorgreek{\nu}, \vectorgreek{\nu}^{\mathrm{aux}}} \left[ \sum\limits_{t=1}^T \mathbbm{1} \left\{ \pi_t = k, n^\pi_{k,t}\le \hat l_{k,t} - \sum\limits_{s=1}^t \frac{\sigma^2}{\hat{\sigma}^2}  h_{k,s} \right\}\right]
	+ \sum\limits_{t=1}^T \mathbb{P} \left\{ \pi_t = k, n^{\known} _{k,t} > l_{k,t} \right\}
	\\
	&\le
	\max_{1\le t \le T}\left\{\hat l_{k,t} - \sum\limits_{s=1}^t \frac{\sigma^2}{\hat{\sigma}^2}  h_{k,s}\right\}
	+ \sum\limits_{t=1}^T \mathbb{P} \left\{ \pi_t = k, n^{\known} _{k,t} > l_{k,t} \right\}.
\end{align*}
Set the values of $ l_{k,t}$ and $\hat l_{k,t}$ as follows:\vspace{-0.1cm}
\begin{equation*}\label{eq-UCB-proof-helper1}
	l_{k,t} = \frac{4c\sigma^2\log \left(
		t
		\right)}{\Delta_k^2}, \quad \text{and} \quad  \hat l_{k,t} = \frac{4c\sigma^2\log \left(
		\tau_{k,t}
		\right)}{\Delta_k^2},
\end{equation*}
where
\begin{equation*}\label{eq-UCB-proof-helper3}
	\tau_{k,t} \coloneqq \sum\limits_{s=1}^t \exp\left(
	{\frac{ \Delta_k^2}{4c\hat{\sigma}^2}\sum\limits_{\tau = s}^t h_{k,\tau}}
	\right) .\vspace{-0.2cm}
\end{equation*}
To have $n^{\known} _{k,t} > l_{k,t}$, it must be the case that $E_{3,t}$ does not occur. Therefore,\vspace{-0.3cm}
\begin{align}\label{eq-UCB-proof-helper2}
	\mathbb{E}^\pi_{\vectorgreek{\nu}, \vectorgreek{\nu}^{\mathrm{aux}}} \left[n^\pi_{k,T+1}\right]
	&\le
	\max_{1\le t \le T}\left\{\hat l_{k,t} - \sum\limits_{s=1}^t \frac{\sigma^2}{\hat{\sigma}^2}  h_{k,s}\right\} + \sum\limits_{t=1}^T \mathbb{P} \left\{ \pi_t = k, \mathcal{E}_{3,t}^c \right\}
	\nonumber \\
	&\le
	\max_{1\le t \le T}\left\{\hat l_{k,t} - \sum\limits_{s=1}^t \frac{\sigma^2}{\hat{\sigma}^2}  h_{k,s}\right\} + 	\sum\limits_{t=1}^T \mathbb{P} \left\{ \mathcal{E}_{1,t} \cup \mathcal{E}_{2,t} \right\}
	\nonumber\\
	&\le
	\max_{1\le t \le T}\left\{\hat l_{k,t} - \sum\limits_{s=1}^t \frac{\sigma^2}{\hat{\sigma}^2}  h_{k,s}\right\} + \sum\limits_{t=1}^T \frac{2}{t^{c/2}},
\end{align}
where the last inequality follows from Lemma \ref{lemma-Chernoff-Hoeffding-bound} and the union bound.
Plugging the value of $\hat l_{k,t}$ into (\ref{eq-UCB-proof-helper2}), one obtains:\vspace{-0.3cm}
\begin{align*}
	\mathbb{E}^\pi_{\vectorgreek{\nu}, \vectorgreek{\nu}^{\mathrm{aux}}} \left[n^\pi_{k,T+1}\right]
	&\le
	\max_{1\le t \le T}\left\{
	\frac{4c\sigma^2}{ \Delta_k^2} \log \left(
	\sum\limits_{s=1}^t\exp\left(
	\frac{\Delta_k^2}{4c\hat{\sigma}^2}\sum\limits_{\tau = s}^t h_{k,\tau} - \frac{ \Delta_k^2}{4c\hat{\sigma}^2}\sum\limits_{\tau = 1}^t h_{k,\tau}
	\right)
	\right)
	\right\} + \sum\limits_{t=1}^T \frac{2}{t^{c/2}}
	\nonumber
	\\
	&\le
	\frac{4c\sigma^2}{\Delta_k^2} \log \left(
	\sum\limits_{t=1}^T \exp\left(
	{\frac{- \Delta_k^2}{4c\hat{\sigma}^2}\sum\limits_{s = 1}^{t-1} h_{k,s}}
	\right)
	\right)+ \sum\limits_{t=1}^T \frac{2}{t^{c/2}},
\end{align*}
which concludes the proof. \qed

\subsection{Proof of Theorem \ref{theorem-Thompson-Sampling-regret-upper bound}}
We adapt the proof of the upper bound for Thompson sampling with Gaussian priors in \cite{agrawal2013further}.

\paragraph{Step1 (Notations and definitions).}Fix a problem instance $(\vectorgreek{\nu}, \vectorgreek{\nu}^{\aux})\in \cS$ with the mean rewards $\vectorgreek{\mu}$ and the mean $\vector{y}$ for auxiliary observations.
For every suboptimal arm $k$, we consider three parameters $x_k$, $y_k$, and $u_k$  such that $\mu_k < x_k < y_k < u_k < \mu_{k^\ast}$. We will specify these three parameters at the end of the proof. Also, define the events $E_{k,t}^\mu$, and $E^\theta_{k,t}$ to be the events on which $\bar X_{k,n^{\known} _{k,t}} \le x_k$, and $\theta_{k,t} \le y_k$, respectively. In words, the events $E_{k,t}^\mu$, and $E^\theta_{k,t}$ happen when the estimated mean reward, and the sample mean reward do not deviate from the true mean, respectively. Define the history $\mathcal{H}_t \coloneqq \left( \{X_{\pi_s,s}\}_{s=1}^{t-1}, \{\pi_s\}_{s=1}^{t-1},\{\vector{Z}_s\}_{s=1}^{t}, \{\vectorgreek{h}_s\}_{s=1}^{t}\right)$ for all $t=1, \dots$. Finally define
\[
p_{k,t} = \mathbb{P}^\pi_{\vectorgreek{\nu}, \vectorgreek{\nu}^{\mathrm{aux}}}\{\theta_{k^\ast,t} > y_k \mid\mathcal{H}_t\}.
\]

\paragraph{Step2 (Preliminaries).}
We will make use of the following lemmas from \cite{agrawal2013further} throughout this proof. The proof of the lemmas are skipped since they are simple adaptation to our setting with auxiliary information arrival.

\begin{lemma}
	\label{lemma-Agrawal-proof-of-theorem1}
	For any suboptimal arm $k$,
	\[
	\sum\limits_{t=1}^T
	\mathbb{P}^\pi_{\vectorgreek{\nu}, \vectorgreek{\nu}^{\mathrm{aux}}} \left \{ \pi_t = k, E_{k,t}^\mu, E^\theta_{k,t} \right\}
	\le
	\sum_{j=0}^{T-1} \Expect \l[
	\frac{1 - p_{k,t_j+1}}{p_{k,t_j+1}}
	\r],
	\]
	where $t_0=0$ and for $j>0$, $t_j$ is the time step at which the optimal arm $k^\ast$ is pulled for the $j$th time.
\end{lemma}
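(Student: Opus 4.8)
This is the transcription to our setting of the key step in the Thompson‑sampling analysis of \cite{agrawal2013further} (their Lemma~2.9), and the plan is to follow that argument, the only genuinely new ingredient being the bookkeeping forced by the fact that auxiliary observations on $k^\ast$ also move its posterior. First I would prove a per‑round, history‑conditional inequality: for every $t$,
\[
\mathbb{P}^\pi_{\vectorgreek{\nu},\vectorgreek{\nu}^{\aux}}\left\{\pi_t=k,\,E^\mu_{k,t},\,E^\theta_{k,t}\mid\mathcal{H}_t\right\}
\le
\frac{1-p_{k,t}}{p_{k,t}}\;\mathbb{P}^\pi_{\vectorgreek{\nu},\vectorgreek{\nu}^{\aux}}\left\{\pi_t=k^\ast\mid\mathcal{H}_t\right\}.
\]
The argument is the usual resampling one: on $\{\pi_t=k,\,E^\theta_{k,t}\}$ every sampled value satisfies $\theta_{k',t}\le\theta_{k,t}\le y_k$, so in particular $\theta_{k^\ast,t}\le y_k$; conditioning on $\mathcal{H}_t$ together with $\{\theta_{k',t}:k'\neq k^\ast\}$ — from which $\theta_{k^\ast,t}$ is independent — and comparing the event ``$k$ is the maximizer among all arms, with $\theta_{k,t}\le y_k$ and $E^\mu_{k,t}$'' to the event ``$k^\ast$ is the maximizer among all arms with $\theta_{k^\ast,t}>y_k$'' produces the factor $(1-p_{k,t})/p_{k,t}$, after which the extra constraints on the $k^\ast$‑side event are simply dropped.

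Next I would sum over $t$ by grouping rounds into the epochs delimited by successive pulls of $k^\ast$. The indicator that $t$ lies in the epoch $(t_j,t_{j+1}]$ is $\mathcal{H}_t$‑measurable, $k^\ast$ is pulled at most once per epoch, and $p_{k,t}$ is $\mathcal{H}_{t_j+1}$‑measurable provided the posterior of $k^\ast$ does not change during the epoch; under that provision one pulls $(1-p_{k,t_j+1})/p_{k,t_j+1}$ out of the inner sum, invokes $\mathbb{E}\big[\sum_{t\in(t_j,t_{j+1}]}\mathbbm{1}\{\pi_t=k^\ast\}\mid\mathcal{H}_{t_j+1}\big]\le 1$, and applies the tower property to obtain $\sum_{j=0}^{T-1}\mathbb{E}\big[(1-p_{k,t_j+1})/p_{k,t_j+1}\big]$.

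The one place where the classical proof does not transfer verbatim — and the main obstacle — is exactly that provision: in the aTS policy the posterior of $k^\ast$, and hence $p_{k,t}$, is refreshed not only when $k^\ast$ is pulled but also whenever an auxiliary observation on $k^\ast$ arrives, so $p_{k,t}$ need not be constant on $(t_j,t_{j+1}]$. I would handle this by refining the partition: let the cut points be \emph{all} rounds at which $k^\ast$ is either pulled or receives an auxiliary observation. On each refined sub‑epoch the posterior of $k^\ast$ is genuinely constant, so $p_{k,t}$ equals its value at the start of that sub‑epoch, $k^\ast$ is still pulled at most once per sub‑epoch, and the same tower‑property telescoping goes through. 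Re‑indexing the resulting sum by the pull of $k^\ast$ that closes each sub‑epoch gives a bound of the stated form — it collapses exactly to $\sum_{j=0}^{T-1}\mathbb{E}\big[(1-p_{k,t_j+1})/p_{k,t_j+1}\big]$ in the regime (no auxiliary arrivals on $k^\ast$) in which \cite{agrawal2013further} wrote the argument, and in general produces the analogous slightly longer sum that is what the downstream use of Lemma~\ref{lemma-Arrival-process-with-exponentially-decaying-rate} actually consumes. Everything else — the resampling identity, the measurability of the epoch indicators, and the ``at most one pull of $k^\ast$ per epoch'' counting — is identical to \cite{agrawal2013further}, which is why the paper can legitimately describe the adaptation as routine.
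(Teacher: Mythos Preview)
Your overall strategy---the conditional resampling inequality followed by an epoch decomposition indexed by pulls of $k^\ast$---is exactly what the paper does; the paper invokes Lemma~1 of \cite{agrawal2013further} for the per-round inequality and then replicates the classical telescoping step verbatim. So at the level of approach, your proposal coincides with the paper's.

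Where you differ is that you flag the issue of $p_{k,t}$ not being constant on $(t_j,t_{j+1}]$ when auxiliary observations on $k^\ast$ arrive, and propose refining the partition to handle it. The paper's proof does \emph{not} engage with this point: it simply writes
\[
\sum_{t}\frac{1-p_{k,t}}{p_{k,t}}\,\mathbbm{1}\{\pi_t=k^\ast,\ldots\}
\;\le\;
\sum_{j}\frac{1-p_{k,t_j+1}}{p_{k,t_j+1}}\sum_{t=t_j+1}^{t_{j+1}}\mathbbm{1}\{\pi_t=k^\ast\}
\]
as though $p_{k,t}$ were constant on each epoch, exactly as in the no-auxiliary-observation setting of \cite{agrawal2013further}. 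You are being more careful than the paper here. That said, as you yourself acknowledge, your refined-partition route does not literally yield the bound stated in the lemma (indexed only by pulls of $k^\ast$) but a longer sum indexed by all posterior-update times of $k^\ast$; recovering the lemma \emph{as written} would require either $p_{k,t}\ge p_{k,t_j+1}$ throughout each epoch---which is not obvious with random auxiliary samples---or accepting the modified bound and verifying that the downstream Step~4 bound on $J_{k,1}$ still closes (which it does, since the paper's lower bound on $p_{k,t_j+1}$ uses only $n^{\known}_{k^\ast,t_j+1}\ge j$). The paper passes over this distinction silently.
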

\proof \;\;\; The proof can be found in the analysis of Theorem 1 in \cite{agrawal2013further}. However, we bring the proof for completeness.
\begin{align*}
	\sum\limits_{t=1}^T
	\mathbb{P}^\pi_{\vectorgreek{\nu}, \vectorgreek{\nu}^{\mathrm{aux}}} \left \{ \pi_t = k, E_{k,t}^\mu, E^\theta_{k,t} \right\}
	&=
	\sum\limits_{t=1}^T
	\Expect^\pi_{\vectorgreek{\nu}, \vectorgreek{\nu}^{\mathrm{aux}}}\l[
	\Prob \left \{ \pi_t = k, E_{k,t}^\mu, E^\theta_{k,t} \,\middle| \, \mathcal{H}_t \right\}
	\r]
	\\
	&\overset{(a)}\le
	\sum\limits_{t=1}^T
	\Expect^\pi_{\vectorgreek{\nu}, \vectorgreek{\nu}^{\mathrm{aux}}}\l[\frac{1 - p_{k,t}}{p_{k,t}} \Prob \left \{ \pi_t = 1, E_{k,t}^\mu, E^\theta_{k,t} \,\middle| \, \mathcal{H}_t \right\}
	\r]
	\\
	&=
	\sum\limits_{t=1}^T
	\Expect^\pi_{\vectorgreek{\nu}, \vectorgreek{\nu}^{\mathrm{aux}}}\l[
	\Expect\l[
	\frac{1 - p_{k,t}}{p_{k,t}} \mathbbm{1} \left \{ \pi_t = 1, E_{k,t}^\mu, E^\theta_{k,t} \,\middle| \, \mathcal{H}_t \right\}
	\r]
	\r]
	\\
	&=
	\sum\limits_{t=1}^T
	\Expect^\pi_{\vectorgreek{\nu}, \vectorgreek{\nu}^{\mathrm{aux}}}\l[
	\frac{1 - p_{k,t}}{p_{k,t}} \mathbbm{1} \left \{ \pi_t = 1, E_{k,t}^\mu, E^\theta_{k,t} \right\}
	\r]
	\\
	&\le
	\sum\limits_{j=1}^{T-1}
	\Expect^\pi_{\vectorgreek{\nu}, \vectorgreek{\nu}^{\mathrm{aux}}}\l[
	\frac{1 - p_{k,t_j+1}}{p_{k,t_j+1}}
	\sum_{t=t_j+1}^{t_{j+1}} \mathbbm{1} \left \{ \pi_t = 1\right\}
	\r]
	\\
	&=
	\sum_{j=0}^{T-1} \Expect \l[
	\frac{1 - p_{k,t_j+1}}{p_{k,t_j+1}}
	\r],
\end{align*}
where (a) follows from Lemma 1 in \cite{agrawal2013further}.
\endproof
\begin{lemma}[\protect{\citealt[Lemma 2]
		{agrawal2013further}}]\label{lemma-Agrawal-lemma2}
	For any suboptimal arm $k$,
	\[
	\sum\limits_{t=1}^T \mathbb{P}^\pi_{\vectorgreek{\nu}, \vectorgreek{\nu}^{\mathrm{aux}}} \left \{ \pi_t = k, \bar E_{k,t}^\mu \right\}
	\le
	1 + \frac{\sigma^2}{(x_k-\mu_k)^2}.
	\]
\end{lemma}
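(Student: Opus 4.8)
The plan is to adapt, almost verbatim, the proof of Lemma~2 in \cite{agrawal2013further}, the only genuinely new ingredient being a sub-Gaussian concentration bound for the \emph{weighted} empirical mean $\bar X^{\known}_{k,t}$, which mixes reward samples (each $\sigma^2$-sub-Gaussian with mean $\mu_k$) with mapped auxiliary observations $\phi_k(Y_{k,s,m})$ (each $\hat\sigma^2$-sub-Gaussian with mean $\mu_k$). First I would reduce the statement to bounding $\mathbb{E}^\pi_{\vectorgreek{\nu},\vectorgreek{\nu}^{\mathrm{aux}}}\!\big[\sum_{t=1}^T \mathbbm{1}\{\pi_t = k,\ \bar X^{\known}_{k,t} > x_k\}\big]$, which is exactly $\sum_{t}\mathbb{P}^\pi_{\vectorgreek{\nu},\vectorgreek{\nu}^{\mathrm{aux}}}\{\pi_t=k,\bar E^\mu_{k,t}\}$ since $\bar E^\mu_{k,t}=\{\bar X^{\known}_{k,n^{\known}_{k,t}}>x_k\}$ and $\mu_k<x_k$.

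The key computation: writing $\bar X^{\known}_{k,t}$ as the inverse-variance-weighted average of the reward samples observed from arm $k$ so far (each with weight $1/\sigma^2$) and the mapped auxiliary observations collected so far (each with weight $1/\hat\sigma^2$), a one-line moment-generating-function estimate shows that, conditionally on the realized counters, $\bar X^{\known}_{k,t}-\mu_k$ is $\big(\sigma^2/n^{\known}_{k,t}\big)$-sub-Gaussian, where $n^{\known}_{k,t}=n^\pi_{k,t}+\tfrac{\sigma^2}{\hat\sigma^2}\sum_{s\le t}h_{k,s}\ge n^\pi_{k,t}$. Hence the deviation probability is at most $\exp\!\big(-n^\pi_{k,t}(x_k-\mu_k)^2/(2\sigma^2)\big)$: auxiliary observations can only sharpen the concentration, never weaken it, so the worst case is the one with no auxiliary data, which is exactly the setting of \cite{agrawal2013further}.

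Next I would index the pulls of arm $k$ exactly as in \cite{auer2002finite-time,agrawal2013further}: the $(\ell+1)$-th pull of arm $k$ occurs at a random time at which $n^\pi_{k,\cdot}=\ell$, and the map from pull-times of arm $k$ up to $T$ onto $\{0,1,\dots,n^\pi_{k,T+1}-1\}$ is a bijection. Therefore $\sum_{t=1}^T\mathbbm{1}\{\pi_t=k,\bar X^{\known}_{k,t}>x_k\}$ is dominated by a sum over $\ell\ge0$ of indicator events each involving only the first $\ell$ i.i.d.\ reward samples of arm $k$ together with the auxiliary samples accumulated by that pull. Combining the bijection with the conditional sub-Gaussian bound above yields, for each $\ell$, a per-pull probability at most $\exp\!\big(-\ell(x_k-\mu_k)^2/(2\sigma^2)\big)$; summing this geometric series over $\ell\ge1$ and bounding the $\ell=0$ term by $1$ gives
\[
\sum_{t=1}^T \mathbb{P}^\pi_{\vectorgreek{\nu},\vectorgreek{\nu}^{\mathrm{aux}}}\{\pi_t=k,\bar E^\mu_{k,t}\}\ \le\ 1+\frac{\sigma^2}{(x_k-\mu_k)^2}.
\]

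The main obstacle is the last step: unlike in \cite{agrawal2013further}, the number of auxiliary observations present at the $(\ell+1)$-th pull of arm $k$ is random and entangled with the policy's past decisions, so $\bar X^{\known}_{k,\cdot}$ at that pull is not a policy-free random variable. The clean resolution is that the bound $\exp(-\ell(x_k-\mu_k)^2/(2\sigma^2))$ holds \emph{uniformly} in the auxiliary count (by the monotonicity $n^{\known}_{k,t}\ge n^\pi_{k,t}$), so one may pass to a worst case over auxiliary counts at each fixed $\ell$; to control that supremum one invokes a maximal inequality for the sub-Gaussian random walk $m\mapsto \sum_{j\le\ell}\!\big(X^{(j)}_k-x_k\big)+\tfrac{\sigma^2}{\hat\sigma^2}\sum_{j\le m}\!\big(\phi_k(Y^{(j)}_k)-x_k\big)$, which has strictly negative drift because $\mu_k<x_k$. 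This is the only place where the argument departs from a verbatim translation of \cite{agrawal2013further}; the sub-Gaussian moment computation and the geometric-series bookkeeping are routine.
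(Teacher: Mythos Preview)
The paper does not actually prove this lemma: it states it as a citation of \cite{agrawal2013further} and explicitly writes that ``the proof of the lemmas are skipped since they are simple adaptation to our setting with auxiliary information arrival.'' So there is nothing to compare against line-by-line.

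Your proposal is more careful than the paper on this point, and it is essentially correct. You correctly spot the one place where the adaptation is \emph{not} verbatim: in Agrawal--Goyal, at the $(\ell{+}1)$-th pull of arm $k$ the empirical mean depends only on the first $\ell$ i.i.d.\ reward samples, which decouples the concentration from the policy; here the empirical mean also uses the first $M_{k,\tau_{\ell+1}}$ auxiliary samples, and $M_{k,\tau_{\ell+1}}$ is a random function of the policy-dependent stopping time $\tau_{\ell+1}$. Your fix via a maximal inequality for the negatively drifted walk $m\mapsto A_\ell+B_m$ is exactly right. In fact, with $\lambda=(x_k-\mu_k)/\sigma^2$ one checks that both the reward-increment and the (rescaled) auxiliary-increment satisfy $\mathbb{E}[e^{\lambda\,\cdot}]\le 1$ (the constraint $\lambda\le 2(x_k-\mu_k)$ is identical in both cases), so $e^{\lambda(A_\ell+B_m)}$ is a nonnegative supermartingale in $m$, and Ville's inequality gives
\[
\mathbb{P}\Bigl\{\sup_{m\ge 0}\bigl(A_\ell+B_m\bigr)>0\Bigr\}\ \le\ \mathbb{E}\bigl[e^{\lambda A_\ell}\bigr]\ \le\ e^{-\ell(x_k-\mu_k)^2/(2\sigma^2)},
\]
recovering exactly the per-pull bound of the no-auxiliary case. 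Summing over $\ell\ge 0$ yields the lemma up to an immaterial factor of $2$ in the constant (a discrepancy already present in the no-auxiliary geometric sum). The informal sentence ``the deviation probability is at most $\exp(-n^\pi_{k,t}(x_k-\mu_k)^2/(2\sigma^2))$'' in your second paragraph is not itself a legitimate step (the conditioning on random counts is precisely the obstacle you later name), but since you acknowledge and repair this in the final paragraph, the overall argument is sound.
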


\paragraph{Step 3 (Regret decomposition).}
Fix a profile $\vectorgreek{\nu}$. For each suboptimal arm $k$, we will decompose the number of times it is pulled by the policy as follows and upper bound each term separately:
\begin{equation}\label{eq-regert-decomposition-TS}
	\mathbb{E}^\pi_{\vectorgreek{\nu}, \vectorgreek{\nu}^{\mathrm{aux}}} \left [n^\pi_{k,T+1}\right]
	=
	\underbrace{\sum\limits_{t=1}^T \mathbb{P}^\pi_{\vectorgreek{\nu}, \vectorgreek{\nu}^{\mathrm{aux}}} \left \{ \pi_t = k, E_{k,t}^\mu, E^\theta_{k,t}\right\}}_{J_{k,1}}
	+
	\underbrace{\sum\limits_{t=1}^T \mathbb{P}^\pi_{\vectorgreek{\nu}, \vectorgreek{\nu}^{\mathrm{aux}}} \left \{ \pi_t = k, E_{k,t}^\mu, \bar E^\theta_{k,t}\right\}}_{J_{k,2}}
	+
	\underbrace{\sum\limits_{t=1}^T \mathbb{P}^\pi_{\vectorgreek{\nu}, \vectorgreek{\nu}^{\mathrm{aux}}} \left \{ \pi_t = k, \bar E_{k,t}^\mu\right\}}_{J_{k,3}}.
\end{equation}

\paragraph{Step 4 (Analysis of $J_{k,1}$).}
Given Lemma \ref{lemma-Agrawal-proof-of-theorem1}, $J_{k,1}$ can be upper bounded by analyzing $\frac{1-p_{k,t_j+1}}{p_{k,t_j+1}}$:
\begin{align*}
	p_{k,t_j+1}
	&=
	\Prob\l\{
	\mathcal{N}\l(
	\bar X_{k\ast,n_{k\ast,t_j+1}}, c\sigma^2 (n_{k\ast,t_j+1}+1)^{-1}
	\r)
	> y_k
	\r\}
	\\
	&\ge
	\Prob\l\{
	\mathcal{N}\l(
	\bar X_{k\ast,n_{k\ast,t_j+1}}, \frac{c \sigma^2}{j+1}
	\r)
	> y_k \;\middle|\;
	\bar X_{k\ast,n_{k\ast,t_j+1}} \ge u_k
	\r\}
	\cdot
	\Prob\l\{
	\bar X_{k\ast,n_{k\ast,t_j+1}} \ge u_k
	\r\} \eqqcolon Q_1 \cdot Q_2.
\end{align*}
First, we analyze $Q_1$:
\begin{align*}
	Q_1
	&\ge
	\Prob\l\{
	\mathcal{N}\l(
	u_k, \frac{c \sigma^2}{j+1}
	\r)
	> y_k
	\r\} \ge 1 - \exp\l( -\frac{(j+1) (u_k - y_k)^2}{2c\sigma^2}\r),
\end{align*}
where the last inequality follows from Chernoff-Hoeffding bound in Lemma \ref{lemma-Chernoff-Hoeffding-bound}. The term $Q_2$ can also be bounded from below using Chernoff-Hoeffding in Lemma \ref{lemma-Chernoff-Hoeffding-bound} bound as follows:
\[
Q_2 \ge  1 - \exp\l( -\frac{j (\mu_1 - u_k)^2}{2\sigma^2}\r).
\]
Define $\delta_k \coloneqq  \min\{(\mu_1 - u_k), (u_k - y_k)\}$.  The last three displays along with Lemma \ref{lemma-Agrawal-lemma2} yield
\begin{align}\label{eq-TS-step1}
	J_{k,1}
	&\le
	\frac{1}{\Prob\{ \mathcal{N}(0,c\sigma^2) >y_k \}} - 1 +
	\sum_{j=1}^{T}
	\frac{2\exp(-j\delta^2_k/2\sigma^2(c\vee1))}{1-\exp(-j\delta^2_k/2\sigma^2(c\vee1))}
	\nonumber
	\\
	&\le
	\frac{1}{\Prob\{ \mathcal{N}(0,c\sigma^2) >y_k \}} - 1 +
	\int_{1}^\infty
	\frac{2\exp(-x\delta^2_k/2\sigma^2(c\vee1))}{1-\exp(-x\delta^2_k/2\sigma^2(c\vee1))} dx
	\nonumber
	\\
	&\le
	\frac{1}{\Prob\{ \mathcal{N}(0,c\sigma^2) >y_k \}} - 1 -
	\frac{4\sigma^2(c\vee1)}{\delta^2_k} \log \l(1 - \exp(-\delta^2_k/2\sigma^2(c\vee1))  \r).
	\nonumber
	\\
	&\le
	\frac{1}{\Prob\{ \mathcal{N}(0,c\sigma^2) >y_k \}} - 1 +
	\frac{4\sigma^2(c\vee1)}{\delta^2_k} \log \l(\frac{2\sigma^2(c\vee1)}{\delta^2_k}  \r).
\end{align}
\paragraph{Step 5 (Analysis of $J_{k,2}$).}
First, we upper bound the following conditional probability
\begin{align}\label{eq-TS-step4-help1}
	\mathbb{P}^\pi_{\vectorgreek{\nu}, \vectorgreek{\nu}^{\mathrm{aux}}}
	\left \{
	\pi_t = k, E_{k,t}^\mu, \bar E^\theta_{k,t} \;\middle|\; n^{\known} _{k,t}
	\right\}
	&\le
	\mathbb{P}^\pi_{\vectorgreek{\nu}, \vectorgreek{\nu}^{\mathrm{aux}}}
	\left \{
	E_{k,t}^\mu, \bar E^\theta_{k,t} \;\middle|\; n^{\known} _{k,t}
	\right\}
	\nonumber \\
	& \le
	\mathbb{P}^\pi_{\vectorgreek{\nu}, \vectorgreek{\nu}^{\mathrm{aux}}}
	\left \{
	\bar E^\theta_{k,t} \;\middle|\; E_{k,t}^\mu, n^{\known} _{k,t}
	\right\}
	\nonumber \\
	&=
	\mathbb{P}
	\left \{
	\mathcal{N}
	\left(
	\bar{X}_{k,t}^{\known} ,c\sigma^2\l( n^{\known} _{k,t}+ 1\r)^{-1}
	\right)
	> y_k \;\middle|\; \bar{X}_{k,t}^{\known} \le x_k, n^{\known} _{k,t}
	\right\}
	\nonumber \\
	&\le
	\mathbb{P}
	\left \{
	\mathcal{N}
	\left(
	x_k,c\sigma^2\l( n^{\known} _{k,t}+ 1\r)^{-1}
	\right)
	> y_k \;\middle|\;  n^{\known} _{k,t}
	\right\}
	\nonumber \\
	&\overset{(a)}{\le}
	\exp\l(
	\frac{-n^{\known} _{k,t}(y_k-x_k)^2}{2c\sigma^2}
	\r),
\end{align}
where (a) follows from Chernoff-Hoeffding's bound in Lemma \ref{lemma-Chernoff-Hoeffding-bound}. Define
\[
n_{k,t}^\theta
\coloneqq
\sum\limits_{s=1}^t \frac{\sigma^2}{\hat{\sigma}^2} h_{k,s}
+
\sum\limits_{s=1}^{t-1}\mathbbm{1}\left\{\pi_s = k, E_{k,s}^\mu, \bar E^\theta_{k,s}\right\} \le n^{\known} _{k,t}.
\]
By (\ref{eq-TS-step4-help1}), and the definition above, one obtains:
\begin{equation*}\label{eq-TS-step4-help2}
	\mathbb{P}^\pi_{\vectorgreek{\nu}, \vectorgreek{\nu}^{\mathrm{aux}}}
	\left \{
	\pi_t = k, E_{k,t}^\mu, \bar E^\theta_{k,t}\;\middle| \;n_{k,t}^\theta
	\right\}
	\le
	\exp\left(
	{-\frac{n_{k,t}^\theta (y_k-x_k)^2}{2c\sigma^2}}
	\right)
\end{equation*}
for every $t \in \cal T$. Note that by definition,
\begin{equation*}\label{eq-TS-step4-help3}
	n_{k,t}^\theta = n_{k,t-1}^\theta +\frac{\sigma^2}{\hat{\sigma}^2} h_{k,t} + \mathbbm{1}\left\{\pi_{t-1} = k, E_{k,t-1}^\mu, \bar E^\theta_{k,t-1}\right\}.
\end{equation*}
By the above two displays, we can conclude that
\begin{align*}
	\mathbb{E}^\pi_{\vectorgreek{\nu}, \vectorgreek{\nu}^{\mathrm{aux}}}
	\left [
	n_{k,t}^\theta\;\middle| \;n_{k,t-1}^\theta
	\right ]
	&=
	n_{k,t-1}^\theta  + \frac{\sigma^2}{\hat{\sigma}^2} h_{k,t} +
	\mathbb{P}^\pi_{\vectorgreek{\nu}, \vectorgreek{\nu}^{\mathrm{aux}}}
	\left \{
	\pi_{t-1} = k, E_{k,t-1}^\mu, \bar E^\theta_{k,t-1}\;\middle| \;n_{k,t-1}^\theta
	\right\}
	\\
	&\le
	n_{k,t-1}^\theta  + \frac{\sigma^2}{\hat{\sigma}^2} h_{k,t} + e^{-\frac{n_{k,t-1}^\theta (y_k-x_k)^2}{2c\sigma^2}}.
\end{align*}
The following lemma will be deployed to analyze $n_{k,T+1}^\theta$:

Now, we can apply Lemma \ref{lemma-Arrival-process-with-exponentially-decaying-rate} to the sequence $\left \{n_{k,t}^\theta \right \}_{0\le t \le T}$, and obtain
\begin{align*}\label{eq-TS-step4-help4}
	\mathbb{E}^\pi_{\vectorgreek{\nu}, \vectorgreek{\nu}^{\mathrm{aux}}}
	\left [
	n_{k,T+1}^\theta
	\right ]
	&\le
	\log_{\exp\left(
		{(y_k-x_k)^2/2c\sigma^2}
		\right)}
	\left(
	1 + \sum\limits_{t=1}^T
	\exp\left(
	{-\frac{(y_k-x_k)^2}{2c\hat{\sigma}^2} \sum\limits_{s=1}^t h_{k,s}}
	\right)
	\right)
	+
	\sum\limits_{t=1}^T \frac{\sigma^2}{\hat{\sigma}^2}  h_{k,t}
	\nonumber \\
	&\le
	\frac{2c\sigma^2}{(y_k-x_k)^2} \log
	\left(
	1 + \sum\limits_{t=1}^T
	\exp\left(
	{-\frac{(y_k-x_k)^2}{2c\hat{\sigma}^2} \sum\limits_{s=1}^t h_{k,s}}
	\right)
	\right)
	+
	\sum\limits_{t=1}^T  \frac{\sigma^2}{\hat{\sigma}^2} h_{k,t}.
\end{align*}
By this inequality and the definition of $n_{k,t}^\theta$, we have
\begin{equation}\label{eq-TS-step4-help5}
	J_{k,2}
	=
	\sum\limits_{t=1}^T \mathbb{P}^\pi_{\vectorgreek{\nu}, \vectorgreek{\nu}^{\mathrm{aux}}} \left \{ \pi_t = k, E_{k,t}^\mu, \bar E^\theta_{k,t}\right\}
	\le
	\frac{2c\sigma^2}{(y_k-x_k)^2} \log
	\left(
	1 + \sum\limits_{t=1}^T
	\exp\left(
	{-\frac{(y_k-x_k)^2}{2c\hat{\sigma}^2} \sum\limits_{s=1}^t h_{k,s}}
	\right)
	\right).
\end{equation}

\paragraph{Step 6 (Analysis of $J_{k,3}$).} The term $J_{k,3}$ can be upper bounded using Lemma \ref{lemma-Agrawal-lemma2}.

\paragraph{Step 7 (Determining the constants)}
Finally, let $x_k = \mu_k + \frac{\Delta_k}{4}$, $y_k = \mu_k + \frac{2\Delta_k}{4}$, and $u_k = \mu_k + \frac{3\Delta_k}{4}$. Then, by putting (\ref{eq-regert-decomposition-TS}), \eqref{eq-TS-step1}, (\ref{eq-TS-step4-help5}), and  Lemma \ref{lemma-Agrawal-lemma2} back together, the result is established.  \qed

\subsection{Proof of Theorem \ref{theorem:UCB1+}}
The proof of the upper bound is a simple extension of the proof of Theorem \ref{theorem-UCB1-upper bound}. In fact, for the suboptimal arms $k\in \mathcal{K}\setminus\{k^\ast\}$  for which $\ualpha y_k \ge \mu^\ast$, one may repeat the same line of proof replacing . For the suboptimal arms $k\in \mathcal{K}\setminus\{k^\ast\}$  for which $\ualpha y_k < \mu^\ast$, one may repeat the same line of proof replacing $\bar{X}_{k,t}^{\known} + \sqrt{\frac{c\sigma^2 \log t}{n^{\known} _{k,t}}}$ with $\ualpha \cdot y_k$.

The proof of optimality among IAO policies, is an application of Theorem \ref{theorem-lower bound-unknown}. When $\hat \sigma = 0$, Theorem \ref{theorem-lower bound-unknown} reads as follows with $\delta_k = \mu^\ast -  \ualpha \cdot y_k $:
\begin{itemize}
	\item If $\ualpha \cdot y_k > \mu^\ast$ then,
	$
	\mathbb{E}^{\pi}_{\vectorgreek{\nu}, \vectorgreek{\nu}^{\aux}_t}[n_{k,T+1}^{\pi}]
	\ge\constref{lower-bound-unknown1} \Delta_k^{-2} \log \l( \frac{\constref{lower-bound-unknown2}  \min\{4\Delta_k^4,(\Delta_k-\delta_k)^2 \delta_k^2\}}{ K \log T}  T\r);
	$
	\item If $\ualpha \cdot y_k < \mu^\ast$ then,
	$
	\mathbb{E}^{\pi}_{\vectorgreek{\nu}, \vectorgreek{\nu}^{\aux}_t}[n_{k,T+1}^{\pi}]
	\ge 0,
	$
\end{itemize}
where $\constvar[lower-bound-unknown1]$ and $\constvar[lower-bound-unknown2]$ are positive constants that only depend on $\sigma$, and $\ualpha$.

\subsection{Proof of Theorem \ref{theorem-lower bound-unknown}}
In order to prove the theorem, we need to repeat the following argument for each arm $k$:

\noindent Consider  a new problem instance $(\tilde{\vectorgreek{\nu}} , \tilde{\vectorgreek{\nu}}^\mathrm{aux})\in \cS$ in Lemma \ref{lemma-lower bound-unknown}, for which arm $k$ is the optimal arm. The only difference between $(\tilde{\vectorgreek{\nu}} , \tilde{\vectorgreek{\nu}}^\mathrm{aux})$ and $(\vectorgreek{\nu}, \vectorgreek{\nu}^{\mathrm{aux}})$ is the distributions of rewards and auxiliary observations of arm $k$:

\begin{itemize}
	
	\item If $\ualpha \cdot y_k < \mu^\ast$ then,
	$(\tilde\nu_k, \tilde\nu_k^{\mathrm{aux}}) = (\cN(\mu^\ast+\epsilon_k, \sigma^2), \cN(\frac{\mu^\ast+\epsilon_k}{\ualpha},  \hat \sigma^2))$ with $\epsilon_k = \delta_k = \mu^\ast - \ualpha \cdot y_k$;

	\item If $\mu^\ast + \Delta_k \ge \ualpha \cdot y_k > \mu^\ast$ then,
	$(\tilde\nu_k, \tilde\nu_k^{\mathrm{aux}}) = (\cN(\mu^\ast+\epsilon_k, \sigma^2), \cN(y_k,  \hat \sigma^2))$ with $\epsilon_k = -\delta_k = \ualpha \cdot y_k - \mu^\ast$;
	
	\item If $  \ualpha \cdot y_k  > \mu^\ast + \Delta_k $ then,
	$(\tilde\nu_k, \tilde\nu_k^{\mathrm{aux}}) = (\cN(\mu^\ast+\epsilon_k, \sigma^2), \cN(y_k,  \hat \sigma^2))$ with $\epsilon_k = \Delta_k $.

\end{itemize}
This concludes the proof. \qed

\subsection{Proof of Lemma \ref{lemma-lower bound-unknown}}

\paragraph{Step 1 (Preliminaries).} The proof adapts ideas of identifying worst-case nature strategy to our setting in order to identify the precise change in the achievable performance as a function of the entries of information arrival matrix~$\vector{H}$. Fix a problem instance $(\vectorgreek{\nu}, \vectorgreek{\nu}^{\aux})\in \cS$ with the mean rewards $\vectorgreek{\mu}$ and the mean $\vector{y}$ for auxiliary observations, and recall that $\Delta_k = \mu^\ast-\mu_k$ and $\delta_k  = \mu^\ast - \ualpha \cdot y_k$. Consider a suboptimal arm $k$.
We consider  a new problem instance $(\tilde{\vectorgreek{\nu}} , \tilde{\vectorgreek{\nu}}^\mathrm{aux})\in \cS$ for which arm $k$ is the optimal arm such that $\rE_{X \sim \tilde \nu_k}[X]=\mu^\ast+\epsilon_k$.

\paragraph{Step 2 (Distance between problem instances).}
This step is based on the following lemma that states how informative the history can be about the true underlying problem instance.
\begin{lemma}
	Let
	$
	\cH_t
	\coloneqq
	(\vectorgreek{h}_1, \vector{Z}_{1}, \pi_1, X_{\pi_{1},1}, \dots, \vectorgreek{h}_{t-1}, \vector{Z}_{t-1}, \pi_{t-1}, X_{\pi_{t-1},t-1}, \vectorgreek{h}_t, \vector{Z}_{t})
	$
	be the history up to time step $t$. For any problem instance $(\vectorgreek{\nu}, \vectorgreek{\nu}^{\aux})$, denote by $({\bbnu}, \bbnu^{\aux})_t$ the law of $\cH_t $ under the problem instance $(\vectorgreek{\nu}, \vectorgreek{\nu}^{\aux})$. Then, for any  problem instances $(\vectorgreek{\nu}, \vectorgreek{\nu}^{\aux})$ and $(\tilde{\vectorgreek{\nu}} , \tilde{\vectorgreek{\nu}}^\mathrm{aux})$ and any time step $t\in\cT$,
	\begin{align*}
		\mathrm{KL}\l((\bbnu, \bbnu^{\aux})_t, (\tilde{\bbnu},\tilde{\bbnu}^{\aux})_t\r)
		&=
		\sum_{k\in\cK}
		\l[
		\mathrm{KL}\l(\nu_{k}, \tilde{\nu}_{k}\r) \cdot \mathbb{E}_{(\vectorgreek{\nu}_t, \vectorgreek{\nu}^{\aux}_t)}[n_{k,t}^{\pi}] +
		\mathrm{KL}\l(\nu_{k}^\mathrm{aux}, \tilde{\nu}_{k}^\mathrm{aux}\r) \cdot \sum\limits_{s=1}^th_{k,s}
		\r].
	\end{align*}
\end{lemma}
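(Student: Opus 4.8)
The plan is to prove the KL-divergence decomposition by the standard chain rule for KL divergence applied to the sequential structure of the history. First I would write the history $\cH_t$ as a sequence of blocks: at each stage $s$, the decision maker observes the arrival vector $\vectorgreek{h}_s$, then the auxiliary average vector $\vector{Z}_s$ (equivalently the raw auxiliary observations $\{Y_{k,s,m}\}$), then picks $\pi_s$, then observes $X_{\pi_s,s}$. The chain rule for KL divergence says that $\KL\big((\bbnu,\bbnu^{\aux})_t,(\tilde{\bbnu},\tilde{\bbnu}^{\aux})_t\big)$ equals the sum over all these conditional-block contributions, where each term is the expected (under the true instance) conditional KL divergence between the two laws of that block given the past.

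Next I would evaluate each block. The arrival vectors $\vectorgreek{h}_s$ carry the same law under both instances (by assumption $\vector{H}$ is exogenous and policy-independent, and in particular instance-independent), so these contribute zero. The action $\pi_s$ is a deterministic (or externally randomized via $U$) function of the past and hence also contributes zero conditional KL. The two remaining types of blocks are the reward observations and the auxiliary observations. For the reward at time $s$: conditionally on the past, $X_{\pi_s,s}$ is drawn from $\nu_{\pi_s}$ under the true instance and from $\tilde\nu_{\pi_s}$ under the alternative, so its conditional KL contribution is $\KL(\nu_{\pi_s},\tilde\nu_{\pi_s})=\sum_{k}\mathbbm{1}\{\pi_s=k\}\KL(\nu_k,\tilde\nu_k)$. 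Summing over $s=1,\dots,t-1$ and taking expectations gives $\sum_k \KL(\nu_k,\tilde\nu_k)\,\mathbb{E}[n_{k,t}^{\pi}]$, since $n_{k,t}^{\pi}=\sum_{s=1}^{t-1}\mathbbm{1}\{\pi_s=k\}$. For the auxiliary block at time $s$: conditionally on the past, the $h_{k,s}$ observations on arm $k$ are i.i.d.\ from $\nu_k^{\aux}$ (true) versus $\tilde\nu_k^{\aux}$ (alternative), independently across arms; by additivity of KL over independent coordinates this block contributes $\sum_k h_{k,s}\KL(\nu_k^{\aux},\tilde\nu_k^{\aux})$, which is deterministic, so summing over $s=1,\dots,t$ gives $\sum_k \KL(\nu_k^{\aux},\tilde\nu_k^{\aux})\sum_{s=1}^{t}h_{k,s}$. (One should be careful about whether the conditioning uses $\vector{Z}_s$ or the raw $\{Y_{k,s,m}\}$; since $\phi_k$ is a fixed deterministic map, working with the raw observations and then noting the data-processing identity for the sufficient statistic $\vector{Z}_s$ resolves this — and in the Gaussian instances used in the application $\vector{Z}_s$ is itself a sufficient statistic, so there is no loss.) Adding the two nonzero contributions yields the claimed formula.

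The main obstacle — and the only genuinely delicate point — is bookkeeping the exact index ranges and making sure the reward sum runs to $t-1$ while the auxiliary sum runs to $t$, consistent with the definition of $\cH_t$ (which ends with $\vectorgreek{h}_t,\vector{Z}_t$ but not with a reward at time $t$). This matches the asymmetry in $n_{k,t}^{\pi}=\sum_{s=1}^{t-1}(\cdot)$ versus $\sum_{s=1}^{t}h_{k,s}$ in the statement, so it is just a matter of being careful. A secondary technical point is justifying the chain rule in this form when actions are randomized through $U$: one handles this by conditioning on $U$ (which has the same law $\bm{\mathrm{P}}_u$ under both instances, contributing zero), applying the chain rule on the conditioned process, and taking expectations — giving exactly the stated $\mathbb{E}_{(\vectorgreek{\nu}_t,\vectorgreek{\nu}_t^{\aux})}[n_{k,t}^{\pi}]$ with the outer expectation over everything including $U$.
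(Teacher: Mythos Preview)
Your proposal is correct and follows essentially the same approach as the paper: both apply the chain rule for KL divergence to the sequential blocks of $\cH_t$, note that the arrival vectors $\vectorgreek{h}_s$ and the actions $\pi_s$ contribute zero, and then sum the per-step contributions $\sum_k \mathbb{P}\{\pi_s=k\}\,\KL(\nu_k,\tilde\nu_k)$ and $\sum_k h_{k,s}\,\KL(\nu_k^{\aux},\tilde\nu_k^{\aux})$ to obtain the stated formula. Your extra remarks on the index ranges, the $\vector{Z}_s$ versus raw $Y$ distinction, and the external randomization $U$ are more careful than the paper's own treatment, which simply invokes the chain rule recursively and cites the standard MAB references.
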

\begin{proof}
	This type of statement is well-known in the MAB literature (see, e.g., \cite{auer2002nonstochastic}, \cite{garivier2019explore}, or \cite{gerchinovitz2016refined}). However, we show how it is derived for the sake of completeness.
	For any random variables $X$ and $Y$, we denote by $(\bbnu, \bbnu^{\aux})^{(X)}$ the law of $X$, and by $(\bbnu, \bbnu^{\aux})^{(X|Y)}$ the law of $X$ conditional on $Y$ under the problem instance $(\vectorgreek{\nu}, \vectorgreek{\nu}^{\aux})$. By the chain rule of KL-divergence, one has
	\begin{align*}
		\mathrm{KL}\l((\bbnu, \bbnu^{\aux})_t, (\tilde{\bbnu},\tilde{\bbnu}^{\aux})_t\r)
		&=
		\mathrm{KL}\l((\bbnu, \bbnu^{\aux})_{t-1}, (\tilde{\bbnu},\tilde{\bbnu}^{\aux})_{t-1}\r)
		\\
		&+ 		
		\mathrm{KL}\l((\bbnu, \bbnu^{\aux})^{((\pi_{t-1}, X_{\pi_{t-1},t-1}, \vectorgreek{h}_t, \vector{Z}_{t})|\cH_{t-1})}, (\tilde{\bbnu},\tilde{\bbnu}^{\aux})^{((\pi_{t-1}, X_{\pi_{t-1},t-1}, \vectorgreek{h}_t, \vector{Z}_{t})|\cH_{t-1})}\r).
	\end{align*}
	Applying the chain rule to the second quantity on the right hand side of the above equality, one obtains
	\begin{align*}
		&\mathrm{KL}\l((\bbnu, \bbnu^{\aux})^{((\pi_{t-1}, X_{\pi_{t-1},t-1}, \vectorgreek{h}_t, \vector{Z}_{t})|\cH_{t-1})}, (\tilde{\bbnu},\tilde{\bbnu}^{\aux})^{((\pi_{t-1}, X_{\pi_{t-1},t-1}, \vectorgreek{h}_t, \vector{Z}_{t})|\cH_{t-1})}\r)
		=
		\\
		&\sum_{k\in\cK}\l[
		\mathrm{KL}\l(\nu_{k}, \tilde{\nu}_{k}\r) \cdot \mathbb{P}_{(\vectorgreek{\nu}_t, \vectorgreek{\nu}^{\aux}_t)}\l\{ \pi_t = k \r\}+
		\mathrm{KL}\l(\nu_{k}^\mathrm{aux}, \tilde{\nu}_{k}^\mathrm{aux}\r) \cdot h_{k,t}
		\r].
	\end{align*}
	By reiterating the above two displays, the desired result is derived. This concludes the proof.
\end{proof}

\noindent The above lemma yields
\begin{align}\label{eq:kl-lower-bound}
	\mathrm{KL}\l((\bbnu, \bbnu^{\aux})_t, (\tilde{\bbnu},\tilde{\bbnu}^{\aux})_t\r)
	&=
	\mathrm{KL}\l(\nu_{k}, \tilde{\nu}_{k}\r) \cdot \mathbb{E}_{(\vectorgreek{\nu}_t, \vectorgreek{\nu}^{\aux}_t)}[n_{k,t}^{\pi}] +
	\mathrm{KL}\l(\nu_{k}^\mathrm{aux}, \tilde{\nu}_{k}^\mathrm{aux}\r) \cdot \sum\limits_{s=1}^th_{k,s}.
\end{align}

\paragraph{Step 3 (A constraint for $\mathbb{E}^{\pi}_{(\vectorgreek{\nu}, \vectorgreek{\nu}^{\aux}_t)}[n_{k,T+1}^{\pi}]$).}
Note that by the assumption that $\pi$ is IAO, one has
\begin{align*}
	&\frac{C_{\pi, \cS} K \sigma^2}{\epsilon^2} \log T
	\\
	&\ge\sum_{t=1}^T \mathbb{P}^{\pi}_{(\tilde{\vectorgreek{\nu}},\tilde{\vectorgreek{\nu}}^{\aux})}\l\{\pi_t \neq k\r\}
	\\
	&\overset{(a)}{\ge}
	\sum_{t=1}^T
	\frac{1}{2}\exp\l[-\mathrm{KL}\l((\bbnu, \bbnu^{\aux})_t, (\tilde{\bbnu},\tilde{\nu}^{\aux})_t\r)  \r] -  \mathbb{P}^{\pi}_{(\vectorgreek{\nu},\vectorgreek{\nu}^{\aux})}\l\{\pi_t = k\r\}
	\\
	&\overset{(b)}{\ge}
	\sum_{t=1}^T
	\frac{1}{2}\exp\l[-\mathrm{KL}\l(\nu_{k}, \tilde{\nu}_{k}\r) \cdot \mathbb{E}_{(\vectorgreek{\nu}_t, \vectorgreek{\nu}^{\aux}_t)}[n_{k,t}^{\pi}] -
	\mathrm{KL}\l(\nu_{k}^\mathrm{aux}, \tilde{\nu}_{k}^\mathrm{aux}\r) \cdot \sum\limits_{s=1}^th_{k,s} \r] -  \mathbb{P}^{\pi}_{(\vectorgreek{\nu},\vectorgreek{\nu}^{\aux})}\l\{\pi_t = k\r\}
	\\
	&\ge
	\exp\l[-\mathrm{KL}\l(\nu_{k}, \tilde{\nu}_{k}\r) \cdot \mathbb{E}^{\pi}_{(\vectorgreek{\nu}, \vectorgreek{\nu}^{\aux})}[n_{k,T+1}^{\pi}] \r] \cdot \sum_{t=1}^T
	\frac{1}{2}\exp\l[- \mathrm{KL}\l(\nu_{k}^\mathrm{aux}, \tilde{\nu}_{k}^\mathrm{aux}\r) \cdot\sum\limits_{s=1}^t h_{k,s} \r] -    \mathbb{E}^{\pi}_{(\vectorgreek{\nu}, \vectorgreek{\nu}^{\aux}_t)}[n_{k,T+1}^{\pi}],
\end{align*}
where (a) follows from Lemma \ref{Tsybakov2008introduction-lemma-2.6} and (b) holds by \eqref{eq:kl-lower-bound}. The above inequality provides a constraint over the quantity of interest, $\mathbb{E}^{\pi}_{(\vectorgreek{\nu}, \vectorgreek{\nu}^{\aux}_t)}[n_{k,T+1}^{\pi}]$.
\paragraph{Step 4 (Lower bounding $\mathbb{E}^{\pi}_{(\vectorgreek{\nu}, \vectorgreek{\nu}^{\aux}_t)}[n_{k,T+1}^{\pi}]$).}
Given the above inequality one can see that $\mathbb{E}^{\pi}_{(\vectorgreek{\nu}, \vectorgreek{\nu}^{\aux}_t)}[n_{k,T+1}^{\pi}] $ is lower bounded by the solution of the following convex optimization problem:
\begin{subequations}
	\begin{alignat*}{2}
		&\!\min_{x}        &\qquad& x\\
		&\text{subject to} &      & \exp\l[-\mathrm{KL}\l(\nu_{k}, \tilde{\nu}_{k}\r) \cdot x \r] \cdot \sum_{t=1}^T
		\frac{1}{2}\exp\l[- \mathrm{KL}\l(\nu_{k}^\mathrm{aux}, \tilde{\nu}_{k}^\mathrm{aux}\r) \cdot\sum\limits_{s=1}^t h_{k,s} \r] -    x
		\le \frac{C_{\pi, \cS} K \sigma^2}{\epsilon^2} \log T.
	\end{alignat*}
\end{subequations}
The solution of the above optimization problem is then, lower bounded by that of the following for any $\lambda \ge 0$:
\begin{subequations}
	\begin{alignat*}{2}
		&\!\min_{x}        &\qquad& x + \lambda \l(\exp\l[-\mathrm{KL}\l(\nu_{k}, \tilde{\nu}_{k}\r) \cdot x \r] \cdot \sum_{t=1}^T
		\frac{1}{2}\exp\l[- \mathrm{KL}\l(\nu_{k}^\mathrm{aux}, \tilde{\nu}_{k}^\mathrm{aux}\r) \cdot\sum\limits_{s=1}^t h_{k,s} \r] -    x - \frac{C_{\pi, \cS} K \sigma^2}{\epsilon^2} \log T \r)\\
		&\text{subject to} &      & \exp\l[-\mathrm{KL}\l(\nu_{k}, \tilde{\nu}_{k}\r) \cdot x \r] \cdot \sum_{t=1}^T
		\frac{1}{2}\exp\l[- \mathrm{KL}\l(\nu_{k}^\mathrm{aux}, \tilde{\nu}_{k}^\mathrm{aux}\r) \cdot\sum\limits_{s=1}^t h_{k,s} \r] -    x
		\le \frac{C_{\pi, \cS} K \sigma^2}{\epsilon^2} \log T.
	\end{alignat*}
\end{subequations}
Letting  $\lambda = \frac{\epsilon^2}{C_{\pi, \cS} K \sigma^2\log T} $ and noting that $x + \gamma e^{-\kappa x} \ge \frac{\log \gamma \kappa}{\kappa}$ for $\gamma, \kappa, x > 0$, one obtains
\[
\mathbb{E}^{\pi}_{(\vectorgreek{\nu}, \vectorgreek{\nu}^{\aux}_t)}[n_{k,T+1}^{\pi}]
\ge
\frac{1}{\mathrm{KL}\l(\nu_{k}, \tilde{\nu}_{k}\r)} \log \l( \frac{\mathrm{KL}\l(\nu_{k}, \tilde{\nu}_{k}\r)\epsilon^2}{C_{\pi, \cS} K \sigma^2 \log T}  \sum_{t=1}^T\exp\l[- \mathrm{KL}\l(\nu_{k}^\mathrm{aux}, \tilde{\nu}_{k}^\mathrm{aux}\r) \cdot\sum\limits_{s=1}^t h_{k,s} \r]\r) -1.
\]
This concludes the proof. \qed

\subsection{Proof of Theorem \ref{theorem-2UCB-regret-upper-bound-unknown}}
\paragraph{Step 1 (Preliminaries).} Fix a profile $\l(\vectorgreek{\nu}, \vectorgreek{\nu}^{\mathrm{aux}}\r)$. Note that it is unlikely for the UCB of the optimal arm to be less than the largest mean reward. We will carry out the analysis based on this observation. To be more precise, define
\[
\mathcal{E}^\ast_{t} \coloneqq \left\{  U_{k^\ast, t} < \mu^\ast  \right\}
\]
to be the event on which the UCB of the optimal arm is less than the largest mean reward at time $t$. Note that by the Chernoff-Hoefding bound in Lemma \ref{lemma-Chernoff-Hoeffding-bound}, one has
\[
\rP^\pi_{\vectorgreek{\nu}, \vectorgreek{\nu}^{\mathrm{aux}}} \l\{\mathcal{E}^\ast_{t}\r\}
\le
\frac{1}{t^{\frac{c}{2}}} \qquad \forall t\ge K+1.
\]
Hence, defining $\mathcal{E}^\ast \coloneqq \bigcup_{t=1}^T \mathcal{E}^\ast_{t}$, one obtains
\begin{align*}\label{eq:2ucbs-analysis-1}
	\cR^\pi_{\vectorgreek{\nu}, \vectorgreek{\nu}^{\mathrm{aux}}}
	& \le
	\sum_{k\in \cK} \Delta_k +
	\mathbb{E}^\pi_{
		\vectorgreek{\nu}, \vectorgreek{\nu}^{\mathrm{aux}}
	} \left[ \sum\limits_{t=K+1}^T(\mu^{\ast} - \mu_{\pi_t}) \; \middle | \;  \bar{ \mathcal{E}}^\ast\right]
	+\sum_{t=K+1}\frac{\max_{k\in\mathcal{K}} \Delta_k}{t^{\frac{c}{2}}}\\
	& \le
	\sum_{k\in \cK} \Delta_k +
	\sum_{k\in \cK} \Delta_k \cdot \mathbb{E}^\pi_{
		\vectorgreek{\nu}, \vectorgreek{\nu}^{\mathrm{aux}}
	} \left[  n^\pi_{k,T+1} \; \middle | \;  \bar{ \mathcal{E}}^\ast\right]
	+\frac{\max_{k\in\mathcal{K}} \Delta_k}{\frac{c}{2}-1}, \numberthis\vspace{-0.1cm}
\end{align*}
where we have used the assumption that $c>2$. Consider a suboptimal arm $k\neq k^\ast$. We will bound $\mathbb{E}^\pi_{
	\vectorgreek{\nu}, \vectorgreek{\nu}^{\mathrm{aux}}
} \left[  n^\pi_{k,T+1} \; \middle | \;  \bar{ \mathcal{E}}^\ast\right] $, the expected number of times the suboptimal arm $k$ is pulled conditional on the event $\bar{ \mathcal{E}}^\ast$. Our analysis is based on the fact that if arm $k$ is pulled at time $t$, then\vspace{-0.2cm}
\begin{equation}\label{eq:2ucbs-analysis-2}
	U_{k,t} \ge U_{k^\ast, t}.
\end{equation}
We give two separate upper bounds for $\mathbb{E}^\pi_{\l(
	\vectorgreek{\nu}, \vectorgreek{\nu}^{\mathrm{aux}}
	\r)} \left[  n^\pi_{k,T+1} \; \middle | \;  \bar{ \mathcal{E}}^\ast\right] $. The first upper bound holds for any suboptimal arm $k$, and the second one is specific to the suboptimal arms $k\in \cK^\ast $. The following lemma will be the main tool in our analysis.

\begin{lemma}\label{lemma:double-seq-sum}
	Let $\{f_n\}_{n=1}^N$ and $\{g_n\}_{n=1}^N$ be two sequences such that for all $n\in \{1,\dots, N\}$, $f_n = \sum_{j=1}^{n-1}x_j$ with $0\le x_j \le M$ for all $j\in \{1,\dots, N\}$ and some $M\ge0$. Then, \[\sum_{n=1}^Nx_n \cdot \Indlr{f_n \le g_n } \le \max\l\{0, \max_{1\le n \le N} g_n+M \r\}.\]
	
	\begin{proof}
		If $\max\limits_{1\le n \le N} g_n+M<0$ then, the result is immediate. Assume $\max\limits_{1\le n \le N} g_n+M \ge 0 $. We prove the result through contradiction. Suppose that $\sum_{n=1}^Nx_n \cdot \Indlr{f_n \le g_n } > \max\limits_{1\le n \le N} g_n + M$. Let
		\[
		\tau
		=
		\min\l\{1 \le n \le N: \sum_{j=1}^n x_j \cdot \Indlr{f_j \le g_j } > \max_{1\le n \le N} g_n + M \r\}.
		\]
		Note that one must have $f_\tau \le g_\tau$. Since all $x_j$'s are non-negative, one has
		\[
		\sum_{j=1}^\tau  x_j \cdot \Indlr{f_j \le g_j }
		\le
		\sum_{j=1}^{\tau-1}  x_j  + x_\tau
		\le
		f_\tau + M
		\le
		g_\tau + M
		\le
		\max_{1\le n \le N} g_n  + M
		\]
		which is a contradiction. This concludes the proof.
	\end{proof}
\end{lemma}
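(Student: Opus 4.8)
The plan is to replace the author's argument by contradiction with a short direct one, using only that the summands on the left are nonnegative and that $f_n$ is the partial sum $\sum_{j=1}^{n-1}x_j$ by definition.

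First I would dispose of the degenerate case: if there is no index $n$ with $f_n\le g_n$, then $\sum_{n=1}^N x_n\,\mathbbm{1}\{f_n\le g_n\}=0\le\max\{0,\max_{1\le n\le N}g_n+M\}$, since the right-hand side is nonnegative by construction. Otherwise, let $n^\ast$ denote the \emph{largest} index $n\in\{1,\dots,N\}$ for which $f_n\le g_n$. For every index larger than $n^\ast$ the indicator vanishes, and since every $x_n\ge 0$,
\[
\sum_{n=1}^N x_n\,\mathbbm{1}\{f_n\le g_n\}
=\sum_{n=1}^{n^\ast} x_n\,\mathbbm{1}\{f_n\le g_n\}
\le \sum_{n=1}^{n^\ast} x_n
= f_{n^\ast}+x_{n^\ast},
\]
where the last equality is just the telescoping identity $f_{n^\ast}=\sum_{j=1}^{n^\ast-1}x_j$. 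Since $n^\ast$ was chosen so that $f_{n^\ast}\le g_{n^\ast}$ and since $x_{n^\ast}\le M$, this is at most $g_{n^\ast}+M\le \max_{1\le n\le N}g_n+M\le\max\{0,\max_{1\le n\le N}g_n+M\}$, which is the claim.

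There is essentially no obstacle here — the statement is elementary — but two points deserve a moment's care, and they explain the exact shape of the bound. First, the set $\{n:f_n\le g_n\}$ may be empty, which is precisely why the right-hand side carries the outer $\max$ with $0$: the left-hand side is then $0$ while $\max_n g_n$ could be negative. Second, one cannot drop the $+M$, since at the last ``active'' index $n^\ast$ the running partial sum $\sum_{j=1}^{n^\ast}x_j$ may overshoot the threshold $f_{n^\ast}\le g_{n^\ast}$ by the single increment $x_{n^\ast}$, which is controlled only by $M$. The author's proof reaches the same conclusion by taking the \emph{first} index at which the partial sum would exceed $\max_n g_n+M$ and deriving a contradiction; the direct version above is a reformulation that sidesteps even the contradiction.
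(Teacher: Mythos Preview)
Your proof is correct. Both arguments rest on the same key observation: at a suitably chosen index $n$ where the indicator is active (so $f_n\le g_n$), one can bound the indicator-weighted sum by the full partial sum $\sum_{j=1}^{n}x_j=f_n+x_n\le g_n+M$. The difference is in which index is chosen and how the argument is packaged. The paper proceeds by contradiction, taking $\tau$ to be the \emph{first} time the running indicator sum exceeds $\max_n g_n+M$ and noting that the indicator at $\tau$ must then be active; you instead take $n^\ast$ to be the \emph{last} active index and argue directly, since all later terms vanish. Your version is marginally cleaner in that it avoids the contradiction and the implicit reasoning about why the indicator at $\tau$ must equal one, at no cost in length or generality. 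Your closing paragraph accurately diagnoses the roles of the outer $\max\{0,\cdot\}$ and the $+M$ term.
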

\paragraph{Step 2 (Regret upper bound for any suboptimal arms  $k\in \mathcal{K}\setminus\{k^\ast\}$).} In this step, we deploy the UCB constructed based on reward observations. That is, \eqref{eq:2ucbs-analysis-2} yields that if arm $k$ is pulled at time $t$ then, conditional on the event $\bar{ \mathcal{E}}^\ast$, one has
\[
\bar{X}^\pi_{k,t} + \sqrt{\frac{c\sigma^2 \log t}{n^\pi_{k,t}}} = U^\pi_{k,t}\ge \mu^\ast. \vspace{-0.2cm}
\]
Therefore, at least one of the following events must occur:\vspace{-0.2cm}
\begin{align*}
	\mathcal{E}^\pi_{1,t} &\coloneqq \left\{  \bar{X}^\pi_{k,t} \ge \mu_k + \sqrt{\frac{c\sigma^2 \log t}{n^\pi_{k,t}}} \right\}, \quad
	\mathcal{E}^\pi_{2,t} \coloneqq \left\{  \Delta_k \le 2\sqrt{\frac{c\sigma^2 \log t}{n^\pi_{k,t}}} \right\}.
\end{align*}
To see why this is true, assume that all the above events fail. Then, we have\vspace{-0.2cm}
\[
\bar{X}^\pi_{k,t} + \sqrt{\frac{c\sigma^2 \log t}{n^\pi_{k,t}}} < \mu_k + 2\sqrt{\frac{c\sigma^2 \log t}{n^\pi_{k,t}}} < \mu_k + \Delta_k = \mu^\ast ,\vspace{-0.2cm}
\]
which is in contradiction with the assumption that arm $k$ is pulled at time $t$ conditional on the event $\bar{ \mathcal{E}}^\ast$.
For any sequence $\{ l_{k,t}\}_{t\in\mathcal{T}}$, one has\vspace{-0.2cm}
\begin{align*}
	\mathbb{E}^\pi_{\vectorgreek{\nu}, \vectorgreek{\nu}^{\mathrm{aux}}} \left[n^\pi_{k,T+1}\; \middle | \;  \bar{ \mathcal{E}}^\ast\right]
	&= \mathbb{E}^\pi_{\vectorgreek{\nu}, \vectorgreek{\nu}^{\mathrm{aux}}} \left[ \sum\limits_{t=1}^T \mathbbm{1} \left\{ \pi_t = k \right\}\; \middle | \;  \bar{ \mathcal{E}}^\ast\right]
	\\
	&=
	\mathbb{E}^\pi_{\vectorgreek{\nu}, \vectorgreek{\nu}^{\mathrm{aux}}} \left[ \sum\limits_{t=1}^T \mathbbm{1} \left\{ \pi_t = k, n^\pi_{k,t} \le l_{k,t}\right\} + \mathbbm{1} \left\{ \pi_t = k, n^\pi_{k,t} > l_{k,t} \right\}\; \middle | \;  \bar{ \mathcal{E}}^\ast\right]
	\\
	&\le
	\mathbb{E}^\pi_{\vectorgreek{\nu}, \vectorgreek{\nu}^{\mathrm{aux}}} \left[ \sum\limits_{t=1}^T \mathbbm{1} \left\{ \pi_t = k, n^\pi_{k,t} \le \hat l_{k,t}\right\} + \mathbbm{1} \left\{ \pi_t = k, n^\pi_{k,t} > l_{k,t} \right\} \; \middle | \;  \bar{ \mathcal{E}}^\ast \right]
	\\
	&\le
	1+ \max_{1\le t \le T}\left\{\hat l_{k,t}\right\}
	+ \sum\limits_{t=1}^T \mathbb{P}^\pi_{\vectorgreek{\nu}, \vectorgreek{\nu}^{\mathrm{aux}}} \left\{ \pi_t = k, n^\pi_{k,t} > l_{k,t} \; \middle | \;  \bar{ \mathcal{E}}^\ast \right\},
\end{align*}
where the last inequality follows from Lemma \ref{lemma:double-seq-sum}.
Set the values of $ l_{k,t}$ as follows:\vspace{-0.1cm}
\begin{equation*}\label{eq-2UCBs-proof-helper1}
	l_{k,t} = \frac{4c\sigma^2\log \left(
		t
		\right)}{\Delta_k^2}.
\end{equation*}
To have $n^\pi_{k,t} > l_{k,t}$, it must be the case that $\cE^\pi_{2,t}$ does not occur. Therefore,\vspace{-0.3cm}
\begin{align*}\label{eq-2UCBs-proof-helper2}
	\sum\limits_{t=1}^T \mathbb{P}^\pi_{\vectorgreek{\nu}, \vectorgreek{\nu}^{\mathrm{aux}}} \left\{ \pi_t = k, n^\pi_{k,t} > l_{k,t} \; \middle | \;  \bar{ \mathcal{E}}^\ast \right\}
	&\le
	\sum\limits_{t=1}^T \mathbb{P}^\pi_{\vectorgreek{\nu}, \vectorgreek{\nu}^{\mathrm{aux}}} \left\{ \pi_t = k, \bar{\mathcal{E}}^\pi_{2,t}  \; \middle | \;  \bar{ \mathcal{E}}^\ast \right\}
	\\
	&\le
	\sum\limits_{t=1}^T \mathbb{P}^\pi_{\vectorgreek{\nu}, \vectorgreek{\nu}^{\mathrm{aux}}} \left\{ \mathcal{E}^\pi_{1,t} \; \middle | \;  \bar{ \mathcal{E}}^\ast \right\}
	\overset{(a)}{\le}
	\sum\limits_{t=1}^T \frac{1}{t^{\frac{c}{2}}}
	\overset{(b)}{\le}
	\frac{1}{\frac{c}{2}-1}
	\numberthis
	\nonumber,
\end{align*}
where (a) follows from the Chernoff-Hoeffding bound in Lemma \ref{lemma-Chernoff-Hoeffding-bound} and (b) follows from the assumption that $c>2$. The last two displays yield
\[
\mathbb{E}^\pi_{\vectorgreek{\nu}, \vectorgreek{\nu}^{\mathrm{aux}}} \left[n^\pi_{k,T+1}\; \middle | \;  \bar{ \mathcal{E}}^\ast\right]
\le
\frac{4c\sigma^2\log \left(
	T
	\right)}{\Delta_k^2}
+
\frac{1}{\frac{c}{2}-1}
+ 1.
\]

\paragraph{Step 3 (Regret upper bound for suboptimal arms  $k\in \mathcal{K}\setminus\{k^\ast\}$ which satisfy $ \delta_k \ge 0$)).} For this set of arms, we deploy the UCB that incorporate both reward observations and auxiliary observations. That is, \eqref{eq:2ucbs-analysis-2} yields that if arm $k$ is pulled at time $t$ then, conditional on the event $\bar{ \mathcal{E}}^\ast$, one has
\[
\bar{X}^{\pi, \mathrm{aux}}_{k,t} + \sqrt{\frac{c\sigma^2 \log t}{n^{\pi, \mathrm{aux}}_{k,t}}}
=
U^{\pi, \mathrm{aux}}_{k,t}
\ge \mu^\ast. \vspace{-0.2cm}
\]
Therefore, at least one of the following events must occur:\vspace{-0.2cm}
\begin{align*}
	\mathcal{E}^{\pi, \mathrm{aux}}_{1,t} &\coloneqq \left\{  \bar{X}^{\pi, \mathrm{aux}}_{k,t} \ge \ualpha y_k + \sqrt{\frac{c\sigma^2 \log t}{n^{\pi, \mathrm{aux}}_{k,t}}} \right\}, \quad
	\mathcal{E}^{\pi, \mathrm{aux}}_{2,t} \coloneqq \left\{  \Delta^{\pi, \mathrm{aux}}_{k,t} \le 2\sqrt{\frac{c\sigma^2 \log t}{n^{\pi, \mathrm{aux}}_{k,t}}} \right\}.
\end{align*}
A similar analysis as in the proof of Theorem \ref{theorem-UCB1-upper bound}, except for replacing $\Delta_k$ with $\delta_k$ throughout the analysis, results in the following upper bound:
\begin{align*}
	\mathbb{E}^\pi_{\vectorgreek{\nu}, \vectorgreek{\nu}^{\mathrm{aux}}} \left[n^\pi_{k,T+1}\; \middle | \;  \bar{ \mathcal{E}}^\ast\right]
	\le
	1+\frac{4c\sigma^2}{\delta_k^2} \log \left(
	\sum\limits_{t=0}^T \exp\left(
	\frac{ -\delta_k^2}{4c\ualpha^2\hat{\sigma}^2}\sum\limits_{s = 1}^t h_{k,s}
	\r)
	\right)+ \frac{1}{\frac{c}{2}-1}.
\end{align*}

\paragraph{Step 4 (Regret upper bound for suboptimal arms  $k\in \mathcal{K}\setminus\{k^\ast\}$ which satisfy $ \delta_k \ge \frac{\Delta_k}{2}$)).} For this set of arms, we deploy the UCB that incorporate both reward observations and auxiliary observations. That is, \eqref{eq:2ucbs-analysis-2} yields that if arm $k$ is pulled at time $t$ then, conditional on the event $\bar{ \mathcal{E}}^\ast$, one has
\[
\bar{X}^{\pi, \mathrm{aux}}_{k,t} + \sqrt{\frac{c\sigma^2 \log t}{n^{\pi, \mathrm{aux}}_{k,t}}}
=
U^{\pi, \mathrm{aux}}_{k,t}
\ge \mu^\ast. \vspace{-0.2cm}
\]
Therefore, at least one of the following events must occur:\vspace{-0.2cm}
\begin{align*}
	\mathcal{E}^{\pi, \mathrm{aux}}_{1,t} &\coloneqq \left\{  \bar{X}^{\pi, \mathrm{aux}}_{k,t} \ge \mu^{\pi, \mathrm{aux}}_{k,t} + \sqrt{\frac{c\sigma^2 \log t}{n^{\pi, \mathrm{aux}}_{k,t}}} \right\}, \quad
	\mathcal{E}^{\pi, \mathrm{aux}}_{2,t} \coloneqq \left\{  \Delta^{\pi, \mathrm{aux}}_{k,t} \le 2\sqrt{\frac{c\sigma^2 \log t}{n^{\pi, \mathrm{aux}}_{k,t}}} \right\},
\end{align*}
where we define
\[
\mu^{\pi, \mathrm{aux}}_{k,t}
\coloneqq
\mu^{\pi, \mathrm{aux}}_{k,n^\pi_{k,t}, n^{\mathrm{aux}}_{k,t}}
=
\frac{ \mu_k \cdot n^\pi_{k,t} + \ualpha \cdot y_k \cdot \sum\limits_{s=1}^t \frac{\sigma^2}{\ualpha^2\hat{\sigma}^2}  h_{k,s}}{n^{\pi, \mathrm{aux}}_{k,t}};
\qquad
\Delta^{\pi, \mathrm{aux}}_{k,t}
\coloneqq
\Delta^{\pi, \mathrm{aux}}_{k,n^\pi_{k,t}, n^{\mathrm{aux}}_{k,t}}
=
\mu^\ast - \mu^{\pi, \mathrm{aux}}_{k,t}.
\]
To see why this is true, assume that all the above events fail. Then, we have\vspace{-0.2cm}
\[
\bar{X}^{\pi, \mathrm{aux}}_{k,t} + \sqrt{\frac{c\sigma^2 \log t}{n^{\pi, \mathrm{aux}}_{k,t}}} <   \mu^{\pi, \mathrm{aux}}_{k,t} + 2\sqrt{\frac{c\sigma^2 \log t}{n^{\pi, \mathrm{aux}}_{k,t}}} < \mu^{\pi, \mathrm{aux}}_{k,t} + \Delta^{\pi, \mathrm{aux}}_{k,t} = \mu^\ast ,\vspace{-0.2cm}
\]
which is in contradiction with the assumption that arm $k$ is pulled at time $t$ conditional on the event $\bar{ \mathcal{E}}^\ast$.
For any sequence $\{ l_{k,t}\}_{t\in\mathcal{T}}$, one has\vspace{-0.2cm}
\begin{align*}\label{eq:2ucbs-log-sum-exp-1}
	\mathbb{E}^\pi_{\vectorgreek{\nu}, \vectorgreek{\nu}^{\mathrm{aux}}} \left[n^\pi_{k,T+1}\; \middle | \;  \bar{ \mathcal{E}}^\ast\right]
	&= \mathbb{E}^\pi_{\vectorgreek{\nu}, \vectorgreek{\nu}^{\mathrm{aux}}} \left[ \sum\limits_{t=1}^T \mathbbm{1} \left\{ \pi_t = k \right\}\; \middle | \;  \bar{ \mathcal{E}}^\ast\right]
	\\
	&=
	\mathbb{E}^\pi_{\vectorgreek{\nu}, \vectorgreek{\nu}^{\mathrm{aux}}} \left[ \sum\limits_{t=1}^T \mathbbm{1} \left\{ \pi_t = k, n^{\pi, \mathrm{aux}}_{k,t} \le l_{k,t}\right\} + \mathbbm{1} \left\{ \pi_t = k, n^{\pi, \mathrm{aux}}_{k,t} > l_{k,t} \right\}\; \middle | \;  \bar{ \mathcal{E}}^\ast\right]
	\\
	&\le
	\mathbb{E}^\pi_{\vectorgreek{\nu}, \vectorgreek{\nu}^{\mathrm{aux}}} \left[ \sum\limits_{t=1}^T \mathbbm{1} \left\{ \pi_t = k, n^{\pi, \mathrm{aux}}_{k,t} \le  l_{k,t}\right\} + \mathbbm{1} \left\{ \pi_t = k, n^{\pi, \mathrm{aux}}_{k,t} > l_{k,t} \right\} \; \middle | \;  \bar{ \mathcal{E}}^\ast \right]
	\\
	&\le
	\mathbb{E}^\pi_{\vectorgreek{\nu}, \vectorgreek{\nu}^{\mathrm{aux}}} \left[ \sum\limits_{t=1}^T \mathbbm{1} \left\{ \pi_t = k, n^{\pi, \mathrm{aux}}_{k,t} \le  l_{k,t} \right\}\; \middle | \;  \bar{ \mathcal{E}}^\ast\right]
	\\
	&+ \sum\limits_{t=1}^T \mathbb{P}^\pi_{\vectorgreek{\nu}, \vectorgreek{\nu}^{\mathrm{aux}}} \left\{ \pi_t = k, n^{\pi, \mathrm{aux}}_{k,t} > l_{k,t} \; \middle | \;  \bar{ \mathcal{E}}^\ast\right\}.
	\numberthis
\end{align*}
We will upper bound each term on the right hand side of the above inequality separately.
Set the values of $ l_{k,t}$ as
$
l_{k,t} = \frac{4c\sigma^2\log \left(
	t
	\right)}{(\Delta^{\pi, \mathrm{aux}}_{k,t})^2}.
$
In order to upper bound the term $\mathbb{E}^\pi_{\vectorgreek{\nu}, \vectorgreek{\nu}^{\mathrm{aux}}} \left[ \sum\limits_{t=1}^T \mathbbm{1} \left\{ \pi_t = k, n^{\pi, \mathrm{aux}}_{k,t} \le  l_{k,t} \right\}\; \middle | \;  \bar{ \mathcal{E}}^\ast \right]$ in \eqref{eq:2ucbs-log-sum-exp-1}, note that the equality $\Delta^{\pi, \mathrm{aux}}_{k,t}
=
\frac{ \Delta_k \cdot n^\pi_{k,t} +  \frac{\delta_k\sigma^2}{\ualpha^2\hat{\sigma}^2}  n^{\mathrm{aux}}_{k,t}}{n^\pi_{k,t} +  \frac{\sigma^2}{\ualpha^2\hat{\sigma}^2}  n^{\mathrm{aux}}_{k,t}}$ yields
\begin{align*}\label{eq:2ucbs-log-sum-exp-2}
	n^{\pi, \mathrm{aux}}_{k,t} \le  l_{k,t}
	\;\Leftrightarrow\;
	&
	\Delta_k^2 \l(n^\pi_{k,t}\r)^2
	+
	\l(
	\frac{2\Delta_k\delta_k\sigma^2}{\ualpha^2\hat{\sigma}^2}n^{\mathrm{aux}}_{k,t}
	- 4c\sigma^2 \log t
	\r)
	n^\pi_{k,t}
	\\
	&+
	\l(
	\frac{\delta_k\sigma^2}{\ualpha^2\hat{\sigma}^2}n^{\mathrm{aux}}_{k,t}
	\r)^2
	-
	\l(4c\sigma^2 \log t\r) \l(\frac{\sigma^2}{\ualpha^2\hat{\sigma}^2}n^{\mathrm{aux}}_{k,t}\r)
	\le
	0.
	\numberthis
\end{align*}
Since the left-hand side of the above inequality is quadratic in $n^{\pi}_{k,t}$, with a positive coefficient for the quadratic term then, the above inequality holds only if
\[
n^\pi_{k,t}
\le
\frac{4c\sigma^2 \log t - \frac{2\Delta_k\delta_k\sigma^2}{\ualpha^2\hat{\sigma}^2}n^{\mathrm{aux}}_{k,t} + \sqrt{\l(4c\sigma^2 \log t\r)^2 + 4 \Delta_k (\Delta_k-\delta_k)\l(4c\sigma^2 \log t\r)\l(\frac{\sigma^2}{\ualpha^2\hat{\sigma}^2}n^{\mathrm{aux}}_{k,t}\r)}
}{2\Delta_k^2}.
\]
Now, in order to upper bound the right hand side of this inequality, note that if $\delta_k = \xi_k \Delta_k$, $\frac{1}{2}\le \xi_k \le 1$, one obtains
\begin{align*}
	- \frac{2\Delta_k\delta_k\sigma^2}{\ualpha^2\hat{\sigma}^2}n^{\mathrm{aux}}_{k,t} &+
	\sqrt{\l(4c\sigma^2 \log t\r)^2 + 4 \Delta_k (\Delta_k-\delta_k)\l(4c\sigma^2 \log t\r)\l(\frac{\sigma^2}{\ualpha^2\hat{\sigma}^2}n^{\mathrm{aux}}_{k,t}\r)}
	\\
	&\le
	- \frac{2\l(\frac{2\xi_k-1}{\xi_k^2}\r)\delta_k^2\sigma^2}{\ualpha^2\hat{\sigma}^2}n^{\mathrm{aux}}_{k,t} + 4c\sigma^2 \log t
\end{align*}
That is, the last two displays yield that \eqref{eq:2ucbs-log-sum-exp-2} holds only if
\begin{align*}
	n^\pi_{k,t}
	\le
	\frac{4c\sigma^2 \log t - \frac{\l(\frac{2\xi_k-1}{\xi_k^2}\r)\delta_k^2\sigma^2}{\ualpha^2\hat{\sigma}^2}n^{\mathrm{aux}}_{k,t}}{\Delta_k^2}
	&\le
	\frac{4c\sigma^2 \log \tau_{k,t} - \frac{\l(\frac{2\xi_k-1}{\xi_k^2}\r)\delta_k^2\sigma^2}{\ualpha^2\hat{\sigma}^2}n^{\mathrm{aux}}_{k,t}}{\Delta_k^2}
	\\
	&=
	\frac{4c\sigma^2}{\Delta_k^2} \log \l( \sum\limits_{s=1}^t \exp\l(-\frac{\l(\frac{2\xi_k-1}{\xi_k^2}\r)\delta_k^2}{4c\ualpha^2\hat{\sigma}^2} \sum_{m=1}^{s} h_{m,k}\r) \r) \eqqcolon \hat l_{k,t},
\end{align*}
where we define $\tau_{k,t} \coloneqq \sum\limits_{s=1}^t \exp\l(\frac{\l(\frac{2\xi_k-1}{\xi_k^2}\r)\delta_k^2}{4c\ualpha^2\hat{\sigma}^2} \sum\limits_{m=s}^{t-1} h_{k,m}\r)\ge t$, and the equality follows from $n^{\mathrm{aux}}_{k,t} = \sum\limits_{s=1}^{t}  h_{k,s}$. One obtains
\begin{align*}\label{eq:2ucbs-log-sum-exp-3}
	\mathbb{E}^\pi_{\vectorgreek{\nu}, \vectorgreek{\nu}^{\mathrm{aux}}} \left[ \sum\limits_{t=1}^T \mathbbm{1} \left\{ \pi_t = k, n^{\pi, \mathrm{aux}}_{k,t} \le  l_{k,t} \right\}\; \middle | \;  \bar{ \mathcal{E}}^\ast \right]
	&\le
	\mathbb{E}^\pi_{\vectorgreek{\nu}, \vectorgreek{\nu}^{\mathrm{aux}}} \left[ \sum\limits_{t=1}^T \mathbbm{1} \left\{ \pi_t = k, n^\pi_{k,t} \le  \hat l_{k,t} \r\}\; \middle | \;  \bar{ \mathcal{E}}^\ast \right]
	\\
	&\overset{(a)}{\le}
	\max\l\{0,1+ \max_{1\le t \le T} \hat l_{k,t}\r\}
	\overset{(b)}{\le}
	\max\l\{0,1+\hat l_{k,T} \r\}
	\\
	&\le
	1+\frac{4c\sigma^2}{\Delta_k \delta_k} \log \left(
	\sum\limits_{t=0}^T \exp\left(
	\frac{ -\l(\frac{2\xi_k-1}{\xi_k^2}\r)\delta_k^2 }{4c\ualpha^2\hat{\sigma}^2}\sum\limits_{s = 1}^t h_{k,s}
	\r) \r),
	\numberthis
\end{align*}
where (a) follows from Lemma \ref{lemma:double-seq-sum}, and (b) follows from the fact that $\{\hat l_{k,t}\}_t$ is an increasing sequence.

In order to upper bound the term $\sum\limits_{t=1}^T \mathbb{P}^\pi_{\vectorgreek{\nu}, \vectorgreek{\nu}^{\mathrm{aux}}} \left\{ \pi_t = k, n^{\pi, \mathrm{aux}}_{k,t} > l_{k,t} \; \middle | \;  \bar{ \mathcal{E}}^\ast\right\}$  in \eqref{eq:2ucbs-log-sum-exp-1}, we note that to have $n^{\pi, \mathrm{aux}}_{k,t} > l_{k,t}$, it must be the case that $\bar{\mathcal{E}}^{\pi, \mathrm{aux}}_{2,t}$ does not occur. On the other hand, if arm $k$ is pulled and $\mathcal{E}^{\pi, \mathrm{aux}}_{2,t}$ does not occur, then it must be the case that $\mathcal{E}^{\pi, \mathrm{aux}}_{1,t}$ occurs. Therefore,\vspace{-0.3cm}
\begin{align}\label{eq:2ucbs-log-sum-exp-4}
	\sum\limits_{t=1}^T \mathbb{P}^\pi_{\vectorgreek{\nu}, \vectorgreek{\nu}^{\mathrm{aux}}} \left\{ \pi_t = k, n^{\pi, \mathrm{aux}}_{k,t} > l_{k,t} \; \middle | \;  \bar{ \mathcal{E}}^\ast\right\}
	&=
	\sum\limits_{t=1}^T \mathbb{P}^\pi_{\vectorgreek{\nu}, \vectorgreek{\nu}^{\mathrm{aux}}} \left\{ \pi_t = k, \bar{\mathcal{E}}^{\pi, \mathrm{aux}}_{2,t} \; \middle | \;  \bar{ \mathcal{E}}^\ast \right\}
	\nonumber \\
	&\le \sum\limits_{t=1}^T \mathbb{P}^\pi_{\vectorgreek{\nu}, \vectorgreek{\nu}^{\mathrm{aux}}} \left\{ \mathcal{E}^{\pi, \mathrm{aux}}_{1,t} \; \middle | \;  \bar{ \mathcal{E}}^\ast \right\}
	\le
	\sum\limits_{t=1}^T \frac{1}{t^{c/2}}
	\le
	\frac{1}{\frac{c}{2}-1},
\end{align}
Putting back together \eqref{eq:2ucbs-log-sum-exp-1}, \eqref{eq:2ucbs-log-sum-exp-3}, and \eqref{eq:2ucbs-log-sum-exp-4}, one obtains
\begin{align*}
	\mathbb{E}^\pi_{\vectorgreek{\nu}, \vectorgreek{\nu}^{\mathrm{aux}}} \left[n^\pi_{k,T+1}\; \middle | \;  \bar{ \mathcal{E}}^\ast\right]
	\le
	1+\frac{4c\sigma^2}{\Delta_k^2} \log \left(
	\sum\limits_{t=0}^T \exp\left(
	\frac{ -\l(\frac{2\xi_k-1}{\xi_k^2}\r)\delta_k^2}{4c\ualpha^2\hat{\sigma}^2}\sum\limits_{s = 1}^t h_{k,s}
	\r)
	\right)+ \frac{1}{\frac{c}{2}-1}.
\end{align*}
This concludes the proof.
\qed

\section{Auxiliary Lemmas}\label{sec:app-auxiliary-lemmas}
In this section of the appendix, we provide new lemmas that are used in the proofs of the main results of the paper, and some auxiliary results that appear in the literature.
\begin{lemma}\label{Tsybakov2008introduction-lemma-2.6}
	Let $\rho_0,\rho_1$ be two probability distributions supported on some set $\mathbb{X}$, with $\rho_0$ absolutely continuous with respect to $\rho_1$. Then for any measurable function $\Psi : \mathbb{X} \rightarrow \{0, 1\}$, one has:\vspace{-0.1cm}
	\begin{equation*}
		\mathbb{P}_{\rho_0} \{\Psi(X) = 1\} + \mathbb{P}_{\rho_1} \{\Psi(X) = 0\} \ge \frac{1}{2} \exp(-\mathrm{KL}(\rho_0,\rho_1)).
	\end{equation*}
\end{lemma}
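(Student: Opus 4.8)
The statement to prove is Lemma~\ref{Tsybakov2008introduction-lemma-2.6}, the classical two-point testing inequality (Lemma 2.6 of Tsybakov): for probability measures $\rho_0, \rho_1$ on $\mathbb{X}$ with $\rho_0 \ll \rho_1$ and any test $\Psi : \mathbb{X} \to \{0,1\}$,
\[
\mathbb{P}_{\rho_0}\{\Psi(X)=1\} + \mathbb{P}_{\rho_1}\{\Psi(X)=0\} \ge \tfrac{1}{2}\exp(-\KL(\rho_0,\rho_1)).
\]

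\textbf{Plan.} The plan is to reduce the sum of error probabilities to an integral against $\rho_0$ and then apply Jensen's inequality to the convex function $z \mapsto e^{-z}$. First I would write $p = \frac{d\rho_1}{d\rho_0}$ where it makes sense (on the support of $\rho_0$); since $\rho_0 \ll \rho_1$, the density $\frac{d\rho_0}{d\rho_1}$ is strictly positive $\rho_0$-a.s., so $p = (d\rho_0/d\rho_1)^{-1}$ is well-defined and finite $\rho_0$-a.s. Denoting $A = \{\Psi = 1\}$, the key lower bound is the elementary pointwise estimate: for the sum of the two error masses,
\[
\rho_0(A) + \rho_1(A^c) \;\ge\; \int_{A} \min(d\rho_0, d\rho_1) + \int_{A^c}\min(d\rho_0,d\rho_1) \;=\; \int_{\mathbb{X}} \min(d\rho_0,d\rho_1),
\]
so it suffices to show $\int \min(d\rho_0,d\rho_1) \ge \tfrac{1}{2}e^{-\KL(\rho_0,\rho_1)}$, i.e. the total variation/affinity bound $1 - \mathrm{TV}(\rho_0,\rho_1) \ge \tfrac12 e^{-\KL}$.

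\textbf{Key steps.} To bound the affinity $\int \min(d\rho_0, d\rho_1)$ from below: writing everything relative to $\rho_0$, $\int \min(d\rho_0,d\rho_1) = \int_{\mathbb{X}} \min(1, p)\, d\rho_0 = \mathbb{E}_{\rho_0}[\min(1,p)]$ where $p = d\rho_1/d\rho_0$ (set $p=0$ off the support of $\rho_1$; this only decreases the bound and matches the definition of $\min$). Now use the pointwise inequality $\min(1,p) \ge \tfrac12 \min(1,p^2) \ge \tfrac12 p \cdot \mathbbm{1}\{p \le 1\} \cdot p$ — actually cleaner: observe $\min(1,p) \ge \tfrac12 p e^{\,-(\log p)_+ } \ge \tfrac12 p e^{-|\log p|}$... the standard clean route is: $\min(1,p) \ge \tfrac12 e^{\log p}\wedge 1$... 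Let me just use the classical argument. Apply Jensen to $z \mapsto e^{z}$ on the measure $\tilde\rho$ obtained by restricting $\rho_0$ to $\{p>0\}$; more directly, Tsybakov's trick: $\int \min(d\rho_0, d\rho_1) \ge \frac12\left(\int \sqrt{d\rho_0 d\rho_1}\right)^2$ does not quite work cleanly either. The cleanest correct path is: let $q = d\rho_0/d\rho_1 \in (0,\infty]$, restrict attention to the event $B=\{\Psi=0\}$ or use the union directly. Write
\[
\rho_0(\Psi=1)+\rho_1(\Psi=0) \ge \int \mathbbm{1}\{\Psi=1\}\,d\rho_1\cdot\tfrac{d\rho_0}{d\rho_1} \wedge(\cdots)
\]
— I will instead invoke the standard derivation: $\rho_0(A)+\rho_1(A^c) = \int_A d\rho_0 + \int_{A^c} d\rho_1 \ge \int_{\mathbb{X}} (d\rho_0 \wedge d\rho_1)$, then by Cauchy–Schwarz $\int (d\rho_0\wedge d\rho_1) \ge \big(\int\sqrt{d\rho_0 d\rho_1}\big)^2 / \int (d\rho_0 \vee d\rho_1)$ is awkward; the textbook does $\int (d\rho_0\wedge d\rho_1)\geq \tfrac12\left(\int\sqrt{d\rho_0\,d\rho_1}\right)^2 = \tfrac12\big(\mathbb{E}_{\rho_0}\sqrt{p}\big)^2$ and then $\mathbb{E}_{\rho_0}\sqrt p = \mathbb{E}_{\rho_0} e^{\frac12\log p} \ge e^{\frac12 \mathbb{E}_{\rho_0}\log p} = e^{-\frac12\KL(\rho_0,\rho_1)}$ by Jensen (since $\KL(\rho_0,\rho_1) = -\mathbb{E}_{\rho_0}\log p$), giving the factor $\tfrac12 e^{-\KL}$ exactly. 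The inequality $\int (d\rho_0\wedge d\rho_1) \ge \tfrac12(\int\sqrt{d\rho_0 d\rho_1})^2$ follows from $\sqrt{ab} \le \tfrac12(a\wedge b)^{1/2}(a\vee b)^{1/2}\cdot 2$... so I would prove it via $\left(\int\sqrt{d\rho_0 d\rho_1}\right)^2 \le \int(d\rho_0\wedge d\rho_1)\int(d\rho_0\vee d\rho_1) \le \int(d\rho_0\wedge d\rho_1)\cdot 2$, using $d\rho_0 \vee d\rho_1 \le d\rho_0 + d\rho_1$ and $\int(d\rho_0+d\rho_1)=2$, plus Cauchy–Schwarz $\sqrt{ab}=\sqrt{(a\wedge b)(a\vee b)}$.

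\textbf{Main obstacle.} The only real subtlety is the measure-theoretic bookkeeping: handling the set where $d\rho_1 = 0$ (so $p=0$ and $\log p = -\infty$), making sure the Jensen step $\mathbb{E}_{\rho_0} e^{\frac12 \log p} \ge e^{\frac12 \mathbb{E}_{\rho_0}\log p}$ is applied correctly when $\KL(\rho_0,\rho_1)$ could be $+\infty$ (in which case the right-hand side is $0$ and the inequality is trivial), and verifying that the affinity identity $\rho_0(A)+\rho_1(A^c) \ge \int(d\rho_0\wedge d\rho_1)$ holds for arbitrary measurable $A$. None of this is deep, but it must be stated with care. If $\KL(\rho_0,\rho_1)=\infty$ the claim is vacuous; otherwise $\log p$ is $\rho_0$-integrable and all steps above are justified, completing the proof.
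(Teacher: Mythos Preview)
Your approach is essentially the same as the paper's: both reduce the sum of error probabilities to the affinity $\int \min(d\rho_0,d\rho_1)$ via the pointwise bound, and then invoke the inequality $\int \min(d\rho_0,d\rho_1)\ge \tfrac12 e^{-\KL(\rho_0,\rho_1)}$. The only difference is that the paper simply cites Tsybakov's Lemma~2.6 for this last step, whereas you actually sketch its proof (Cauchy--Schwarz on $\sqrt{d\rho_0\,d\rho_1}=\sqrt{(d\rho_0\wedge d\rho_1)(d\rho_0\vee d\rho_1)}$ together with $\int(d\rho_0\vee d\rho_1)\le 2$, then Jensen on $\mathbb{E}_{\rho_0}\sqrt{p}$); your argument is correct, though the write-up meanders through several false starts before landing on it.
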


\proof \; \; \;
Define $\mathcal{A}$ to be the event that $\Psi(X) = 1$. One has
\begin{align*}
	\mathbb{P}_{\rho_0} \{\Psi(X) = 1\} + \mathbb{P}_{\rho_1} \{\Psi(X) = 0\}
	&=
	\mathbb{P}_{\rho_0} \{\mathcal{A}\} + \mathbb{P}_{\rho_1} \{\bar{\mathcal{A}}\}
	\\
	&\ge
	\int \min \{d\rho_0, d\rho_1\}
	\\
	&\ge
	\frac{1}{2} \exp(-\mathrm{KL}(\rho_0,\rho_1)),
\end{align*}
where the last inequality follows from \citealt[Lemma 2.6]{Tsybakov2008introduction}.
\endproof

\begin{lemma}
	\label{gerchinovitz2016refined-lemma-1}
	Consider two profiles $\vectorgreek{\nu}$, and $\hat{ \vectorgreek{\nu}}$. Denote by $\vectorgreek{\nu}_{t}$ ($\hat{ \vectorgreek{\nu}}_{t}$) the distribution of the observed rewards up to time $t$ under $\vectorgreek{\nu}$ ($\hat{ \vectorgreek{\nu}}$ respectively). Let ${n}_{k,t}$ be the number of times a sample from arm $k$ has been observed up to time $t$, that is $n^{\known} _{k,t} = h_{k,t}+ \sum\limits_{s=1}^{t-1}\left( h_{k,s}+\mathbbm{1}\{\pi_s = k\}\right)$. For any policy $\pi$, we have\vspace{-0.1cm}
	\[
	\mathrm{KL}(\vectorgreek{\nu}_t,\hat{\vectorgreek{\nu}}_t) = \sum\limits_{k=1}^K \mathbb{E}_{\vectorgreek{\nu}}\left[n^{\known} _{k,t}\right] \cdot \mathrm{KL}(\nu_k,\hat\nu_k).
	\]
\end{lemma}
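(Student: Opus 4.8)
The plan is to establish the KL-divergence decomposition for the observed-reward process by a chain-rule (tensorization) argument over time, mirroring the structure already used in Lemma~\ref{lemma-lower bound-unknown}'s proof in this paper. First I would fix a policy $\pi$ and two reward-profiles $\vectorgreek{\nu}$, $\hat{\vectorgreek{\nu}}$, and describe precisely the random object whose law is being compared: the ``observed rewards up to time $t$'' record, which consists of all auxiliary observations $\{Y_{k,s,m}\}$ (equivalently the $\vector{Z}_s$'s together with $\vectorgreek{h}_s$'s, which are policy-independent by assumption) and all pulled-arm rewards $\{X_{\pi_s,s}\}$ for $s<t$, together with the realized actions $\pi_1,\dots,\pi_{t-1}$. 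Since the arrival matrix $\vector{H}$ is deterministic (independent of the policy), the only randomness whose distribution changes between $\vectorgreek{\nu}$ and $\hat{\vectorgreek{\nu}}$ is in the reward and auxiliary samples; the actions are measurable functions of past samples and the exogenous randomizer $U$, so they contribute zero KL.

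Next I would apply the chain rule for KL-divergence recursively in $t$. Writing $\mathcal{H}_t$ for the history up to time $t$, one has
\[
\mathrm{KL}(\vectorgreek{\nu}_t,\hat{\vectorgreek{\nu}}_t) = \mathrm{KL}(\vectorgreek{\nu}_{t-1},\hat{\vectorgreek{\nu}}_{t-1}) + \mathbb{E}_{\vectorgreek{\nu}}\!\left[\mathrm{KL}\!\left(\mathcal{L}_{\vectorgreek{\nu}}(\text{new observations at step }t\mid\mathcal{H}_{t-1}),\,\mathcal{L}_{\hat{\vectorgreek{\nu}}}(\cdot\mid\mathcal{H}_{t-1})\right)\right],
\]
where the ``new observations at step $t$'' are the $h_{k,t}$ auxiliary draws for each arm plus the reward $X_{\pi_{t-1},t-1}$ of the arm pulled. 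Conditionally on $\mathcal{H}_{t-1}$, the action $\pi_{t-1}$ is determined (up to the independent randomizer $U$, which again washes out), the auxiliary draws on arm $k$ are $h_{k,t}$ i.i.d.\ samples from $\nu_k^{\mathrm{aux}}$ versus $\hat\nu_k^{\mathrm{aux}}$, and the pulled reward is a single sample from $\nu_{\pi_{t-1}}$ versus $\hat\nu_{\pi_{t-1}}$. Here I would invoke the statement of the lemma as written, which identifies auxiliary observations with reward observations ($\hat\nu_k^{(m,q)}\overset{d}{=}\nu_k^{(m,q)}$ in the setting where it is applied), so each of the $h_{k,t}$ auxiliary draws contributes $\mathrm{KL}(\nu_k,\hat\nu_k)$ and the pulled reward contributes $\mathrm{KL}(\nu_{\pi_{t-1}},\hat\nu_{\pi_{t-1}})$; by additivity of KL over independent coordinates this step's increment is $\sum_{k=1}^K h_{k,t}\,\mathrm{KL}(\nu_k,\hat\nu_k) + \mathbb{E}_{\vectorgreek{\nu}}[\mathbbm{1}\{\pi_{t-1}=k\}]\,\mathrm{KL}(\nu_k,\hat\nu_k)$ summed appropriately.

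Summing these per-step increments from $s=1$ to $t$ and collecting the coefficient of each $\mathrm{KL}(\nu_k,\hat\nu_k)$ gives exactly $h_{k,t} + \sum_{s=1}^{t-1}(h_{k,s}+\mathbb{E}_{\vectorgreek{\nu}}[\mathbbm{1}\{\pi_s=k\}]) = \mathbb{E}_{\vectorgreek{\nu}}[n^{\known}_{k,t}]$, which is the definition of $n^{\known}_{k,t}$ recalled in the lemma's hypothesis, yielding the claimed identity. The routine but necessary technical points are: verifying measurability so that the conditional laws are well-defined; checking that the action-selection step genuinely contributes nothing (the randomizer $U$ has the same law under both profiles and the action is a deterministic function of $U$ and the already-counted observations); and justifying the interchange of summation and expectation, which is fine since all terms are non-negative. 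The main obstacle — really the only subtle point — is handling the conditioning cleanly: one must be careful that ``the distribution of observed rewards up to time $t$'' is interpreted as the joint law of the whole history (including actions), not just the reward values, since it is the actions that make the number of reward observations $n^{\known}_{k,t}$ a random, policy-dependent quantity; once the chain rule is set up on the full history this resolves itself. I would note that this is a standard MAB divergence-decomposition lemma (citing \citealt{auer2002nonstochastic} or \citealt{gerchinovitz2016refined} as the excerpt does) and keep the write-up brief.
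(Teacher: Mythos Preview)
Your proposal is correct and follows exactly the standard chain-rule (tensorization) argument that the cited reference uses; the paper itself omits the proof entirely, stating only that it is ``a simple adaptation of the proof of Lemma~1 in \cite{gerchinovitz2016refined}.'' Your write-up is therefore more detailed than what the paper provides, but the approach is the same.
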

\proof \;\;\; The proof is a simple adaptation of the proof of Lemma 1 in \cite{gerchinovitz2016refined}; hence, it is omitted. \endproof

\begin{lemma}[Chernoff-Hoeffding bound]\label{lemma-Chernoff-Hoeffding-bound}
	Let $X_1,\dots,X_n$ be random variables such that $X_t$ is a $\sigma^2$-sub-Gaussian random variable conditioned on $X_1, \dots, X_{t-1}$ and $\mathbb{E}\left[X_t\,\vert\, X_1,\dots,X_{t-1}\right]=\mu$. Let $S_n=X_1+\dots+X_n$. Then for all $a\ge0$\vspace{-0.1cm}
	\[
	\mathbb{P}\{S_n\ge n\mu+a\} \le e^{-\frac{a^2}{2n\sigma^2}},\qquad \text{and} \qquad \mathbb{P}\{S_n\le n\mu-a\} \le e^{-\frac{a^2}{2n\sigma^2}}.
	\]
\end{lemma}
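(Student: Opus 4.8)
\textbf{Proof proposal for Lemma \ref{lemma-Chernoff-Hoeffding-bound}.}
The plan is to use the standard Chernoff (exponential-Markov) method together with the martingale-type structure of the hypotheses. First I would fix $\lambda>0$ and apply Markov's inequality to the random variable $e^{\lambda(S_n-n\mu)}$, obtaining
\[
\mathbb{P}\{S_n\ge n\mu+a\}=\mathbb{P}\{e^{\lambda(S_n-n\mu)}\ge e^{\lambda a}\}\le e^{-\lambda a}\,\mathbb{E}\!\left[e^{\lambda(S_n-n\mu)}\right].
\]
The core step is to control the moment generating function $\mathbb{E}[e^{\lambda(S_n-n\mu)}]$ by peeling off one term at a time. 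Writing $S_n-n\mu=(S_{n-1}-(n-1)\mu)+(X_n-\mu)$ and conditioning on $X_1,\dots,X_{n-1}$, the first factor is measurable with respect to the conditioning $\sigma$-field, so
\[
\mathbb{E}\!\left[e^{\lambda(S_n-n\mu)}\right]
=\mathbb{E}\!\left[e^{\lambda(S_{n-1}-(n-1)\mu)}\,\mathbb{E}\!\left[e^{\lambda(X_n-\mu)}\,\middle|\,X_1,\dots,X_{n-1}\right]\right]
\le e^{\sigma^2\lambda^2/2}\,\mathbb{E}\!\left[e^{\lambda(S_{n-1}-(n-1)\mu)}\right],
\]
where the inequality uses the assumed conditional $\sigma^2$-sub-Gaussianity and conditional mean $\mu$ of $X_n$. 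Iterating this bound $n$ times (a trivial induction, with base case the empty sum equal to $1$) gives $\mathbb{E}[e^{\lambda(S_n-n\mu)}]\le e^{n\sigma^2\lambda^2/2}$.

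Combining the two displays yields $\mathbb{P}\{S_n\ge n\mu+a\}\le \exp\!\big(-\lambda a+n\sigma^2\lambda^2/2\big)$ for every $\lambda>0$. I would then optimize the exponent over $\lambda$: the quadratic $-\lambda a+n\sigma^2\lambda^2/2$ is minimized at $\lambda^\ast=a/(n\sigma^2)\ge 0$, giving value $-a^2/(2n\sigma^2)$, hence $\mathbb{P}\{S_n\ge n\mu+a\}\le e^{-a^2/(2n\sigma^2)}$. For the lower-tail bound, apply exactly the same argument to the sequence $\{-X_t\}$: conditioned on $X_1,\dots,X_{t-1}$, $-X_t$ is again $\sigma^2$-sub-Gaussian with conditional mean $-\mu$, so $\mathbb{P}\{S_n\le n\mu-a\}=\mathbb{P}\{-S_n\ge -n\mu+a\}\le e^{-a^2/(2n\sigma^2)}$, completing the proof.

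There is essentially no serious obstacle here — the result is classical. The only point requiring a little care is making the tower-property peeling rigorous: one must note that $e^{\lambda(S_{t-1}-(t-1)\mu)}$ is measurable with respect to $\sigma(X_1,\dots,X_{t-1})$ so that it factors out of the conditional expectation, and that the sub-Gaussian MGF bound is assumed to hold for \emph{every} $\lambda\in\mathbb{R}$ (in particular for $\lambda^\ast>0$), which is guaranteed by the definition of sub-Gaussianity recalled in the footnote of the problem formulation. The case $a=0$ is immediate since both probabilities are at most $1=e^0$.
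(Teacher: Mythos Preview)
Your proof is correct and is the standard Chernoff/exponential-Markov argument with the martingale peeling step. The paper itself does not prove this lemma; it is listed among the auxiliary results in Appendix~\ref{sec:app-auxiliary-lemmas} and stated without proof as a classical inequality, so there is nothing to compare against beyond noting that your argument is exactly the textbook derivation one would expect.
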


\begin{lemma}[Bernstein inequality]\label{lemma-Bernstein-inequality}
	Let $X_1,\dots,X_n$ be random variables with range $[0,1]$ and\vspace{-0.1cm}
	\[
	\sum\limits_{t=1}^{n}\mathrm{Var}\left[X_t\,\vert\,X_{t-1},\dots,X_1\right] = \sigma^2.
	\]
	Let $S_n=X_1+\dots+X_n.$ Then for all $a\ge0$\vspace{-0.1cm}
	\[
	\mathbb{P}\{S_n\ge \mathbb{E}[S_n] +a\}
	\le \exp\left(-\frac{a^2/2}{\sigma^2+a/2} \right).
	\]
\end{lemma}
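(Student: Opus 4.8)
The plan is to use the classical exponential-moment (Chernoff) method together with the martingale structure of the partial sums. Set $\mathcal{F}_t = \sigma(X_1,\dots,X_t)$ and let $D_t := X_t - \mathbb{E}[X_t\mid\mathcal{F}_{t-1}]$, $v_t := \mathrm{Var}[X_t\mid\mathcal{F}_{t-1}]$, so that $\sum_{t=1}^n v_t = \sigma^2$ by hypothesis and $M_t := \sum_{s=1}^t D_s$ is a martingale whose increments lie in $[-1,1]$ (since $X_t\in[0,1]$ forces both $X_t$ and $\mathbb{E}[X_t\mid\mathcal{F}_{t-1}]$ into $[0,1]$). In the settings where this lemma is applied the conditional means coincide with the unconditional ones, so $M_n = S_n - \mathbb{E}[S_n]$, and it then suffices to bound $\mathbb{P}\{M_n \ge a\}$.

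The first key step is a per-increment exponential estimate: for $\lambda>0$ and any random variable $Y$ with $|Y|\le 1$ one has $\mathbb{E}[e^{\lambda Y}] \le 1 + \lambda\,\mathbb{E}[Y] + (e^\lambda - 1 - \lambda)\,\mathbb{E}[Y^2]$, which follows from $e^{\lambda y} = 1 + \lambda y + \sum_{k\ge 2}(\lambda y)^k/k! \le 1 + \lambda y + y^2(e^\lambda-1-\lambda)$ using $y^k\le |y|^k\le y^2$ for $k\ge 2$. Applying this conditionally on $\mathcal{F}_{t-1}$ to $Y=D_t$ gives $\mathbb{E}[e^{\lambda D_t}\mid\mathcal{F}_{t-1}] \le \exp\big((e^\lambda-1-\lambda)\,v_t\big)$; iterating through the tower property and using that $\sum_t v_t=\sigma^2$ is deterministic yields $\mathbb{E}[e^{\lambda M_n}] \le \exp\big((e^\lambda-1-\lambda)\,\sigma^2\big)$.

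The remaining steps are routine. Markov's inequality gives $\mathbb{P}\{M_n\ge a\} \le \exp\big(-\lambda a + (e^\lambda-1-\lambda)\sigma^2\big)$ for every $\lambda>0$; optimizing at $\lambda = \log(1+a/\sigma^2)$ produces the Bennett-type bound $\exp\big(-\sigma^2 h(a/\sigma^2)\big)$ with $h(u)=(1+u)\log(1+u)-u$, and the stated Bernstein form follows from the scalar inequality $h(u)\ge \frac{u^2/2}{1+u/3}\ge \frac{u^2/2}{1+u/2}$ for $u\ge 0$, since this rearranges to $\sigma^2 h(a/\sigma^2)\ge \frac{a^2/2}{\sigma^2+a/2}$.

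The one point I would be careful about — the main (and really the only) obstacle — is the passage from $S_n-\mathbb{E}[S_n]$ to the martingale $M_n$: if the conditional means $\mathbb{E}[X_t\mid\mathcal{F}_{t-1}]$ are random one must separately verify that the predictable remainder $\sum_t\big(\mathbb{E}[X_t\mid\mathcal{F}_{t-1}]-\mathbb{E}[X_t]\big)$ does not degrade the tail bound; in every use of this lemma in the paper the conditional means are fixed constants, so $M_n=S_n-\mathbb{E}[S_n]$ exactly and this issue is vacuous. Everything else is the standard Bennett/Bernstein derivation: the per-step bound, the telescoping over $t$, and the one-dimensional optimization over $\lambda$.
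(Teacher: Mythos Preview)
The paper does not prove this lemma; it is listed among the auxiliary results in the appendix and cited as a standard concentration inequality, so there is no proof in the paper to compare against.

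Your argument is the standard Bennett--Bernstein derivation (conditional exponential bound on each increment, tower over $t$, optimize $\lambda$, then weaken Bennett to Bernstein via $h(u)\ge \tfrac{u^2/2}{1+u/2}$), and it is correct for the martingale $M_n=\sum_t(X_t-\mathbb{E}[X_t\mid\mathcal F_{t-1}])$. You are also right to flag the passage from $M_n$ to $S_n-\mathbb{E}[S_n]$ as the one genuine issue: the lemma as written is in fact not valid without an additional hypothesis ensuring the conditional means are deterministic. A quick counterexample: take $X_1\sim\mathrm{Ber}(1/2)$ and $X_2=\cdots=X_n=X_1$; then $\sigma^2=1/4$, $S_n=nX_1$, and $\mathbb{P}\{S_n\ge n/2+n/2\}=1/2$, while the stated bound is $\exp\!\big(-\tfrac{n^2}{2(1+n)}\big)\to 0$. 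So the statement needs either independence or, more generally, that $\mathbb{E}[X_t\mid\mathcal F_{t-1}]$ be nonrandom. Every invocation of the lemma in the paper is for sums of independent Bernoulli variables, so your observation that the issue is vacuous in context is exactly right; but it is worth noting that this is a real gap in the lemma as stated, not merely a point of care.
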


\begin{lemma}[Pinsker's inequality]\label{lemma-Pinsker's-inequality}
	Let $\rho_0,\rho_1$ be two probability distributions supported on some set $\mathbb{X}$, with $\rho_0$ absolutely continuous with respect to $\rho_1$ then,
	\[
	\left\|\rho_0-\rho_1 \right\|_1
	\le
	\sqrt{\frac{1}{2}\mathrm{KL}(\rho_0,\rho_1)},
	\]
	where $\left\|\rho_0-\rho_1 \right\|_1 = \sup_{A\subset \mathbb{X}} \left| \rho_0(A)-\rho_1(A) \right|$ is the variational distance between $\rho_0$ and $\rho_1$.
\end{lemma}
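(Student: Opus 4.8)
The plan is to prove Pinsker's inequality by the classical two-step route: first reduce the general statement to a scalar inequality between Bernoulli distributions via a coarse-graining (data-processing) argument, and then establish that scalar inequality by a short convexity computation. The computation is routine; the care is all in the reduction.

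First I would identify the event on which the total variation distance is realized. Writing $f = d\rho_0/d\rho_1$ for the Radon--Nikodym derivative (which exists since $\rho_0 \ll \rho_1$), set $A^\ast = \{x \in \mathbb{X} : f(x) \ge 1\}$. A standard argument (the integrand $d\rho_0 - d\rho_1$ is nonnegative exactly on $A^\ast$) shows the supremum defining $\|\rho_0 - \rho_1\|_1$ is attained at $A^\ast$, so that $\|\rho_0 - \rho_1\|_1 = \rho_0(A^\ast) - \rho_1(A^\ast) =: p - q$, where $p := \rho_0(A^\ast)$ and $q := \rho_1(A^\ast)$. Next I would invoke the data-processing inequality for KL divergence applied to the binary map $x \mapsto \mathbbm{1}\{x \in A^\ast\}$: pushing $\rho_0$ and $\rho_1$ forward through this indicator yields the Bernoulli laws $\mathrm{Ber}(p)$ and $\mathrm{Ber}(q)$, and data processing gives $\mathrm{KL}(\rho_0, \rho_1) \ge \mathrm{KL}(\mathrm{Ber}(p), \mathrm{Ber}(q))$. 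This data-processing step itself follows from Jensen's inequality (equivalently the log-sum inequality) applied to $\log f$ over the two-set partition $\{A^\ast,(A^\ast)^c\}$, which I would either cite or include as a one-line justification. Combining the two displays, it then suffices to prove the scalar inequality $2(p-q)^2 \le \mathrm{KL}(\mathrm{Ber}(p),\mathrm{Ber}(q))$ for all $p,q \in [0,1]$.

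For the scalar inequality, fix $q$ and define
\[
g(p) = p\log\frac{p}{q} + (1-p)\log\frac{1-p}{1-q} - 2(p-q)^2.
\]
A direct differentiation gives $g(q) = 0$, $g'(q) = 0$, and $g''(p) = \frac{1}{p(1-p)} - 4 \ge 0$, where the final inequality uses $p(1-p) \le \frac14$. Hence $g$ is convex with a stationary point at $p=q$, so $g(p) \ge 0$ for every $p$, which is exactly the claimed scalar bound. Taking square roots and substituting $p-q = \|\rho_0 - \rho_1\|_1$ yields $\|\rho_0-\rho_1\|_1 \le \sqrt{\tfrac12 \mathrm{KL}(\rho_0,\rho_1)}$, completing the argument.

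The main obstacle I anticipate is the reduction rather than the calculus: one must justify that the total variation is attained on $A^\ast$ and that data processing applies in the general (possibly non-discrete) measurable setting, and one must dispose of the boundary cases $q \in \{0,1\}$. These last cases are in fact forced to be trivial by absolute continuity: if $q = \rho_1(A^\ast) = 0$ then $\rho_0 \ll \rho_1$ gives $p = 0$, and symmetrically $q = 1$ forces $p = 1$, so both sides vanish and no limiting interpretation of the logarithms is needed. Once these measure-theoretic points are settled, the convexity step is immediate.
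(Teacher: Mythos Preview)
Your proof is correct and follows the classical two-step route (reduction to Bernoulli via data processing, then a convexity computation). However, the paper does not actually prove this lemma: it is listed in the auxiliary-lemmas appendix as a standard result and stated without proof, alongside other stock inequalities such as Chernoff--Hoeffding and Bernstein. So there is no ``paper's own proof'' to compare against; your argument simply supplies what the paper takes for granted.
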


\begin{lemma}\label{lemma-expected-exponential-average-bernoulli}
	Let $X_1,\dots,X_n$ be i.i.d. Bernoulli random variable with parameters $p_1, \dots, p_n$, respectively, then, for any $\kappa >0 $:\vspace{-0.1cm}
	\[
	\mathbb{E}\left[
	e^{-\kappa\sum\limits_{j=1}^n X_j}
	\right]
	\le
	e^{-\sum\limits_{j=1}^{n}p_j/10}
	+
	e^{-\kappa \sum\limits_{j=1}^{n}p_j /2}.
	\]
\end{lemma}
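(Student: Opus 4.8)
The plan is to exploit independence to factorize the expectation, apply the elementary bound $1-x\le e^{-x}$, and then split into two regimes according to the size of $\kappa$. First I would use independence of $X_1,\dots,X_n$ to write
\[
\mathbb{E}\left[e^{-\kappa\sum_{j=1}^n X_j}\right]
=\prod_{j=1}^n\mathbb{E}\left[e^{-\kappa X_j}\right]
=\prod_{j=1}^n\left(1-p_j\left(1-e^{-\kappa}\right)\right).
\]
Setting $P:=\sum_{j=1}^n p_j$ and noting that $p_j\left(1-e^{-\kappa}\right)\in[0,1]$ for $\kappa>0$, the inequality $1-x\le e^{-x}$ applied termwise yields
\[
\mathbb{E}\left[e^{-\kappa\sum_{j=1}^n X_j}\right]
\le\exp\left(-\left(1-e^{-\kappa}\right)P\right).
\]

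The crux is then the pointwise estimate $1-e^{-\kappa}\ge\min\{\kappa/2,\,1/10\}$ for all $\kappa>0$, which I would verify in two cases. If $\kappa\ge\ln(10/9)$ then $e^{-\kappa}\le 9/10$, so $1-e^{-\kappa}\ge 1/10$. If $0<\kappa\le\ln(10/9)$, set $f(\kappa):=1-e^{-\kappa}-\kappa/2$; since $f(0)=0$ and $f'(\kappa)=e^{-\kappa}-1/2>0$ on $(0,\ln 2)$, and because $\ln(10/9)<\ln 2$, one gets $f\ge 0$ on $(0,\ln(10/9)]$, i.e. $1-e^{-\kappa}\ge\kappa/2$. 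Combining the two cases gives the claimed pointwise bound.

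Finally, using $P\ge 0$ and $\max\{a,b\}\le a+b$ for $a,b\ge 0$,
\[
\exp\left(-\left(1-e^{-\kappa}\right)P\right)
\le\exp\left(-\min\left\{\tfrac{\kappa}{2},\tfrac{1}{10}\right\}P\right)
=\max\left\{e^{-\kappa P/2},\,e^{-P/10}\right\}
\le e^{-P/10}+e^{-\kappa P/2},
\]
which is exactly the asserted inequality. The argument is entirely elementary; the only place demanding any care is the pointwise inequality $1-e^{-\kappa}\ge\min\{\kappa/2,1/10\}$, and in particular checking that the threshold $\ln(10/9)$ lies below $\ln 2$ so that the ``small $\kappa$'' and ``large $\kappa$'' regimes fit together — this is the main (minor) obstacle, and everything else is routine.
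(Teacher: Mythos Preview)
Your proof is correct and takes a genuinely different route from the paper. The paper argues probabilistically: it defines the event $\mathcal{E}=\{\sum_j X_j<\sum_j p_j/2\}$, bounds $\mathbb{P}\{\mathcal{E}\}\le e^{-\sum_j p_j/10}$ via Bernstein's inequality, and then splits $\mathbb{E}[e^{-\kappa\sum_j X_j}]$ by the law of total expectation, using the trivial bound $e^{-\kappa\sum_j X_j}\le 1$ on $\mathcal{E}$ and $e^{-\kappa\sum_j X_j}\le e^{-\kappa\sum_j p_j/2}$ on $\mathcal{E}^c$. You instead compute the moment generating function directly via independence, obtain $\prod_j(1-p_j(1-e^{-\kappa}))\le e^{-(1-e^{-\kappa})P}$ from $1-x\le e^{-x}$, and then reduce everything to the elementary scalar inequality $1-e^{-\kappa}\ge\min\{\kappa/2,1/10\}$. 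Your argument is self-contained and avoids invoking a concentration inequality altogether, which is a net simplification; the paper's decomposition, on the other hand, makes transparent \emph{why} the two terms appear (one from the deviation event, one from the typical behavior), and its template generalizes more readily to settings where the MGF does not factor nicely.
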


\proof   \;\; \;Define event $\mathcal{E}$ to be the event that $\sum\limits_{j=1}^n X_j < \sum\limits_{j=1}^{n}p_j/2$. By Lemma \ref{lemma-Bernstein-inequality}, we have:
\begin{align*}
	\mathbb{P}\left\{ E \right\} \le e^{-\sum\limits_{j=1}^{n}p_j/10}.
\end{align*}
By the law of total expectation:
\begin{align}
	\mathbb{E}\left[ e^{-\kappa\sum\limits_{j=1}^n X_j}\right]
	&\overset{\text{}}{\le}
	\mathbb{E}\left[ e^{-\kappa\sum\limits_{j=1}^n X_j} \;\middle|\; \mathcal{E}\right] \cdot \mathbb{P}\{\mathcal{E}\} + \mathbb{E}\left[ e^{-\kappa\sum\limits_{j=1}^n X_j} \;\middle|\; \mathcal{E}^c\right] \cdot \mathbb{P}\{\mathcal{E}^c\}
	\nonumber\\
	&\overset{\text{(a)}}{\le}
	\mathbb{P}\{E\} + \mathbb{E}\left[ e^{-\kappa\sum\limits_{j=1}^n X_j} \;\middle|\; \bar{E}\right] \le e^{-\sum\limits_{j=1}^{n}p_j/10} + e^{-\kappa \sum\limits_{j=1}^{n}p_j /2}.
	\nonumber
\end{align}
\endproof

\begin{lemma}\label{sum-exponentials}
	For any $\kappa>0$ and integers $n_1$ and $n_2$, one has: $\sum\limits_{j=n_1}^{n_2}e^{-\kappa j} \le  \frac{1}{\kappa}(e^{-\kappa (n_1-1)} - e^{\kappa n_2})$.
\end{lemma}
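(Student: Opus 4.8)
The plan is to bound the finite sum by a corresponding integral, exploiting the monotonicity of $x \mapsto e^{-\kappa x}$. First I would note that since $\kappa > 0$ the function $e^{-\kappa x}$ is nonincreasing, so for every integer $j$ and every $x \in [j-1,j]$ one has $e^{-\kappa j} \le e^{-\kappa x}$; integrating this pointwise inequality over the unit interval $[j-1,j]$ yields $e^{-\kappa j} \le \int_{j-1}^{j} e^{-\kappa x}\,\mathrm{d}x$.

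Next I would sum this bound over $j = n_1, \dots, n_2$. The comparison intervals $[j-1,j]$ are consecutive and non-overlapping, so the right-hand sides telescope into the integral over the single interval $[n_1-1, n_2]$, giving $\sum_{j=n_1}^{n_2} e^{-\kappa j} \le \int_{n_1-1}^{n_2} e^{-\kappa x}\,\mathrm{d}x$. Finally I would evaluate the integral as $\frac{1}{\kappa}\bigl(e^{-\kappa(n_1-1)} - e^{-\kappa n_2}\bigr)$, which is exactly the asserted bound (the exponent on the last term should read $-\kappa n_2$).

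There is essentially no obstacle here; the only point requiring a moment's attention is that the comparison interval must start at $n_1-1$, not $n_1$, so that the first summand $e^{-\kappa n_1}$ is correctly dominated. As an alternative one could argue purely algebraically from the closed form $\sum_{j=n_1}^{n_2} e^{-\kappa j} = e^{-\kappa n_1}\,\frac{1-e^{-\kappa(n_2-n_1+1)}}{1-e^{-\kappa}}$ together with the elementary inequality $1 - e^{-\kappa} \ge \kappa e^{-\kappa}$ (equivalently $e^{\kappa} \ge 1+\kappa$), but the integral comparison is the cleanest route and generalizes immediately.
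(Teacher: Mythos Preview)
Your proof is correct and follows exactly the same integral-comparison argument as the paper: bound $\sum_{j=n_1}^{n_2} e^{-\kappa j} \le \int_{n_1-1}^{n_2} e^{-\kappa x}\,dx$ and evaluate. You also correctly spotted the sign typo in the statement (the last exponent should be $-\kappa n_2$).
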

\proof \; \; \; We can upper bound the summation using an integral as follows:
\[
\sum\limits_{j=n_1}^{n_2}e^{-\kappa j} \le \int_{n_1-1}^{n_2} e^{-\kappa x} dx = \frac{1}{\kappa}(e^{-\kappa (n_1-1)} - e^{\kappa n_2}).
\]
\endproof

\vspace{-0.1cm}
\section{Additional Auxiliary Analysis}\label{appendix:Auxiliary proofs}\vspace{-0.1cm}
\label{subsec:app:corollaries}

\begin{corollary}\label{corollary-regret-upper bound-stationary} \textbf{\textup{(Near optimality under stationary information arrival process)}}
	Let $\pi$ be Thompson sampling with auxiliary observations tuned by $c > 0$. If $h_{k,t}$'s are i.i.d. Bernoulli random variables with parameter $\lambda$ then, for every $T \ge 1$:\vspace{-0.1cm}
	\[
	\mathbb{E}_{\vector{H}}\left[ \mathcal{R}^\pi_{\cal S}(\vector{H}, T) \right]
	\le
	\left(\sum\limits_{k\in\mathcal{K}}\Delta_k \right) \left(
	\frac{18c\sigma^2}{\Delta^2} \log \left( \min\left\{T+1, \frac{18c\hat{\sigma}^2+10\Delta^2}{\Delta^2\lambda}  \right\} \right) + C\l(1+\frac{1}{\Delta^4_k}\r)
	\right),\]
	for some constant $C$ independent of $T$.
\end{corollary}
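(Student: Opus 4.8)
The plan is to reduce the corollary to Theorem~\ref{theorem-Thompson-Sampling-regret-upper bound} followed by an explicit moment computation for the Bernoulli arrival process. For an \emph{arbitrary} matrix $\vector{H}$, Theorem~\ref{theorem-Thompson-Sampling-regret-upper bound} already bounds $\mathcal{R}^\pi_{\cal S}(\vector{H},T)$ by a sum over suboptimal arms of a term of the shape $\frac{\constref{TS1}}{\Delta}\log\l(\sum_{t=0}^{T}\exp\l(-\kappa\sum_{s=1}^{t}h_{k,s}\r)\r)$ together with an additive remainder of order $\Delta_k^{-3}$ plus a constant (these collect the $J_{k,1}$ and $J_{k,3}$ contributions of that proof, after using $\log x\le x$), where $\kappa$ is the decay constant appearing in that theorem. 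So it remains to take $\mathbb{E}_{\vector{H}}[\cdot]$, interchange the expectation with the finite sum over arms, and control, for each suboptimal arm $k$,
\[
\mathbb{E}_{\vector{H}}\l[\log\l(\sum_{t=0}^{T}\exp\l(-\kappa\sum_{s=1}^{t}h_{k,s}\r)\r)\r].
\]

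First I would apply Jensen's inequality in the favorable direction: since $\log$ is concave, this quantity is at most $\log\l(\sum_{t=0}^{T}\mathbb{E}_{\vector{H}}[\exp(-\kappa S_{k,t})]\r)$ with $S_{k,t}:=\sum_{s=1}^{t}h_{k,s}$. Because the $h_{k,s}$ are i.i.d.\ $\mathrm{Ber}(\lambda)$, the inner expectation factorizes and equals $(1-\lambda(1-e^{-\kappa}))^{t}$ exactly; alternatively, Lemma~\ref{lemma-expected-exponential-average-bernoulli} gives $\mathbb{E}[\exp(-\kappa S_{k,t})]\le e^{-\lambda t/10}+e^{-\kappa\lambda t/2}$, which is the route that yields the precise constant $10$ in the statement. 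Either way the sum over $t=0,\dots,T$ is geometric, so I would bound it two complementary ways: trivially by $T+1$ (each summand is at most $1$), and by the corresponding infinite geometric series, which after the elementary estimate $1-e^{-x}\ge x/(1+x)$ is of order $\lambda^{-1}(1+\kappa^{-1})$. Taking the minimum of the two bounds inside the logarithm and substituting the explicit value of $\kappa$ produces exactly the $\min\{T+1,\,(18c\hat\sigma^{2}+10\Delta^{2})/(\Delta^{2}\lambda)\}$ term.

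Finally I would reassemble the pieces: recall that the regret charged to arm $k$ equals $\Delta_k$ times its expected pull count, use $\Delta\le\Delta_k$ to rewrite the leading $\Delta_k^{-1}$ factor as $\Delta_k/\Delta^{2}$, and fold the $O(\Delta_k^{-3})$ and $O(1)$ remainders of Theorem~\ref{theorem-Thompson-Sampling-regret-upper bound} (after one more application of $\log x\le x$) into the catch-all $C(1+\Delta_k^{-4})$; summing over $k\in\mathcal{K}\setminus\{k^{\ast}\}$ and harmlessly padding the sum with $k^{\ast}$ yields the factor $\sum_{k\in\mathcal{K}}\Delta_k$ in front.

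I do not anticipate a genuine obstacle. The only conceptual point is to apply Jensen so that $\mathbb{E}\log\le\log\mathbb{E}$ (the reverse would be useless), after which everything collapses to summing a geometric series. The most delicate part is purely cosmetic: threading the explicit constants coming from the choices $x_k=\mu_k+\Delta_k/4$, $y_k=\mu_k+\Delta_k/2$ used inside the proof of Theorem~\ref{theorem-Thompson-Sampling-regret-upper bound}, and from Lemma~\ref{lemma-expected-exponential-average-bernoulli}, so that they collapse cleanly into the stated $18c\sigma^{2}/\Delta^{2}$ multiplier and $(18c\hat\sigma^{2}+10\Delta^{2})/(\Delta^{2}\lambda)$ threshold.
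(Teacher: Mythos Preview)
Your proposal is correct and follows essentially the same route as the paper: invoke Theorem~\ref{theorem-Thompson-Sampling-regret-upper bound}, push the expectation through the concave $\log$ via Jensen, bound $\mathbb{E}[\exp(-\kappa S_{k,t})]$ using Lemma~\ref{lemma-expected-exponential-average-bernoulli}, sum the resulting geometric terms (the paper does this via Lemma~\ref{sum-exponentials}), and then take the minimum with the trivial bound $T+1$. Your bookkeeping on the $\Delta_k$ factors and the absorption of the additive remainders into $C(1+\Delta_k^{-4})$ is also in line with the paper.
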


\proof
\begin{align}
	\mathbb{E}_{\vector{H}}\left[ \mathcal{R}^\pi_{\cal S}(\vector{H}, T) \right]
	&\overset{\text{(a)}}{\le}
	\mathbb{E}_{\vector{H}}\left[\sum\limits_{k\in\mathcal{K}} \frac{18c\sigma^2\Delta_k}{\Delta^2} \log \left( \sum\limits_{t=1}^T \exp\left(
	{\frac{-\Delta^2}{18c\hat{\sigma}^2}\sum\limits_{s = 1}^t h_{k,s}}
	\right)
	\right) + C\Delta_k\l(1+\frac{1}{\Delta^4_k}\r) \right]
	\nonumber\\
	&\overset{\text{(b)}}{\le}
	\sum\limits_{k\in\mathcal{K}} \left[ \frac{18c\sigma^2\Delta_k}{\Delta^2} \log \left( \mathbb{E}_{\vector{H}} \left[\sum\limits_{t=1}^T \exp\left(
	{\frac{-\Delta^2}{18c\hat{\sigma}^2}\sum\limits_{s = 1}^t h_{k,s}}
	\right)
	\right]\right) + C\Delta_k\l(1+\frac{1}{\Delta^4_k}\r) \right]
	\nonumber\\
	&\overset{\text{(c)}}{\le}
	\left(\sum\limits_{k\in\mathcal{K}} \Delta_k \right) \cdot \left( \frac{18c\sigma^2}{\Delta^2} \log \left( \frac{1-\exp\left(
		{-T\lambda/10}
		\right)
	}{\lambda/10} + \frac{1-\exp\left(
		{-\Delta^2T\lambda/18c\hat{\sigma}^2}
		\right)
	}{\Delta^2\lambda/18c\hat{\sigma}^2} \right) + C\l(1+\frac{1}{\Delta^4_k}\r) \right)
	\nonumber \\
	&{\le}
	\left(\sum\limits_{k\in\mathcal{K}} \Delta_k \right) \cdot \left( \frac{18c\sigma^2}{\Delta^2} \log \left( \frac{1}{\lambda/10} + \frac{1}{\Delta^2\lambda/18c\hat{\sigma}^2} \right) + C\l(1+\frac{1}{\Delta^4_k}\r) \right),
\end{align}
where: (a) holds by Theorem \ref{theorem-Thompson-Sampling-regret-upper bound}, (b) follows from Jensen's inequality and the concavity of $log(\cdot)$, and (c) holds by Lemmas \ref{lemma-expected-exponential-average-bernoulli} and \ref{sum-exponentials}. Noting that $\sum\limits_{t=0}^T \exp\left( {\frac{-\Delta^2}{18c\hat{\sigma}^2}\sum\limits_{s = 1}^t h_{k,s}} \right) < T + 1$, the result is established. \endproof

\begin{corollary}\label{corollary-regret-upper bound-decaying-rate} \textbf{\textup{(Near optimality under diminishing information arrival process)}}
	Let $\pi$ be Thompson sampling with auxiliary observations tuned by $c > 0$. If $h_{k,t}$'s are random variables such that for some $\kappa \in~\mathbb{R}^+$,
	$\mathbb{E}\left[\sum_{s=1}^t h_{k,s}\right] = \lceil  \frac{\hat{\sigma}^2 \kappa}{2\Delta^2} \log t \rceil$ for each arm $k \in \mathcal{K}$ at each time step $t$, then for every $T \ge 1$:\vspace{-0.1cm}
	\[
	\mathbb{E}_{\vector{H}}\left[ \mathcal{R}^\pi_{\cal S}(\vector{H}, T) \right]
	\le
	\left(\sum\limits_{k\in\mathcal{K}}\Delta_k \right) \left(
	\frac{18c\sigma^2}{\Delta^2} \log \left(2 +  \frac{T^{1-\frac{\kappa}{72c}}-1}{1-\frac{\kappa}{72c}}
	+
	\frac{T^{1-\frac{\kappa\hat{\sigma}^2}{20\Delta^2}}-1}{1-\frac{\kappa\hat{\sigma}^2}{20\Delta^2}}
	\right) + C\l(1+\frac{1}{\Delta^4_k}\r)
	\right),\vspace{-0.1cm}
	\]
	for some constant $C$ independent of $T$.
\end{corollary}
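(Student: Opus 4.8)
\textbf{Proof plan for Corollary~\ref{corollary-regret-upper bound-decaying-rate}.}
The plan is to follow exactly the same three-step template used to prove Corollary~\ref{corollary-regret-upper bound-stationary}, replacing the stationary Bernoulli arrivals with the diminishing-rate arrivals. First I would invoke Theorem~\ref{theorem-Thompson-Sampling-regret-upper bound} to get, for each arm $k$, the bound
\[
\mathcal{R}^\pi_{\cal S}(\vector{H}, T)
\le
\sum_{k\in\mathcal{K}}\left[
\frac{18c\sigma^2\Delta_k}{\Delta^2}\log\left(\sum_{t=0}^T\exp\left(\frac{-\Delta^2}{18c\hat\sigma^2}\sum_{s=1}^t h_{k,s}\right)\right)
+ C\Delta_k\left(1+\frac{1}{\Delta_k^4}\right)\right],
\]
absorbing the constants $\constref{TS1},\constref{TS2},\constref{TS3}$ into $\frac{18c\sigma^2}{\Delta^2}$ and $C$ as in the sister corollary (here using $\Delta_k\ge\Delta$). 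Then I would take $\mathbb{E}_{\vector{H}}[\cdot]$, push the expectation inside the logarithm by Jensen's inequality and concavity of $\log$, reducing the problem to bounding $\mathbb{E}_{\vector{H}}\!\left[\sum_{t=0}^T\exp\!\left(-\tfrac{\Delta^2}{18c\hat\sigma^2}\sum_{s=1}^t h_{k,s}\right)\right]$.

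The key computation is to control each term $\mathbb{E}\!\left[\exp\!\left(-\tfrac{\Delta^2}{18c\hat\sigma^2}\sum_{s=1}^t h_{k,s}\right)\right]$. Writing $\kappa'=\tfrac{\kappa}{36c}$ (so that $\tfrac{\Delta^2}{18c\hat\sigma^2}\cdot\tfrac{\hat\sigma^2\kappa}{2\Delta^2}=\tfrac{\kappa}{36c}$), and recalling $\mathbb{E}\left[\sum_{s=1}^t h_{k,s}\right]=\lceil\tfrac{\hat\sigma^2\kappa}{2\Delta^2}\log t\rceil$, I expect to apply the same concentration device as in Lemma~\ref{lemma-expected-exponential-average-bernoulli} (Bernstein's inequality, Lemma~\ref{lemma-Bernstein-inequality}): split on the event that $\sum_{s=1}^t h_{k,s}$ is at least half its mean. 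On the good event the exponential is at most $\exp(-\tfrac{\kappa'}{2}\log t)=t^{-\kappa'/2}$, and the bad event has probability at most $\exp(-\tfrac{1}{10}\cdot\tfrac{\hat\sigma^2\kappa}{2\Delta^2}\log t)=t^{-\kappa\hat\sigma^2/20\Delta^2}$. Hence each summand is bounded by $t^{-\kappa/72c}+t^{-\kappa\hat\sigma^2/20\Delta^2}$, and summing $t=1,\dots,T$ (with the $t=0$ term contributing $1$, and bounding $\sum_{t=1}^T t^{-\beta}\le 1+\frac{T^{1-\beta}-1}{1-\beta}$ via an integral comparison — this is the analogue of Lemma~\ref{sum-exponentials}) gives
\[
\mathbb{E}_{\vector{H}}\!\left[\sum_{t=0}^T\exp\!\left(\frac{-\Delta^2}{18c\hat\sigma^2}\sum_{s=1}^t h_{k,s}\right)\right]
\le
2+\frac{T^{1-\frac{\kappa}{72c}}-1}{1-\frac{\kappa}{72c}}+\frac{T^{1-\frac{\kappa\hat\sigma^2}{20\Delta^2}}-1}{1-\frac{\kappa\hat\sigma^2}{20\Delta^2}}.
\]
Plugging this into the log and combining with the residual $\sum_k C\Delta_k(1+\Delta_k^{-4})$ term (and noting $\Delta_k\ge\Delta$ lets one factor $\sum_k\Delta_k$ out front) yields the stated bound.

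The main obstacle — or at least the only place requiring care — is matching the precise constants ($\tfrac{\kappa}{72c}$ and $\tfrac{\kappa\hat\sigma^2}{20\Delta^2}$) in the two exponents, since these depend on how the Bernstein split is calibrated and on absorbing Theorem~\ref{theorem-Thompson-Sampling-regret-upper bound}'s constants into the prefactor $18c\sigma^2/\Delta^2$; I would mirror the bookkeeping in the proof of Corollary~\ref{corollary-regret-upper bound-stationary} verbatim. A secondary subtlety is that the expected cumulative arrivals involve a ceiling $\lceil\cdot\rceil$, but since $\lceil x\rceil\ge x$ this only helps (makes the exponential smaller), so it can be dropped without loss. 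Everything else is a routine Jensen-plus-integral-comparison argument identical in structure to the stationary case.
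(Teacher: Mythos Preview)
Your proposal is correct and follows essentially the same approach as the paper's proof: invoke Theorem~\ref{theorem-Thompson-Sampling-regret-upper bound}, apply Jensen's inequality to push $\mathbb{E}_{\vector{H}}$ inside the logarithm, use the Bernstein-based split of Lemma~\ref{lemma-expected-exponential-average-bernoulli} to bound each $\mathbb{E}\big[\exp(-\tfrac{\Delta^2}{18c\hat\sigma^2}\sum_{s\le t}h_{k,s})\big]$ by $t^{-\kappa/72c}+t^{-\kappa\hat\sigma^2/20\Delta^2}$, and finish with an integral comparison. Your identification of the two exponents and the handling of the ceiling match the paper exactly.
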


\proof
\begin{align}
	\mathbb{E}_{\vector{H}}\left[
	\mathcal{R}^\pi_{\cal S}(\vector{H}, T) \right]
	&\overset{\text{(a)}}{\le}
	\mathbb{E}_{\vector{H}}\left[
	\sum\limits_{k\in\mathcal{K}} \frac{18c\sigma^2\Delta_k}{\Delta^2} \log \left( \sum\limits_{t=1}^T \exp\left(
	{\frac{-\Delta^2}{18c\hat{\sigma}^2}\sum\limits_{s = 1}^t h_{k,s}}
	\right)
	\right) + C \Delta_k\l(1+\frac{1}{\Delta^4_k}\r) \right]
	\nonumber\\
	&\overset{\text{(b)}}{\le}
	\sum\limits_{k\in\mathcal{K}}\left[ \frac{18c\sigma^2\Delta_k}{\Delta^2} \log \left(
	\mathbb{E}_{\vector{H}}\left[ \sum\limits_{t=1}^T \exp\left(
	{\frac{-\Delta^2}{18c\hat{\sigma}^2}\sum\limits_{s = 1}^t h_{k,s}}
	\right) \right]
	\right) + C\Delta_k\l(1+\frac{1}{\Delta^4_k}\r) \right]
	\nonumber\\
	&\overset{\text{(c)}}{\le}
	\left(\sum\limits_{k\in\mathcal{K}}\Delta_k \right) \cdot \left(  \frac{18c\sigma^2\Delta_k}{\Delta^2} \log \left( \sum\limits_{t=1}^T \exp\left(
	{\frac{-\kappa}{72c}\log t}
	\right) + \exp\left(
	{\frac{-\kappa\hat{\sigma}^2}{20\Delta^2}\log t}
	\right)
	\right) + C\l(1+\frac{1}{\Delta^4_k}\r)\right)
	\nonumber\\
	&\overset{\text{}}{=}
	\left(\sum\limits_{k\in\mathcal{K}}\Delta_k \right) \cdot \left( \frac{18c\sigma^2\Delta_k}{\Delta^2} \log \left( \sum\limits_{t=1}^T t^{\frac{-\kappa}{72c}}
	+ t^{\frac{-\kappa\hat{\sigma}^2}{20\Delta^2}}
	\right)
	+ C\l(1+\frac{1}{\Delta^4_k}\r)\right)
	\nonumber\\
	&\overset{\text{}}{\le}
	\left(\sum\limits_{k\in\mathcal{K}}\Delta_k \right) \cdot \left( \frac{18c\sigma^2\Delta_k}{\Delta^2} \log \left(2 +  \int_{1}^T \left(
	t^{\frac{-\kappa}{72c}} + t^{\frac{-\kappa\hat{\sigma}^2}{20\Delta^2}}
	\right) dt
	\right)
	+ C\l(1+\frac{1}{\Delta^4_k}\r)\right)
	\nonumber\\
	&\overset{\text{}}{=}
	\left(\sum\limits_{k\in\mathcal{K}}\Delta_k \right) \cdot \left( \frac{18c\sigma^2\Delta_k}{\Delta^2} \log \left(2 +  \frac{T^{1-\frac{\kappa}{72c}}-1}{1-\frac{\kappa}{72c}}
	+
	\frac{T^{1-\frac{\kappa\hat{\sigma}^2}{20\Delta^2}}-1}{1-\frac{\kappa\hat{\sigma}^2}{20\Delta^2}}
	\right)
	+ C\l(1+\frac{1}{\Delta^4_k}\r)\right),
\end{align}
where: (a) holds by Theorem \ref{theorem-Thompson-Sampling-regret-upper bound}, (b) follows from Jensen's inequality and the concavity of $log(\cdot)$, and (c) holds by Lemma \ref{lemma-expected-exponential-average-bernoulli}. \endproof

\subsection{Proof for Subsection \ref{example-stochastic-lower bound}}\label{appendix-proof-of-example-Stationary-uniform}\vspace{-0.1cm}
We note that:
\begin{align}\label{eq-proof-example-Stationary1}
	\mathbb{E}_{\vector{H}}\left[ \mathcal{R}^\pi_{\cal S}(\vector{H}, T) \right]
	&\overset{\text{(a)}}{\ge}
	\mathbb{E}_{\vector{H}}\left[\frac{\sigma^2(K-1)}{4K\Delta} \sum\limits_{k=1}^K\log \left( \frac{\Delta^2}{\sigma^2K}\sum\limits_{t=1}^{T} e^{-\frac{2\Delta^2}{\hat{\sigma}^2}\sum\limits_{s=1}^th_{s,k}}\right)\right]
	\nonumber\\
	&\overset{\text{(b)}}{\ge}
	\frac{\sigma^2(K-1)}{4\Delta} \log \left( \frac{\Delta^2}{\sigma^2K}\sum\limits_{t=0}^{T-1} e^{-\frac{2 \Delta^2}{\hat{\sigma}^2} t\lambda}\right)
	\nonumber\\
	&\overset{\text{}}{=}
	\frac{\sigma^2(K-1)}{4\Delta} \log \left( \frac{\Delta^2}{\sigma^2K} \cdot \frac{1 - e^{-\frac{2 \Delta^2}{\hat{\sigma}^2} T\lambda}}{1 - e^{-\frac{2 \Delta^2}{\hat{\sigma}^2} \lambda}}\right)
	\nonumber\\
	&\overset{\text{(c)}}{\ge}
	\frac{\sigma^2(K-1)}{4\Delta} \log \left(\frac{1 - e^{-\frac{2 \Delta^2}{\hat{\sigma}^2} T\lambda}}{ 2K\lambda\sigma^2/\hat{\sigma}^2}\right),
\end{align}
where (a) holds by Theorem \ref{theorem-lower bound-general}, (b) follows from the fact that log-sum-exp is a convex function (see \cite[Example 3.1.5]{boyd2004convex}), and (c) follows from $1 - e^{-\frac{2 \Delta^2}{\hat{\sigma}^2} \lambda} \le 2\Delta^2\lambda / \hat{\sigma}^2$. Now we consider the following cases:
\begin{enumerate}
	\item If $2\Delta^2T\lambda \le \hat{\sigma}^2/2$ then, by inequality $1-e^{-x} \ge 2(1-e^{-1/2})x$ for $0 \le x \le 1/2$, and (\ref{eq-proof-example-Stationary1}), we have
	\[
	\mathbb{E}_{\vector{H}}\left[ \mathcal{R}^\pi_{\cal S}(\vector{H}, T) \right]
	\ge
	\frac{\sigma^2(K-1)}{4\Delta} \log \left( \frac{(1-e^{-1/2})\Delta^2 T}{\sigma^2K}\right).
	\]
	
	\item If $2\Delta^2T\lambda \ge \sigma^2/2$ then, by inequality $1-e^{-\frac{2 \Delta^2}{\sigma^2} T\lambda} \ge 1 - e^{-1/2}$, and (\ref{eq-proof-example-Stationary1}), we have
	\begin{equation} \label{eq-proof-example-Stationary2}
		\quad\quad\quad\quad\mathbb{E}_{\vector{H}}\left[ \mathcal{R}^\pi_{\cal S}(\vector{H}, T) \right]
		\ge
		\frac{\sigma^2(K-1)}{4\Delta} \log \left( \frac{1-e^{-1/2}}{2\lambda K\sigma^2/\hat{\sigma}^2}\right).\quad\quad\quad\quad\qed
	\end{equation}
\end{enumerate}

\vspace{-0.1cm}
\subsection{Analysis of a Myopic Policy under the Setting of Example \ref{example-stochastic-lower bound}.} \vspace{-0.1cm} \label{appendix-proof-of-myopic-optimality]}
Assume that $\pi$ is a myopic policy. Consider a suboptimal arm $k$. One has:
\begin{align*}
	\mathbb{P} \left\{
	\pi_t = k
	\right\}
	\le
	\mathbb{P} \left\{
	\bar{X}_{k,t}^{\known} >  \bar{X}_{k^\ast, n^{\known} _{k,t}}
	\right\}
	\le
	\mathbb{P} \left\{
	\bar{X}_{k,t}^{\known} > \mu_k + \frac{\Delta_k}{2}
	\right\}
	+ \mathbb{P} \left\{
	\bar{X}_{k^\ast,t}^{\known} < \mu_{k^\ast} + \frac{\Delta_k}{2}
	\right\}.
\end{align*}
We will upper bound $\mathbb{P} \left\{
\bar{X}_{k,t}^{\known} > \mu_k + \frac{\Delta_k}{2}
\right\}$. $\mathbb{P} \left\{
\bar{X}_{k^\ast,t}^{\known} < \mu_{k^\ast} + \frac{\Delta_k}{2}
\right\}$ can be upper bounded similarly.
\begin{align*}
	\mathbb{P} \left\{
	\bar{X}_{k,t}^{\known} > \mu_k + \frac{\Delta_k}{2}
	\right\}
	&\le
	\mathbb{P} \left\{
	\bar{X}_{k,t}^{\known} > \mu_k + \frac{\Delta_k}{2} \;\middle|\; \sum_{s=1}^{t} h_{k,s} > \frac{\lambda t}{ 2 }
	\right\}
	+
	\mathbb{P} \left\{
	\sum_{s=1}^{t} h_{k,s} \le \frac{\lambda t}{ 2 }
	\right\}
	\overset{(a)}{\le}
	e^{\frac{-\Delta_k^2 \lambda t}{16\hat{\sigma}^2}} + e^{\frac{-\lambda t}{10}},
\end{align*}
where (a) follows from Lemma \ref{lemma-Chernoff-Hoeffding-bound}, and Lemma \ref{lemma-Bernstein-inequality}. As a result the cumulative regret is upper bounded by
\[
\sum_{k\in\mathcal{K} \setminus k^\ast} \left(
\frac{32 \hat{\sigma}^2}{\lambda \Delta_k} + \frac{20\Delta_k}{\lambda}
\right).
\]
Therefore, if $\Delta$ is a constant independent of $T$ then the regret is upper bounded by a constant. \qed

\vspace{-0.1cm}
\subsection{Proof for Subsection \ref{example-Decayin-rate}.}  \label{appendix-proof-of-example-decaying}\vspace{-0.1cm}
We note that:
\begin{align}\label{eq-proof-example-Stationary1}
	\mathbb{E}_{\vector{H}}\left[
	\mathcal{R}^\pi_{\cal S}(\vector{H}, T) \right]
	&\overset{\text{(a)}}{\ge}
	\mathbb{E}_{\vector{H}}\left[
	\frac{\sigma^2(K-1)}{4K\Delta} \sum\limits_{k=1}^K\log \left( \frac{\Delta^2}{\sigma^2K}\sum\limits_{t=1}^{T} e^{-\frac{2\Delta^2}{\hat{\sigma}^2}\sum\limits_{s=1}^th_{s,k}}\right) \right]
	\nonumber\\
	&\overset{\text{(b)}}{\ge}
	\frac{\sigma^2(K-1)}{4\Delta} \log \left( \frac{\Delta^2}{\sigma^2K}\sum\limits_{t=1}^{T} e^{-\kappa \log t}\right)
	\nonumber\\
	&\overset{\text{}}{=}
	\frac{\sigma^2(K-1)}{4\Delta} \log \left( \frac{\Delta^2}{\sigma^2K} \sum\limits_{t=1}^{T} t^{-\kappa } \right)
	\nonumber\\
	&\overset{\text{}}{\ge}
	\frac{\sigma^2(K-1)}{4\Delta} \log \left( \frac{\Delta^2}{\sigma^2K} \int_{t=1}^{T+1} t^{-\kappa }dt \right)
	\nonumber\\
	&\overset{\text{}}{=}
	\frac{\sigma^2(K-1)}{4\Delta} \log \left(\frac{\Delta^2}{\sigma^2K} \cdot \frac{(T+1)^{1-\kappa} - 1}{1-\kappa} \right),
\end{align}
where (a) holds by Theorem \ref{theorem-lower bound-general}, (b) follows from the fact that log-sum-exp is a convex function (see \cite[Example 3.1.5]{boyd2004convex}).
\qed

\vspace{-0.1cm}
\subsection{Analysis of a Myopic Policy under the Setting of Example \ref{example-Decayin-rate}.} \vspace{-0.1cm} \label{appendix-analysis-of-myopic-example-decaying}
In this section we will shortly prove that if $\mathbb{E}\left[\sum\limits_{s=1}^t h_{k,s} \right]= \lceil \frac{\hat{\sigma}^2\kappa}{2\Delta^2} \log t\rceil$ for each arm $k$ and time step $t$ then a myopic policy achieves an asymptotic constant regret if $ \Delta_k^2 \kappa > 16\hat{\sigma}^2$. For any profile $\vectorgreek{\nu}$, we have:
\begin{align*}
	\quad\quad\quad\quad\mathbb{E}_{\vector{H}}\left[\mathcal{R}^\pi_{\cal S}(\vector{H}, T)\right]
	&\le
	\sum\limits_{k \in \mathcal{K}\setminus \{ k^\ast \} } \Delta_k \cdot \sum\limits_{t=1}^T \mathbb{E} e^{\frac{-\Delta^2 n^{\known} _{k,t}}{8\hat{\sigma}^2}} + \mathbb{E} e^{\frac{-\Delta^2 n^{\known} _{k^\ast,t}}{8\hat{\sigma}^2}}
	\\
	&\le
	\sum\limits_{k \in \mathcal{K}\setminus \{ k^\ast \} } \Delta_k \cdot \sum\limits_{t=1}^T t^{\frac{-\Delta_k^2 \kappa}{16\Delta^2}} + t^{\frac{-\Delta_k^2 \kappa}{16\Delta^2}} \le \sum\limits_{k \in \mathcal{K}\setminus \{ k^\ast \} } \frac{32\Delta^2}{\kappa\Delta_k}.\quad\quad\quad\quad\qed
\end{align*}

\vspace{-0.4cm}
\section{Endogenizing Exploration via Virtual Time Indices}\label{section:A near-optimal adaptive policy}\vspace{-0.1cm}

\subsection{Policy Design Intuition}\label{subsec:inuition}\vspace{-0.0cm}

In this subsection we first demonstrate, through the case of $\epsilon_t$-greedy policy, that policies with exogenous exploration rate may fail to achieve the lower bound in Theorem \ref{theorem-lower bound-general} in the presence of auxiliary information arrivals, and then develop intuition for how such policies may be adjusted to better leverage auxiliary information through the virtual time indices method. Formal policy design and analysis follow in~\S\ref{sec:formaldesign}.

\medskip
\noindent
\textbf{The Inefficiency of ``Naive" Adaptations of $\boldsymbol{\epsilon_t}$-greedy.} Despite the robustness that was established in \S4 for Thompson sampling and UCB, in general, accounting for auxiliary information while otherwise maintaining the policy structure may not suffice for achieving the lower bound established in Theorem \ref{theorem-lower bound-general}. To demonstrate this, consider the $\epsilon_t$-greedy policy (\citealt{auer2002finite-time}), which at each period~$t$ selects an arm randomly with probability~$\epsilon_t$ that is proportional to~$t^{-1}$, and with probability $1-\epsilon_t$ selects the arm with the highest reward estimate. Without auxiliary information, $\epsilon_t$-greedy guarantees rate-optimal regret of order $\log T$, but this optimality does not carry over to settings with other information arrival processes: as was visualized in Figure~\ref{fig:main-text-e-greedy}, using auxiliary observations to update estimates without appropriately adjusting the exploration rate leads to sub-optimality.\footnote{A natural alternative is to increase the time index by one (or any other constant) each time auxiliary information arrives. Such an approach, that could be implemented by arm-dependent time indices with update rule $\tau_{k,t} = \tau_{k,t-1}+1+h_{k,t}$, may lead to sub-optimality as well. For example, suppose $h_{k,1} = \left\lfloor\frac{C}{\Delta_k^2}\log T\right\rfloor$ for some constant $C>0$, and $h_{k,t} = 0$ for all $t>1$. By Theorems~2 and~3, when the constant $C$ is sufficiently large constant regret is achievable. However, following the above update rule the regret incurred due to exploration is at least of order $\log(\frac{\Delta^2_{k}T}{\log T})$.} For example, consider stationary information arrivals (described in \S\ref{example-stochastic-lower bound}), with arrival rate $\lambda \geq \frac{\hat{\sigma}^2}{4\Delta^2T}$. While constant regret is achievable in this case, without adjusting its exploration rate, $\epsilon_t$-greedy explores suboptimal arms at a rate that is independent of the number of auxiliary observations, and still incurs regret of order $\log T$.

\medskip
\noindent \textbf{Over-Exploration in the Presence of Additional Information.}
For simplicity, consider a 2-armed bandit problem with $\mu_1 > \mu_2$. At each time $t$ the $\epsilon_t$-greedy policy explores over arm 2 independently with probability $\epsilon_t = ct^{-1}$ for some constant $c>0$. As a continuous-time proxi for the minimal number of times arm $2$ is selected by time $t$, consider the function\vspace{-0.15cm}
\[
\int_{s=1}^{t} \epsilon_s ds = c\int_{s=1}^{t}\frac{ds}{s} = c\log t.\vspace{-0.15cm}
\]
The probability of best-arm misidentification at period $t$ can be bounded from above by:\footnote{This upper bound on the probability of misidentifying the best arm can be obtained using standard concentration inequalities and is formalized, for example, in Step 5 of the proof of Theorem \ref{theorem-epsilon-greedy-upper bound}.}\vspace{-0.15cm}
\[
\exp\left(-\bar{c}\int_{s=1}^{t} \epsilon_s ds\right) \leq  \tilde{c} t^{-1},\vspace{-0.15cm}
\]
for some constants $\bar{c}$, $\tilde{c}>0$. Thus, 
setting $\epsilon_t = ct^{-1}$ balances losses from exploration and exploitation.\footnote{See related discussions on balancing losses from experimenting with suboptimal arms and from best arm misidentification, e.g., in \cite{auer2002finite-time}, \cite{langford2008epoch}, \cite{goldenshluger2013linear}, and \cite{bastani2015online}.}

Next, assume that just before time $t_0$, an additional independent reward observation of arm 2 is collected. Then, at time $t_0$, the minimal number of observations from arm 2 \emph{increases} to ($1 + c\log t_0$), and the upper bound on the probability of best-arm misidentification \emph{decreases} by factor~$e^{-\bar{c}}$:\vspace{-0.15cm}
\[
\exp\left(-\bar{c}\left(1+\int_{s=1}^{t} \epsilon_s ds\right)\right)
\le   e^{-\bar{c}}\cdot\tilde{c}t^{-1},\quad\quad\forall\; t\geq t_0.\vspace{-0.15cm}
\]
Therefore, when there are many auxiliary observations, the loss from best-arm misidentification is guaranteed to diminish, but performance might still be sub-optimal due to over-exploration.

\medskip\noindent
\textbf{Endogenizing the Exploration Rate.}
Note that there exists a \emph{future} time period $\hat{t}_0 \ge t_0$ such that:\vspace{-0.2cm}
\begin{equation}\label{eq-policy-intuition}
	1 +c\int_{1}^{t_0}\frac{ds}{s} = c\int_{1}^{\hat{t}_0}\frac{ds}{s}. \vspace{-0.1cm}
\end{equation}
In words, the minimal number of observations from arm 2 by time $t_0$, including the one that arrived just before $t_0$, equals (in expectation) the minimal number of observations from arm 2 by time $\hat{t}_0$ without any additional information arrivals.
Therefore, replacing $\epsilon_{t_0} = ct_0^{-1}$ with $\epsilon_{t_0} = c\hat{t}_0^{-1}$ would adjust the exploration rate to fit the amount of information \emph{actually} collected at $t=t_0$, and the respective loss from exploitation. The regret reduction by this adjustment is illustrated in Figure~\ref{fig:adaexplor}.
We therefore adjust the exploration rate to be $\epsilon_t = c(\tau(t))^{-1}$ for some \emph{virtual time} $\tau(\cdot)$. We set $\tau(t) = t$ for all $t < t_0$, and $\tau(t) = \hat{t}_0 + (t-t_0)$ for $t \ge t_0$.
\begin{figure}[t]
	\centering
	\includegraphics[height=2.2in]{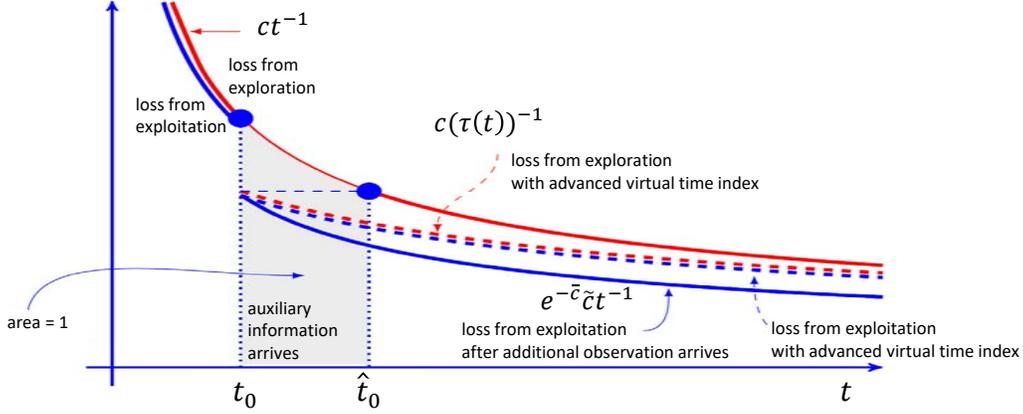}\vspace{-0.2cm}
	\caption{\small Losses from exploration and exploitation (normalized through division by $(\mu_1-\mu_2)$) when additional observation is collected just before time $t_0$, with and without replacing the standard time index $t$ with a virtual time index $\tau(t)$. With a standard time index, exploration is performed at rate $ct^{-1}$, which results in sub-optimality after time $t_0$. With a virtual time index that is advanced at time $t_0$, the exploration rate becomes $c(\tau(t))^{-1}$, which re-balances losses from exploration and exploitation (dashed lines coincide).}\label{fig:adaexplor}\vspace{-0.3cm}
\end{figure}
Solving \eqref{eq-policy-intuition} for $\hat{t}_0$, we write $\tau(t)$ in closed form:\vspace{-0.2cm}
\[
\tau(t) = \begin{cases}
	t & t < t_0\\
	c_0t_0 + ( t-t_0) & t \geq t_0,
\end{cases}\vspace{-0.2cm}
\]
for some constant $c_0>1$. Therefore, the virtual time grows together with $t$, and advanced by a multiplicative constant whenever an auxiliary observation is collected.

\vspace{-0.1cm}
\subsection{A Rate-Optimal Adaptive ${\epsilon_t}$-greedy Policy} \label{sec:formaldesign}\vspace{-0.1cm}
We apply the ideas discussed in \S\ref{subsec:inuition} to design an $\epsilon_t$-greedy policy with \textit{adaptive exploration} that dynamically adjusts the exploration rate in the presence of an unknown information arrival process. For simplicity, and consistent with standard versions of $\epsilon_t$-greedy (see, e.g., \citealt{auer2002finite-time}), the policy below assumes prior knowledge of the parameter $\Delta$.

Define $n^{\known} _{k,t}$ and $\bar{X}_{k, t}$ as in \eqref{eq-def-of-counters-and-empirical-mean}, and consider the following adaptation of the $\epsilon_t$-greedy policy.\vspace{-0.25cm}
\begin{framed}\small
	\noindent \textbf{$\epsilon_t$-greedy with adaptive exploration.}\label{Adjusted-epsilon-greedy-policy} Input: a tuning parameter $c>0$.\vspace{-0.1cm}
	\begin{enumerate}
		\item Initialization: set initial virtual times $\tau_{k,0} = 0$ for all $k\in \cal K$\vspace{-0.1cm}
		\item At each period $t = 1,2,\dots,T$:\vspace{-0.1cm}
		\begin{enumerate}
			\item Observe the vectors $\vectorgreek{h}_t$ and $\vector{Z}_t$, and update virtual time indices for all $k \in \mathcal{K}$:\vspace{-0.2cm}				\[
			\tau_{k,t} =
			\left(\tau_{k,t-1} + 1\right)\cdot \exp\left(\frac{h_{k,t}\Delta^2}{c\hat{\sigma}^2}\right)\vspace{-0.4cm}
			\]
			\item With probability $\min\left\{1, \frac{c\sigma^2}{\Delta^2}\sum\limits_{k^{\mathrm{aux}}=1}^K \tau_{k,t}^{-1}\right\}$ select an arm at random: (\emph{exploration})\vspace{-0.3cm}
			\[
			\pi_t = k \quad \mbox{with probability}\quad \frac{\tau_{k,t}^{-1}}{\sum\limits_{k=1}^K \tau_{k,t}^{-1}}, \quad \text{for all } k\in \cal K\vspace{-0.25cm}
			\]
			Otherwise, select an arm with the highest estimated reward: (\emph{exploitation})\vspace{-0.2cm}
			\[
			\pi_t = \arg \max_{k\in\mathcal{K}} \bar{X}_{k,t}^{\known} ,\vspace{-0.15cm}
			\]
			\item Receive and observe a reward $X_{\pi_t,t}$\vspace{-0.35cm}
		\end{enumerate}
	\end{enumerate}
\end{framed}\normalsize
\vspace{-0.3cm}

\noindent At every period $t$, the $\epsilon_t$-greedy with adaptive exploration policy dynamically reacts to the information sample path by advancing virtual time indices associated with different arms based on auxiliary observations that were collected since the last period. Then, the policy explores with probability that is proportional to $\sum\limits_{k^{\mathrm{aux}}=1}^K \tau_{k^{\mathrm{aux}},t}^{-1}$, and otherwise pulls the arm with the highest empirical mean reward.

Every time additional information on arm $k$ is observed, a carefully selected multiplicative factor is used to advance the virtual time index $\tau_{k,t}$ according to the update rule $\tau_{k,t} = \left(\tau_{k,t-1} + 1\right)\cdot \exp\left(\delta \cdot h_{k,t}\right)$, for some suitably selected $\delta$. In doing so, the policy effectively reduces exploration rates in order to explore over each arm $k$ at a rate that would have been appropriate \textit{without} auxiliary information arrivals at a \textit{future} time step $\tau_{k,t}$. This guarantees that the loss due to exploration is balanced with the loss due to best-arm misidentification throughout the horizon. Advancing virtual times based on the information sample path and the impact on the exploration rate of a policy are illustrated in Figure~\ref{fig:adaexplor}.
\begin{figure}[t]
	\centering
	\includegraphics[height=1.8in]{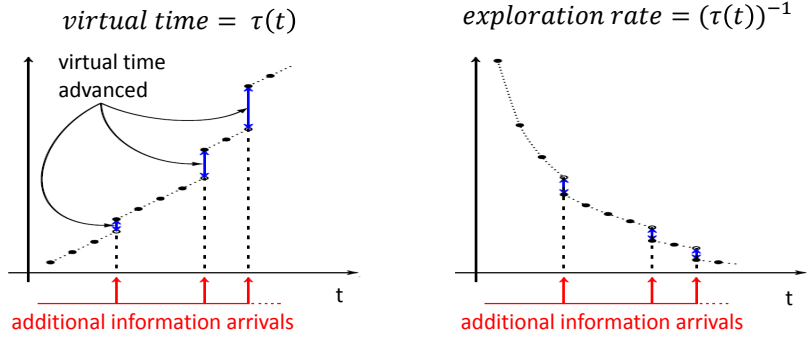}\vspace{-0.1cm}
	\caption{\small Illustration of the adaptive exploration approach. \textit{(Left)} Virtual time index $\tau(\cdot)$ is advanced using multiplicative factors whenever auxiliary information arrives. \textit{(Right)} Exploration rate decreases as a function of~$\tau(\cdot)$, exhibiting discrete ``jumps" whenever auxiliary information is collected.}\label{fig:adaexplor}\vspace{-0.3cm}
\end{figure}

The following result characterizes the guaranteed performance and establishes the rate optimality of $\epsilon_t$-greedy with adaptive exploration in the presence of unknown information arrival processes.\vspace{-0.1cm}

\begin{theorem}\label{theorem-epsilon-greedy-upper bound} \textbf{\textup{(Near optimality of $\epsilon_t$-greedy with adaptive exploration)}}
	Let $\pi$ be an $\epsilon_t$-greedy with adaptive exploration policy, tuned by $c > \max\left\{16,\frac{10\Delta^2}{\sigma^2}\right\}$. Then, for every $T \ge 1$, and for any information arrival matrix $\vector{H}$, and under prior knowledge of mappings $\{\phi_k\}$:\vspace{-0.15cm}
	\[
	\mathcal{R}^\pi_{\cal S}(\vector{H}, T)
	\le
	\sum\limits_{k\in\mathcal{K}}
	\frac{\constvar[eps-greedy1] }{\Delta}\log \left( \sum\limits_{t=t^\ast+1}^T \exp\left(-\frac{\Delta^2}{\constref{eps-greedy1}}\sum\limits_{s = 1}^t h_{k,\tau}\right)\right) +\constvar[eps-greedy2],\vspace{-0.15cm}
	\]
	where $\constref{eps-greedy1}$, and $\constref{eps-greedy2}$ are positive constants that depend only on $\sigma$ and $\hat{\sigma}$.
\end{theorem}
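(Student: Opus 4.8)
\textbf{Proof proposal for Theorem~\ref{theorem-epsilon-greedy-upper bound}.}

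The plan is to adapt the standard analysis of $\epsilon_t$-greedy (as in \citealt{auer2002finite-time}) while replacing the role of the real time index by the virtual time index $\tau_{k,t}$, so that the cumulative loss decomposes into an \emph{exploration} term and an \emph{exploitation} (best-arm misidentification) term, and both are controlled by the same quantity $\sum_{t}\tau_{k,t}^{-1}$, which in turn is a log-sum-exp of the information arrivals. First I would fix a problem instance $(\vectorgreek{\nu},\vectorgreek{\nu}^{\aux})\in\cS$ and a suboptimal arm $k$, and write the expected number of pulls of arm $k$ as a sum of two contributions: pulls that occur during exploration steps, and pulls that occur during exploitation steps. For the exploration contribution, since at time $t$ arm $k$ is selected for exploration with probability $\frac{c\sigma^2}{\Delta^2}\,\tau_{k,t}^{-1}$ (the product of the ``explore'' probability and the conditional selection probability $\tau_{k,t}^{-1}/\sum_{k'}\tau_{k',t}^{-1}$), this contribution is at most $\frac{c\sigma^2}{\Delta^2}\sum_{t=1}^T\tau_{k,t}^{-1}$. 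For the exploitation contribution, I would bound the probability that arm $k$ is pulled greedily at time $t$ by the probability that $\bar X^{\known}_{k,t}\ge\mu_k+\Delta_k/2$ or $\bar X^{\known}_{k^\ast,t}\le\mu^\ast-\Delta_k/2$; each such event requires $\bar X^{\known}$ to deviate from its mean, and one conditions on the (random) number of observations $n^{\known}_{k,t}$ gathered so far. Using the Chernoff–Hoeffding bound in Lemma~\ref{lemma-Chernoff-Hoeffding-bound} together with a lower bound on $n^{\known}_{k,t}$ coming from the expected number of exploratory pulls up to time $t$ plus the auxiliary count $\sum_{s\le t}\tfrac{\sigma^2}{\hat\sigma^2}h_{k,s}$, this probability is also controlled by a term of order $\tau_{k,t}^{-1}$, provided $c$ is large enough (this is where the hypothesis $c>\max\{16,10\Delta^2/\sigma^2\}$ enters — it ensures the exponent beats $1$ and that Bernstein-type concentration of the random exploration count, via Lemma~\ref{lemma-Bernstein-inequality} and Lemma~\ref{lemma-expected-exponential-average-bernoulli}, can be absorbed into a constant).

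The key algebraic step is then to evaluate $\sum_{t=1}^T\tau_{k,t}^{-1}$ in terms of the information arrival matrix. From the update rule $\tau_{k,t}=(\tau_{k,t-1}+1)\exp\!\big(\tfrac{h_{k,t}\Delta^2}{c\hat\sigma^2}\big)$ and $\tau_{k,0}=0$, one has the closed form
\[
\tau_{k,t}\;=\;\sum_{s=1}^t \exp\!\left(\frac{\Delta^2}{c\hat\sigma^2}\sum_{\tau=s}^{t}h_{k,\tau}\right),
\]
so that $\tau_{k,t}\ge t$ always, while $\tau_{k,t}^{-1}\le \exp\!\big(-\tfrac{\Delta^2}{c\hat\sigma^2}\sum_{\tau=1}^{t}h_{k,\tau}\big)\cdot\big(\sum_{s=1}^t\exp(\tfrac{\Delta^2}{c\hat\sigma^2}\sum_{\tau=1}^{s-1}h_{k,\tau})\big)^{-1}$. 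Summing over $t$ and using $\tau_{k,t}\ge t$ to split off a constant-order initial segment up to some $t^\ast$ (needed because the log-sum-exp bound is only meaningful once enough real time has elapsed), I would arrive at
\[
\sum_{t=1}^T\tau_{k,t}^{-1}\;\le\;\text{const}\;+\;\text{const}\cdot\log\!\left(\sum_{t=t^\ast+1}^T\exp\!\left(-\frac{\Delta^2}{c\hat\sigma^2}\sum_{s=1}^t h_{k,s}\right)\right),
\]
invoking the convexity / standard bound for the log-sum-exp function (as used in \S\ref{appendix-proof-of-example-Stationary-uniform}, or the inequality $x+\gamma e^{-\kappa x}\ge \tfrac{\log\gamma\kappa}{\kappa}$ used in the proof of Theorem~\ref{theorem-lower bound-general}). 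Multiplying by $\Delta_k\le\max_k\Delta_k$ and summing over the $K$ arms, and folding all $\sigma,\hat\sigma,c$-dependent factors into the constants $\constref{eps-greedy1},\constref{eps-greedy2}$, yields the claimed bound $\mathcal{R}^\pi_{\cal S}(\vector{H},T)\le \sum_{k}\frac{\constref{eps-greedy1}}{\Delta}\log\big(\sum_{t=t^\ast+1}^T\exp(-\tfrac{\Delta^2}{\constref{eps-greedy1}}\sum_{s\le t}h_{k,s})\big)+\constref{eps-greedy2}$.

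The main obstacle I anticipate is the exploitation term: unlike in vanilla $\epsilon_t$-greedy where the number of exploratory pulls by time $t$ is deterministically $\Theta(\log t)$, here it is a random quantity (a sum of independent Bernoulli's with time-varying means $\propto \tau_{k,t}^{-1}$) further shifted by the deterministic auxiliary contribution, and I need a high-probability lower bound on $n^{\known}_{k,t}$ of the form ``at least a constant times $\log\tau_{k,t}$'' before I can apply Hoeffding to the reward deviation events. Handling this requires a Bernstein-type argument (Lemma~\ref{lemma-Bernstein-inequality}) to show the realized exploration count does not fall far below its mean, on an event whose complement contributes only $O(1)$ to the regret — and getting the constants to line up is precisely what forces the lower bound on the tuning constant $c$. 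A secondary technical point is that $\tau_{k,t}$ depends on the \emph{realized} $\{h_{k,s}\}$ only (not on the policy's actions, since $\vector H$ is assumed independent of the policy), so conditioning is clean; but one must still be careful that $\mathbb{E}[n^{\known}_{k,t}]\ge \frac{c\sigma^2}{\Delta^2}\sum_{s\le t}\tau_{k,s}^{-1}+\sum_{s\le t}\tfrac{\sigma^2}{\hat\sigma^2}h_{k,s}$ and that this dominates the threshold $\frac{4c\sigma^2\log\tau_{k,t}}{\Delta_k^2}$ needed to kill the misidentification probability — essentially the defining property \eqref{eq-policy-intuition} of the virtual-time construction, made rigorous.
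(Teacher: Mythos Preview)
Your high-level decomposition (exploration vs.\ exploitation), the closed form for $\tau_{k,t}$, the use of Bernstein to lower-bound $n^{\known}_{k,t}$ before applying Hoeffding, and the role of the condition $c>\max\{16,10\Delta^2/\sigma^2\}$ all match the paper's proof exactly. The one soft spot is your ``key algebraic step'': the paper does \emph{not} control $\sum_{t}\tau_{k,t}^{-1}$ via a generic log-sum-exp convexity bound or the inequality $x+\gamma e^{-\kappa x}\ge\kappa^{-1}\log(\gamma\kappa)$ from Theorem~\ref{theorem-lower bound-general}. Instead, it partitions $\{1,\ldots,T\}$ at the auxiliary-arrival times $t_0<t_1<\cdots<t_{\bar M}$; on each inter-arrival block $[t_m,t_{m+1})$ the virtual time increments additively, so $\sum_{s=0}^{t_{m+1}-t_m-1}(\tau_{k,t_m}+s)^{-1}$ is a harmonic-type sum bounded (via the elementary Lemma $\sum_{s=0}^n(t+s)^{-1}\le\log\frac{t+n+1/2}{t-1/2}$) by $\log\big(\exp(-\tfrac{\Delta^2}{c\hat\sigma^2})\,\tau_{k,t_{m+1}}/\tau_{k,t_m}\big)$, and telescoping yields $\sum_t\tau_{k,t}^{-1}\le \log\tau_{k,T+1}-\tfrac{\Delta^2}{c\hat\sigma^2}\sum_t h_{k,t}$, which is precisely the desired log-sum-exp after substituting the closed form. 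The same telescoping (with the lower-bound side of the harmonic lemma) gives the lower bound $\sum_{s<t}\tau_{k,s}^{-1}\ge \log(\tau_{k,t-1}/t^\ast)-\tfrac{\Delta^2}{c\hat\sigma^2}\sum_{s<t}h_{k,s}$, which after combining with the auxiliary count in $\mathbb{E}[n^{\known}_{k,t}]$ yields $\mathbb{E}[n^{\known}_{k,t}]\ge \tfrac{c\sigma^2}{\Delta^2}\log(\tau_{k,t-1}/t^\ast)$ --- this is the rigorous version of the ``defining property~\eqref{eq-policy-intuition}'' you allude to.

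Two minor points: (i) the paper bounds the exploitation probability by $(t^\ast/(t-1))^{c/16}+(t^\ast/(t-1))^{c\sigma^2/(10\Delta^2)}$ using $\tau_{k,t-1}\ge t-1$, so that term sums to an $\vector H$-independent constant absorbed into $\constref{eps-greedy2}$ (rather than contributing another log-sum-exp as you suggest --- your version is correct but coarser); (ii) Lemma~\ref{lemma-expected-exponential-average-bernoulli} is not used here since $\vector H$ is deterministic in Theorem~\ref{theorem-epsilon-greedy-upper bound} --- it enters only in the stochastic-arrival corollaries.
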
\vspace{-0.1cm}

\noindent
The upper bound in Theorem~\ref{theorem-epsilon-greedy-upper bound} holds for any sample path of information arrivals as captured by the matrix $\vector{H}$. This establishes that, similarly to Thompson sampling and UCB policies (but unlike the standard $\epsilon_t$-greedy) the $\epsilon_t$-greedy with adaptive exploration (through virtual time indices) guarantees rate optimality uniformly over the general class of information arrival processes defined in~\S\ref{sec: prob_form}.

\medskip
\noindent
\textbf{Key Ideas in the Proof.} The proof decomposes regret into exploration and exploitation time periods. To bound the regret at exploration time periods express virtual time indices as\vspace{-0.2cm}
\[
\tau_{k,t} = \sum\limits_{s=1}^t \exp\left(
{\frac{\Delta^2}{c\hat{\sigma}^2}\sum\limits_{\tau = s}^t h_{k,\tau}}
\right).\vspace{-0.2cm}
\]
Denoting by $t_m$ the time step at which the $m^{th}$ auxiliary observation for arm $k$ was collected, we establish an upper bound on the expected number of exploration time periods for arm $k$ in the time interval $[t_m, t_{m+1}-1]$, which scales linearly with
$\frac{c\sigma^2}{\Delta^2} \log
\left(
\frac{ \tau_{k,t_{m+1}}
}{\tau_{k,t_{m}}
}
\right) - 1$.
Summing over all values of $m$, we obtain that the regret over exploration time periods is bounded from above by\vspace{-0.15cm}
\[
\sum\limits_{k\in\mathcal{K}}\Delta_k  \cdot \frac{c\sigma^2}{\Delta^2} \log \left(2\sum\limits_{t=0}^T \exp\left(\frac{-\Delta^2}{c\hat{\sigma}^2}\sum\limits_{s = 1}^t h_{k,\tau}\right)
\right).\vspace{-0.15cm}
\]

\noindent To analyze regret at exploitation time periods we first lower bound the number of observations of each arm using Bernstein inequality, and then apply Chernoff-Hoeffding inequality to bound the probability that a sub-optimal arm would have the highest estimated reward, given the minimal number of observations on each arm. When $c > \max\left\{16,\frac{10\Delta^2}{\sigma^2}\right\}$, this regret component decays at rate of at most~$t^{-1}$.

\medskip\noindent
\textbf{Numerical Analysis.} Figure~\ref{fig:main-text-e-greedy} follows the settings described in Figure~\ref{fig:main-text-sim} to visualize the impact of adjusting the exploration rate of an $\varepsilon_t$-greedy policy using virtual time indices as the amount of auxiliary information increases. Appendix \ref{section-numerics} includes further numerical analysis of $\epsilon_t$-greedy with adaptive exploration with a variety of tuning parameter and information arrival scenarios, relative to other policies. We also test robustness with respect to misspecification of the gap parameter $\Delta$.\vspace{-0.1cm}

\begin{figure}[h]
	\centering
	\includegraphics[height=1.4in]{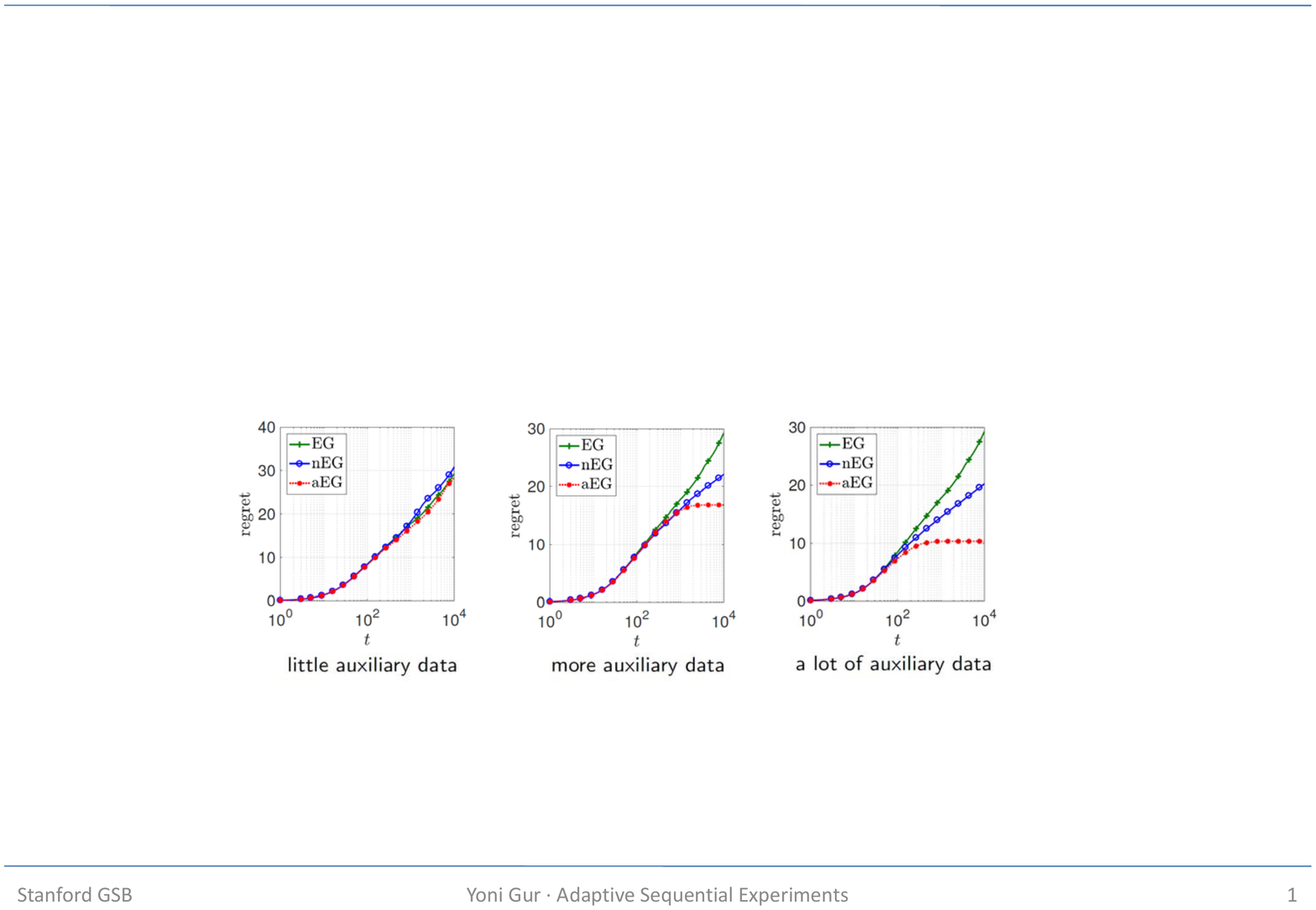}\vspace{-0.25cm}
	\caption{\small The information arrival settings described in Figure~\ref{fig:main-text-sim}, and performance of $\varepsilon_t$-greedy-type policies (tuned by $c=1.0$) that: ignore auxiliary observations (EG); use this information to adjust estimates of mean rewards without appropriately adjusting the exploration rate of the policy (nEG); and adjust reward estimates and dynamically adjusts exploration rates using virtual time indices (aEG).}\label{fig:main-text-e-greedy}\vspace{-0.35cm}
\end{figure}

\subsection{Proof of Theorem \ref{theorem-epsilon-greedy-upper bound}}
Let $t^\ast = \lceil \frac{Kc\sigma^2}{\Delta^2} \rceil$, and fix $T \ge t^\ast$, $K \ge 2$, $\vector{H} \in \{0,1\}^{K\times T}$, and fix a problem instance $(\vectorgreek{\nu}, \vectorgreek{\nu}^{\aux})\in \cS$ with the mean rewards $\vectorgreek{\mu}$ and the mean $\vector{y}$ for auxiliary observations. \vspace{-0.2cm}

\paragraph{Step 1 (Decomposing the regret).}
We decompose the regret of a suboptimal arm $k \neq k^\ast$ as follows:\vspace{-0.2cm}
\begin{align}\label{eq-regret-decomposition-epsilon-greedy}
	\Delta_k \cdot \mathbb{E}^\pi_{\vectorgreek{\nu}, \vectorgreek{\nu}^{\mathrm{aux}}} \left[ \sum\limits_{t=1}^T \mathbbm{1} \left\{ \pi_t = k  \right\}\right]
	&=
	\Delta_k \cdot \overbrace{\mathbb{E}^\pi_{\vectorgreek{\nu}, \vectorgreek{\nu}^{\mathrm{aux}}} \left[ \sum\limits_{t=1}^T  \mathbbm{1} \left\{\pi_t = k, \mbox{ policy does exploration} \right\} \right]}^{J_{k,1}}
	\nonumber
	\\
	&+
	\Delta_k \cdot \underbrace{\mathbb{E}^\pi_{\vectorgreek{\nu}, \vectorgreek{\nu}^{\mathrm{aux}}} \left[ \sum\limits_{t=1}^T \mathbbm{1} \left\{\pi_t = k,  \mbox{ policy does exploitation}  \right\} \right]}_{J_{k,2}}.
\end{align}
The component $J_{k,1}$ is the expected number of times arm $k$ is pulled due to exploration, while the the component $J_{k,2}$ is the one due to exploitation.\vspace{-0.2cm}

\paragraph{Step 2 (Analysis of $\tau_{k,t}$).}
We will later find upper bounds for the quantities $J_{k,1}$, and $J_{k,2}$ that are functions of the virtual time $\tau_{k,T}$. A simple induction results in the following expression:
\vspace{-0.2cm}
\begin{equation}\label{eq-exp-based-time-epsilon-greedy}
	\tau_{k,t} =  \sum\limits_{s=1}^t \exp\left(
	{\frac{\Delta^2}{c\hat{\sigma}^2}\sum\limits_{\tau = s}^t h_{k,\tau}}
	\right).\vspace{-0.2cm}
\end{equation}

\paragraph{Step 3 (Analysis of $J_{k,1}$).} Let $\bar{M} \coloneqq \sum\limits_{t=1}^T h_{k,t}$. For $0\le m \le \bar{M}$, let $t_{m}$ be the time step at which the $m$th auxiliary observation for arm $k$ is received, that is,\vspace{-0.1cm}
\[
t_{m} = \begin{cases}
	\min \left\{1\le t \le T\; \middle| \;\sum\limits_{s=1}^t h_{k,s} = m \right\} & \text{if } 0\le m \le \bar{M} \\
	T+1 & \text{if } m=\bar{M}+1
\end{cases}.\vspace{-0.1cm}
\]
Note that we dropped the index $k$ in the definitions above for simplicity of notation. Also define $\tau_{k,T+1} \coloneqq \left(\tau_{k,T} + 1\right)\cdot \exp\left(\frac{h_{k,t}\Delta^2}{c\hat{\sigma}^2}\right)$.  One obtains:\vspace{-0.2cm}
\begin{align}\label{eq-Jk,1-analysis-epsilon-greedy}
	J_{k,1}
	&= \sum\limits_{t=1}^T\min\left\{\frac{c\sigma^2}{\Delta^2}\sum\limits_{k=1}^K\frac{1}{\tau_{k,t}},1\right\} \cdot \frac{\frac{1}{\tau_{k,t}}}{\sum\limits_{k=1}^K\frac{1}{\tau_{k,t}}}
	\nonumber \\
	& \le
	\sum\limits_{t=1}^T \frac{c\sigma^2}{\Delta^2\tau_{k,t}}
	\nonumber \\
	& =
	\sum\limits_{m=0}^{\bar{M}} \sum\limits_{s=0}^{ t_{m+1}- t_{m}-1} \frac{c\sigma^2}{\Delta^2(\tau_{k, t_{m}}+s)}
	\nonumber \\
	&\overset{\text{(a)}}{\le}
	\sum\limits_{m=0}^{\bar M}  \frac{c\sigma^2}{\Delta^2} \log
	\left(
	\frac{\tau_{k, t_{m}}+ t_{m+1}- t_{m}-1 +1/2}{\tau_{k, t_{m}} -1/2}
	\right)
	\nonumber \\
	&\overset{\text{(b)}}{\le}
	\sum\limits_{m=0}^{\bar M}  \frac{c\sigma^2}{\Delta^2} \log
	\left(
	\frac{\exp\left(\frac{-\Delta^2}{c\hat{\sigma}^2}\right) \cdot \left(\tau_{k, t_{m+1}} -1/2 \right)}{\tau_{k, t_{m}} - 1/2}
	\right)
	\nonumber \\
	&=
	\frac{c\sigma^2}{\Delta^2} \log
	\left(
	\frac{ \tau_{k, t_{\bar{M}+1}}-1/2}{\tau_{k, t_{0}} -1/2}
	\right)
	- \sum\limits_{t=1}^T \frac{\sigma^2}{\hat{\sigma}^2} h_{k,t}
	\nonumber \\
	&=
	\frac{c\sigma^2}{\Delta^2} \log
	\left(
	\frac{ \tau_{k,T+1}-1/2}{1-1/2}
	\right)
	- \sum\limits_{t=1}^T \frac{\sigma^2}{\hat{\sigma}^2} h_{k,t}
	\nonumber \\
	&\overset{\text{(c)}}{\le}
	\frac{c\sigma^2}{\Delta^2} \log \left( 2\sum\limits_{t=0}^T \exp\left(
	{\frac{-\Delta^2}{c\hat{\sigma}^2}\sum\limits_{s = 1}^t h_{k,\tau}}
	\right)
	\right),
\end{align}
where (a), (b), and (c) follow from the following lemma, the fact that $\exp\left(\frac{\Delta^2}{c\hat{\sigma}^2}\right)\left(\tau_{k,t_{m}}+t_{m+1,k}-t_{m}\right) = \tau_{k,t_{m+1}}$, and (\ref{eq-exp-based-time-epsilon-greedy}), respectively:
\begin{lemma}\label{lemma-sum-one-over-t}
	For any $t>1/2$, and $n \in \{0,1,2,\dots\}$, we have $\log\frac{t+n+1}{t}\le\sum\limits_{s=0}^{n}\frac{1}{t+s} \le \log\frac{t+n+1/2}{t-1/2}$.
\end{lemma}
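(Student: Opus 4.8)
The plan is to sandwich each summand $\frac{1}{t+s}$ between two integrals of the decreasing, convex function $x\mapsto \frac1x$ over a unit-length interval, and then telescope the resulting logarithms.

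For the left inequality, I would use only monotonicity: since $x\mapsto \frac1x$ is decreasing on $(0,\infty)$, for every $s\in\{0,1,\dots,n\}$ one has
\[
\frac{1}{t+s}\ge \int_{0}^{1}\frac{dx}{t+s+x}=\log\frac{t+s+1}{t+s},
\]
which requires only $t>0$. Summing over $s=0,\dots,n$ and noting that the right-hand side telescopes,
\[
\sum_{s=0}^{n}\frac{1}{t+s}\ge \sum_{s=0}^{n}\log\frac{t+s+1}{t+s}=\log\frac{t+n+1}{t},
\]
which is the claimed lower bound.

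For the right inequality I would invoke convexity of $x\mapsto\frac1x$ via the midpoint (Hermite--Hadamard) inequality: for each $s$,
\[
\frac{1}{t+s}\le \int_{t+s-1/2}^{t+s+1/2}\frac{dx}{x}=\log\frac{t+s+1/2}{t+s-1/2},
\]
which is legitimate provided the interval of integration lies in $(0,\infty)$, i.e. $t+s-1/2>0$; since $s\ge0$, the binding case is $s=0$ and the condition becomes precisely $t>1/2$, the hypothesis of the lemma. Summing over $s=0,\dots,n$ and telescoping,
\[
\sum_{s=0}^{n}\frac{1}{t+s}\le \sum_{s=0}^{n}\log\frac{t+s+1/2}{t+s-1/2}=\log\frac{t+n+1/2}{t-1/2},
\]
which completes the proof. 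I do not expect any genuine obstacle here: this is a routine integral-comparison estimate. The only point worth flagging is the role of the hypothesis $t>1/2$, which is exactly what keeps the left endpoint $t-1/2$ of the integration interval positive, so that $\log\frac{t+n+1/2}{t-1/2}$ is well defined and the Hermite--Hadamard bound applies; a cruder monotonicity argument would not yield the tighter denominator $t-1/2$ that is needed in the analysis of $J_{k,1}$ in Theorem~\ref{theorem-epsilon-greedy-upper bound}.
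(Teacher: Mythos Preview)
Your proof is correct and matches the paper's approach essentially step for step: the lower bound is the standard integral comparison $\sum_{s=0}^{n}\frac{1}{t+s}\ge\int_{t}^{t+n+1}\frac{dx}{x}$ (which you wrote per-term and telescoped), and the upper bound is the same midpoint inequality $\frac{1}{t+s}\le\int_{t+s-1/2}^{t+s+1/2}\frac{dx}{x}$, which the paper attributes to Jensen's inequality and you to Hermite--Hadamard. Your remark on why the hypothesis $t>1/2$ is exactly what is needed is also on point.
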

\proof \; \;
We first, show the lower through an integral:
\[
\sum\limits_{s=0}^{n}\frac{1}{t+s} \ge \int_{t}^{t+n+1} \frac{dx}{x} = \log\frac{t+n+1}{t}.
\]
Now, we show the upper bound through a similar argument:
\[
\sum\limits_{s=0}^{n}\frac{1}{t+s} \overset{(a)}{\le} \sum\limits_{s=0}^{n} \int_{t+s-1/2}^{t+s+1/2}\frac{dx}{x} = \int_{t-1/2}^{t+n+1/2}\frac{dx}{x} = \log\frac{t+n+1/2}{t-1/2},
\]
where (a) holds because we have $\frac{1}{t+s} \le \int_{t+s-1/2}^{t+s+1/2}\frac{dx}{x}$ by Jensen's inequality.
\endproof
\vspace{-0.2cm}

\paragraph{Step 4 (Analysis of $n^{\known} _{k,t}$).} To analyze $J_{k,2}$ we bound $n^{\known} _{k,t}$, the number of samples of arm $k$ at each time $t \ge t^\ast$. Fix $t \ge t^\ast$. Then, $n^{\known} _{k,t}$ is a summation of independent Bernoulli r.v.'s. One has
\begin{align}\label{eq-expected-n_{k,t}-1}
	\mathbb{E}\left[n^{\known} _{k,t}\right]
	&=
	\frac{t^\ast-1}{K} + \sum\limits_{s=1}^t \frac{\sigma^2}{\hat{\sigma}^2} h_{k,s} + \sum\limits_{s=t^\ast}^{t-1} \frac{c\sigma^2}{\Delta^2 \tau_{k,s}}.
\end{align}
The term $\sum\limits_{s=t^\ast}^{t-1} \frac{c\sigma^2}{\Delta^2 \tau_{k,s}}$ can be bounded from below similar to Step 3. Let $\bar{M}_t = \sum\limits_{s=1}^{t-1} h_{k,s}$ and $\underline{M} = \sum\limits_{s=1}^{t^\ast} h_{k,s}$. For $\bunderline{M}\le m \le \bar{M}_t$, let $t_{m,t}$ be the time step at which the $m$th auxiliary observation for arm $k$ is received up to time $t-1$, that is,
\[
t_{m,t} = \begin{cases*}
	\max\left\{t^\ast,\min \left\{0\le s \le T \;\middle| \; \sum\limits_{s=1}^q h_{k,s} = m \right\} \right\} & if $\bunderline{M}\le m \le \bar{M}_t$ \\
	t-1 & if $m=\bar{M}_t+1$
\end{cases*}.
\]
One has that\vspace{-0.3cm}
\begin{align}\label{eq-expected-n_{k,t}-2}
	\sum\limits_{s=t^\ast}^{t-1} \frac{c\sigma^2}{\Delta^2 \tau_{k,s}}
	&=
	\sum\limits_{m=\bunderline{M}}^{\bar{M}_t} \sum\limits_{s=0}^{t_{m+1,t}-t_{m,t}-1}\frac{c\sigma^2}{\Delta^2 (\tau_{k,t_{m,t}}+s)}
	\nonumber \\
	&\overset{(a)}{\ge}
	\sum\limits_{m=\bunderline{M}}^{\bar{M}_t} \frac{c\sigma^2}{\Delta^2} \log
	\left(
	\frac{\tau_{k,t_{m,t}} + t_{m+1,t}-t_{m,t}}{\tau_{k,t_{m,t}}}
	\right)
	\nonumber \\
	&=
	\sum\limits_{m=\bunderline{M}}^{\bar{M}_t} \frac{c\sigma^2}{\Delta^2} \log
	\left(
	\frac{\exp\left(\frac{-\Delta^2}{c\hat{\sigma}^2}\right)\tau_{k,t_{m+1,t}}}{\tau_{k,t_{m,t}}}
	\right)
	\nonumber \\
	&=
	\frac{c\sigma^2}{\Delta^2} \log
	\left(
	\frac{\tau_{k,t-1}}{\tau_{k,t^\ast}}
	\right)
	- \sum\limits_{s=t^\ast + 1}^{t-1} \frac{\sigma^2}{\hat{\sigma}^2}  h_{k,s}
	\nonumber \\
	&=
	\frac{c\sigma^2}{\Delta^2} \log
	\left(
	\frac{\tau_{k,t-1}}{t^\ast}
	\right)
	- \sum\limits_{s=1}^{t-1} \frac{\sigma^2}{\hat{\sigma}^2} h_{k,s},
\end{align}
where (a) follows from Lemma \ref{lemma-sum-one-over-t}. Putting together (\ref{eq-expected-n_{k,t}-1}) and (\ref{eq-expected-n_{k,t}-2}), one obtains:
\begin{align}
	\label{eq-expected-n_{k,t}-3}
	\mathbb{E}\left[n^{\known} _{k,t}\right]
	&\ge
	\frac{c\sigma^2}{\Delta^2} \log
	\left(
	\frac{\tau_{k,t-1}}{t^\ast}
	\right)
	\coloneqq x_t.
\end{align}
By Bernstein's inequality in Lemma \ref{lemma-Bernstein-inequality}, we have\vspace{-0.2cm}
\begin{align}\label{eq-expected-n_{k,t}-4}
	\mathbb{P}\left\{n^{\known} _{k,t} \le \frac{x_t}{2}\right\} \le \exp\left(\frac{-x_t}{10}\right) =  \left(\frac{t^\ast}{\tau_{k,t-1}}\right)^{\frac{-c\sigma^2}{10\Delta^2}} \le \left(\frac{t^\ast}{t-1}\right)^{\frac{-c\sigma^2}{10\Delta^2}},
\end{align}
where the last inequality follows from (\ref{eq-exp-based-time-epsilon-greedy}).\vspace{-0.1cm}

\paragraph{Step 5 (Analysis of $J_{k,2}$).} We note that\vspace{-0.2cm}
\begin{equation}\label{eq-jk,2-analysis1-epsilon-greedy}
	\sum_{k \in \mathcal{K} \setminus{k^\ast }}J_{k,2}
	\le
	t^\ast - 1 +
	\sum_{k \in \mathcal{K} \setminus{k^\ast }} \sum\limits_{t=t^\ast}^T \mathbb{P} \left\{ \bar{X}_{k,t}^{\known} >  \bar{X}_{k^\ast, t} \right\}.\vspace{-0.1cm}
\end{equation}
We upper bound each summand as follows:\vspace{-0.2cm}
\begin{align}\label{eq-jk,2-analysis2-epsilon-greedy}
	\mathbb{P} \left\{ \bar{X}_{k,t}^{\known} >  \bar{X}_{k^\ast, n^{\known} _{k,t}} \right\}
	&\le
	\mathbb{P} \left\{ \bar{X}_{k,t}^{\known} > \mu_k + \frac{\Delta_k}{2} \right\}
	\nonumber \\
	&+
	\mathbb{P} \left\{ \bar{X}_{k^\ast,t}^{\known} < \mu_{k^\ast} + \frac{\Delta_k}{2} \right\}.
\end{align}
We next bound the first term on the right-hand side of the above inequality; the second term can be treated similarly. One obtains:\vspace{-0.2cm}
\begin{align}\label{eq-jk,2-analysis3-epsilon-greedy}
	\mathbb{P} \left\{ \bar{X}_{k,t}^{\known} > \mu_k + \frac{\Delta_k}{2} \right\}
	&\overset{\text{(a)}}{\le}
	\mathbb{P} \left\{ \bar{X}_{k,t}^{\known} > \mu_k + \frac{\Delta_k}{2}\;\middle| \;n^{\known} _{k,t} > \frac{x_t}{2} \right\} + \mathbb{P} \left\{  n^{\known} _{k,t} \le \frac{x_t}{2}\right\}
	\nonumber\\
	&\overset{\text{(b)}}{\le}
	\exp\left(
	{\frac{-\Delta_k^2 x_t}{16\sigma^2}}
	\right)
	+ \left(\frac{t^\ast}{t-1}\right)^{\frac{-c\sigma^2}{10\Delta^2}}
	\nonumber \\
	&\overset{\text{}}{\le}
	\exp\left(
	{\frac{-c}{16}\log \left(
		\frac{\tau_{k,t-1}}{t^\ast}
		\right)}
	\right)
	+ \left(\frac{t^\ast}{t-1}\right)^{\frac{-c\sigma^2}{10\Delta^2}}
	\nonumber\\
	&\overset{(c)}{\le}
	\left(\frac{t^\ast}{t-1}\right)^{\frac{-c}{16}}+
	\left(\frac{t^\ast}{t-1}\right)^{\frac{-c\sigma^2}{10\Delta^2}},
\end{align}
where, (a) holds by the conditional probability definition, (b) follows from Lemma \ref{lemma-Chernoff-Hoeffding-bound} together with (\ref{eq-expected-n_{k,t}-4}), and (c) holds by (\ref{eq-exp-based-time-epsilon-greedy}). Putting together (\ref{eq-regret-decomposition-epsilon-greedy}), (\ref{eq-Jk,1-analysis-epsilon-greedy}), (\ref{eq-jk,2-analysis1-epsilon-greedy}), (\ref{eq-jk,2-analysis2-epsilon-greedy}), and (\ref{eq-jk,2-analysis3-epsilon-greedy}), the result is established. \qed

\vspace{-0.1cm}
\section{Numerical Analysis}\label{section-numerics}\vspace{-0.15cm}

\subsection{Comparing Performance of Various Policies}\label{subsec:numcomparison}
\vspace{-0.1cm}

\noindent\textbf{Setup.} \Copy{numerics-setup}{We simulate the performance of different policies using MAB instances with three arms. The reported results correspond to rewards that are Gaussian with means $\boldsymbol{\mu} = \begin{pmatrix}
		0.7 & 0.5 & 0.5
	\end{pmatrix}^\top$ and standard deviation $\sigma = 0.5$, but we note that results are very robust with respect to the reward distributions. For simplicity we drew auxiliary information from the same distributions as the rewards. \Copy{information-arrival-processes}{We considered two information arrival processes: stationary information arrival process with $h_{k,t}$'s that are i.i.d. Bernoulli random variables with mean $\lambda$, where we considered values of $\lambda \in \{500/T, 100/T, 10/T\}$; and diminishing information arrival process where $h_{k,t}$'s are i.i.d. Bernoulli random variables with mean $\frac{\kappa^{\mathrm{aux}}}{t}$, where we considered values of $\kappa^{\mathrm{aux}}\in \{4,2,1\}$ for each time period $t$ and for each arm $k$}.
	
	In this part we experimented with three policies: UCB1, Thompson sampling, and $\epsilon_t$-greedy. We considered several variants of each policy: $(i)$ versions that ignores auxiliary information arrivals (EG, UCB1, TS); $(ii)$ version that updates empirical means and cumulative number of observations upon information arrivals but uses a standard time index (nEG); and $(iii)$ version that updates empirical means and cumulative number of observations upon information arrivals and uses virtual time indices (aEG, aUCB, aTS1). The difference between these versions is essentially in manner in which empirical means and reward observation counters are computed, as well as in the type of time index that is used. Namely, for the original versions:\vspace{-0.2cm}
	\footnotesize
	\[
	\bar{X}_{k,t}^{\known} \coloneqq \frac{\sum\limits_{s=1}^{t-1}\frac{1}{\sigma^2}\mathbbm{1}\{\pi_s = k\}X_{k,s}}{\sum\limits_{s=1}^{t-1} \frac{1}{\sigma^2} \mathbbm{1}\{\pi_s = k\}}, \quad
	n^{\known} _{k,t} \coloneqq \sum\limits_{s=1}^{t-1}  \mathbbm{1}\{\pi_s = k\}, \quad
	\tau_{k,t} = \tau_{k,t-1} + 1,
	\]
	\normalsize for the versions that account for auxiliary information while using standard time index:\footnotesize\vspace{-0.1cm}
	\[
	\bar{X}_{k,t}^{\known} \coloneqq \frac{\sum\limits_{s=1}^{t-1}\frac{1}{\sigma^2}\mathbbm{1}\{\pi_s = k\}X_{k,s} +  \sum\limits_{s=1}^{t} \frac{1}{\hat{\sigma}^2}Z_{k,s}}{\sum\limits_{s=1}^{t-1} \frac{1}{\sigma^2} \mathbbm{1}\{\pi_s = k\}  + \sum\limits_{s=1}^{t} \frac{1}{\hat{\sigma}^2}h_{k,s} }, \quad
	n^{\known} _{k,t} \coloneqq \sum\limits_{s=1}^{t-1}  \mathbbm{1}\{\pi_s = k\}  + \sum\limits_{s=1}^{t} \frac{\sigma^2}{\hat{\sigma}^2}h_{k,s}, \quad
	\tau_{k,t} = \tau_{k,t-1} + 1,\vspace{-0.1cm}
	\]
	\normalsize and for the versions with virtual time indices:\footnotesize\vspace{-0.1cm}
	\[
	\bar{X}_{k,t}^{\known} \coloneqq \frac{\sum\limits_{s=1}^{t-1}\frac{1}{\sigma^2}\mathbbm{1}\{\pi_s = k\}X_{k,s} +  \sum\limits_{s=1}^{t} \frac{1}{\hat{\sigma}^2}Z_{k,s}}{\sum\limits_{s=1}^{t-1} \frac{1}{\sigma^2} \mathbbm{1}\{\pi_s = k\}  + \sum\limits_{s=1}^{t} \frac{1}{\hat{\sigma}^2}h_{k,s} }, \quad
	n^{\known} _{k,t} \coloneqq \sum\limits_{s=1}^{t-1}  \mathbbm{1}\{\pi_s = k\}  + \sum\limits_{s=1}^{t} \frac{\sigma^2}{\hat{\sigma}^2}h_{k,s}, \quad
	\tau_{k,t} = \left(\tau_{k,t-1} + 1\right)\cdot \exp\left(\frac{h_{k,t}\Delta^2}{c_\pi\hat{\sigma}^2}\right),\vspace{-0.1cm}
	\]
	\normalsize where $c_\pi$ depends on the policy type $\pi$.
	We measured the average empirical regret $\sum\limits_{t=1}^{T}(\mu^\ast - \mu_{\pi_t})$, that is, the average difference between the performance of the best arm and the performance of the policy, over a decision horizon of $T=10^4$ periods, and averaging over 400 replications.} \Copy{determine-tuning-parameter} {To determine the tuning parameter $c$ used by the policies we simulated their original versions that do not account for auxiliary information arrivals with $c = 0.1\times m$, where we considered values of $m \in \{1,2,\dots,20\}$. The results presented below include the tuning parameter that resulted in the lowest cumulative empirical regret over the entire time horizon: $c=1.0$ for $\epsilon_t$-greedy, $c=1.0$ for UCB1, and $c=0.5$ for Thompson sampling (robustness with respect to this tuning parameter is discussed in \ref{section-misspecification-tuning-parameter}).}

\medskip
\noindent\textbf{Results and Discussion.} Plots comparing the empirical regret of the different algorithms for stationary and diminishing information arrival processes appear in Figures \ref{fig-main-stationary} and \ref{fig-main-diminishing}, respectively. For a stationary information arrival process with large $\lambda$, the regret is bounded uniformly by a constant and eventually stops growing with $T$, and when $\lambda$ is smaller we transition back to logarithmic regret, as discussed in \S\ref{example-stochastic-lower bound}. The impact of the timing of information arrivals on the achievable performance can be viewed by comparing the plots that correspond to the stationary and the diminishing information arrival processes. The expected total number of information arrivals in the first, second, and third columns with diminishing information arrival process is less than or equal to that of the first, second, and third column with stationary information arrival process, and the cumulative regrets for both cases are close to one another.

\begin{figure} [H]
	\centering
	\begin{subfigure}[t]{0.23\textwidth}
		\raisebox{-\height}{\includegraphics[width=\textwidth]{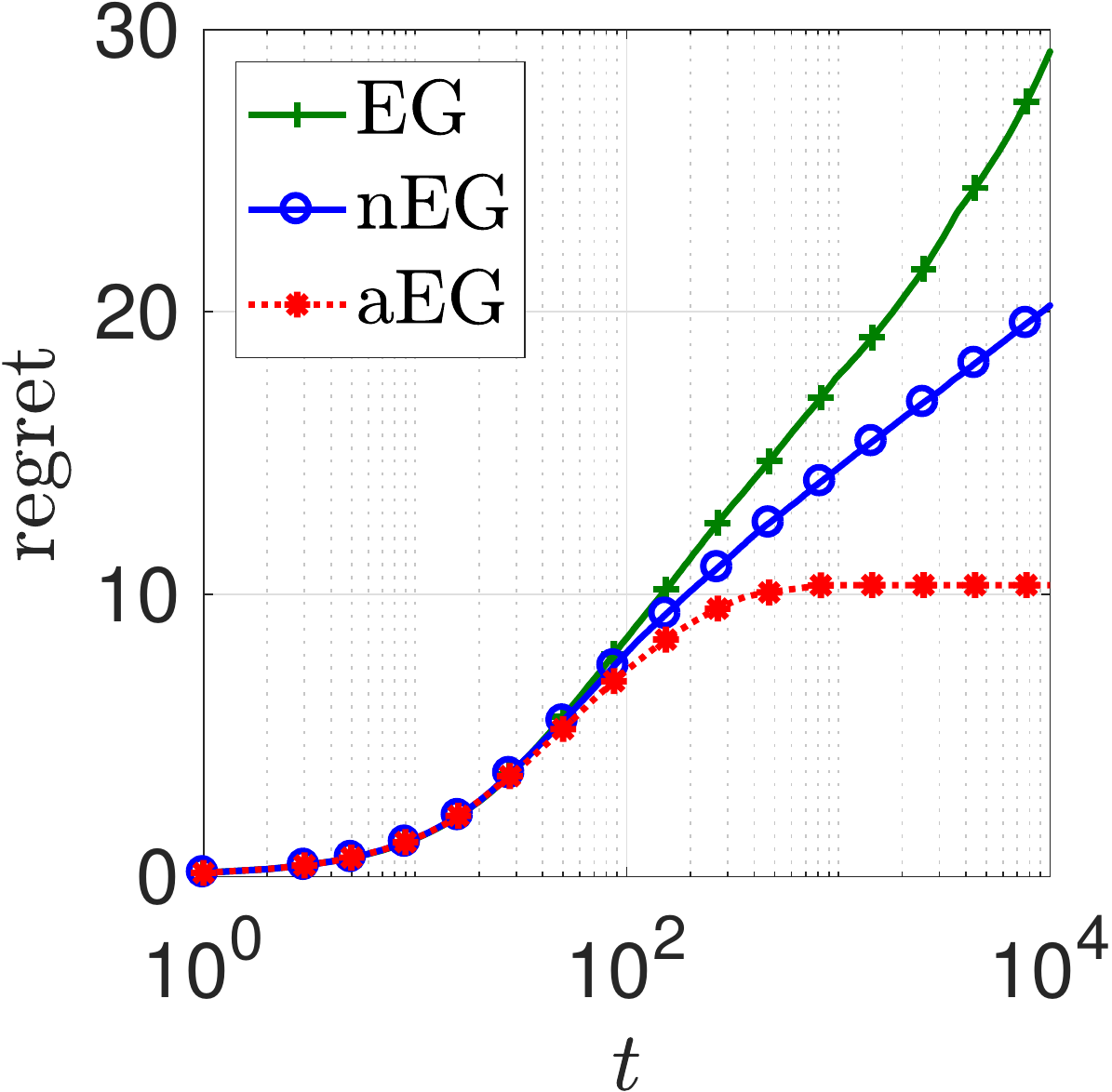}}
	\end{subfigure}
	\hfill
	\begin{subfigure}[t]{0.23\textwidth}
		\raisebox{-\height}{\includegraphics[width=\textwidth]{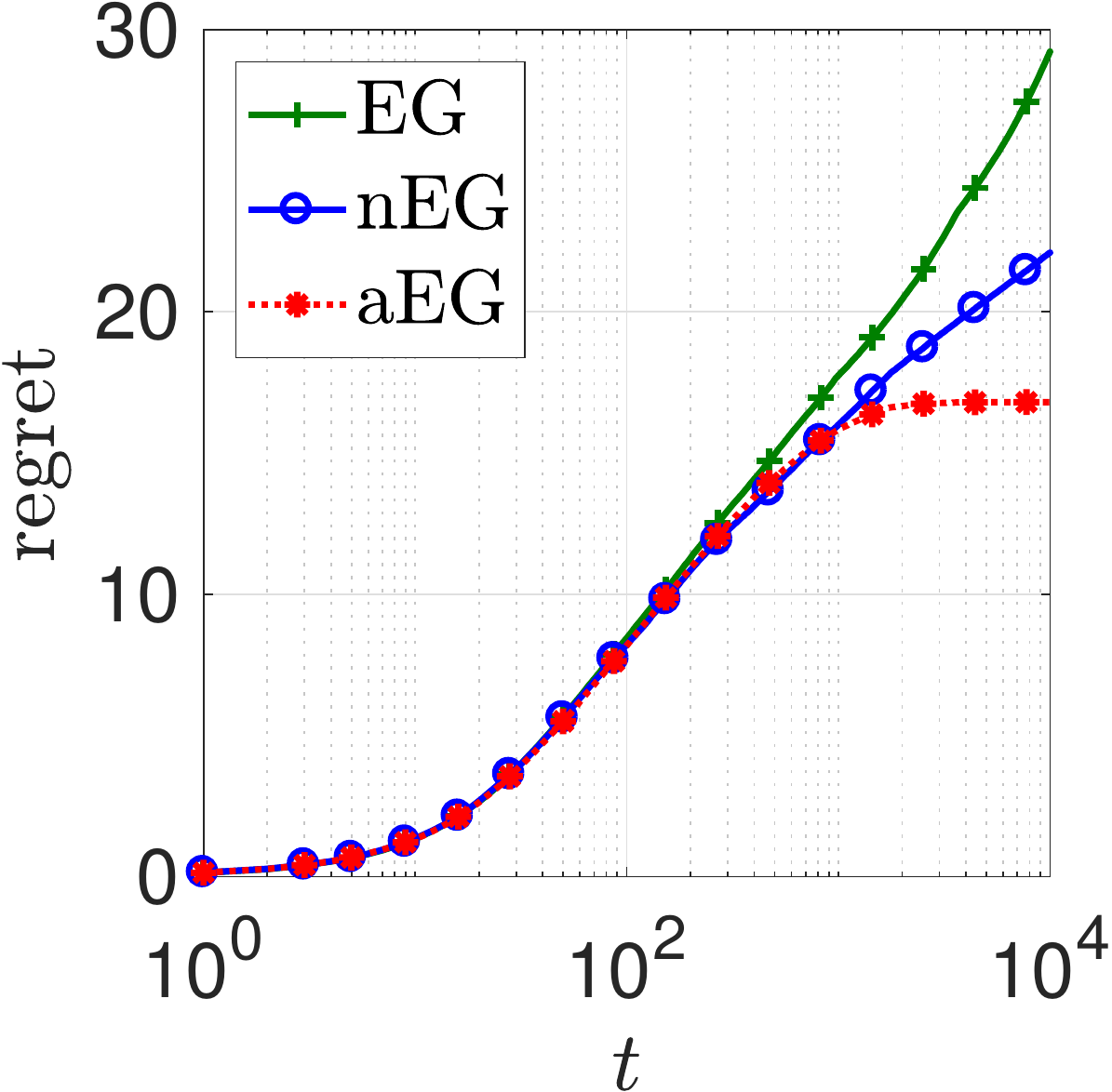}}
	\end{subfigure}
	\hfill
	\begin{subfigure}[t]{0.23\textwidth}
		\raisebox{-\height}{\includegraphics[width=\textwidth]{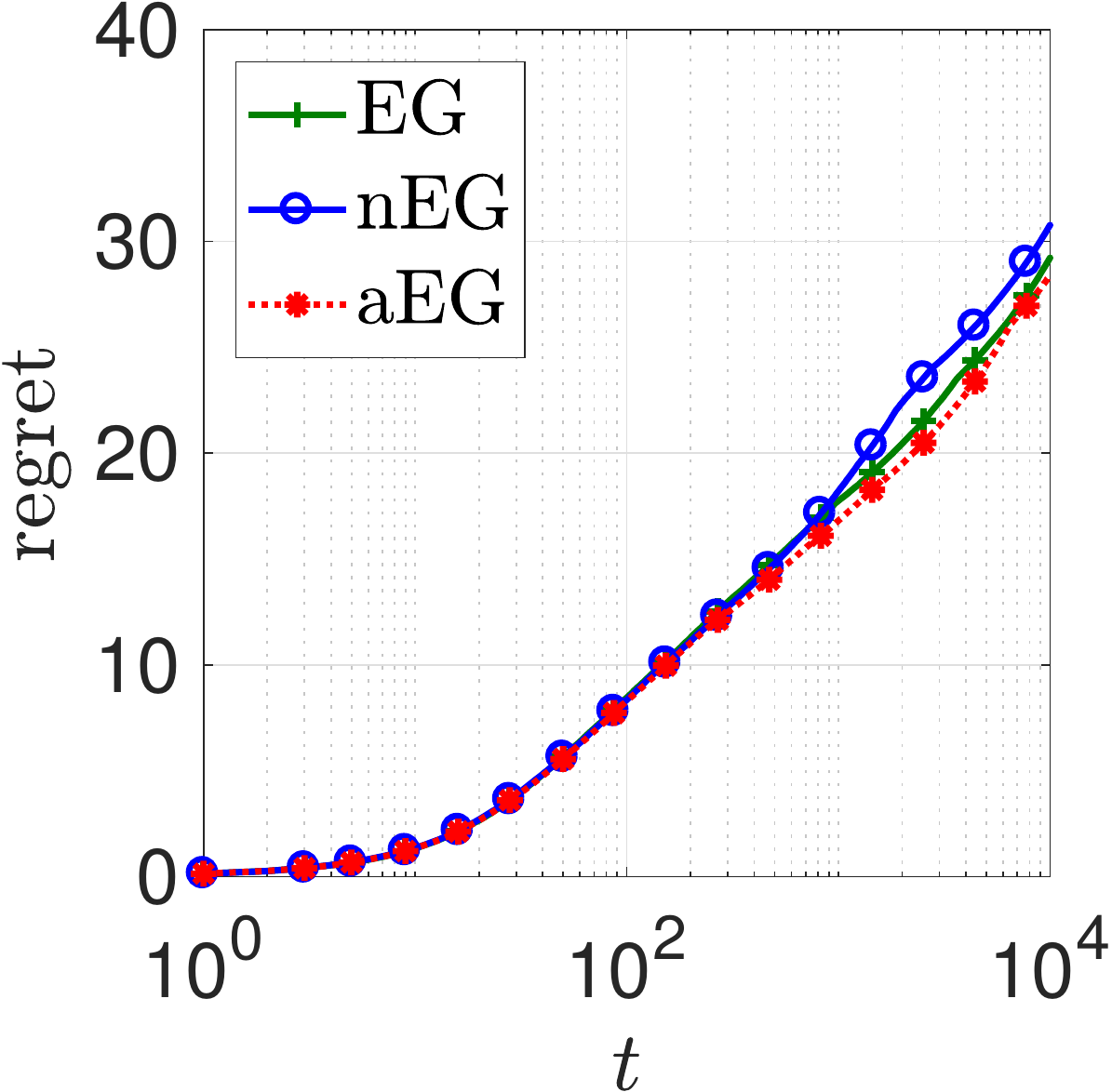}}
	\end{subfigure}
	
	\begin{subfigure}[t]{0.23\textwidth}
		\raisebox{-\height}{\includegraphics[width=\textwidth]{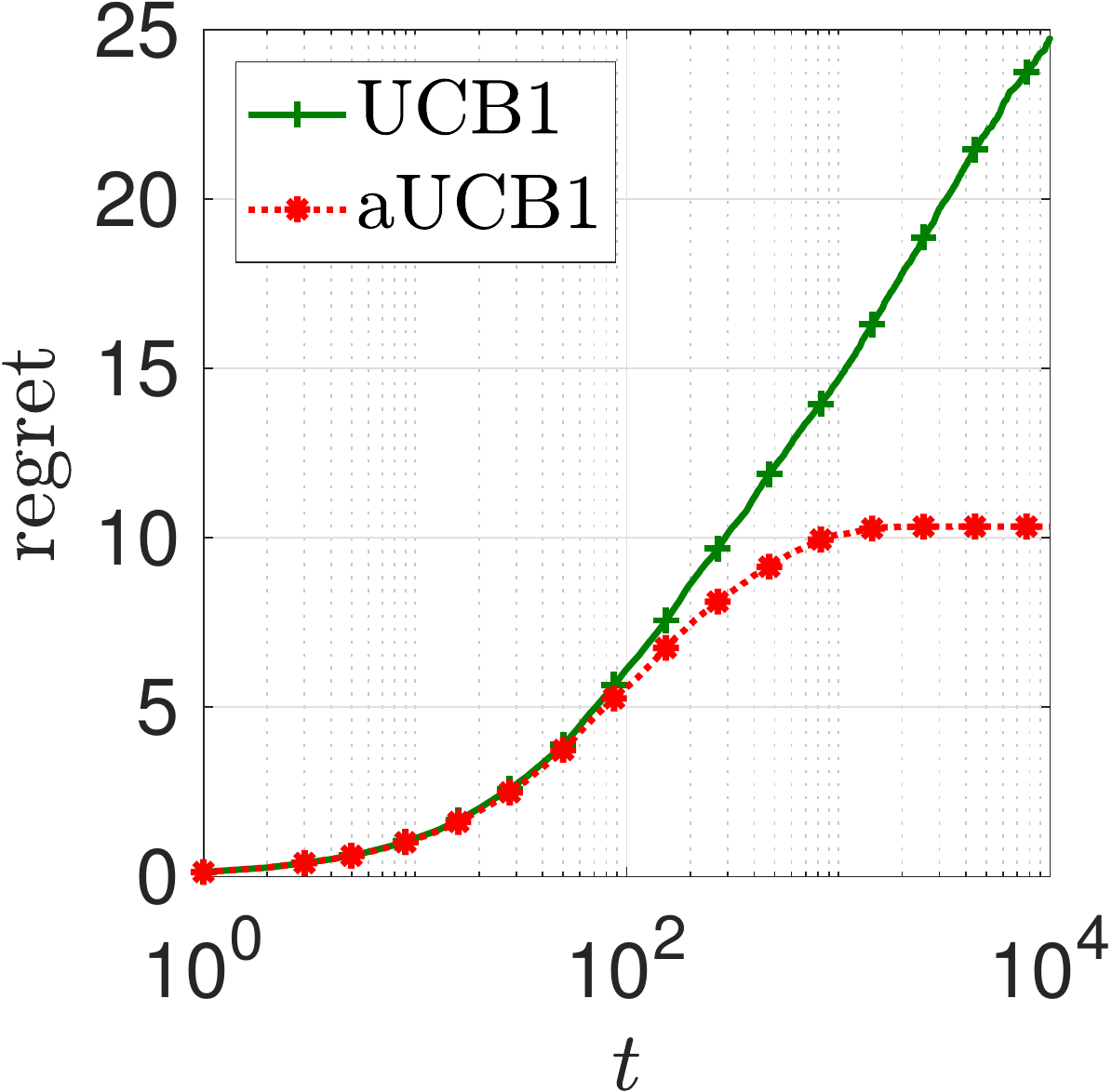}}
	\end{subfigure}
	\hfill
	\begin{subfigure}[t]{0.23\textwidth}
		\raisebox{-\height}{\includegraphics[width=\textwidth]{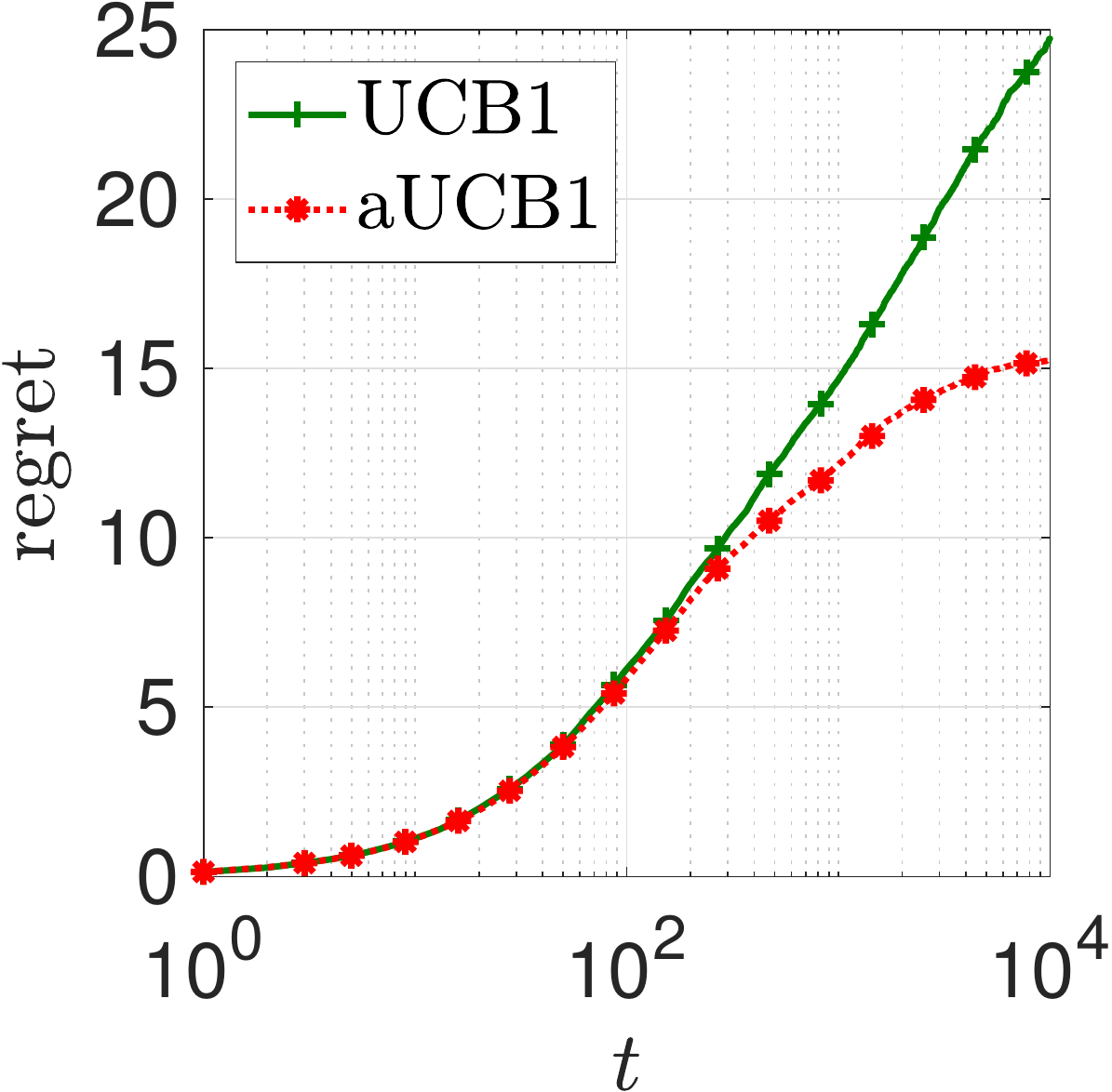}}
	\end{subfigure}
	\hfill
	\begin{subfigure}[t]{0.23\textwidth}
		\raisebox{-\height}{\includegraphics[width=\textwidth]{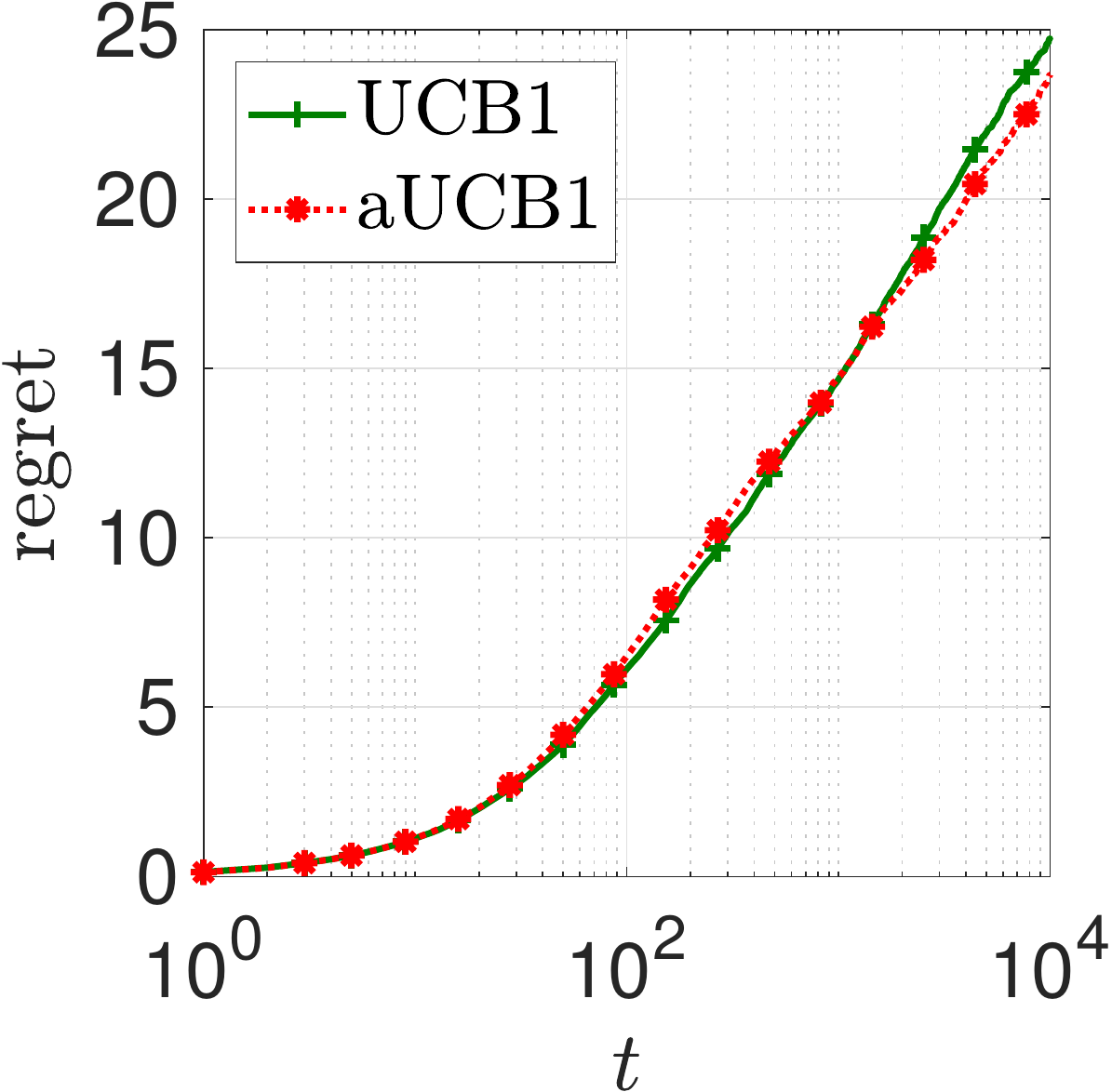}}
	\end{subfigure}
	
	\begin{subfigure}[t]{0.23\textwidth}
		\raisebox{-\height}{\includegraphics[width=\textwidth]{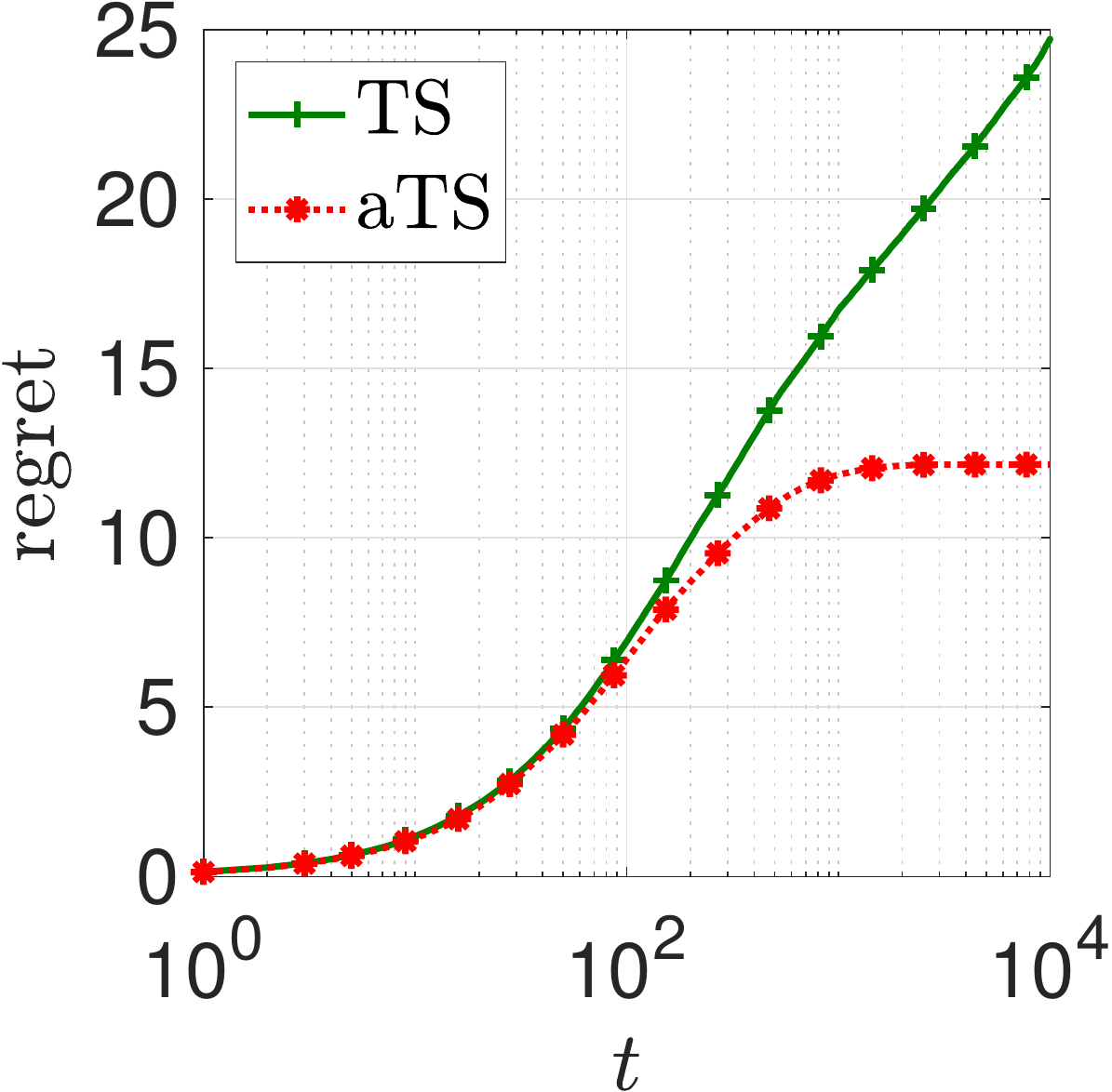}}
	\end{subfigure}
	\hfill
	\begin{subfigure}[t]{0.23\textwidth}
		\raisebox{-\height}{\includegraphics[width=\textwidth]{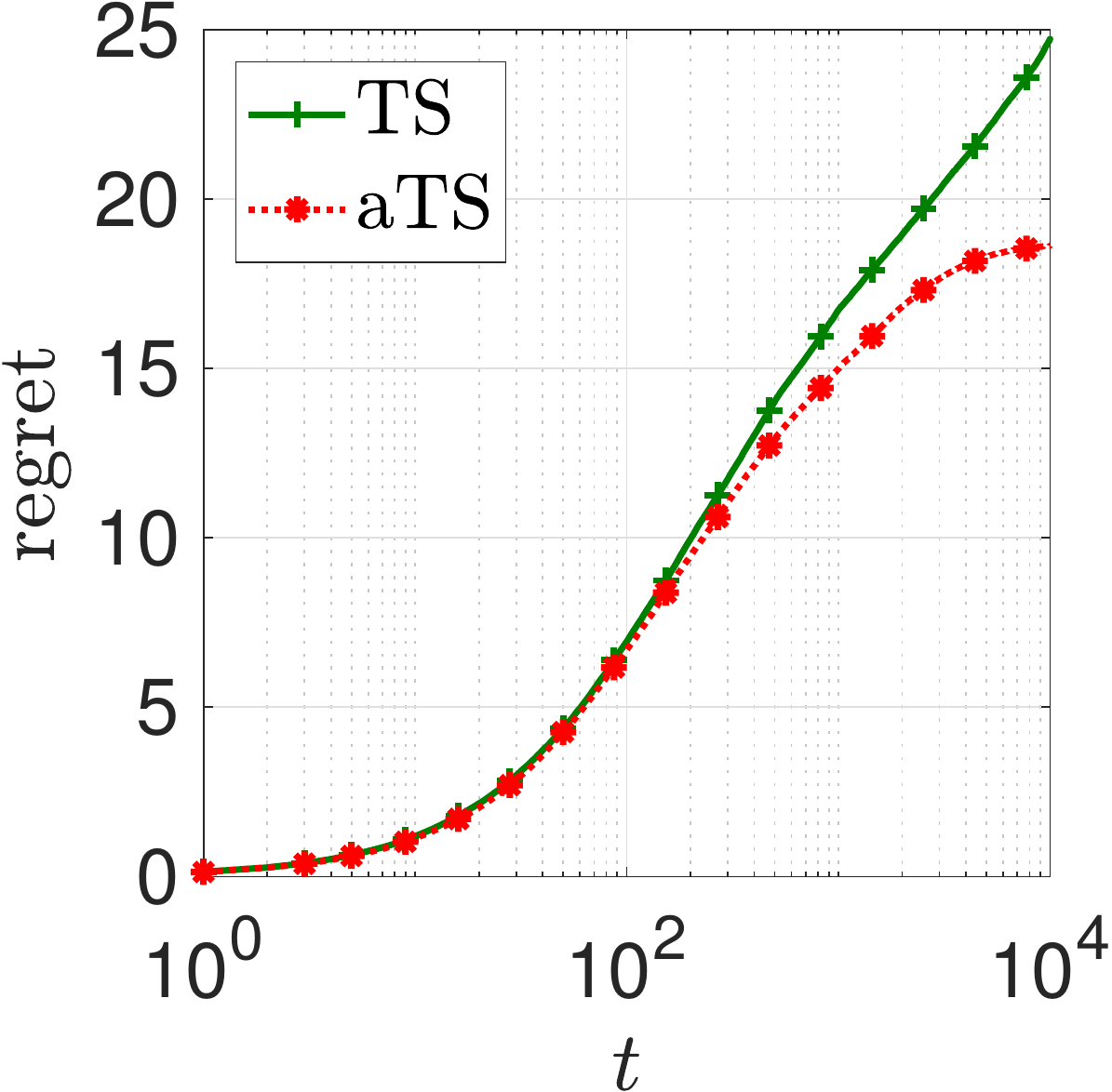}}
	\end{subfigure}
	\hfill
	\begin{subfigure}[t]{0.23\textwidth}
		\raisebox{-\height}{\includegraphics[width=\textwidth]{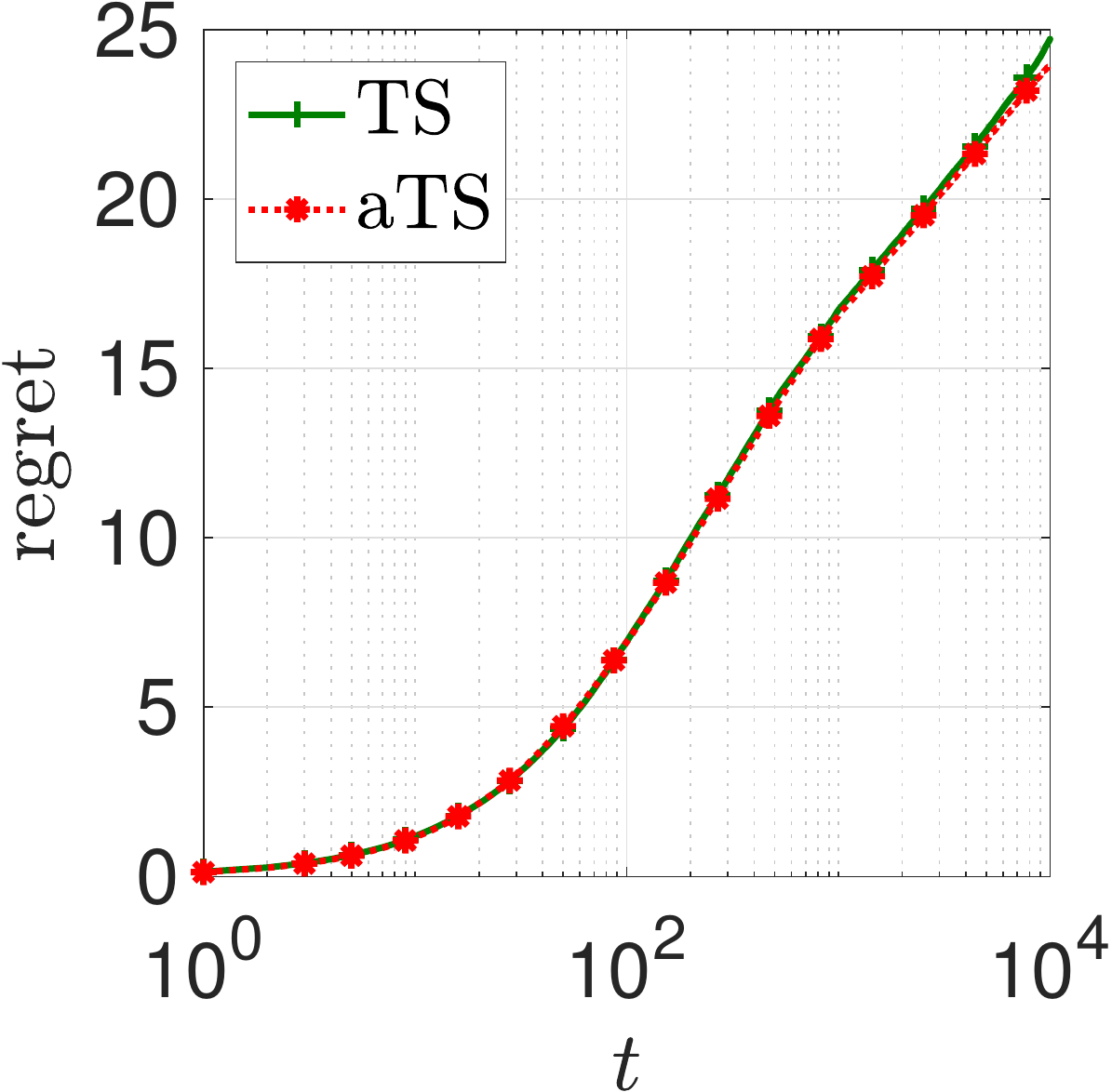}}
	\end{subfigure}
	
	\caption{\small Cumulative regret trajectory of the policies studied in the paper under a stationary information arrival process at a constant rate $\lambda$. \textit{Left column}: $\lambda = \frac{500}{T}$; \textit{Middle column}: $\lambda = \frac{100}{T}$; \textit{Right column}: $\lambda = \frac{10}{T}$.}\label{fig-main-stationary}
\end{figure}

\begin{figure}[H]
	\centering
	\begin{subfigure}[t]{0.23\textwidth}
		\raisebox{-\height}{\includegraphics[width=\textwidth]{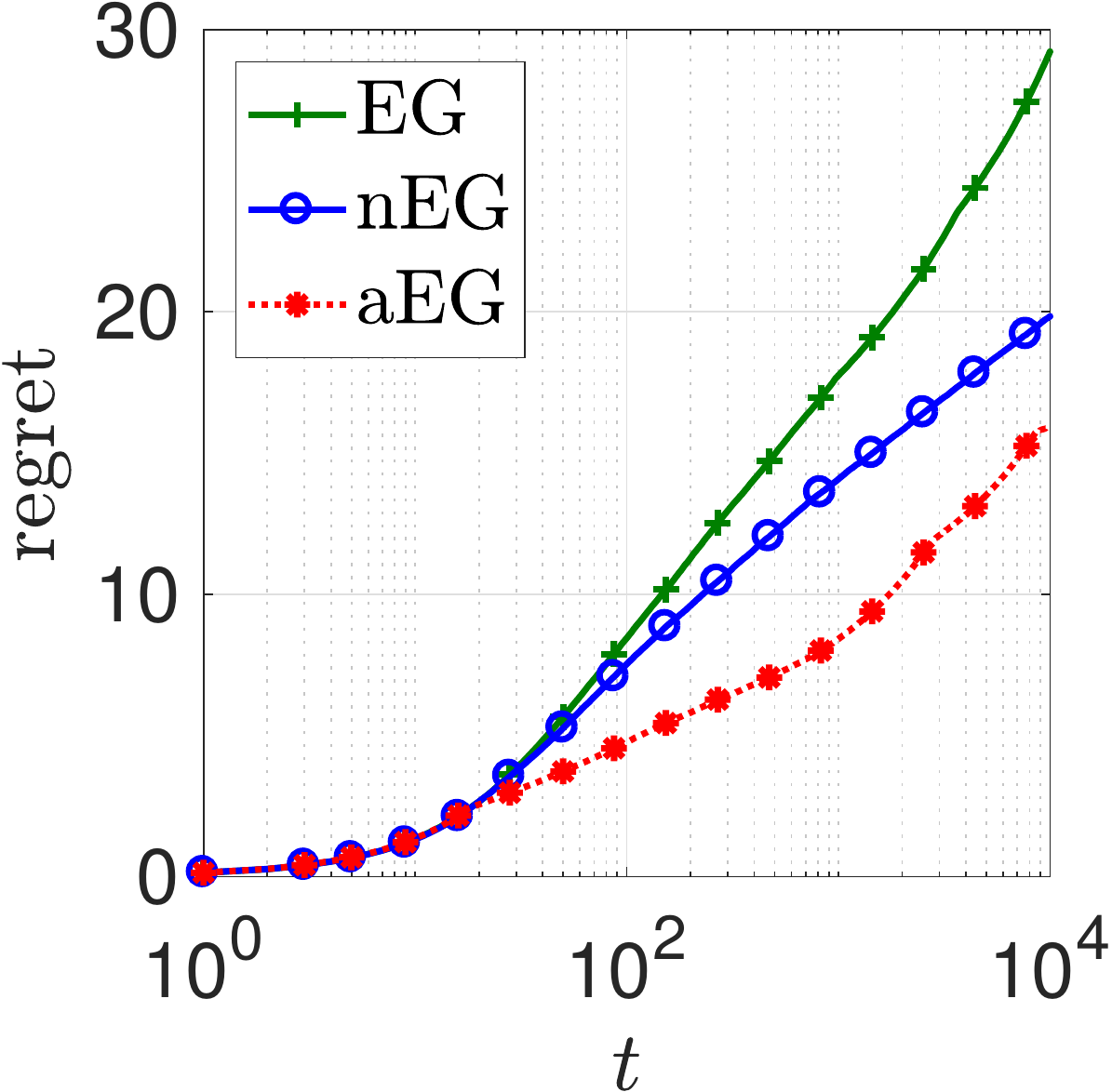}}
	\end{subfigure}
	\hfill
	\begin{subfigure}[t]{0.23\textwidth}
		\raisebox{-\height}{\includegraphics[width=\textwidth]{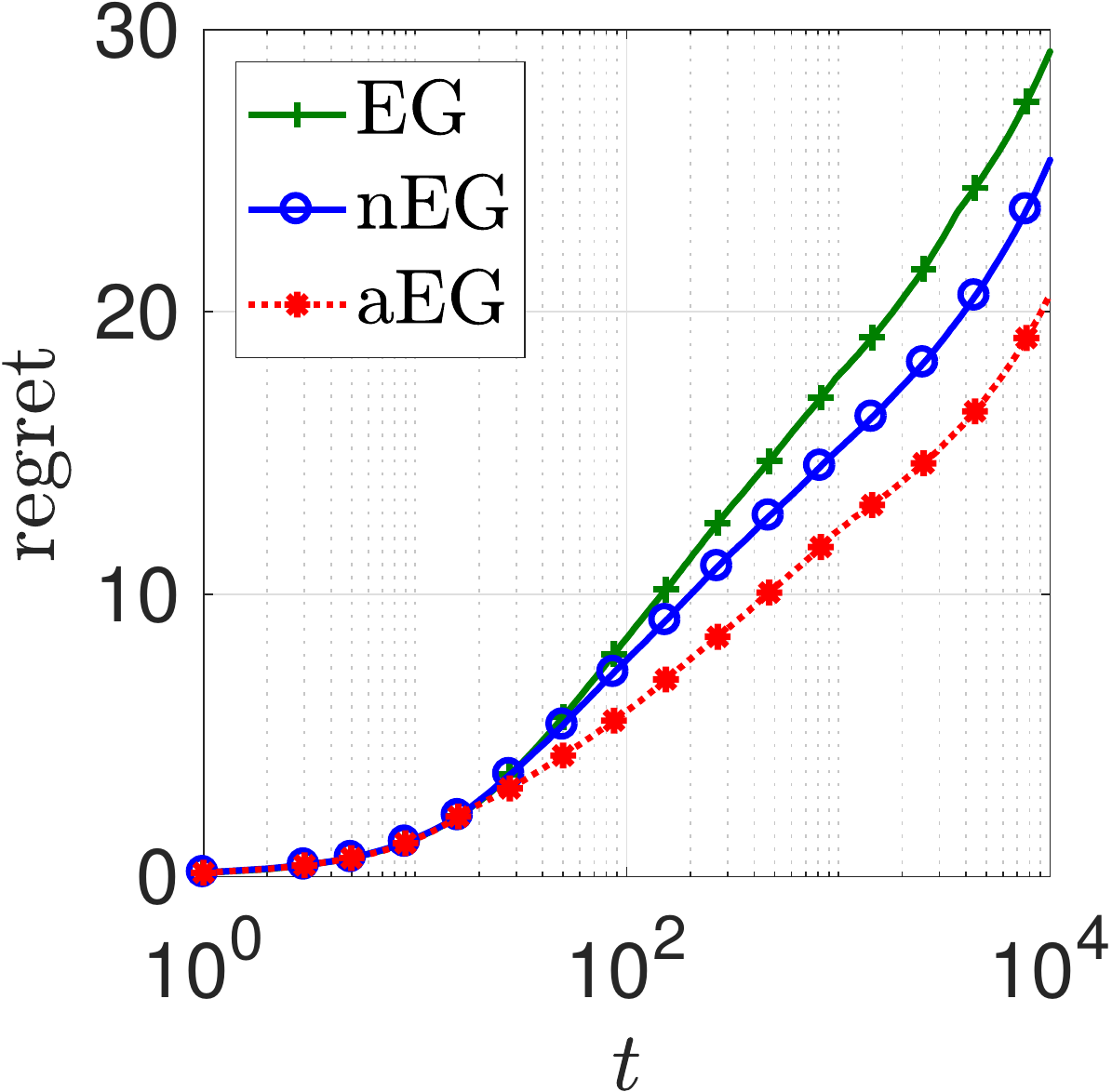}}
	\end{subfigure}
	\hfill
	\begin{subfigure}[t]{0.23\textwidth}
		\raisebox{-\height}{\includegraphics[width=\textwidth]{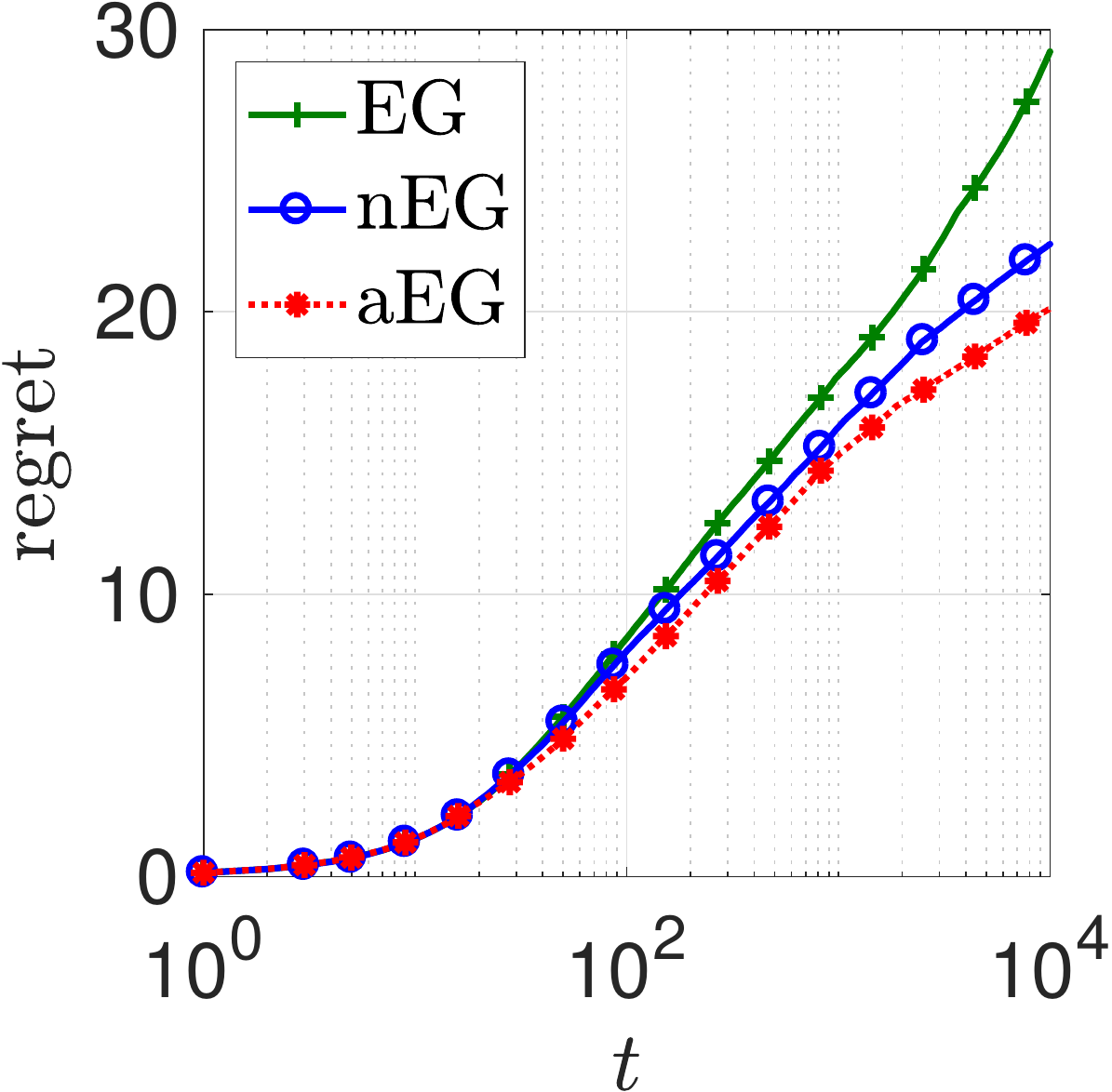}}
	\end{subfigure}
	
	\begin{subfigure}[t]{0.23\textwidth}
		\raisebox{-\height}{\includegraphics[width=\textwidth]{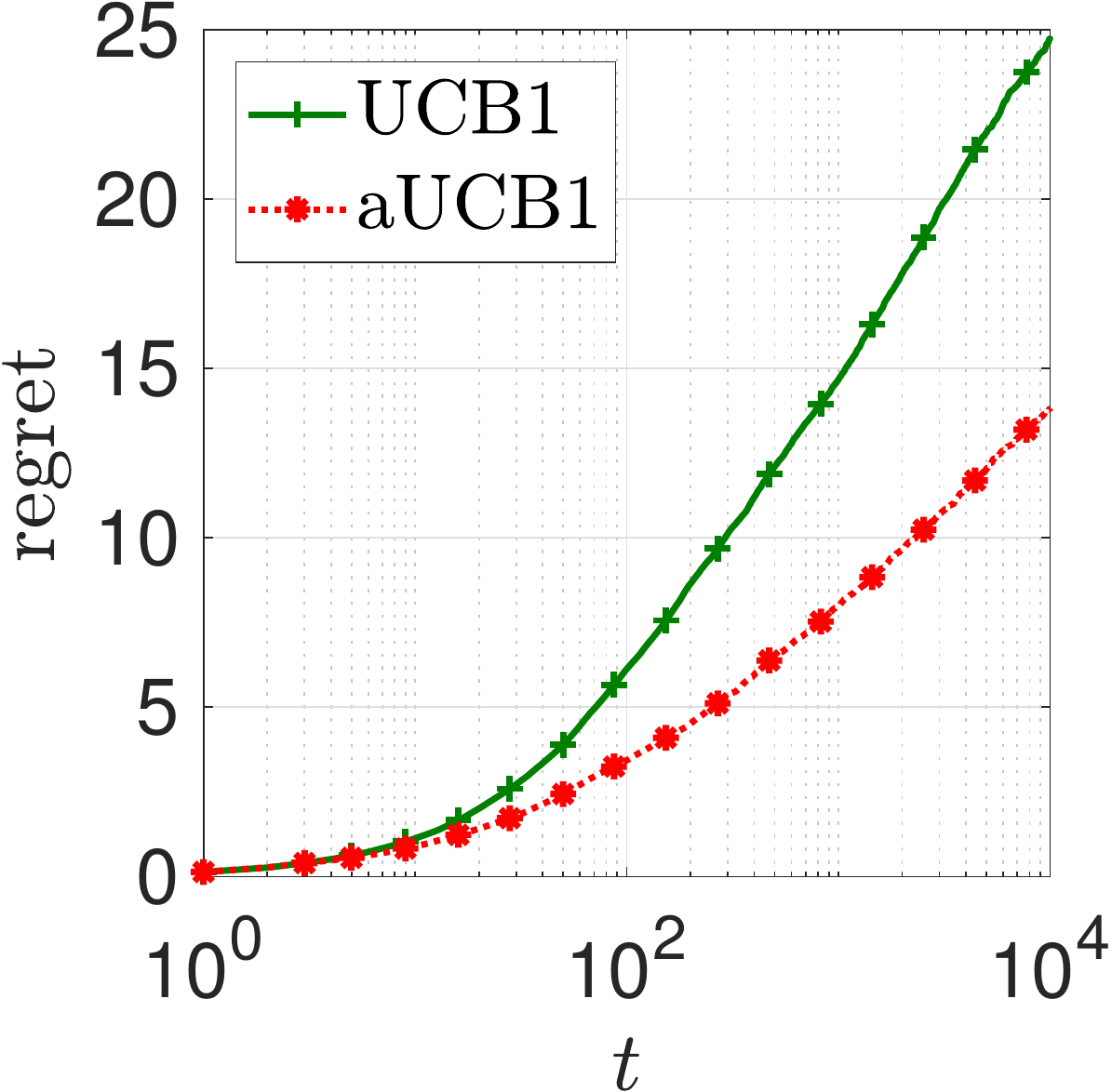}}
	\end{subfigure}
	\hfill
	\begin{subfigure}[t]{0.23\textwidth}
		\raisebox{-\height}{\includegraphics[width=\textwidth]{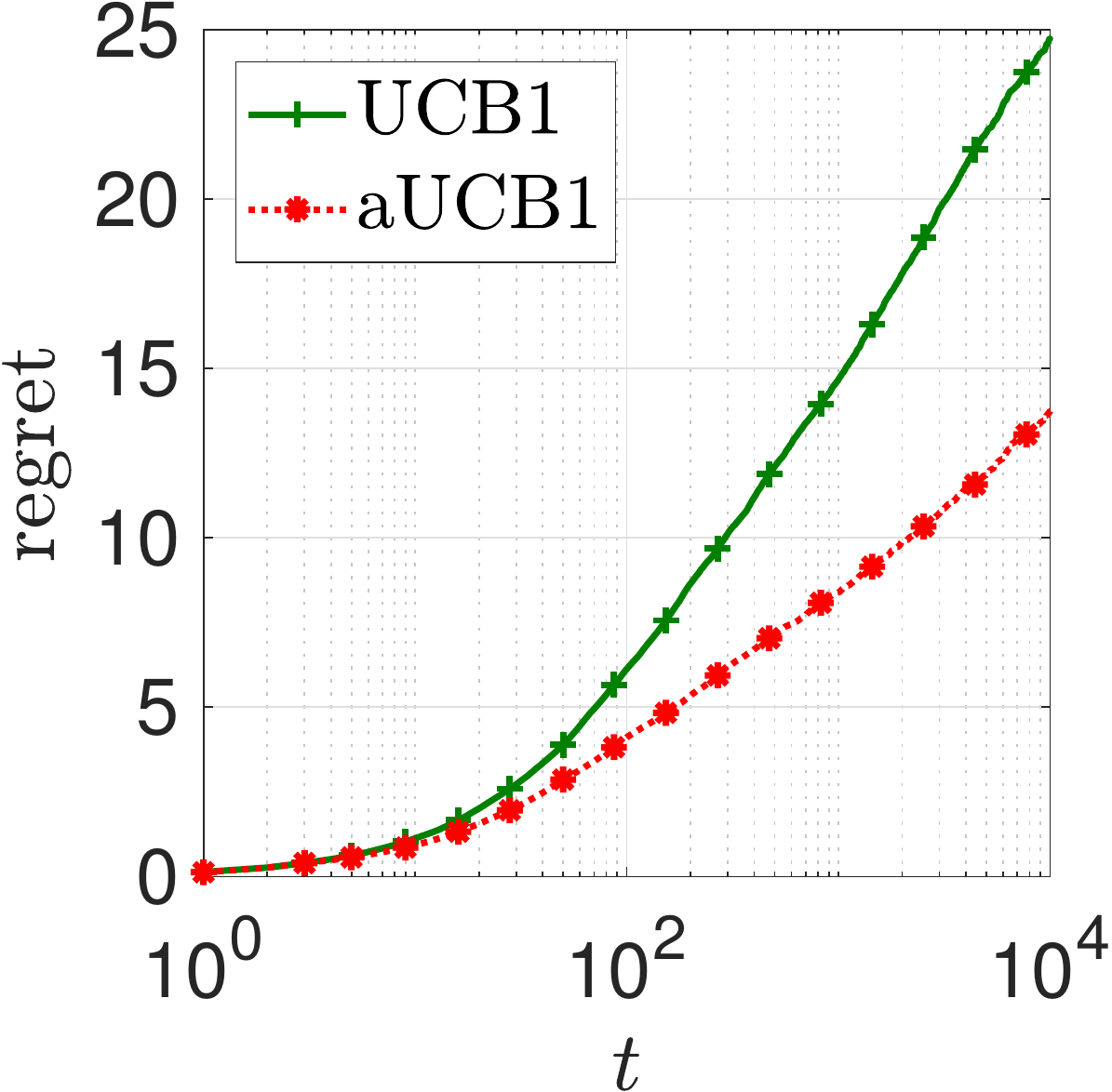}}
	\end{subfigure}
	\hfill
	\begin{subfigure}[t]{0.23\textwidth}
		\raisebox{-\height}{\includegraphics[width=\textwidth]{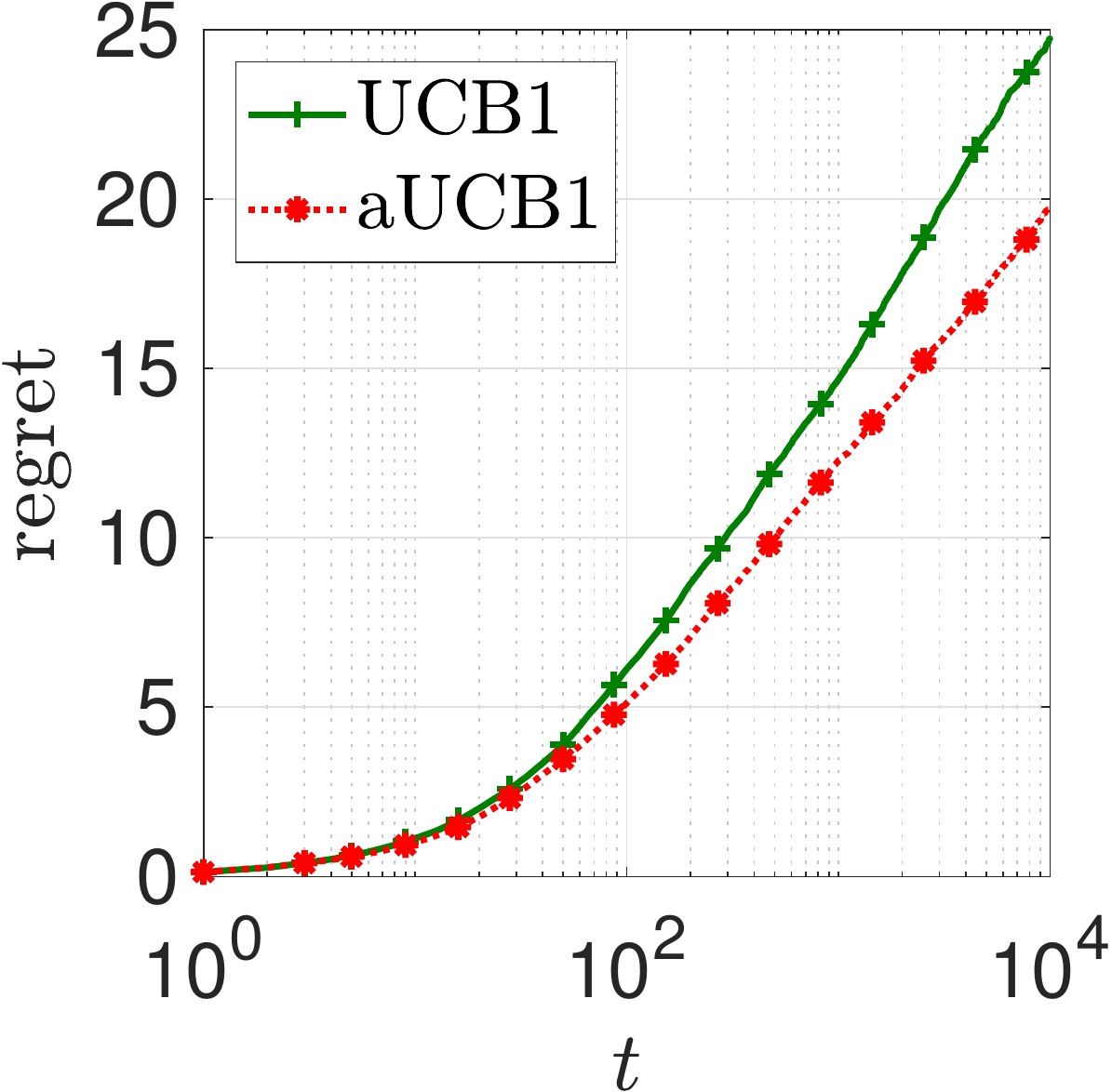}}
	\end{subfigure}
	
	\begin{subfigure}[t]{0.23\textwidth}
		\raisebox{-\height}{\includegraphics[width=\textwidth]{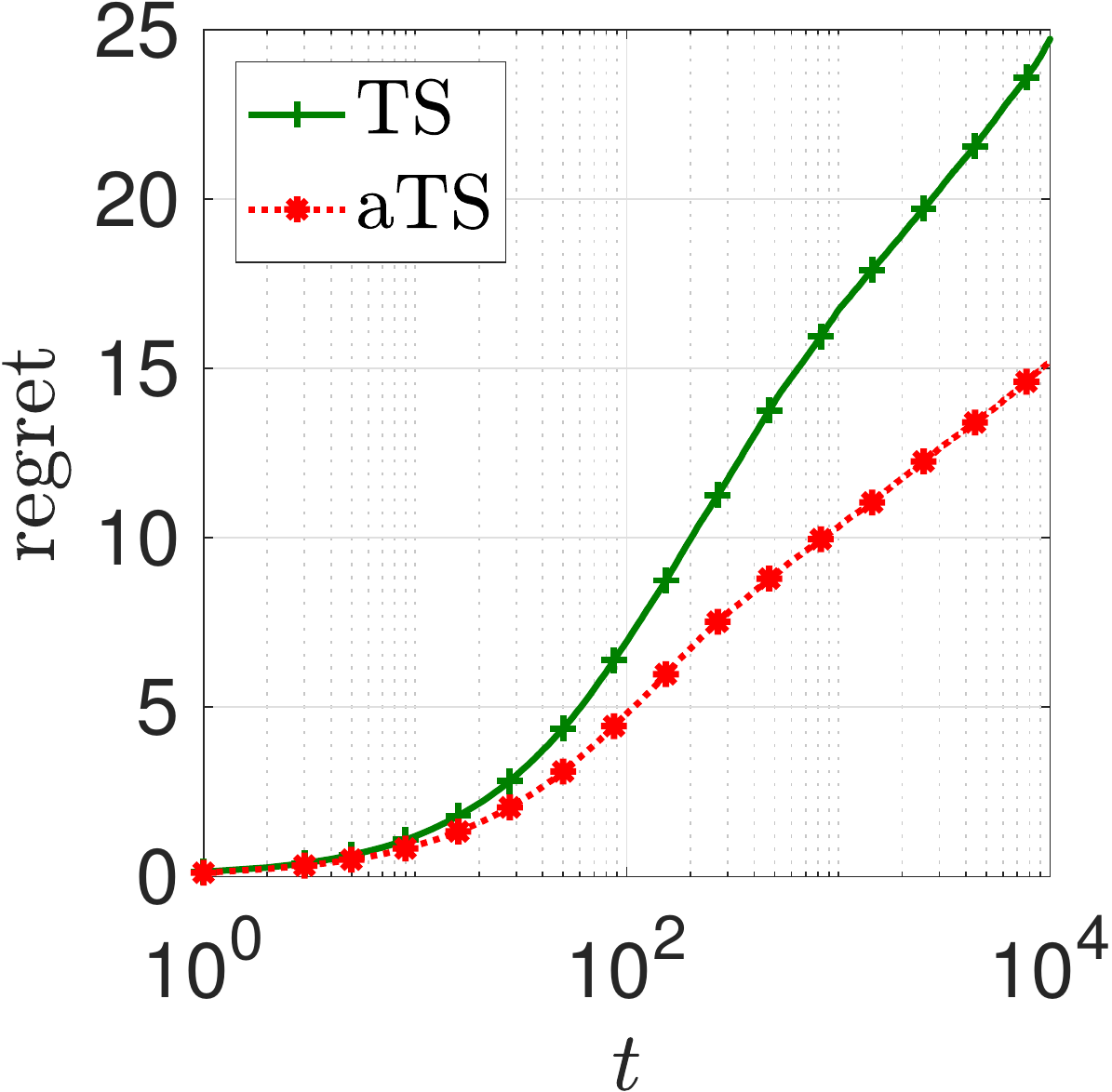}}
	\end{subfigure}
	\hfill
	\begin{subfigure}[t]{0.23\textwidth}
		\raisebox{-\height}{\includegraphics[width=\textwidth]{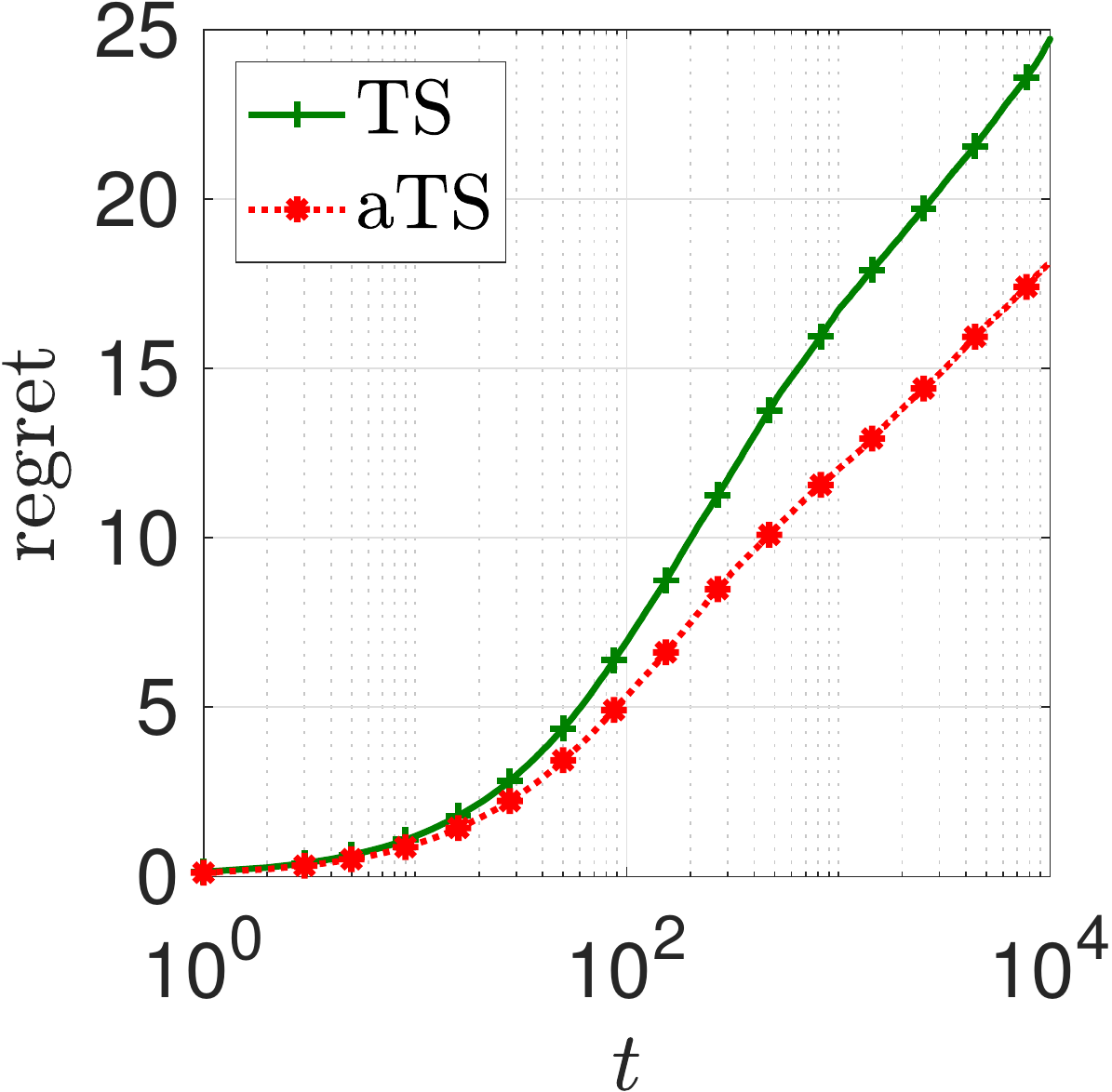}}
	\end{subfigure}
	\hfill
	\begin{subfigure}[t]{0.23\textwidth}
		\raisebox{-\height}{\includegraphics[width=\textwidth]{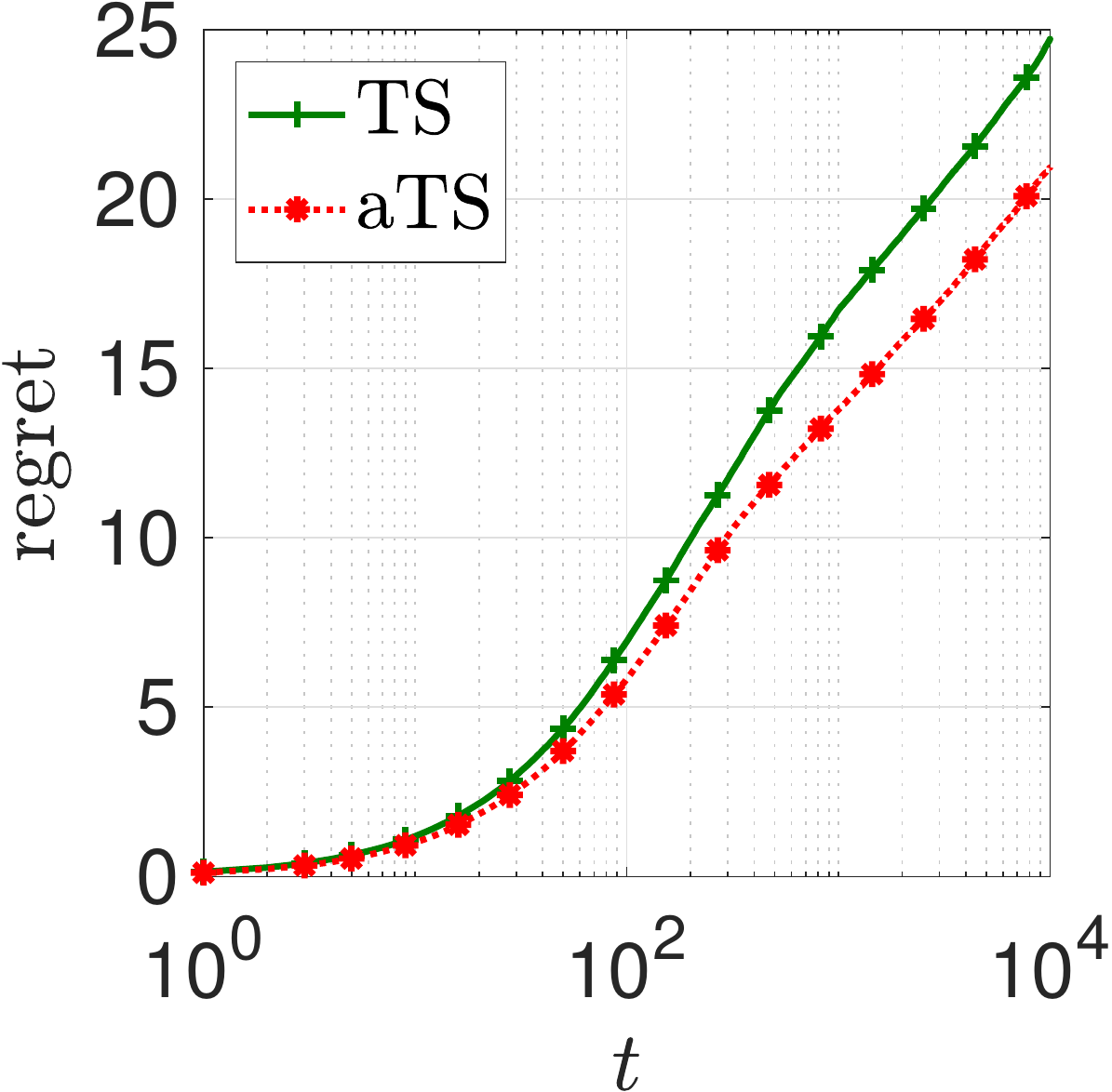}}
	\end{subfigure}
	\vspace{-0.2cm}
	\caption{\small Cumulative regret trajectory under a diminishing information arrival process with rate $\frac{\kappa^{\mathrm{aux}}}{t}$ at each time step $t$. \textit{Left column}: $\kappa^{\mathrm{aux}} = 4$; \textit{Middle column}: $\kappa^{\mathrm{aux}} = 2$; \textit{Right column}: $\kappa^{\mathrm{aux}} = 1$.}\label{fig-main-diminishing}\vspace{-0.3cm}
\end{figure}

\vspace{-0.2cm}
\subsection{Different Values of the Tuning Parameter c}\label{section-misspecification-tuning-parameter}\vspace{-0.1cm}

\noindent \textbf{Setup.} We use the setup discussed in \S\ref{subsec:numcomparison} to study the effect of using different values for the tuning parameter $c$. 
The plots here detail results for parametric values of $c=\{0.4, 1.0, 1.6\}$ for $\epsilon_t$-greedy, $c\in\{0.4, 1.0, 1.6\}$ for UCB1, and $c=\{0.1, 0.5, 0.7\}$ for Thompson sampling, and using four different information arrival instances: stationary with $\lambda \in \{500/T, 10/ T\}$ and diminishing with $\kappa^{\mathrm{aux}} \in \{4.0, 1.0\}$.

\medskip
\noindent \textbf{Results and Discussion.} Plots comparing the empirical regret accumulation of policies with different tuning parameters $c$ for the stationary and diminishing information arrival processes appear in Figure \ref{fig-miss-c}. \Copy{misspecification-tuning-parameter}{For $\epsilon_t$-greedy and Thompson sampling, one may observe that when there is a lot of auxiliary information (the first and third columns) lower values of $c$ lead to better performance. However, when there is less auxiliary information (the second and fourth columns) very small values of $c$ may result in high regret. The reason for this is that when $c$ is small these policies tend to explore less, hence, in the presence of a lot of auxiliary information they incur little regret but if auxiliary information is rare these policies under-explores and incur a lot of regret. We observe a similar effect for UCB1 as well.}

\begin{figure} [H]
	\centering
	\begin{subfigure}[t]{0.23\textwidth}
		\raisebox{-\height}{\includegraphics[width=\textwidth]{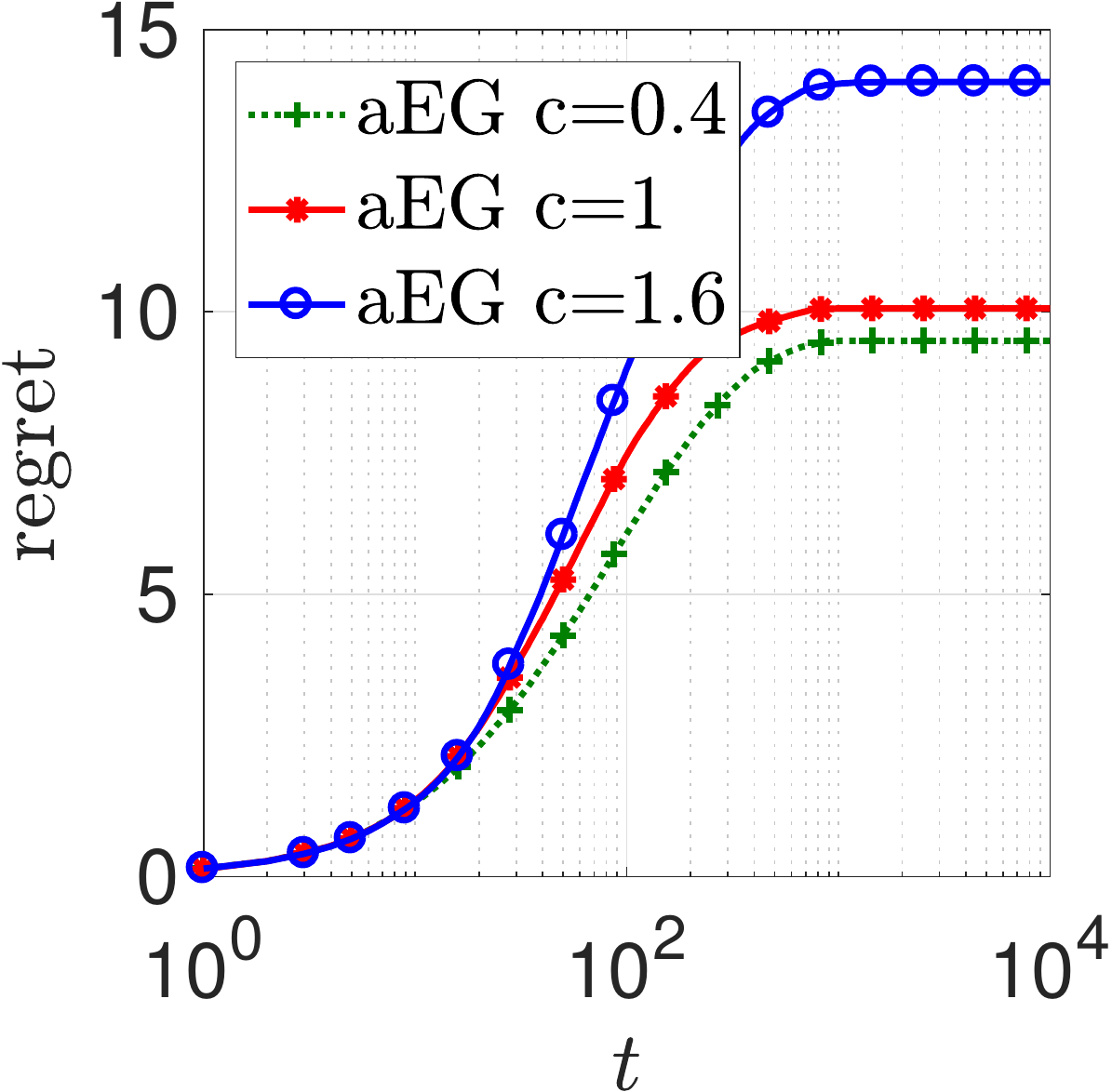}}
	\end{subfigure}
	\hfill
	\begin{subfigure}[t]{0.23\textwidth}
		\raisebox{-\height}{\includegraphics[width=\textwidth]{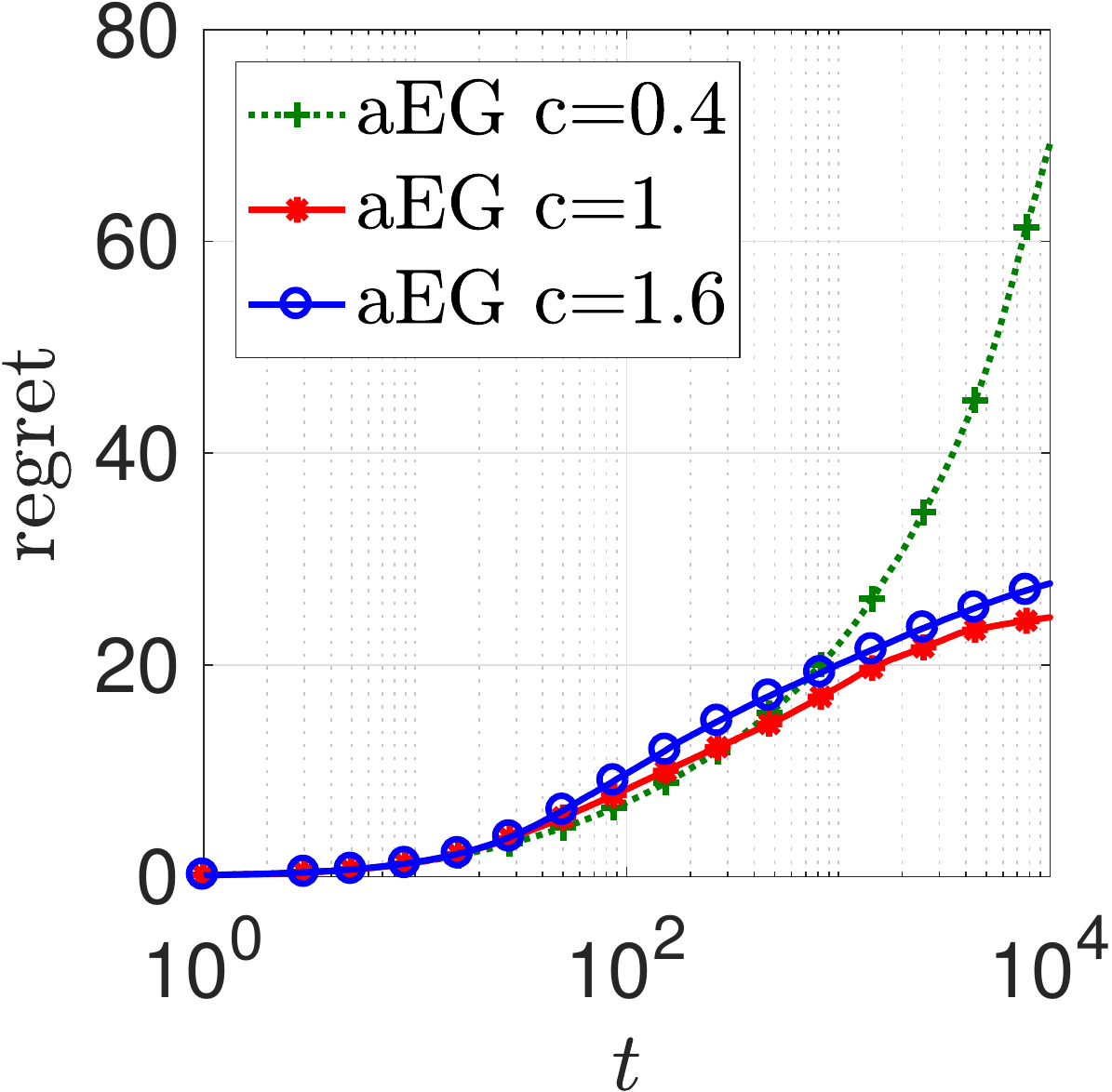}}
	\end{subfigure}
	\hfill
	\begin{subfigure}[t]{0.23\textwidth}
		\raisebox{-\height}{\includegraphics[width=\textwidth]{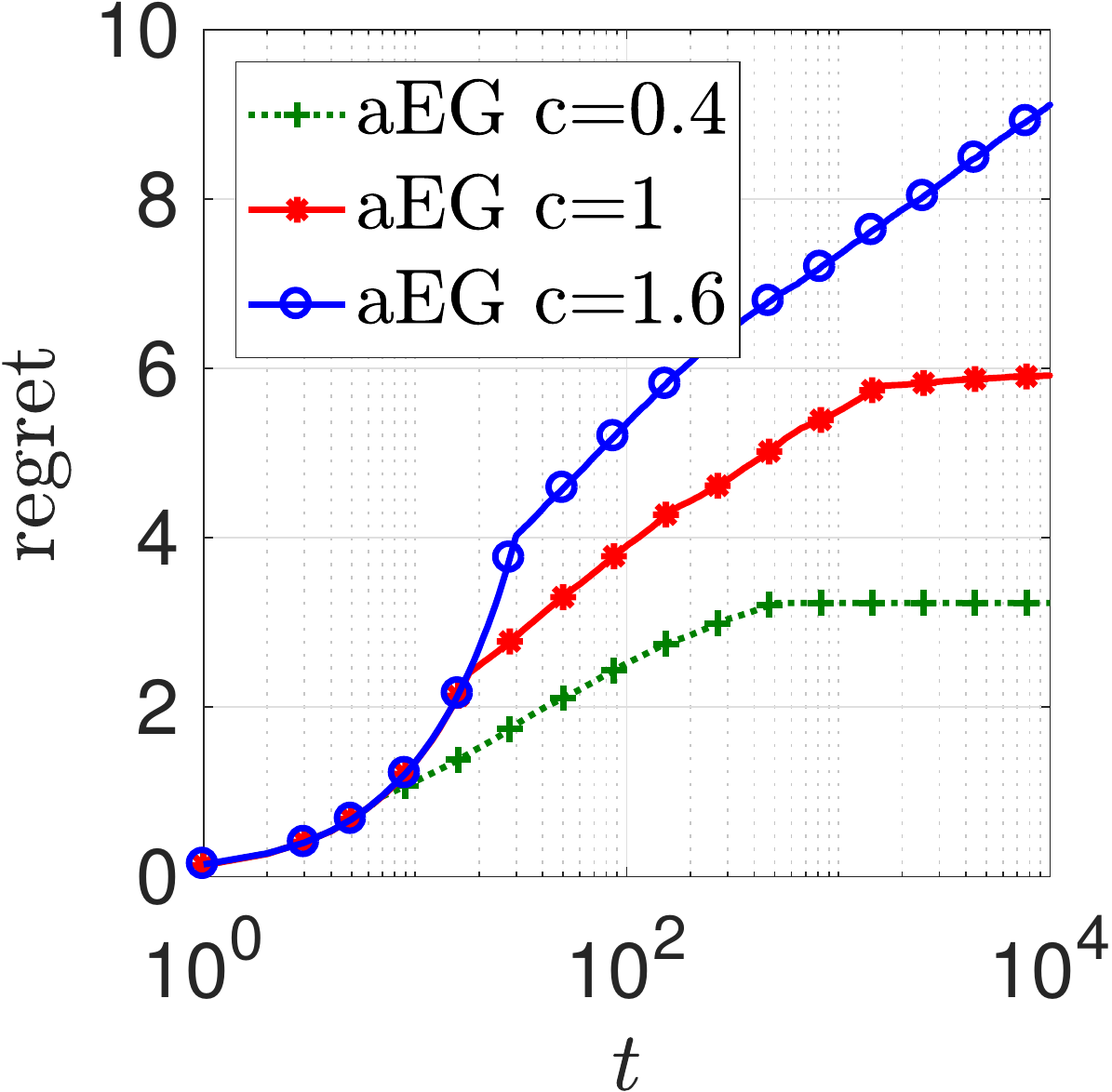}}
	\end{subfigure}
	\hfill
	\begin{subfigure}[t]{0.23\textwidth}
		\raisebox{-\height}{\includegraphics[width=\textwidth]{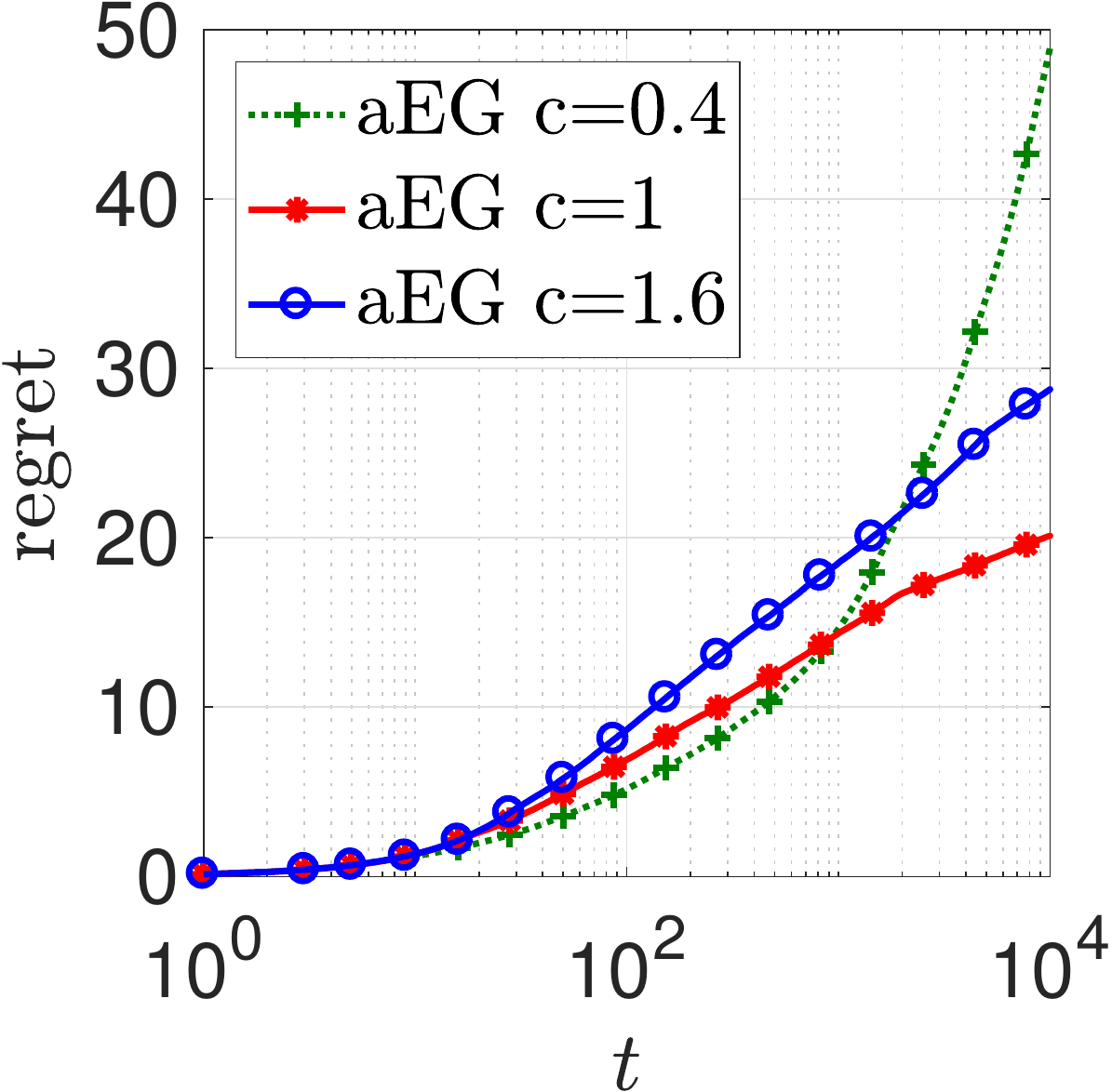}}
	\end{subfigure}
	
	\begin{subfigure}[t]{0.23\textwidth}
		\raisebox{-\height}{\includegraphics[width=\textwidth]{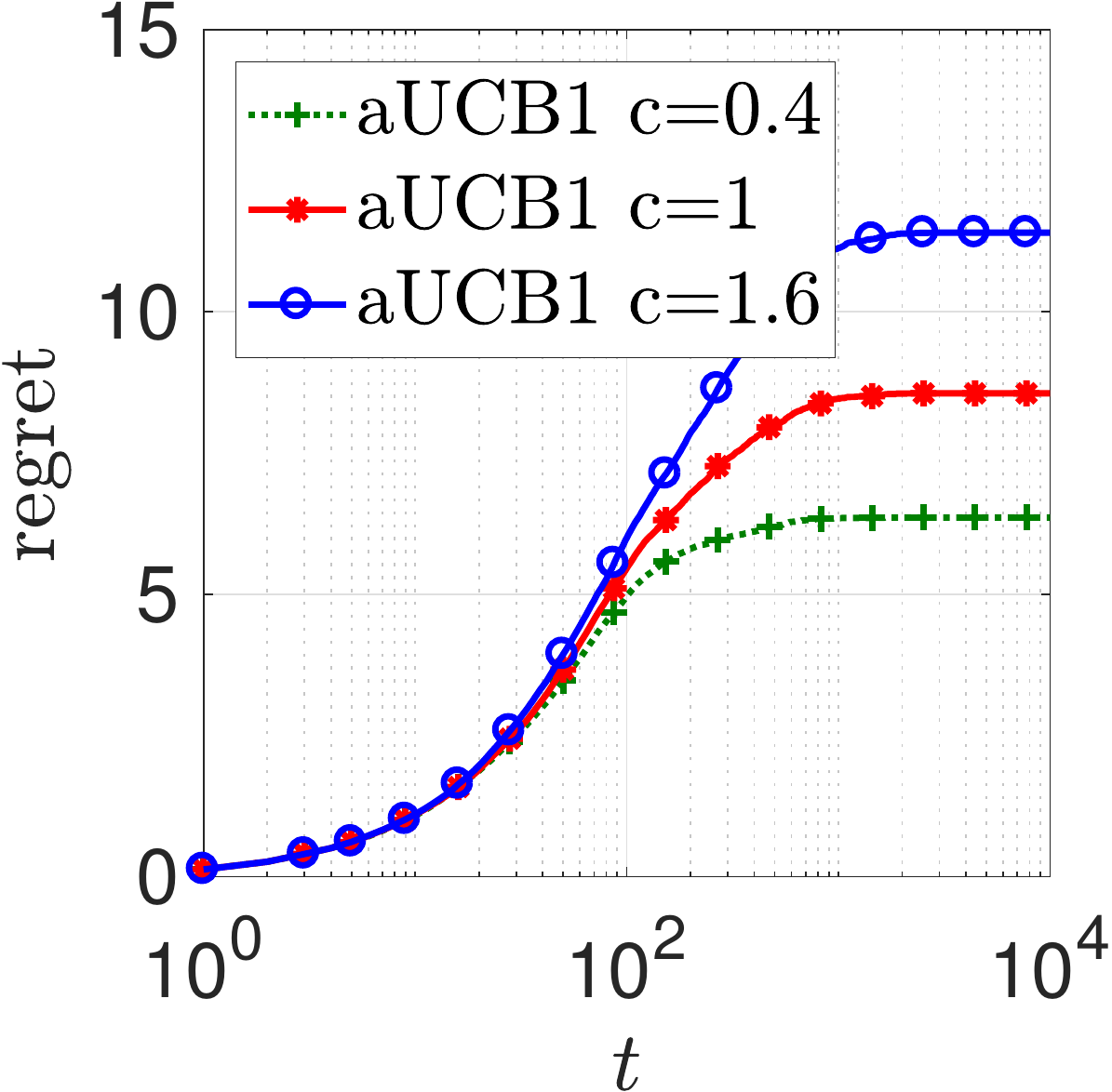}}
	\end{subfigure}
	\hfill
	\begin{subfigure}[t]{0.23\textwidth}
		\raisebox{-\height}{\includegraphics[width=\textwidth]{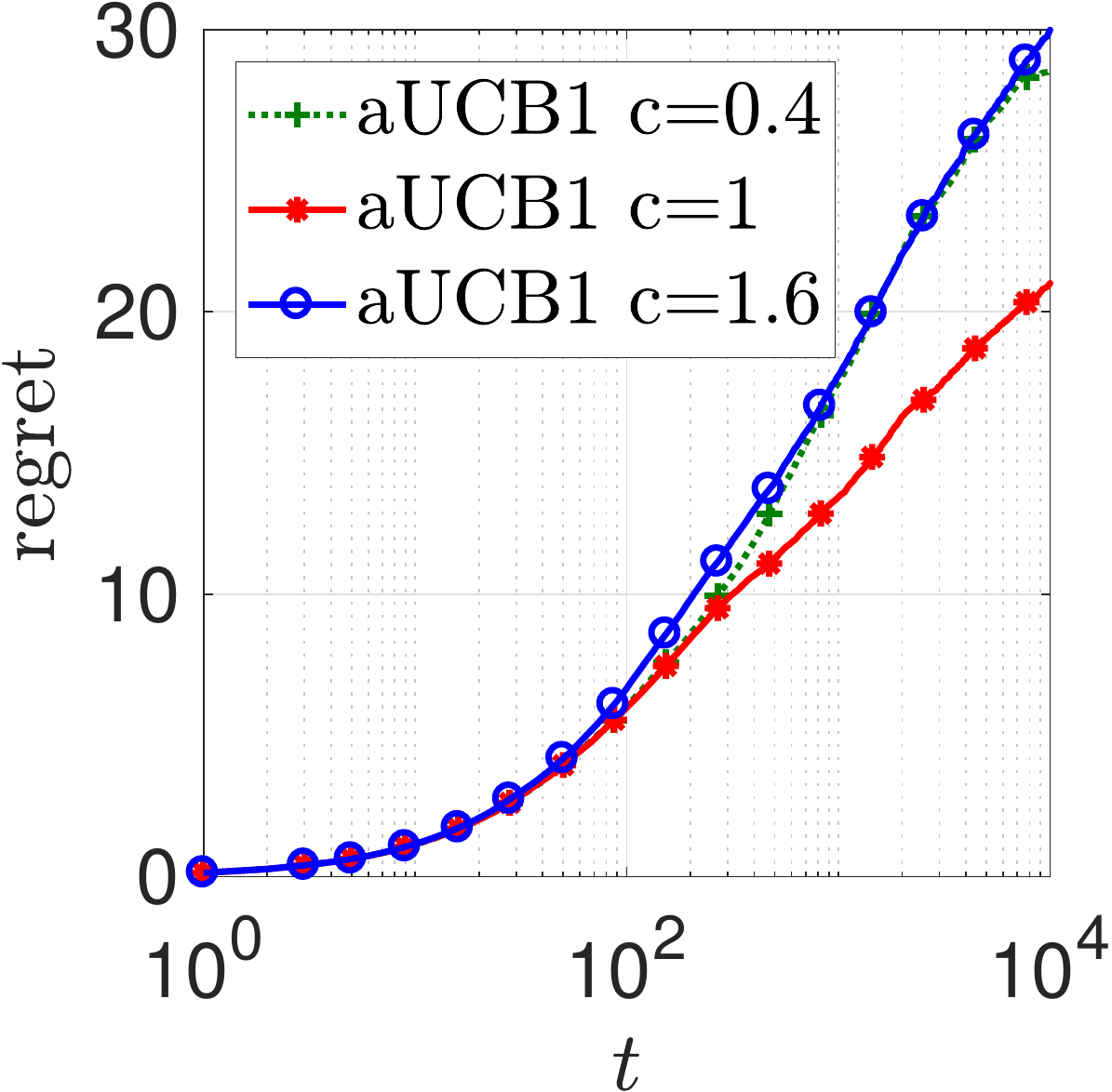}}
	\end{subfigure}
	\hfill
	\begin{subfigure}[t]{0.23\textwidth}
		\raisebox{-\height}{\includegraphics[width=\textwidth]{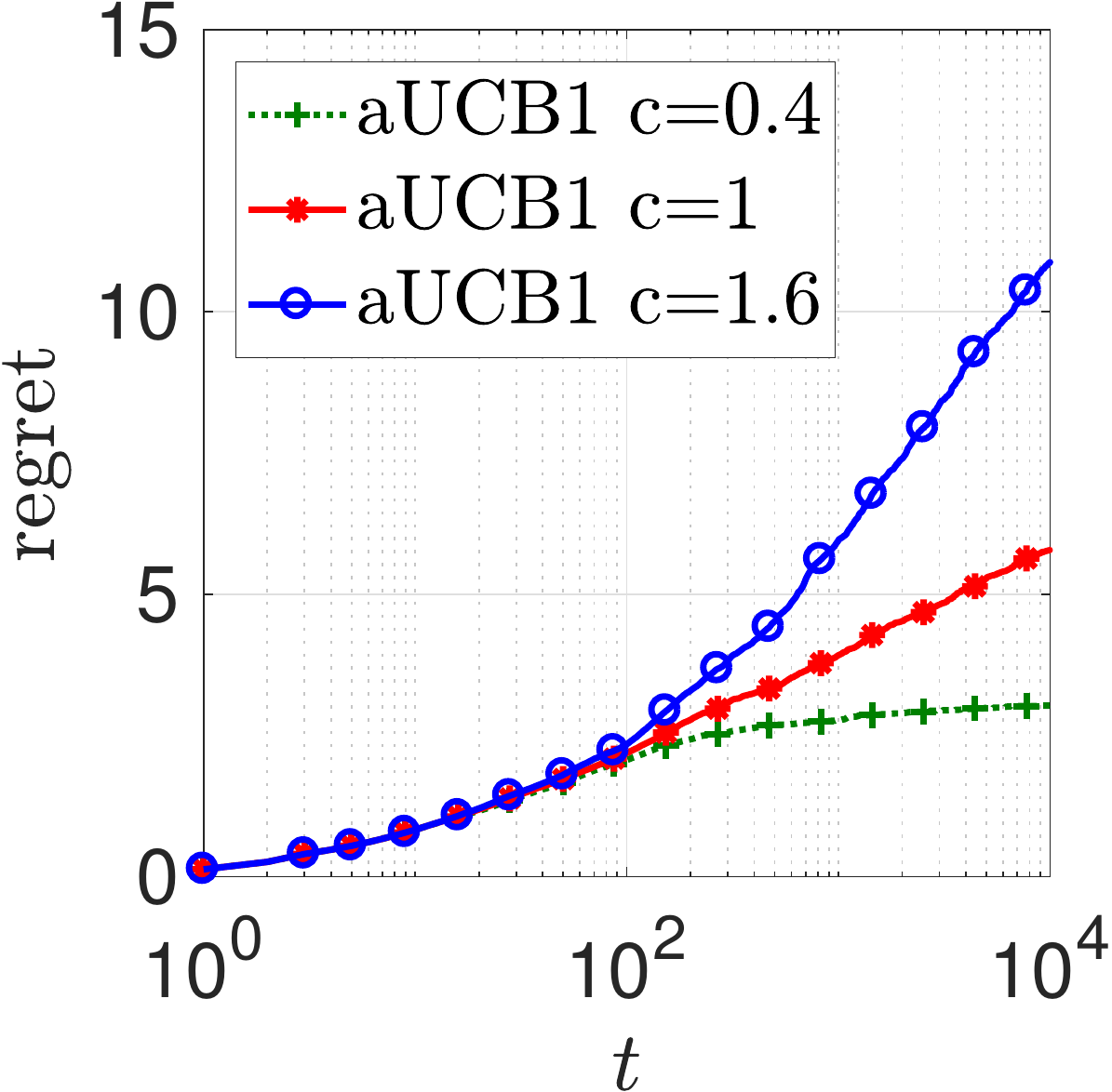}}
	\end{subfigure}
	\hfill
	\begin{subfigure}[t]{0.23\textwidth}
		\raisebox{-\height}{\includegraphics[width=\textwidth]{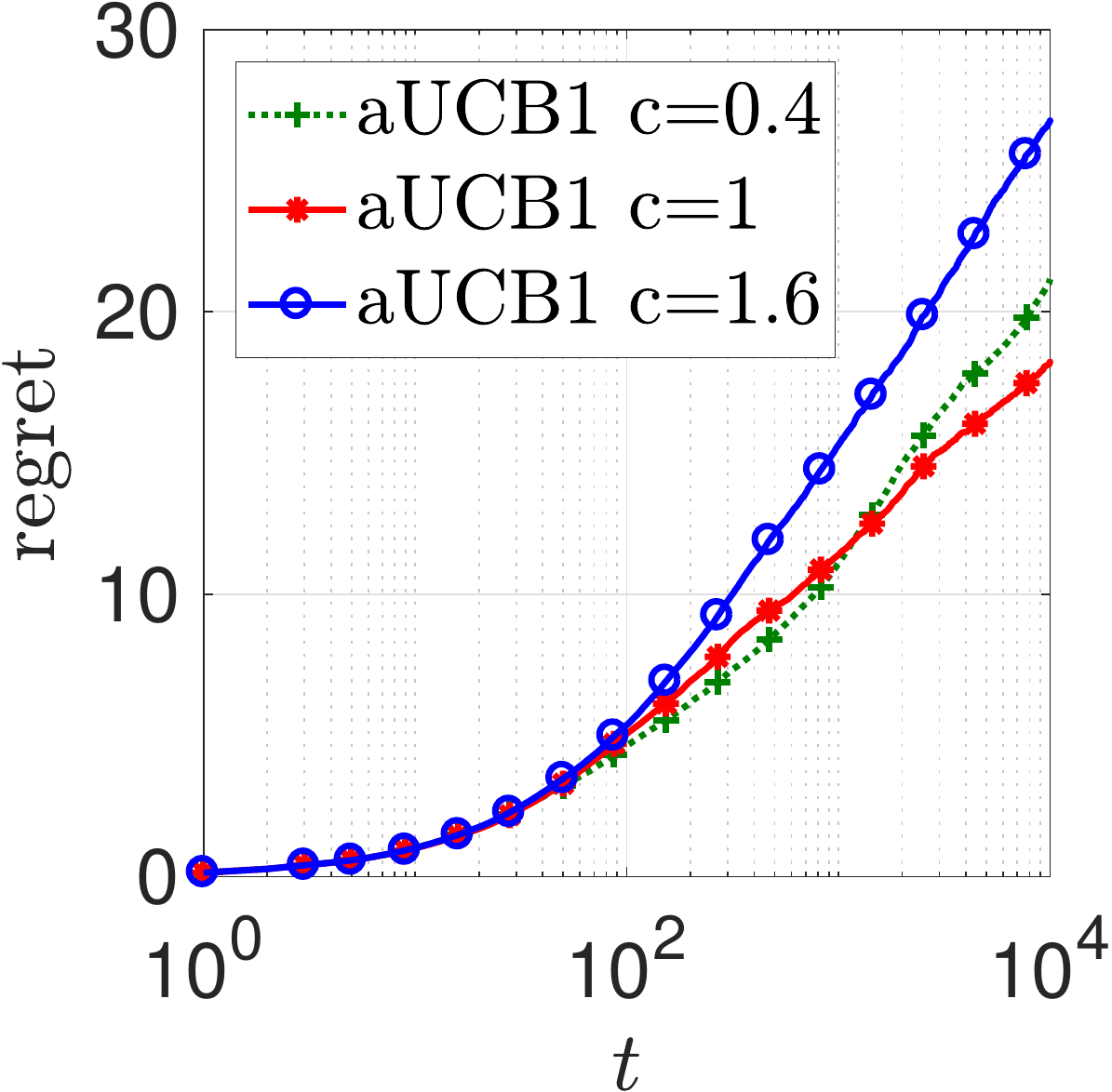}}
	\end{subfigure}
	
	\begin{subfigure}[t]{0.23\textwidth}
		\raisebox{-\height}{\includegraphics[width=\textwidth]{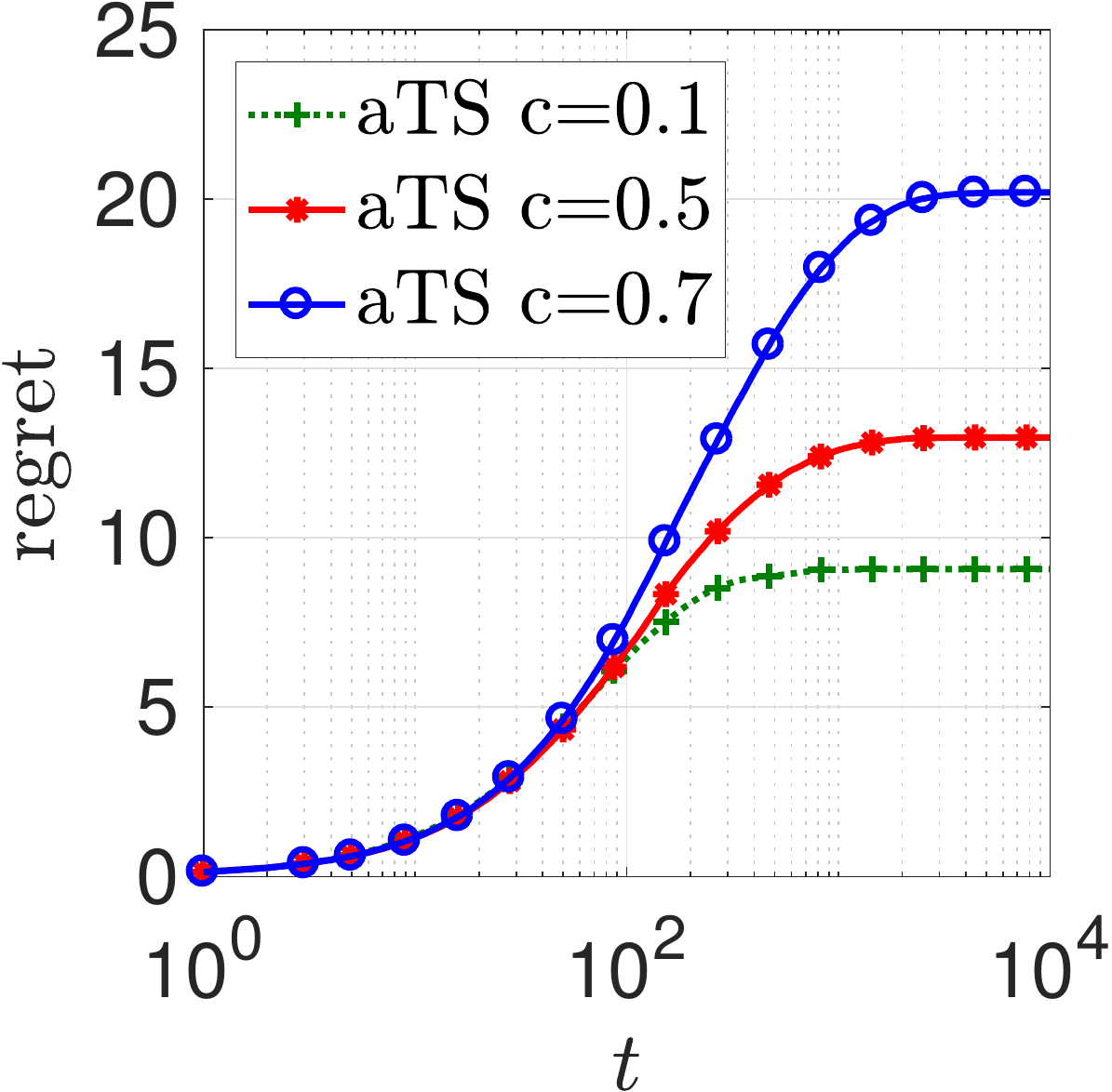}}
	\end{subfigure}
	\hfill
	\begin{subfigure}[t]{0.23\textwidth}
		\raisebox{-\height}{\includegraphics[width=\textwidth]{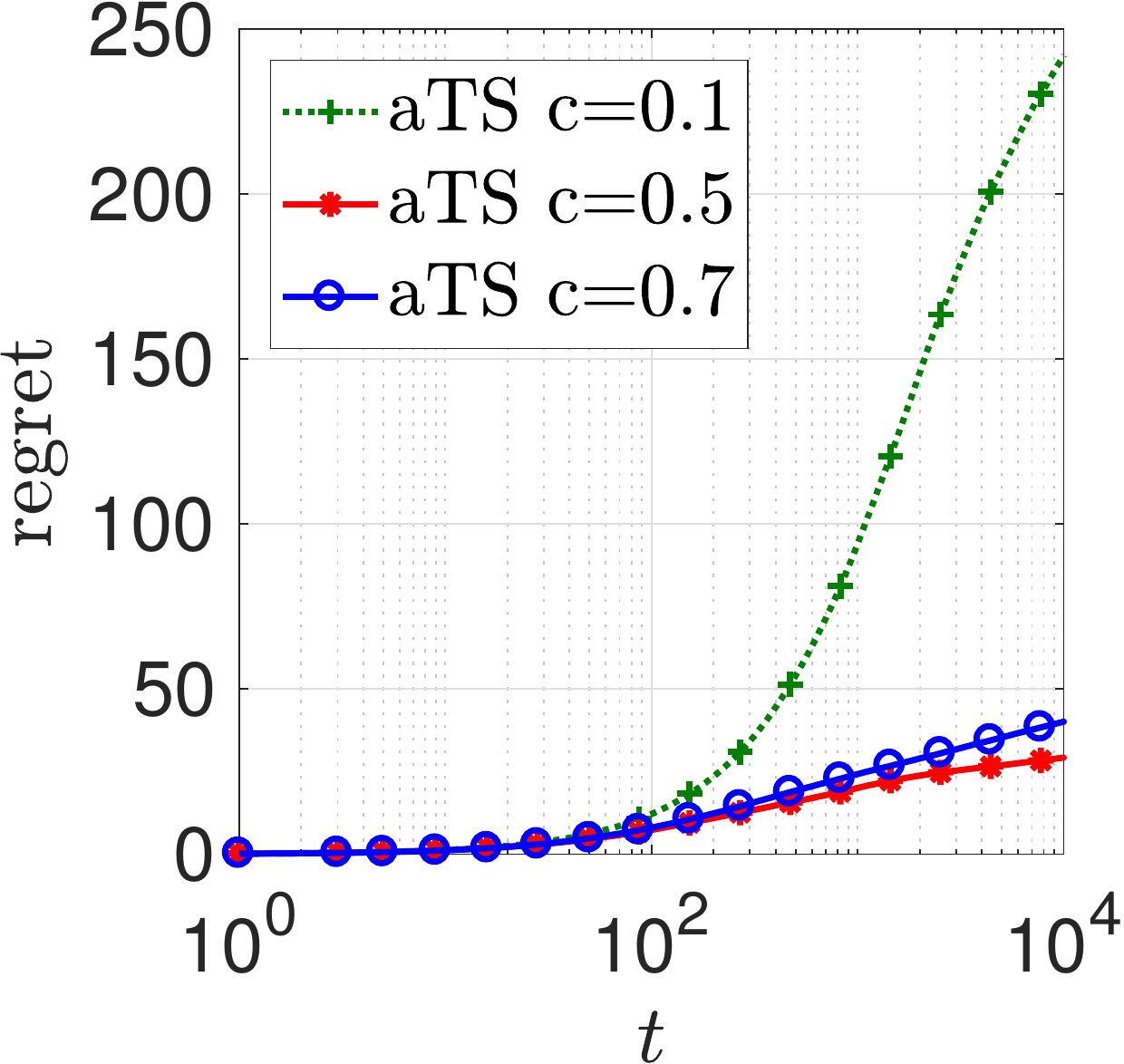}}
	\end{subfigure}
	\hfill
	\begin{subfigure}[t]{0.23\textwidth}
		\raisebox{-\height}{\includegraphics[width=\textwidth]{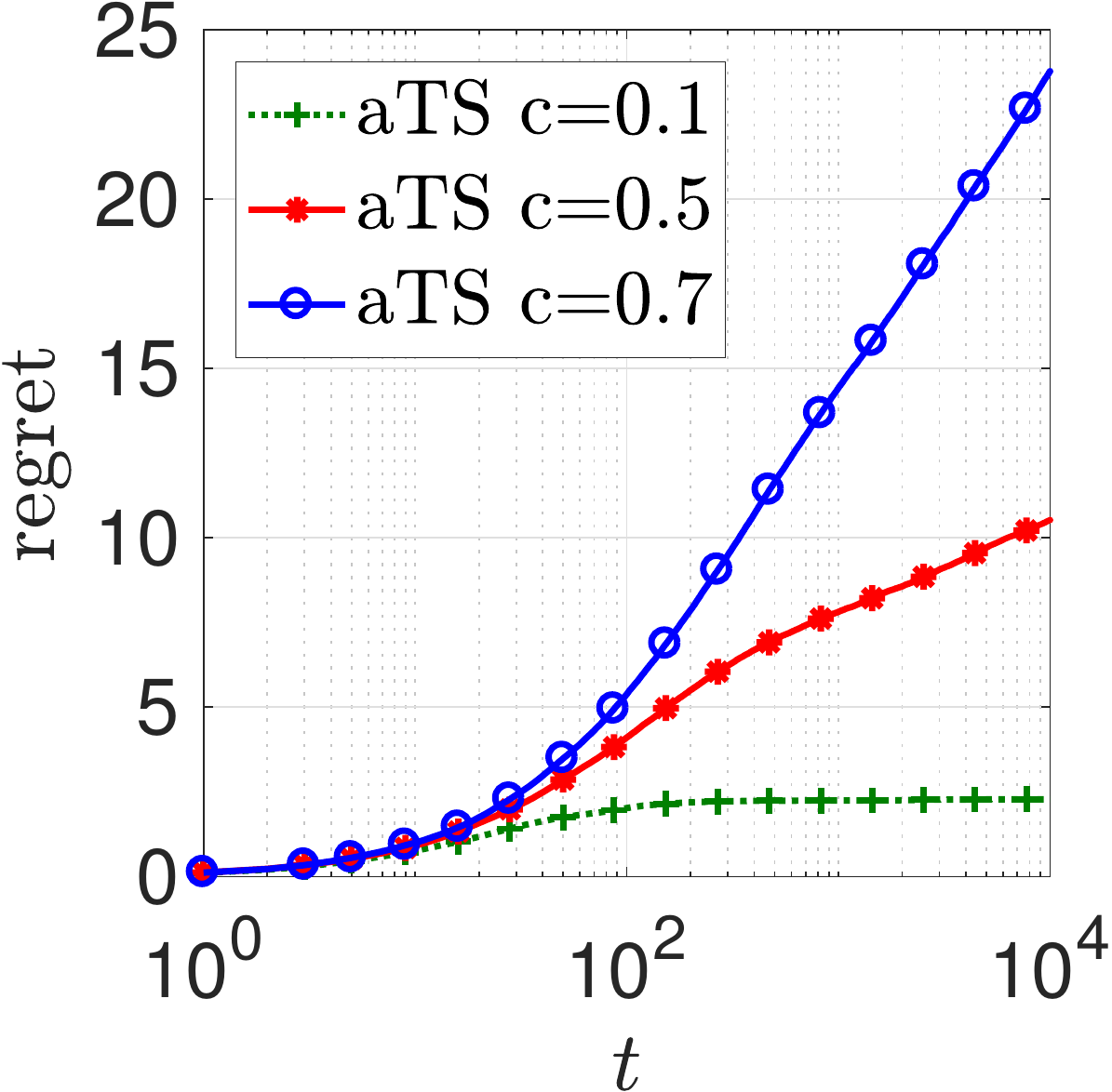}}
	\end{subfigure}
	\hfill
	\begin{subfigure}[t]{0.23\textwidth}
		\raisebox{-\height}{\includegraphics[width=\textwidth]{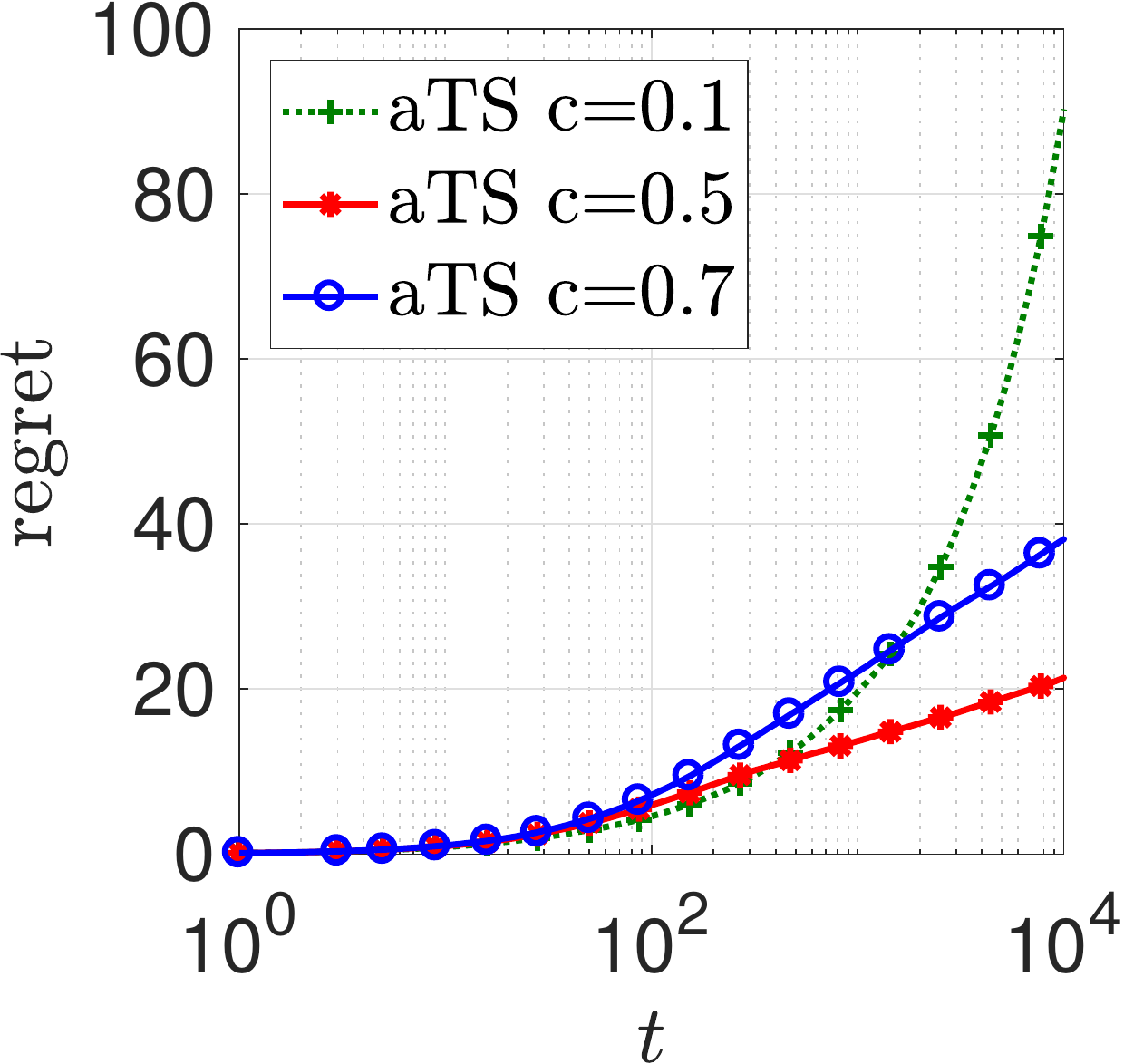}}
	\end{subfigure}
	\vspace{-0.2cm}
	\caption{\small The effect of the tuning parameter $c$ on the performance of different policies under different information arrival processes. \textit{First column on the left}: stationary information arrival process with constant rate $\lambda = \frac{500}{T}$. \textit{Second column}: stationary information arrival process with constant rate $\lambda = \frac{100}{T}$. \textit{Third column}: diminishing information arrival process with rate $\frac{\kappa^{\mathrm{aux}}}{t}$ at each time step $t$ for $\kappa^{\mathrm{aux}} = 8.0$. \textit{Fourth column}: diminishing information arrival process with rate $\frac{\kappa^{\mathrm{aux}}}{t}$ at each time step $t$ for $\kappa^{\mathrm{aux}} = 1.0$. (For description of policies see~\S\ref{subsec:numcomparison}.)}\label{fig-miss-c}
\end{figure}

\vspace{-0.4cm}
\subsection{Misspecification of the Minimal Gap $\Delta$ for $\epsilon_t$-greedy with Adaptive Exploration}\label{section-misspecification-Delta-UCB1}\vspace{-0.1cm}

\noindent \textbf{Setup.} We use the setup from \S\ref{subsec:numcomparison} to study the effect of misspecifying the minimum gap $\Delta$ for $\epsilon_t$-greedy with virtual time indices. We experiment with values of $\hat \Delta$ that are higher than, lower than, and equal to the true value of $\Delta=0.2$. The representative results below detail the empirical regret incurred by using $\hat{\Delta} \in \{0.05, 0.2, 0.35\}$, in different information arrival processes: stationary arrival processes with $\lambda \in \{500/T, 10/ T\}$ and diminishing arrival processes with $\kappa^{\mathrm{aux}} \in \{8.0, 1.0\}$. We used the tuning parameter $c=1.0$ in all cases.

\medskip
\noindent \textbf{Results and discussion.} Plots comparing the regret accumulation with different specified gaps $\hat \Delta$ appear in Figure \ref{fig-miss-Delta}. \Copy{misspecification-Delta-e-greedy}{In all cases, a conservative choice of the gap $\hat \Delta$ (that is, $\hat \Delta = 0.05$) results in over-exploration and additional regret compared with the case that the exact value of the gap is used. On the other hand, a lower value of $\hat \Delta$ (that is, $\hat \Delta=0.4$) results in lower regret when the amount of information is high. The reason is that a higher value of $\hat \Delta$ leads to less exploration, which, when compensated by the presence of many auxiliary observations, leads to better performance overall. However, a higher value of $\hat \Delta$ can result in high regret when the amount of auxiliary information is low.}

\begin{figure} [H]
	\centering
	\begin{subfigure}[t]{0.23\textwidth}
		\raisebox{-\height}{\includegraphics[width=\textwidth]{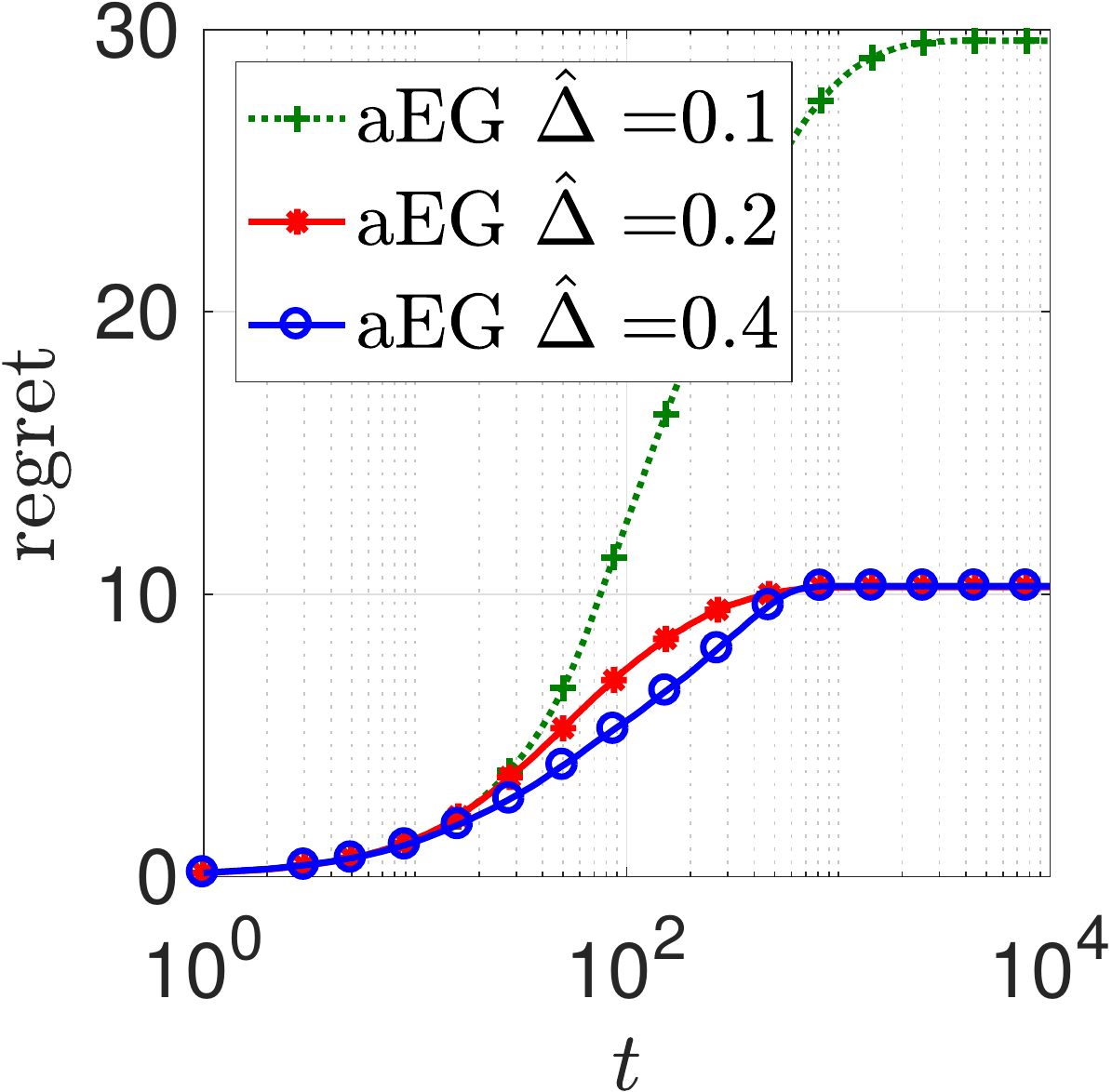}}
	\end{subfigure}
	\hfill
	\begin{subfigure}[t]{0.23\textwidth}
		\raisebox{-\height}{\includegraphics[width=\textwidth]{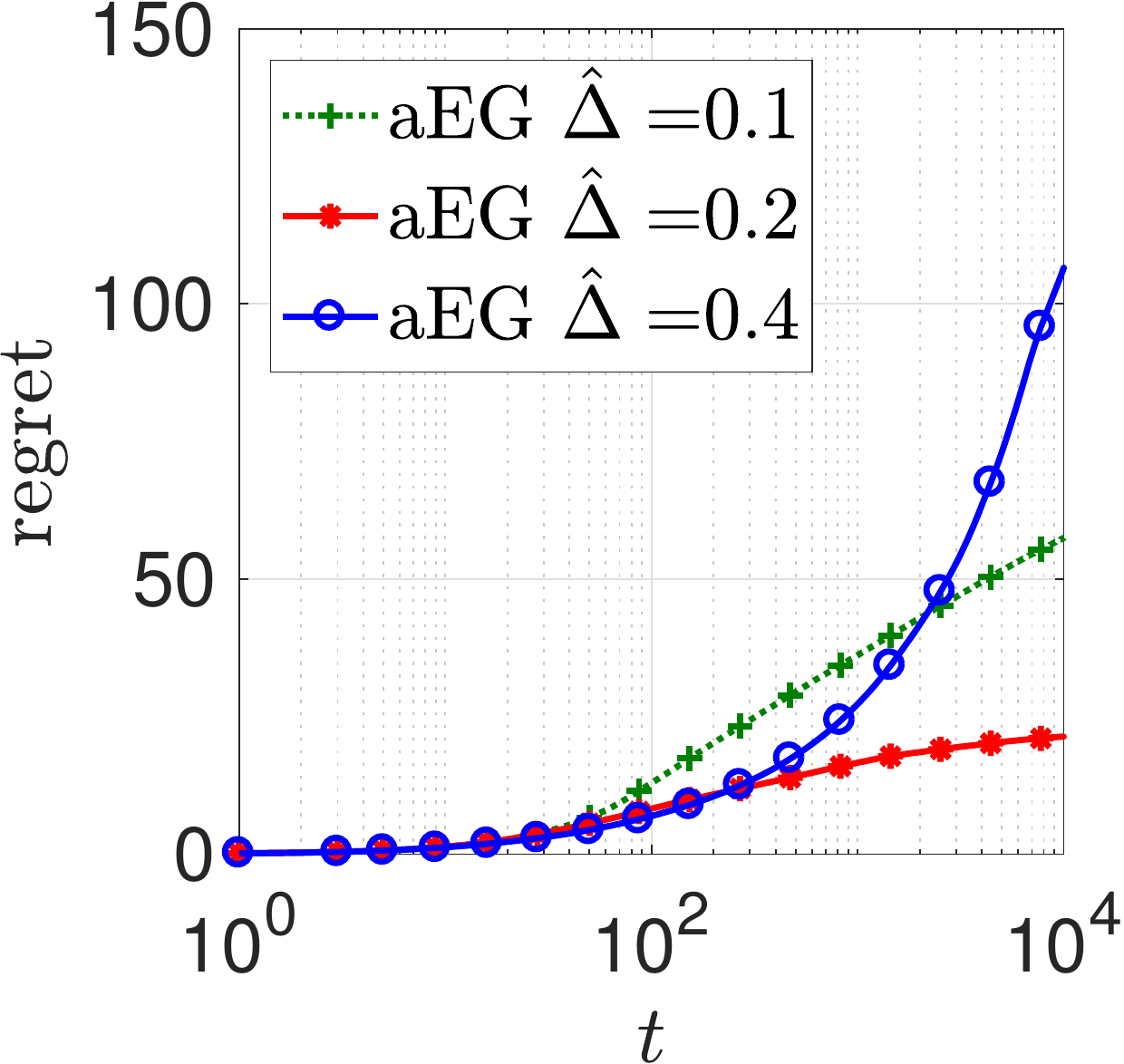}}
	\end{subfigure}
	\hfill
	\begin{subfigure}[t]{0.23\textwidth}
		\raisebox{-\height}{\includegraphics[width=\textwidth]{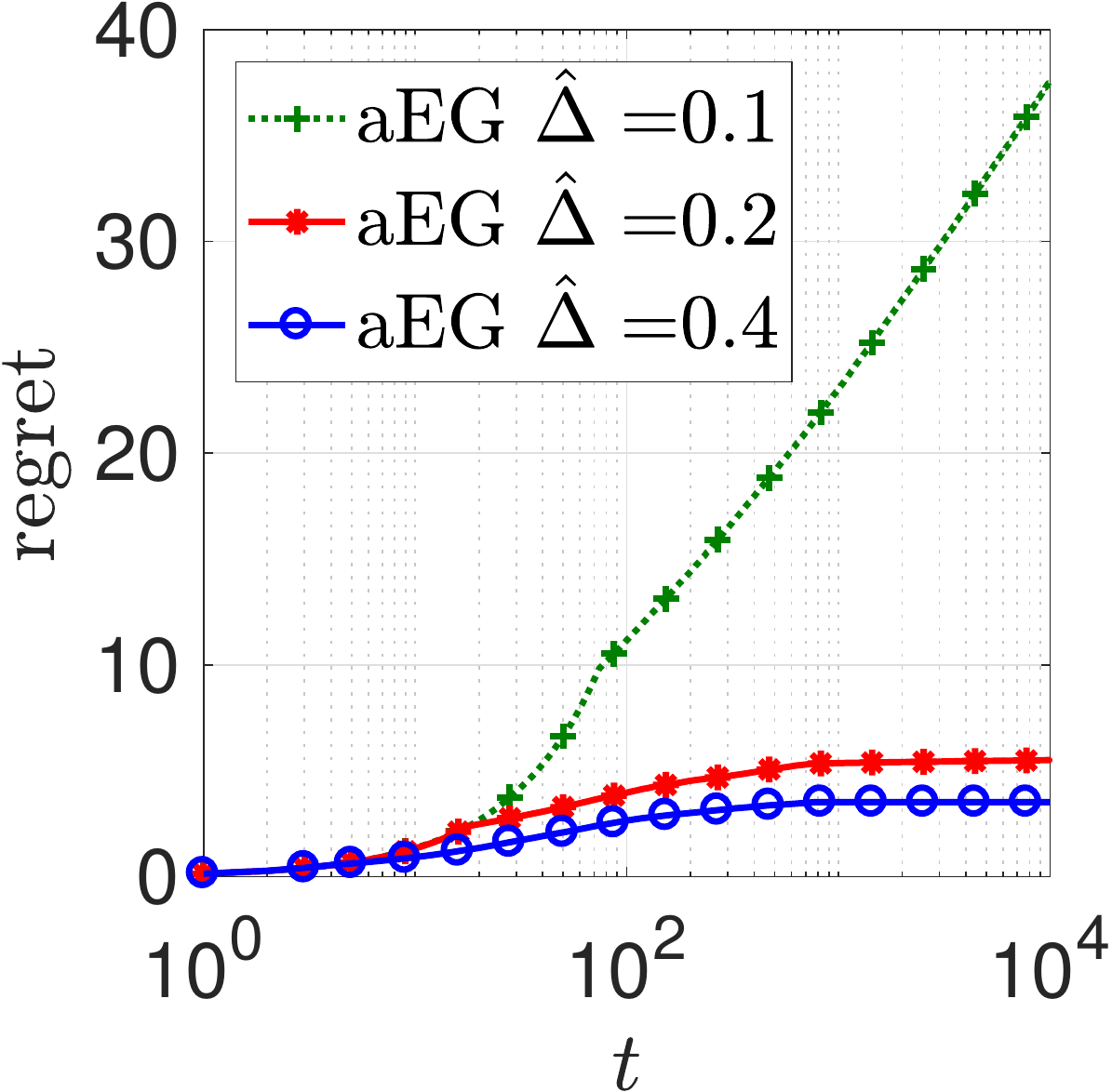}}
	\end{subfigure}
	\hfill
	\begin{subfigure}[t]{0.23\textwidth}
		\raisebox{-\height}{\includegraphics[width=\textwidth]{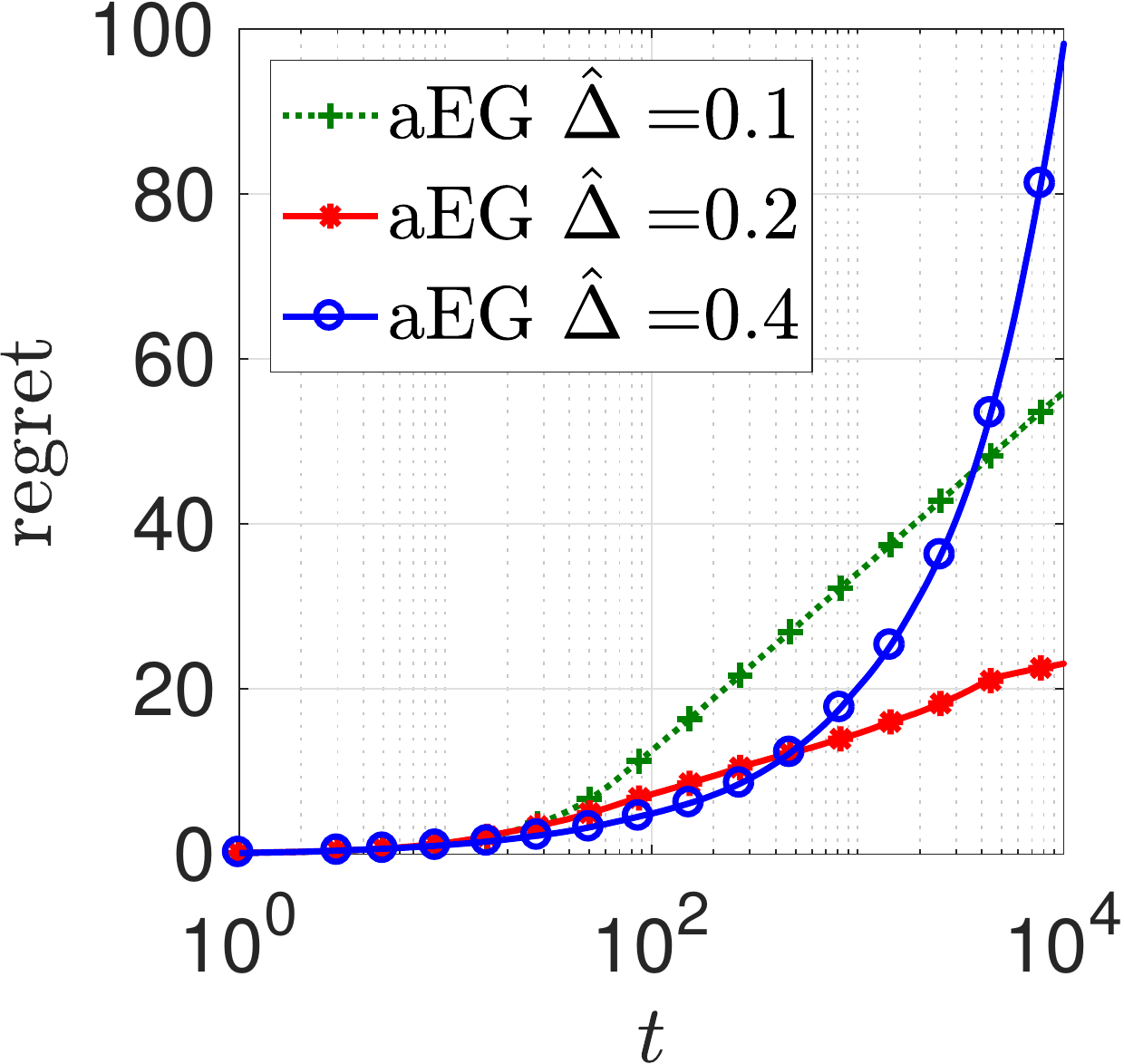}}
	\end{subfigure}
	\caption{\small The effect of the gap parameter $\hat{\Delta}$ on the performance of $\epsilon_t$-greedy with adaptive exploration under different information arrival processes. From left to right: stationary information arrival process with constant rate  $\lambda = \frac{500}{T}$; stationary information arrival process with constant rate $\lambda = \frac{100}{T}$; diminishing information arrival process with rate $\frac{\kappa^{\mathrm{aux}}}{t}$ at each time step $t$ for $\kappa^{\mathrm{aux}} = 8.0$; diminishing information arrival process with rate $\frac{\kappa^{\mathrm{aux}}}{t}$ at each time step $t$ for $\kappa^{\mathrm{aux}} = 1.0$. (For description of the policies see \S\ref{subsec:numcomparison}.)}\label{fig-miss-Delta}
\end{figure}

\vspace{-0.8cm}
\section{Analysis with Content Recommendations Data}\vspace{-0.1cm}\label{sec:realdata}

\subsection{Description of Policies}\label{app-sec-policy-description}
We compare the performance of UCB1 when ignoring auxiliary data, with the one achieved by utilizing auxiliary data based on estimated mappings (aUCB1, see \S\ref{section-ucb1}), and the adaptive 2-UCBs policy (see \S\ref{section-2ucb}). Let $\tilde{t}_t \coloneqq \sum_{s=1}^{t-1} \cC_s$ be the number of clicks up to time $t$. For the standard UCB1, the variables in \eqref{eq-def-of-counters-and-empirical-mean} are set as follows:\vspace{-0.2cm}

\footnotesize
\begin{equation*}
	n^{\known} _{k,t} = \sum\limits_{s=1}^{t-1}  \cC_{k,s} \mathbbm{1}\{\pi_s = k\},
	\qquad
	\bar{X}_{k,t}^{\known} = \frac{\sum\limits_{s=1}^{t-1} \cC_{k,s}\mathbbm{1}\{\pi_s = k\}X_{k,s}}{\sum\limits_{s=1}^{t-1} \cC_{k,s} \mathbbm{1}\{\pi_s = k\}},
	\qquad
	U_{k,t} \coloneqq \bar{X}_{k,t}^{\known} + \sqrt{\frac{c\sigma^2 \log \tilde{t}_t }{n^{\known} _{k,t}}}.
	\vspace{-0.2cm}
\end{equation*}\normalsize
For the aUCB1 that utilizes auxiliary observations, they are set as follows:\vspace{-0.3cm}

\footnotesize
\begin{align*}\label{eq:empirics-update-rule}
	n^{\known} _{k,t} &= \sum\limits_{s=1}^{t-1}  \cC_{k,s} \mathbbm{1}\{\pi_s = k\}  + \sum\limits_{s=1}^{t} \frac{\sigma^2}{\hat{\sigma}^2}h_{k,s}, \qquad
	\bar{X}_{k,t}^{\known} = \frac{\sum\limits_{s=1}^{t-1}\frac{1}{\sigma^2}\cC_{k,s}\mathbbm{1}\{\pi_s = k\}X_{k,s} +  \sum\limits_{s=1}^{t} \sum\limits_{m=1}^{h_{k,t}} \frac{1}{\hat{\sigma}^2}\hat Z_{k,s,m}}{\sum\limits_{s=1}^{t-1} \frac{1}{\sigma^2} \cC_{k,s} \mathbbm{1}\{\pi_s = k\}  + \sum\limits_{s=1}^{t} \frac{1}{\hat{\sigma}^2}h_{k,s} },
	\\
	U_{k,t} &= \bar{X}_{k,t}^{\known} + \sqrt{\frac{c\sigma^2 \log \tilde{t}_t }{n^{\known} _{k,t}}}.\vspace{-0.3cm}
\end{align*}\normalsize
where $\sigma^2 = 1/4$, $\hat \sigma^2 = \hat\alpha_{a,d-1}^2 /4$, and $\hat Z_{k,t,m} = \hat{\alpha}_{a,d} Y_{k,t,m}$. The variables in 2-UCBs (in Table \ref{table:notations}) are defined as follows:

\footnotesize
\begin{align*}
	n_{k,t}^\pi &= \sum\limits_{s=1}^{t-1}  \cC_{k,s} \mathbbm{1}\{\pi_s = k\}, \quad
	&\bar{X}_{k,t}^\pi &= \frac{\sum\limits_{s=1}^{t-1} \cC_{k,s}\mathbbm{1}\{\pi_s = k\}X_{k,s}}{\sum\limits_{s=1}^{t-1} \cC_{k,s} \mathbbm{1}\{\pi_s = k\}},
	\\
	n^{\pi, \mathrm{aux}}_{k,t} &= \sum\limits_{s=1}^{t-1}  \cC_{k,s} \mathbbm{1}\{\pi_s = k\}  + \sum\limits_{s=1}^{t} \frac{\sigma^2}{\hat{\bar{\sigma}}^2}h_{k,s},
	\quad
	&\bar{X}^{\pi, \mathrm{aux}}_{k,t} &= \frac{\sum\limits_{s=1}^{t-1}\frac{1}{\sigma^2}\cC_{k,s}\mathbbm{1}\{\pi_s = k\}X_{k,s} +  \sum\limits_{s=1}^{t} \sum\limits_{m=1}^{h_{k,t}} \frac{1}{\hat{\bar{\sigma}}^2}\hat{\bar{ Z}}_{k,s,m}}{\sum\limits_{s=1}^{t-1} \frac{1}{\sigma^2} \cC_{k,s} \mathbbm{1}\{\pi_s = k\}  + \sum\limits_{s=1}^{t} \frac{1}{\hat{\bar{\sigma}}^2}h_{k,s} },
	\\\quad
	U^{\pi}_{k,t} &=
	\bar{X}^{\pi}_{k,t} + \sqrt{\frac{c\sigma^2 \log \tilde t_t}{n^{\pi}_{k,t}}},
	\quad
	&U^{\pi, \mathrm{aux}}_{k,t}&=
	\bar{X}^{\pi, \mathrm{aux}}_{k,t} + \sqrt{\frac{c\sigma^2 \log \tilde t_t}{n^{\pi, \mathrm{aux}}_{k,t}}},
	\qquad \quad
	U_{k,t}
	=
	\min\l\{ U^{\pi}_{k,t} , U^{\pi, \mathrm{aux}}_{k,t} \r\},
	\vspace{-0.2cm}
\end{align*}\normalsize
where $\hat{\bar{\sigma}}^2 = \bar{\alpha}_{a,d}^2 /4$ and $\hat{\bar{ Z}}_{k,s,m} = \bar{\alpha}_{a,d} Y_{k,t,m}$.
All algorithmic variants make decisions based on the following rule:\vspace{-0.2cm}
\[
\pi_t
=
\begin{cases}
	0 & \text{if } \mathrm{CVR}^{recom}_{a,d,0} > U_{k,t};\\
	1 & \text{o.w.}
\end{cases}\vspace{-0.1cm}
\]
All algorithms were tuned using the parametric value $c=0.05$, which was identified through exhaustive search as the value that optimizes the performance of the standard UCB1 over the data set.
\color{black}

\subsection{Summary of Results}
Figure \ref{fig-colormps} depicts the regret of the policies aUCB1 and 2-UCBs as function of the AIE and RMM. The color of each point is proportional to the regret of the corresponding policy for a certain day and article. The scaling of the colors are provided for each scatter plot. As you can see for both policies mapping misspecification deteriorates performance the most when auxiliary information is more effective. Furthermore, for the cases with mapping misspecification, 2-UCBs policy experiences less performance loss.
\begin{figure} [H]
	\centering
	\begin{subfigure}[t]{0.45\textwidth}
		\raisebox{-\height}{\includegraphics[width=\textwidth]{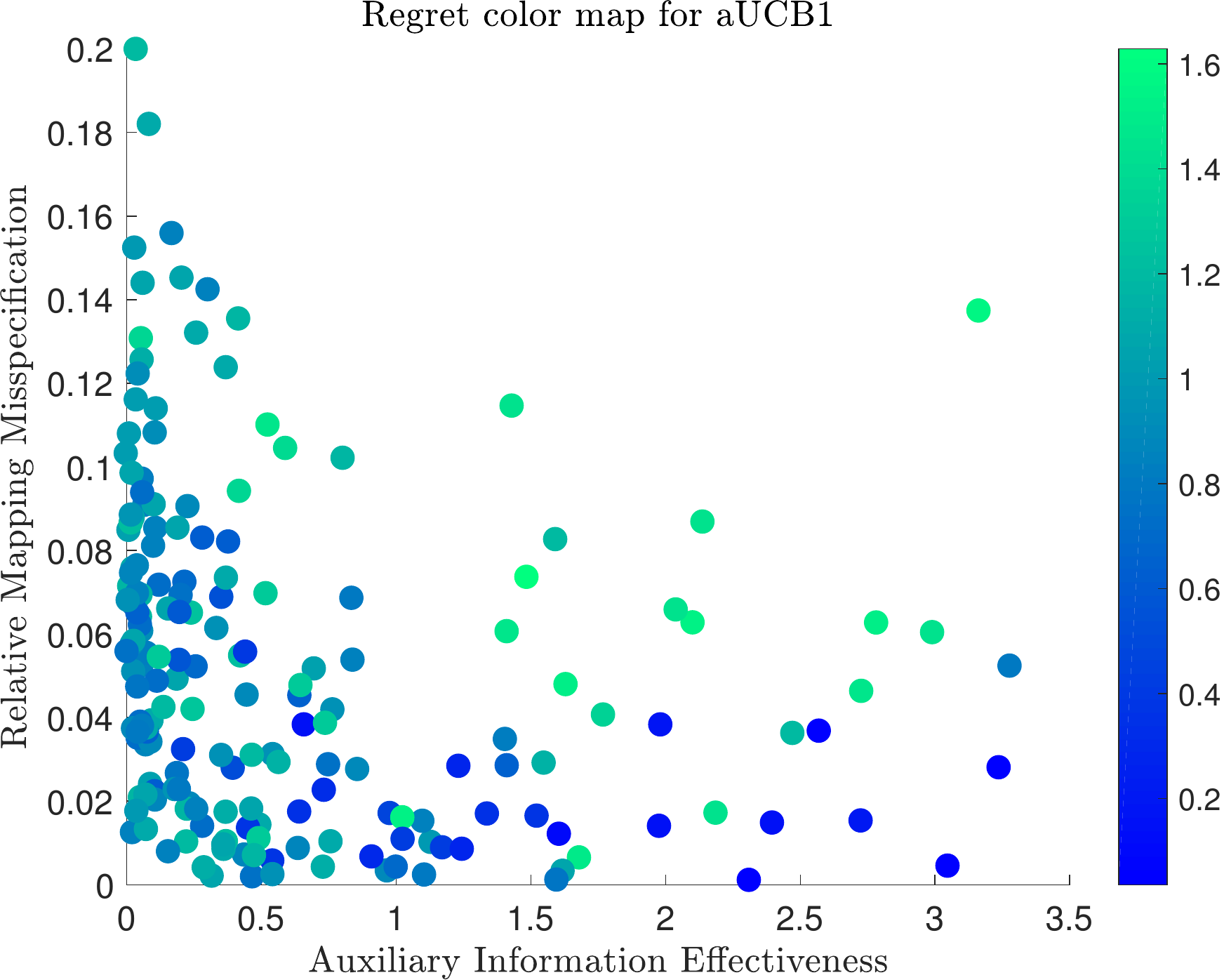}}
	\end{subfigure}
	\hfill
	\begin{subfigure}[t]{0.45\textwidth}
		\raisebox{-\height}{\includegraphics[width=\textwidth]{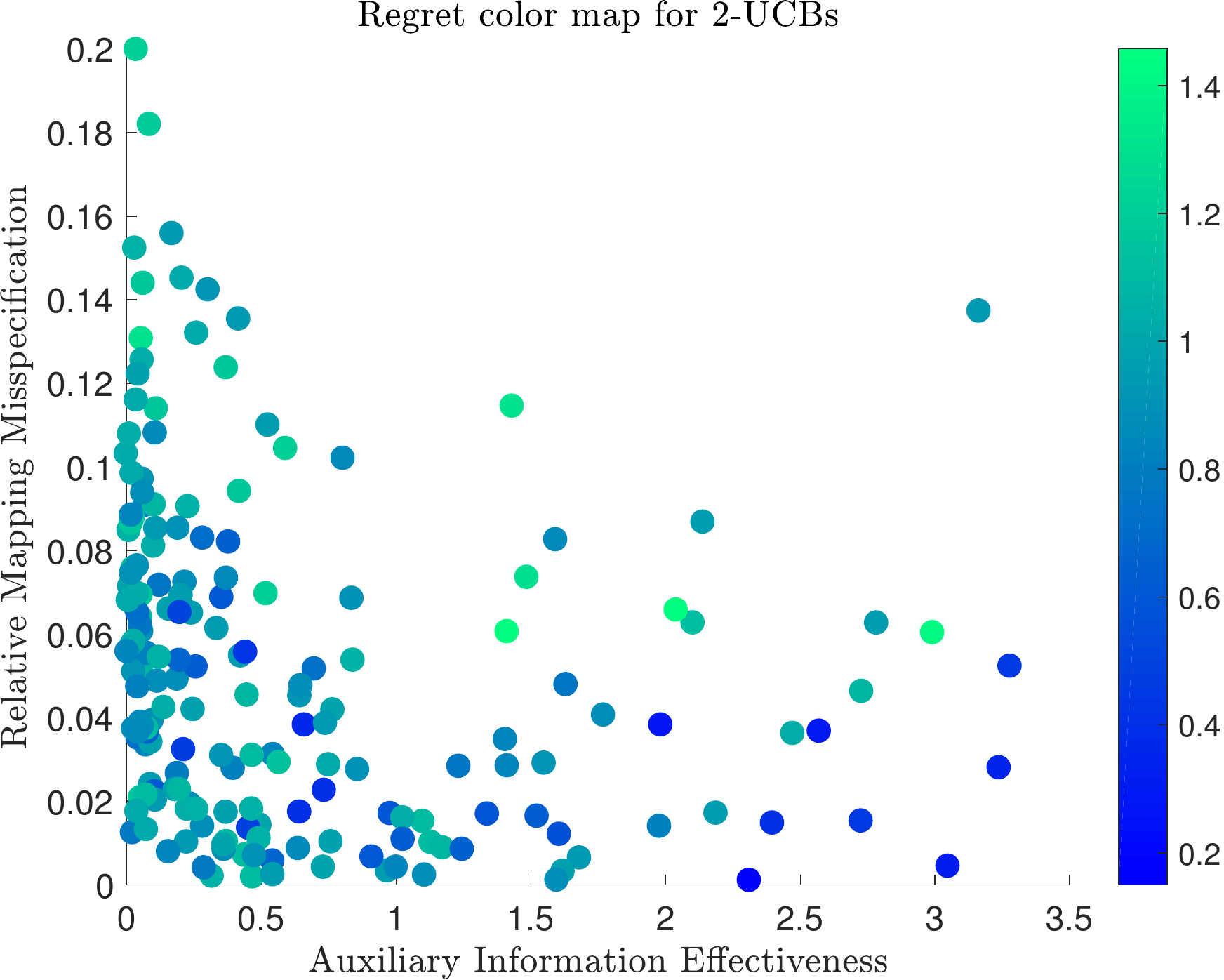}}
	\end{subfigure}
\caption{Regret as a function of AIE and RMM for $\Delta=0.03$ and tuning parameter~$c=0.05$; Colors are proportional to regret. \textit{Left:} aUCB1. \textit{Right:} 2-UCBs}
\label{fig-colormps}
\end{figure}

We used different selections of the tuning parameter $c\in\{0.03,0.05,0.1\}$ and the gap parameter $\Delta \in \{0.01,0.02,0.03,0.04\}$ or when $\Delta$ is $20\%$ of the $\mathrm{CVR}$ of each article and day pair. Table \ref{table-avg-regret}, \ref{table-avg-ri}, and \ref{table-avg-nh} present the cumulative regret (normalized by the cumulative regret of UCB1 with $\Delta=0.03$ and $c=0.05$), the average Relative Improvement (RI), and the average No Harm (NH) rate, respectively; where the No Harm (NH) metric for policy $\pi$ is the fraction of cases in which $\pi$ has a better performance than~UCB1:
\[
\text{NH}(\pi) \coloneqq \frac{\sum_{a,d} \Indlr{\cR^{\pi}_{a,d}\le \cR^{\text{UCB1}}_{a,d}} }{\sum_{a,d} 1}.
\]

\begin{table}[H]
	\centering
	\small
	\begin{tabular}{|l|l|l|l|l|l|l|l|l|l|}
		\hline
		\multirow{2}{*}{\backslashbox{\;\;\;\;\quad\quad}{\,}} &
		\multicolumn{3}{c|}{\boldmath{$c=0.03$}} &
		\multicolumn{3}{c|}{\boldmath{$c=0.05$}} &
		\multicolumn{3}{c|}{\boldmath{$c=0.1$}} \\
		& UCB1 & aUCB1 & 2-UCBs & UCB1 & aUCB1 & 2-UCBs  & UCB1 & aUCB1 & 2-UCBs \\
		\hline
		\boldmath{$\Delta = 0.01$} & 0.44538 & 0.42288 & 0.43988 & 0.44567& 0.43286 & 0.43855  & 0.46515 & 0.45524 & 0.45252\\
		\hline
		\boldmath{$\Delta = 0.02$} & 0.90161 & 0.82164 & 0.86636 & 0.89996 & 0.84585 & 0.83899 & 1.0251 & 0.95521 & 0.94564 \\
		\hline
		\boldmath{$\Delta = 0.03$} & 1.0107 & 0.90583 & 0.95628 & 1 & 0.93476 & 0.91303 & 1.1764 & 1.0839 & 1.0611\\
		\hline
		\boldmath{$\Delta = 0.04$} & 1.1403 & 1.0023 & 1.0732 & 1.1373 & 1.0461 & 1.0057 & 1.4188 & 1.2591 & 1.2093\\
		\hline
		\boldmath{$\Delta = 0.2\mathrm{CVR}_{a,d}$}  & 1.1679 & 1.0093 & 1.1354 & 1.0984 & 1.0115 & 1.0242 & 1.2726 & 1.159 & 1.1108\\
		\hline
	\end{tabular}
	\caption{\small Cumulative regret of UCB1, aUCB1, and 2-UCBs normalized by the cumulative regret of UCB1 in the case of $\Delta=0.03$ and $c=0.05$, under different choices of the gap $\Delta$ and tuning parameter $c$.}
	\label{table-avg-regret}
\end{table}

\begin{table}[H]
	\centering
	\small
	\begin{tabular}{|l|l|l|l|l|l|l|l|l|l|l|}
		\hline
		\multirow{2}{*}{\backslashbox{\,\;\quad}{\,}} &
		\multicolumn{2}{c|}{\boldmath{$\Delta = 0.01$}} &
		\multicolumn{2}{c|}{\boldmath{$\Delta = 0.02$}} &
		\multicolumn{2}{c|}{\boldmath{$\Delta = 0.03$}} &
		\multicolumn{2}{c|}{\boldmath{$\Delta = 0.04$}} &
		\multicolumn{2}{c|}{\boldmath{$\Delta = 0.2\mathrm{CVR}_{a,d}$}} \\
		& aUCB1 & 2-UCBs & aUCB1 & 2-UCBs  & aUCB1 & 2-UCBs  & aUCB1 & 2-UCBs  & aUCB1 & 2-UCBs  \\
		\hline
		\boldmath{$c=0.03$} & 5.0\% & 1.2\% & 8.8\% & 3.9\% & 10.3\% & 5.3\%  & 12.1\% & 5.8\% & 13.5\% & 2.7\%\\
		\hline
		\boldmath{$c=0.05$} & 2.8\% & 1.5\% & 5.0\% & 7.7\% & 6.5\% & 8.6\% & 8.0\% & 11.5\% & 6.7\% & 8.9\% \\
		\hline
		\boldmath{$c=0.1$} & 2.1\% & 2.7\% & 6.8\% & 7.7\% & 7.8\% & 9.8\% & 11.2\% & 14.7\% & 8.9\% & 12.7\%\\
		\hline
	\end{tabular}
\caption{\small Average Relative Improvement (RI) for aUCB1 and 2-UCBs under different choices of the gap $\Delta$ and tuning parameter $c$.}
\label{table-avg-ri}
\end{table}

\begin{table}[H]
	\centering \small
	\begin{tabular}{|l|l|l|l|l|l|l|l|l|l|l|}
		\hline
		\multirow{2}{*}{\backslashbox{\,\;\quad}{\,}} &
		\multicolumn{2}{c|}{\boldmath{$\Delta = 0.01$}} &
		\multicolumn{2}{c|}{\boldmath{$\Delta = 0.02$}} &
		\multicolumn{2}{c|}{\boldmath{$\Delta = 0.03$}} &
		\multicolumn{2}{c|}{\boldmath{$\Delta = 0.04$}} &
		\multicolumn{2}{c|}{\boldmath{$\Delta = 0.2\mathrm{CVR}_{a,d}$}} \\
		& aUCB1 & 2-UCBs & aUCB1 & 2-UCBs  & aUCB1 & 2-UCBs  & aUCB1 & 2-UCBs  & aUCB1 & 2-UCBs  \\
		\hline
		\boldmath{$c=0.03$} & 66.6\% & 60.6\% & 66.1\% & 60.6\% & 65.1\% & 65.6\%  & 73.2\% & 68.6\% & 67.1\% & 62.6\%\\
		\hline
		\boldmath{$c=0.05$} & 53.0\% & 60.6\% & 58.5\% & 72.7\% & 59.5\% & 69.1\% & 64.6\% & 73.7\% & 61.6\% & 71.7\% \\
		\hline
		\boldmath{$c=0.1$} & 58.5\% & 72.2\% & 54.5\% & 77.2\% & 58.5\% & 77.2\% & 59.0\% & 74.2\% & 57.5\% & 76.7\%\\
		\hline
	\end{tabular}
\caption{Average No Harm (NH) for aUCB1 and 2-UCBs under different choices of the gap $\Delta$ and tuning parameter $c$.}
\label{table-avg-nh}
\end{table}
\color{black}

\end{document}